\newtheorem{propo}{Proposition}[section]
\newtheorem{lemma}[propo]{Lemma}
\newtheorem{definition}[propo]{Definition}
\newtheorem{coro}[propo]{Corollary}
\newtheorem{thm}{Theorem}
\newtheorem{asmp}{Assumption}
\def\reals{{\mathbb R}}
\def\eps{\epsilon}
\newcommand{\ip}[2]{\left\langle #1, #2 \right \rangle}
\DeclareMathOperator*{\argmax}{arg\,max}
\def\reals{{\mathbb R}}
\def\prob{{\mathbb P}}
\def\cN{{\cal N}}
\def\eps{\varepsilon}
\def\tukey{\text{Tukey}}
\def\E{\mathbb E}
\DeclareMathOperator{\Tr}{Tr}
\title{Robust and differentially private mean estimation}
\author{%
  Xiyang Liu, \;\;
   Weihao Kong, \;\;
   Sham Kakade, \;\;
   Sewoong Oh \\
   Paul G. Allen School of Computer Science and Engineering, \\
  University of Washington \\
  \texttt{\{xiyangl,whkong,sham,sewoong\}@cs.washington.edu } 
}
\begin{document}

\maketitle

\begin{abstract}
 In statistical learning and analysis from shared data, which is increasingly widely adopted in platforms such as federated learning and meta-learning, there are two major concerns: privacy and robustness. Each participating individual should be able to contribute without the fear of leaking one's sensitive information. 
 At the same time, the system should be robust in the presence of malicious participants inserting corrupted data. Recent algorithmic advances in learning from shared data focus on either one of these threats, leaving the system vulnerable to the other. We bridge this gap for the canonical problem of estimating the mean from i.i.d.~samples. 
We introduce PRIME, which is the first efficient algorithm that achieves both privacy and robustness for a wide range of distributions. 
We further complement this result with a novel exponential time algorithm that improves the sample complexity of PRIME, achieving a near-optimal guarantee and matching a known lower bound for (non-robust) private mean estimation. This proves that  there is no extra statistical cost to simultaneously guaranteeing privacy and robustness.

\end{abstract}

\doparttoc 
\faketableofcontents

\section{Introduction}

When releasing database statistics  on a collection of entries from individuals, we would ideally like to make it impossible to reverse-engineer each individual's potentially sensitive information. 
Privacy-preserving techniques add just enough randomness tailored to the  statistical task  to guarantee protection. 
At the same time, it is becoming increasingly common to apply such techniques to databases collected from  multiple sources, not all of which  can be trusted.
Emerging data access frameworks, such as federated analyses across users' devices or data silos  \cite{kairouz2019advances},  make it easier to temper with such collected datasets, 
leaving private statistical analyses vulnerable to a malicious corruption of  a fraction of the data.

Differential privacy  has emerged as a widely accepted de facto measure of privacy, which is now a standard in releasing the statistics of the U.S.~Census data \cite{census} statistics and also 
deployed in  real-world commercial systems \cite{apple,google1,google2}. 
A statistical analysis is said to be  {\em differentially private} (DP) if the likelihood of the (randomized) outcome does not change significantly  when a single  arbitrary  entry is added/removed (formally defined in \S\ref{sec:dp}). 
This provides a strong privacy guarantee:  
even a powerful adversary who knows all the other entries in the database cannot confidently identify 
 whether a particular individual is participating in the database based on the outcome of the analysis. This ensures    {\em plausible deniability}, central to protecting an individual's privacy. 
 
 In this paper, we focus on one of the most canonical problems in statistics: 
  estimating the mean of a  distribution  from i.i.d.~samples. 
  For distributions with
  unbounded support, such as sub-Gaussian and heavy-tailed distributions, 
  fundamental trade-offs between accuracy, sample size, and privacy  have only recently been  identified  \cite{KV17,KLSU19,kamath2020private,aden2020sample} and  efficient private estimators proposed. 
  However, these approaches are brittle when a fraction of the data is corrupted, posing  a real threat, referred to as \textit{data poisoning} attacks \cite{chen2017targeted,xiao2015feature}. 
  In defense of such attacks, robust (but not necessarily private) statistics  has emerged as a popular setting of  recent algorithmic and mathematical breakthroughs \cite{steinhardt2018resilience,diakonikolas2017beingrobust}.

One might be misled into thinking that privacy   ensures robustness since DP guarantees that 
a single outlier cannot change the estimation too much. 
This intuition is true only in a low dimension; each sample has to be an obvious outlier to significantly change the mean. 
However, in a high dimension, each corrupted data point can look perfectly uncorrupted but still shift the mean significant when colluding together (e.g., see Fig.~\ref{fig:intro}). 
Focusing on the canonical problem of mean estimation, 
we introduce novel algorithms that achieve  robustness and privacy simultaneously  even when a fraction of data is corrupted arbitrarily. 
For such algorithms, 
there is a fundamental question of interest:  do we need more samples to make private mean estimation also robust against adversarial corruption?

\noindent{\bf Sub-Gaussian distributions.} 
If we can afford exponential run-time  in the dimension,  robustness can be achieved without extra cost in  sample complexity. 
We introduce a novel estimator that $(i)$ satisfies $(\varepsilon,\delta)$-DP, $(ii)$ achieves near-optimal robustness under $\alpha$-fraction of corrupted data, achieving accuracy of $O(\alpha\sqrt{\log(1/\alpha)})$ nearly matching the fundamental lower bound of $\Omega(\alpha)$ that holds even for a (non-private) robust mean estimation  with {\em infinite} samples, 
and $(iii)$ achieves near-optimal sample complexity matching that of a fundamental  lower bound for a (non-robust) private mean estimation as shown in Table~\ref{tbl:Gauss}.  In particular, we emphasize that the unknown true mean $\mu$ can be any vector in $\reals^d$, and we do not require a known bound on the norm, $\|\mu\|$, that some previous work requires (e.g., \cite{KV17}). 

\begin{thm}[Informal  Theorem~\ref{thm:subgauss},  exponential time]  
\label{thm:informal_exp} 
Algorithm~\ref{alg:exp} is $(\varepsilon,\delta)$-DP. 
When $\alpha$ fraction of the data is arbitrarily corrupted from $n$ samples from a $d$-dimensional sub-Gaussian distribution with mean $\mu$ and an identity sub-Gaussian parameter, if $n=\widetilde\Omega(d/\alpha^2 + (d+d^{1/2}\log(1/\delta))/(\alpha\varepsilon))$ then   Algorithm~\ref{alg:exp} achieves $\|\hat\mu-\mu\|_2=O(\alpha \sqrt{\log(1/\alpha)})$ w.h.p.
\end{thm}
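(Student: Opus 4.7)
The plan is to realize Algorithm~\ref{alg:exp} as an exponential mechanism over candidate means, with a score function tailored to robustness via distance-to-instability. For each candidate $v\in\reals^d$, let $\mathcal{S}_v$ be the set of $n$-sample datasets admitting some $(1-2\alpha)n$-subset whose empirical mean lies within $r = C\alpha\sqrt{\log(1/\alpha)}$ of $v$, and define the score $s(v;X)$ to be the signed Hamming distance from $X$ to the boundary of $\mathcal{S}_v$, positive when $X\in\mathcal{S}_v$. By construction $s(\cdot;\cdot)$ has sensitivity $1$ under adding or removing a single entry, so the exponential mechanism that samples $\hat\mu$ from a net of $\reals^d$ with probability proportional to $\exp(\varepsilon\, s(v;X)/2)$ is $\varepsilon$-DP.

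For the utility side I would combine sub-Gaussian resilience with a covering argument. Standard resilience bounds (e.g.\ \cite{steinhardt2018resilience}) give that, once $n=\widetilde\Omega(d/\alpha^2)$, with high probability every $(1-2\alpha)n$-subset of the clean samples has empirical mean within $r$ of the true $\mu$; hence $X\in\mathcal{S}_\mu$ even after the adversary's $\alpha n$ arbitrary corruptions, and in fact $s(\mu;X)\ge 0$. Conversely, for any $v$ with $\|v-\mu\|\ge 2r$, transforming $X$ into a dataset in $\mathcal{S}_v$ requires altering $\Omega(\alpha n)$ clean entries, so $s(v;X)\le -\Omega(\alpha n)$. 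A union bound over an $r$-net of a localizing ball then makes the exponential mechanism concentrate within $O(r)$ of $\mu$ with high probability as soon as $\varepsilon\alpha n \gtrsim \log|\text{net}|$.

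To match the sample complexity with both the $d/(\alpha\varepsilon)$ and $\sqrt{d}\log(1/\delta)/(\alpha\varepsilon)$ terms, I would split the mechanism into two stages. A coarse private localization---for instance, a DP histogram with a doubling grid or the same exponential mechanism run at a very coarse scale---privately confines $\mu$ to a ball of radius $\mathrm{poly}(d)$, contributing a $\widetilde{O}(d/(\alpha\varepsilon))$ sample cost. A refined exponential mechanism on that ball with an $r$-net has $\log|\text{net}|=\widetilde{O}(d)$, producing the $d/(\alpha\varepsilon)$ term. For the $\sqrt{d}\log(1/\delta)/(\alpha\varepsilon)$ refinement I would replace the uniform refined mechanism by an $(\varepsilon,\delta)$-DP variant exploiting concentration---either Gaussian-noise post-processing over a restricted shell (using sensitivity $\ell_2 = O(1/n)$ after robust truncation, yielding error $\sqrt{d\log(1/\delta)}/(n\varepsilon)$), or a truncated-exponential mechanism whose effective log-cardinality scales as $\sqrt{d}\log(1/\delta)$. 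Basic composition across the two stages then gives overall $(\varepsilon,\delta)$-DP.

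The principal obstacle will be securing the $\sqrt{d}\log(1/\delta)/(\alpha\varepsilon)$ scaling while retaining both the sensitivity-$1$ structure of the score and the robustness guarantee under $\alpha$-corruption. The pure-DP argument and the resilience-based utility bound are relatively standard once the score is set up; the delicate part is reducing the effective log-net size from $\widetilde{O}(d)$ to $\widetilde{O}(\sqrt{d}\log(1/\delta))$ in a way that composes cleanly with the coarse localization and that preserves the stable-subset witness for $\mu$ throughout the refinement, so that the $O(\alpha\sqrt{\log(1/\alpha)})$ accuracy is not lost to privacy noise.
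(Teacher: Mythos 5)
Your proposal takes a genuinely different route from the paper: a sensitivity-$1$ distance-to-instability score over a net (a propose-test-release flavored exponential mechanism), versus the paper's real-valued truncated-directional-mean score $d(\hat\mu,S)$ preceded by a private resilience check $\widehat{R}(S)$. Both are exponential-mechanism-based, but the key technical moves differ: the paper's whole point is that \emph{after} verifying resilience, the score $d(\hat\mu,S)$ has \emph{dimension-free} sensitivity $O(\sqrt{\log(1/\alpha)}/n)$, so the exponential mechanism can be run over a continuous ball with a strong margin; you instead bake robustness into a Hamming-distance score of sensitivity $1$.

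There is, however, a genuine gap in your utility argument. You claim that for any $v$ with $\|v-\mu\|\ge 2r$, moving $X$ into $\mathcal{S}_v$ requires altering $\Omega(\alpha n)$ entries. This does not hold with your definition of $\mathcal{S}_v$ (``some $(1-2\alpha)n$-subset has empirical mean within $r$ of $v$''). After the coarse localization, data lie in a ball of radius $\Theta(B\sqrt{d})$. A cluster of $k$ points planted at the boundary, combined with $(1-2\alpha)n - k$ clean points, has empirical mean shifted from $\mu$ by $\Theta(kB\sqrt{d}/n)$. To make a witness subset with mean $2r=\Theta(\alpha\sqrt{\log(1/\alpha)})$ away from $\mu$, it suffices to take $k=\Theta(\alpha n/\sqrt{d})$, not $\Theta(\alpha n)$. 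Worse, the adversary's original $\alpha n$ corrupted points could already form such a cluster, so $X$ may already lie in $\mathcal{S}_v$ for $v$ at distance $\Theta(\alpha\sqrt{d})$ from $\mu$: your score then fails to separate near from far candidates at all, and the exponential mechanism has no reason to concentrate near $\mu$. If you correct only the margin (to $\Theta(\alpha n/\sqrt{d})$), the exponential mechanism over a net of size $e^{\widetilde{O}(d)}$ needs $\varepsilon\alpha n/\sqrt{d}\gtrsim d$, i.e.\ $n\gtrsim d^{3/2}/(\alpha\varepsilon)$ --- the $d^{1/2}$ bottleneck the paper explicitly avoids with this algorithm.

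The fix requires the witness subset in $\mathcal{S}_v$ to itself be \emph{resilient} around $v$, not merely to have mean near $v$; a tight cluster far from $\mu$ is highly non-resilient, so the attack above is ruled out. Once you add that constraint, you have essentially re-derived the paper's two ingredients: the private resilience check $\widehat{R}(S)$ and a score whose robustness comes from a trimmed (resilience-respecting) statistic --- the paper's $d(\hat\mu,S)$ that trims the top and bottom $3\alpha n$ points in every direction. A secondary point: you appear to read the target as replacing the $d/(\alpha\varepsilon)$ net cost by $\sqrt{d}\log(1/\delta)/(\alpha\varepsilon)$; but the theorem's bound $\widetilde{\Omega}(d/\alpha^2 + (d+d^{1/2}\log(1/\delta))/(\alpha\varepsilon))$ has \emph{both} terms --- the $d^{1/2}\log(1/\delta)/(\alpha\varepsilon)$ term is an additive cost (from the Laplace noise on the sensitivity-$B\sqrt{d}/n$ quantity $R(S)$, and from $q_{\rm range}$), not an improvement on $d/(\alpha\varepsilon)$, so there is no need for a ``truncated exponential mechanism with effective log-cardinality $\sqrt{d}\log(1/\delta)$'' at all.
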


We introduce PRIME (PRIvate and robust Mean Estimation)  in \S\ref{sec:prime} with details in Algorithm~\ref{alg:prime} in Appendix \ref{sec:prime_algorithm}, to achieve computational efficiency. It requires a run-time of only $\widetilde{O}(d^3+nd^2)$, but at the cost of requiring extra $d^{1/2}$ factor larger number of  samples. 
This cannot be improved upon with current techniques since 
efficient robust estimators  rely on the top PCA directions of the covariance matrix to detect outliers. \cite{wei2016analysis} showed that 
$\widetilde{\Omega}(d^{3/2})$ samples are necessary to compute PCA directions while preserving $(\varepsilon,\delta)$-DP when $\|x_i\|_2 = O(\sqrt{d})$. It remains an open question if this $\widetilde{\Omega}(d^{3/2}/(\alpha\varepsilon)) $ bottleneck is fundamental; no matching lower bound is currently known. 
We emphasize again that the unknown true mean $\mu$ can be any vector in $\reals^d$, and PRIME does not require a known bound on the norm, $\|\mu\|$, that some previous work requires (e.g., \cite{KV17}). 

\begin{thm}[Informal  Theorem~\ref{thm:main2},  polynomial time] 
    PRIME is  $(\varepsilon,\delta)$-DP and under the assumption of Thm.\ref{thm:informal_exp}, if $n = \widetilde\Omega(d/\alpha^2 + (d^{3/2}\log(1/\delta))/(\alpha \varepsilon))$,  achieves $\|\hat\mu-\mu\|_2 = O(\alpha\sqrt{\log (1/\alpha)})$ w.h.p.
\label{thm:informal_poly}
\end{thm}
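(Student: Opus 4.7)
The plan is to instantiate PRIME as a privatized version of the classical iterative spectral-filter robust mean estimator (of the SEVER/filter lineage). The proof then splits into a privacy analysis, where the total loss is controlled via advanced composition across iterations, and an accuracy analysis, where one must show that the Gaussian noise injected for DP does not destroy the filter's spectral certificate. The extra $\sqrt{d}$ over Theorem~\ref{thm:informal_exp} will enter in exactly one place: the Frobenius-norm noise required to privately release the empirical covariance.

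For privacy, I would first invoke a coarse private-and-robust localization to confine $\mu$ to a ball of known radius $R = \widetilde O(\sqrt{d})$, so that clipping each sample to that ball introduces no bias on the clean points. After clipping, the empirical mean has per-sample sensitivity $O(R/n)$ and the empirical second moment has spectral sensitivity $O(R^2/n)$. PRIME then runs for $T = O(\log d)$ rounds; at each round it releases, via the Gaussian mechanism, a noisy covariance (used for outlier scoring) and a noisy threshold count used to filter. Advanced composition over the $T$ rounds, combined with subsampling amplification if needed, yields the overall $(\varepsilon,\delta)$-DP guarantee with per-round privacy budget of order $\varepsilon/\sqrt{T\log(1/\delta)}$; all resulting logarithmic factors are absorbed into the $\widetilde\Omega$.

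For accuracy, the symmetric Gaussian noise added to the covariance has spectral norm of order $R^2\sqrt{d}\cdot\sqrt{T\log(1/\delta)}/(\varepsilon n)=\widetilde O(d^{3/2}/(\varepsilon n))$. Requiring this to be $O(\alpha)$ so as not to swamp the $\Omega(\alpha)$ spectral signal produced by an $\alpha$-fraction of outliers yields precisely $n=\widetilde\Omega(d^{3/2}/(\alpha\varepsilon))$; combining with the usual $\widetilde\Omega(d/\alpha^2)$ for sub-Gaussian concentration of the clean covariance gives the stated sample complexity. Under this noise control, I would port the certificate-based analysis already used for Theorem~\ref{thm:informal_exp}: either the noisy top eigenvalue of the retained covariance is at most $1+O(\alpha\log(1/\alpha))$, in which case a standard resilience argument bounds $\|\hat\mu-\mu\|_2$ by $O(\alpha\sqrt{\log(1/\alpha)})$, or the soft-filtering step removes strictly more corrupted mass than clean mass in expectation, so the potential function decreases and termination occurs within $O(\log d)$ rounds.

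The hard part will be closing the loop between private PCA noise and filter correctness. The scoring function and threshold must be designed so that (i) their sensitivities are small enough that privatization is cheap, (ii) they are robust to the $\widetilde O(d^{3/2}/(\varepsilon n))$ spectral perturbation so that filtering decisions on outliers remain correct after noise, and (iii) incorrect removals of clean points stay bounded by the resilience bound, preserving the ``more bad than good mass removed'' invariant that drives termination. A secondary but non-trivial obstacle is the bootstrapping step: providing the initial coarse localization privately and robustly at the same sample complexity, without circularly assuming the very estimator we are building; this is what forces us to use a two-phase design rather than a monolithic one-shot algorithm.
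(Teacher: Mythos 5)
Your proposal gets the high-level bookkeeping right (range estimation to radius $\widetilde O(\sqrt d)$, covariance sensitivity $\widetilde O(d/n)$, spectral noise $\widetilde O(d^{3/2}/(\varepsilon n))$, and the balance $n = \widetilde\Omega(d^{3/2}/(\alpha\varepsilon))$), but it rests on a claim that is false for the algorithm you describe: that a ``classical iterative spectral-filter robust mean estimator (of the SEVER/filter lineage)'' can be run for only $T = O(\log d)$ rounds. The classical one-direction-at-a-time filter genuinely needs $\Omega(d)$ iterations in the worst case, because the corrupted mass can be spread across $\Omega(d)$ orthogonal directions and each round certifies and cleans up at most one of them. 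The paper is explicit about this: privatizing that filter (its Algorithm~\ref{alg:DPfilter_interactive_main}/Theorem~\ref{thm:main1}) costs $T = \widetilde\Theta(d)$ accesses and yields $n = \widetilde\Omega(d/\alpha^2 + d^2\log^{3/2}(1/\delta)/(\varepsilon\alpha))$ --- with a $d^2$ factor, not $d^{3/2}$. With your round count replaced by $\Theta(d)$, your own noise accounting gives $\widetilde O(d^{3/2}\cdot\sqrt{d}/(\varepsilon n)) = \widetilde O(d^2/(\varepsilon n))$, which does not prove the stated theorem. The $d^{3/2}$ rate is not a consequence of bounding the per-round PCA noise carefully; it requires a different \emph{algorithm}. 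PRIME replaces the top-eigenvector score with a matrix-multiplicative-weights score $\tau_i = (x_i-\mu_t)^\top U_t (x_i-\mu_t)$ and pairs it with an adaptive private threshold ({\sc DPthreshold}) that guarantees a constant-factor drop in $\sum_i(\tau_i - 1)$ each iteration. Only this combination brings the number of database accesses down to $O((\log d)^2)$, which is what the $d^{3/2}$ bound requires. Your sketch never introduces either ingredient, so the ``or the soft-filtering step \ldots\ so termination occurs within $O(\log d)$ rounds'' branch does not actually hold.

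A secondary issue you gesture at but do not resolve: you treat each round's mean/covariance release as a fresh output-perturbation mechanism and then apply advanced composition, but the set $S_t$ at round $t$ is a deterministic function of the \emph{entire history} of the dataset (it is the survivor set after $t-1$ filtering steps), so its data-dependence is not captured by per-round sensitivity unless the filtering map itself is shown to be a contraction under neighboring-dataset perturbations. The paper makes this precise with the key lemma that, for fixed parameters $\{(\mu_\ell, v_\ell, Z_\ell)\}$, $d_\triangle(S_t(\mathcal S), S_t(\mathcal S')) \le d_\triangle(\mathcal S, \mathcal S')$; proving it forces specific algorithmic choices (a hard-sampling filter with a single shared random threshold $\rho Z_t$, plus a deterministic tie-breaking rule when sorting the scores). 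Without that contraction, ``the empirical second moment has spectral sensitivity $O(R^2/n)$'' is only true at round one; it would be circular at round $t>1$. Your paragraph on ``the hard part'' correctly names the problem, but identifying the difficulty is not the same as supplying the two missing pieces --- the MMW/{\sc DPthreshold} design that breaks the $\Omega(d)$-iteration barrier, and the contraction lemma that makes advanced composition applicable.
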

\begin{table*}[h]
\begin{center}
\begin{tabular}{ |c|c |c| c| }
 \hline
  & Upper bound (poly-time)& Upper bound (exp-time)& Lower bound  \\\hline  
    $(\varepsilon,\delta)$-DP \cite{KLSU19} & $\widetilde{O}(\frac{d}{\alpha^2}+\frac{d \log^{1/2}(1/\delta)}{\alpha\varepsilon} )$ 
   & $\widetilde{O}(\frac{d}{\alpha^2}+\frac{d}{\alpha\varepsilon} )^\clubsuit$  
   & $\widetilde{\Omega}( \frac{d}{\alpha^2} + \frac{d}{\alpha\varepsilon}  )^{\spadesuit}$ \\\hline  
  $\alpha$-corruption \cite{dong2019quantum} & $\widetilde{O}(\frac{d}{\alpha^2})$ & $\widetilde{O}(\frac{d}{\alpha^2})$ & 
 $ \Omega(\frac{d}{\alpha^2})$ \\  \hline
 $\alpha$-corruption and & 
   $\widetilde{O}\big(\,\frac{d}{\alpha^2}+\frac{d^{3/2}\log(1/\delta)  }{\alpha\varepsilon}  \,\big)$ & 
   $\widetilde{O}(\frac{d}{\alpha^2}+\frac{d+d^{1/2}\log(1/\delta)}{\alpha \varepsilon}) $
   & $\widetilde{\Omega}( \frac{d}{\alpha^2} + \frac{d}{\alpha\varepsilon}  )^{\spadesuit}$ \\
  $(\varepsilon,\delta)$-DP (this paper) & 
  [Theorem~\ref{thm:main2}]&  
  [Theorem~\ref{thm:subgauss}]& \cite{KLSU19}
  \\\hline  
\end{tabular}
\end{center}
\caption{For estimating the mean $\mu\in \reals^d$ of a  {\em sub-Gaussian} distribution with a known covariance, 
we list the sufficient or necessary conditions on the sample sizes to achieve an error $\|\hat\mu-\mu\|_2 =\widetilde{O}(\alpha)$ under $(\varepsilon,\delta)$-DP, corruption of an $\alpha$-fraction of samples, and both. 
 $^\clubsuit$ requires the distribution to be a Gaussian \cite{bun2019private} and $^\spadesuit$ requires $\delta\leq \sqrt{d}/n$.}
\label{tbl:Gauss} 
\end{table*}

\noindent{\bf Heavy-tailed distributions.} 
When  samples are drawn from  a distribution with a bounded covariance, 
parameters of Algorithm~\ref{alg:exp} can be modified  to nearly match the optimal sample complexity of (non-robust) private mean estimation in Table~\ref{tbl:heavytail}. This algorithm also  matches the fundamental limit on the accuracy of (non-private) robust estimation, which in this case is $\Omega(\alpha^{1/2})$.

\begin{thm}[Informal  Theorem~\ref{thm:heavytail_exp},  exponential time]  
\label{thm:informal_exp_ht} 
 From a  distribution with mean $\mu\in {\mathbb R}^d$ and covariance $\Sigma \preceq {\mathbf I}$, 
 $n$ samples are drawn and $\alpha$-fraction is  corrupted. Algorithm~\ref{alg:exp} is $(\varepsilon,\delta)$-DP and
 if $n = \widetilde\Omega((d+d^{1/2}\log(1/\delta))/(\alpha\varepsilon) + d^{1/2}\log^{3/2}(1/\delta)/\varepsilon)$ 
 achieves  $\|\hat\mu-\mu\|_2=O(\alpha^{1/2} )$ w.h.p.
 \end{thm}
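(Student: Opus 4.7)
The plan is to reuse the exponential-mechanism template of Algorithm~\ref{alg:exp} from Theorem~\ref{thm:informal_exp}, but instantiate its target accuracy at $r = \Theta(\sqrt{\alpha})$ and replace the sub-Gaussian resilience lemma with the bounded-covariance one. Concretely, I would assign to each candidate $\nu$ in a net of $\reals^d$ a score equal to the minimum number of points that must be discarded from the observed (corrupted) sample so that the remaining empirical mean is within $r$ of $\nu$, and then release one candidate via the exponential mechanism proportional to $\exp(-\varepsilon\cdot \mathrm{score}(\nu)/2)$.

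Privacy is immediate: the score is $1$-sensitive under replace-one, so the exponential mechanism delivers $(\varepsilon,0)$-DP over a finite net. To make the net finite when $\mu$ is unbounded, I would prepend a DP localization step that, using an additional $\tilde O(d^{1/2}\log^{3/2}(1/\delta)/\varepsilon)$ samples, outputs a crude estimate of $\mu$ within some $\mathrm{poly}(d)$-radius ball; this is where the $\delta$-term and the $\alpha$-independent summand in the sample complexity originate, and heavy tails (rather than sub-Gaussian concentration) are the reason this extra cost cannot be absorbed into the main term as in Theorem~\ref{thm:informal_exp}.

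For utility I would invoke the classical bounded-covariance resilience fact: with $n = \tilde\Omega(d/\alpha)$ clean samples, every subset of relative size at least $1-\alpha$ has empirical mean within $O(\sqrt{\alpha})$ of $\mu$ with high probability. This certifies that, under the $\alpha$-corruption model, the true $\mu$ has score at most $\alpha n$, while any $\nu$ with $\|\nu-\mu\|_2 \gg \sqrt{\alpha}$ must have score exceeding $\alpha n$ by an additive $\Omega(n)$ gap, because otherwise some $(1-2\alpha)n$-subset would have empirical mean simultaneously $\sqrt{\alpha}$-close to $\mu$ and $r$-close to $\nu$. The exponential mechanism then puts negligible mass outside the $O(\sqrt{\alpha})$-ball around $\mu$ once $\varepsilon\cdot \Omega(n) \gtrsim \log|\mathrm{net}| + \log(1/\beta) = \tilde O(d)$, yielding the $(d + d^{1/2}\log(1/\delta))/(\alpha\varepsilon)$ term.

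The main obstacle is establishing the utility gap with the $\sqrt{\alpha}$ (rather than $\alpha\sqrt{\log(1/\alpha)}$) scaling uniformly over the net: this requires covariance-based resilience used in a quantitative, two-sided way, so that not only does $\mu$ have small score but shifts of order $\sqrt{\alpha}$ also cost a linear-in-$n$ score increment. A secondary difficulty is controlling the DP-localization radius under heavy tails; I would do this by a DP one-dimensional histogram after coordinate-wise truncation at a carefully chosen radius, matching the stated $d^{1/2}\log^{3/2}(1/\delta)/\varepsilon$ cost. Everything else parallels the sub-Gaussian proof: bounded sensitivity for privacy, a union bound over the net for utility, and resilience of empirical-mean subsets under $\alpha$-corruption.
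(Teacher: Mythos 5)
Your substitution of the paper's score function with a ``minimum-discard-count'' score breaks the utility argument in a way that cannot be patched by the resilience lemma you invoke. The step you rely on reads: if $\mathrm{score}(\nu)\le 2\alpha n$, then some $(1-2\alpha)n$-subset has empirical mean simultaneously $\sqrt{\alpha}$-close to $\mu$ and $r$-close to $\nu$, a contradiction for $\|\nu-\mu\|\gg\sqrt{\alpha}$. But the subset that certifies a small score is a subset of the \emph{corrupted} dataset, not of $S_{\rm good}$, and resilience says nothing about such a subset. Concretely, place all $\alpha n$ corruptions at $\mu+\sqrt{d/\alpha}\,e_1$; these land well inside the clipping ball returned by $q_{\rm range\text{-}ht}$, whose radius is $25\sqrt{d/\alpha}$. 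The empirical mean of the \emph{entire} clipped sample is then $\approx \mu+\sqrt{\alpha d}\,e_1$, so for $\nu=\mu+\sqrt{\alpha d}\,e_1$ your score is $0$ while $\mathrm{score}(\mu)\approx\alpha n$. Your exponential mechanism would therefore concentrate its mass at distance $\Theta(\sqrt{\alpha d})$ from the truth, a $\sqrt{d}$ overshoot, regardless of $n$.

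The paper avoids this with a genuinely different score: $d(\hat\mu,S)=\max_{v}\big|v^\top(\mu(\mathcal{M}^v)-\hat\mu)\big|$, where $\mathcal{M}^v$ drops the top and bottom $3\alpha n$ projections along $v$. Directional trimming expels the planted outliers along $e_1$, so Lemma~\ref{lem:dist_exp_ht} gives the uniform two-sided bound $|d(\hat\mu,S)-\|\hat\mu-\mu\|_2|\le 7c_\zeta\sqrt{\alpha}$ that your score lacks. Your plan also omits a second ingredient the paper needs: the private resilience check $\widehat R(S)$ is there not for utility but to certify that the sensitivity of $d(\cdot,S)$ is $O(c_\zeta/(\sqrt{\alpha}n))$ rather than the worst-case $O(B\sqrt d/n)$; that small sensitivity is exactly what permits the large exponent $\varepsilon n\sqrt{\alpha}\,d(\hat\mu,S)/(24c_\zeta)$ in the mechanism, and it has no analogue in your construction. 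A minor further slip: even granting your gap argument, the gap would be $\Theta(\alpha n)$, not the $\Omega(n)$ you state; your final sample-complexity arithmetic implicitly uses $\alpha n$, so this is a wording inconsistency rather than an independent failure, but it should be corrected.
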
 

The proposed  {\sc PRIME-ht} for covariance bounded distributions achieve computational efficiency at the cost of an extra factor of $d^{1/2}$ in sample size. This bottleneck is also due to DP PCA, and it remains open whether  this gap can be closed by an efficient estimator. 

\begin{thm}[Informal  Theorem~\ref{thm:heavytail_poly},  polynomial time]  
\label{thm:informal_poly_ht} 
{\sc PRIME-ht} is $(\varepsilon,\delta)$-DP and if $n = \widetilde\Omega((d^{3/2}\log(1/\delta))/(\alpha\varepsilon) )$ achieves $\|\hat\mu-\mu\|_2=O(\alpha^{1/2} )$ w.h.p.~under the assumptions of Thm.~\ref{thm:informal_exp_ht}.
\end{thm}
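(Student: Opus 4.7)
The plan is to port the PRIME framework from the sub-Gaussian setting (Theorem~\ref{thm:main2}) to the bounded-covariance regime, changing only the sample truncation radius and the spectral stopping threshold. PRIME-ht first privately truncates samples to a ball of radius $R = \widetilde{O}(\sqrt{d/\alpha})$ around a coarse private estimate. Since $\E\|x-\mu\|^2 \le d$, Markov's inequality implies that at most an $\alpha$-fraction of clean samples are clipped, so after truncation the clean set remains $(\alpha,O(\sqrt{\alpha}))$-resilient with $n = \widetilde{\Omega}(d/\alpha)$ samples. PRIME-ht then runs $T=\widetilde{O}(1)$ rounds of iterative filtering: each round computes an $(\varepsilon_0,\delta_0)$-DP top eigenvector of the current empirical covariance, privately scores $\langle v, x_i-\bar x\rangle^2$, and privately removes a random fraction of the largest-scoring points, stopping when the top eigenvalue is $O(1)$ rather than $O(\log(1/\alpha))$ as in the sub-Gaussian case.

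The utility analysis would follow the standard robust filtering template: invoke the resilience of the clean set to argue that, whenever the spectral norm of the empirical covariance is $O(1)$, the empirical mean is already within $O(\sqrt{\alpha})$ of $\mu$; otherwise, the top DP eigenvector aligns with the bias direction well enough that the filtering step removes strictly more corrupted than clean points in expectation. Summing the progress over $T=\widetilde{O}(1)$ rounds yields the claimed accuracy, provided the DP noise on both the covariance and the scores is dominated by the per-round gap.

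For privacy, I would split the budget $(\varepsilon,\delta)$ across the truncation, the $T$ DP PCA calls, and the $T$ DP filtering calls by advanced composition, assigning $\varepsilon_0 \asymp \varepsilon/\sqrt{T\log(1/\delta)}$ per round. The sensitivity of the covariance of truncated samples is $O(R^2/n) = O(d/(\alpha n))$, so the Gaussian mechanism requires noise of spectral magnitude $O((d/(\alpha n))\sqrt{d\log(1/\delta)}/\varepsilon_0)$ (the extra $\sqrt{d}$ from matrix concentration); demanding this be $o(1)$ gives the bottleneck $n = \widetilde{\Omega}(d^{3/2}\log(1/\delta)/(\alpha\varepsilon))$, which is exactly the polynomial-time sample bound in the statement.

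The hard part will be the filtering progress lemma in the heavy-tailed regime. Under sub-Gaussian tails, PRIME argues that clean projected scores are bounded by $O(\log(1/\alpha))$ with high probability, so thresholding near that level can only remove a small fraction of clean points. For bounded-covariance distributions, the projected scores of clean samples have only a second moment, so a Chernoff-style argument fails and the filter must be analyzed in expectation using only the variance of the scores, tolerating Gaussian DP noise in both the eigenvector (which perturbs the direction) and the threshold. Controlling this simultaneously, and showing that the DP-perturbed filter still removes strictly more adversarial mass than clean mass per round, is where the main technical work lies; everything else reduces to reusing the sub-Gaussian PRIME machinery with the modified truncation radius and stopping rule.
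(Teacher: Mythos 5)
Your high-level architecture (private truncation to a ball of radius $\widetilde{O}(\sqrt{d/\alpha})$, privately filtering until the empirical covariance has spectral norm $O(1)$, then applying the bounded-covariance regularity lemma to get $\|\hat\mu-\mu\|_2=O(\sqrt\alpha)$) matches the paper's, and your sensitivity/budget arithmetic correctly locates the $d^{3/2}\log(1/\delta)/(\alpha\varepsilon)$ bottleneck in the DP covariance/PCA step. However, there is a genuine gap in how you justify the iteration count, and it is load-bearing.

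You propose a filter that "computes a DP top eigenvector of the current empirical covariance, privately scores $\langle v, x_i-\bar x\rangle^2$, and privately removes a random fraction of the largest-scoring points," and you claim $T=\widetilde{O}(1)$ rounds suffice. A single-direction top-eigenvector filter does \emph{not} achieve $\widetilde{O}(1)$ rounds: the corrupted mass can be spread across $\Theta(d)$ orthogonal directions, and furthermore the randomized threshold has to be data-independent (range $\approx dB^2$) to keep the set sensitivity bounded, which makes per-round progress small. Both effects force $\Omega(d)$ iterations — exactly what the paper's Algorithm~\ref{alg:DPfilter_interactive_main} suffers from, resulting in an extra $d^{1/2}$ (so $n=\widetilde\Omega(d^2\log(1/\delta)/(\alpha\varepsilon))$) after advanced composition over $T=\Theta(d)$ rounds. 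If you plug $T=\Theta(d)$ into your own budget split $\varepsilon_0 \asymp \varepsilon/\sqrt{T\log(1/\delta)}$, your claimed $d^{3/2}$ bound no longer follows.

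The missing ingredient is the pair of ideas that define PRIME/{\sc PRIME-ht}: (i) replace the single top-eigenvector score with the matrix-multiplicative-weights score $\tau_i = (x_i-\mu_t)^\top U_t^{(s)}(x_i-\mu_t)$, where $U_t^{(s)} \propto \exp(\alpha^{(s)}\sum_{r\le t}(\Sigma_r^{(s)}-\mathbf{I}))$, so that one filtering round makes progress against \emph{all} large directions simultaneously; and (ii) pick the random threshold adaptively via a private geometric histogram ({\sc DPthreshold-ht}), which guarantees a constant multiplicative decrease in the total score per round while still removing more corrupted than clean mass in expectation. These combine (via the MMW regret bound, Lemma~\ref{lemma:regret_bound}) to give $T_1 T_2 = O(\log(B\sqrt{d})\cdot\log d)=\widetilde{O}(1)$ database accesses, which is what makes the $d^{3/2}$ sample complexity attainable. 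Without them, the rest of your outline (resilience after clipping, the $O(1)$ stopping rule implying $O(\sqrt\alpha)$ accuracy, advanced composition) is consistent with the paper, but the central claim of $\widetilde{O}(1)$ iterations is unsupported by the mechanism you describe.
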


%
%
%
\begin{table*}[h]
\begin{center}
\begin{tabular}{ |c|c |c| c| }
 \hline
  & Upper bound (poly-time)& Upper bound (exp-time)& Lower bound  \\\hline  
   $(\varepsilon,\delta)$-DP \cite{kamath2020private} & $\widetilde{O}(\frac{d\,{\log^{1/2}(1/\delta)}}{\alpha\varepsilon} )$ 
   & $\widetilde{O}(\frac{d\log^{1/2}(1/\delta)}{\alpha\varepsilon} )$  
   & ${\Omega}(  \frac{d}{\alpha \varepsilon}  )$ \\\hline  
 $\alpha$-corruption \cite{dong2019quantum} & $\widetilde{O}(\frac{d}{\alpha})$ & $\widetilde{O}(\frac{d}{\alpha})$ & 
 $ \Omega(\frac{d}{\alpha})$ \\  \hline
 $\alpha$-corruption and & 
   $\widetilde{O}\big(\, \frac{d^{3/2}\log(1/\delta) }{\alpha \varepsilon}\,\big)$ & 
   $\widetilde{O}( \frac{d+
   d^{1/2}\log^{3/2}(1/\delta)
   }{\alpha \varepsilon})$
   &  $\Omega(\frac{d}{\alpha \varepsilon})$ \\
  $(\varepsilon,\delta)$-DP (this paper) & 
  [Theorem~\ref{thm:heavytail_poly}]&  
  [Theorem~\ref{thm:heavytail_exp}]& 
  (\cite{kamath2020private})
  \\\hline  
\end{tabular}
\end{center}
\caption{For estimating the mean $\mu\in \reals^d$ of a  covariance  bounded  distribution, 
we list the sufficient or necessary conditions on the sample size to achieve an error $\|\hat\mu-\mu\|_2 = O(\alpha^{1/2})$ under $(\varepsilon,\delta)$-DP, corruption of an $\alpha$-fraction of samples, and both. 
 }
\label{tbl:heavytail} 
\end{table*}

\subsection{Technical contributions}

    We introduce PRIME which simultaneously achieves $(\varepsilon,\delta)$-DP and robustness against $\alpha$-fraction of corruption. 
    A major challenge in making a standard filter-based robust estimation algorithm (e.g., \cite{diakonikolas2017beingrobust}) private is the high sensitivity of the filtered set that we pass from one iteration to the next. 
    We propose a new framework which  makes private only the statistics of the set, hence significantly reducing the sensitivity. Our major innovation is a tight analysis of the end-to-end sensitivity of this multiple interactive accesses to the database. This  is critical in achieving robustness while preserving privacy and is also of independent interest in making general  iterative filtering algorithms private.
    

    The classical filter approach (see, e.g.~\cite{diakonikolas2017beingrobust}) needs to access the database $O(d)$ times, which brings an extra $O(\sqrt{d})$ factor in the sample complexity due to DP composition. In order to reduce the iteration complexity, following the approach in~\cite{dong2019quantum}, we propose filtering multiple directions simultaneously using a new score based on the matrix multiplicative weights (MMW). In order to privatize the MMW filter, our major innovation is a novel adaptive filtering algorithm {\sc DPthreshold}($\cdot$) that outputs a {\em single private threshold} which guarantees sufficient progress at every iteration. This brings the number of database accesses from $O(d)$ to $O((\log d)^2)$.

One downside of PRIME is that it requires an extra $d^{1/2}$ factor in the sample complexity, compared to known lower bounds for (non-robust) DP mean estimation.
To investigate whether this is also necessary, we propose a {\em sample optimal} exponential time robust mean estimation algorithm in \S\ref{sec:subgauss} and prove that there is no extra statistical cost to jointly requiring privacy and robustness. Our major technical innovations is in using {\em resilience property of the dataset} to not only find robust mean (which is the typical use case of resilience) but also bound sensitivity of that robust mean. 



\subsection{Preliminary on differential privacy (DP)} 
\label{sec:dp}

DP is a formal  metric for measuring privacy leakage when a dataset is accessed with a query \cite{dwork2006calibrating}. 

\begin{definition}
Given two datasets $S =\{x_i\}_{i=1}^n$ and $S'=\{x_i'\}
_{i=1}^{n'}$, we say $S$ and $S'$ are {\em neighboring} if 
 $d_\triangle( S, S'  ) \leq 1$ 
     where $d_\triangle(S,S') \triangleq \max\{| S \setminus S'|,| S' \setminus S|\} $, which is denoted by  $S \sim S'$. 
For an output of a stochastic query $q$ on a database, we say $q$ satisfies 
$(\varepsilon,\delta)${\em-differential  privacy} for some $\varepsilon>0$ and $\delta\in(0,1)$ if ${\mathbb P}(q(S)\in A) \leq e^\varepsilon {\mathbb P}(q(S') \in A) + \delta$ for all $S\sim S'$ and all subset $A$. 
    \label{def:dp}
\end{definition}
Let $z\sim {\rm Lap}(b)$ be a random vector with entries  i.i.d.~sampled from Laplace distribution with pdf $(1/2b)e^{-|z|/b}$. Let $z\sim {\cal N}(\mu,\Sigma)$ denote a Gaussian random vector  with mean $\mu$ and covariance $\Sigma$. 
\begin{definition} 
\label{def:output} 
    The {\em sensitivity} of a  query $f(S) \in  {\mathbb R}^k$ is defined as $\Delta_p= \sup_{S \sim S'} \|f(S) - f
    (S') \|_p$ for a norm $\|x\|_p=(\sum_{i\in[k]} |x_i|^p)^{1/p}$. 
    For $p=1$, the Laplace mechanism outputs 
    $f(S)+{\rm Lap}(\Delta_1/ \varepsilon)$ 
    and achieves $(\varepsilon,0)$-DP  \cite{dwork2006calibrating}. 
    For $p=2$, the Gaussian mechanism outputs $f(S)+{\cal N}(0,(\Delta_2 (\sqrt{2\log(1.25/\delta)})/\varepsilon)^2{\mathbf I})$
    and 
    achieves $(\varepsilon, \delta)$-DP \cite{dwork2014algorithmic}. 
\end{definition}
We use these output perturbation mechanisms along with the  exponential mechanism \cite{mcsherry2007mechanism} as building blocks.  Appendix~\ref{sec:related} provides 
 detailed survey of  privacy  and robust estimation.

\subsection{Problem formulation}

We are given $n$ samples from a  sub-Gaussian distribution with a known covariance but unknown mean, and $\alpha$ fraction of the samples are  corrupted by an adversary.  Our goal is to estimate the unknown mean. 
We emphasize that the unknown true mean $\mu$ can be any vector in $\reals^d$, and we do not require a known bound on the norm, $\|\mu\|$, that some previous work requires (e.g., \cite{KV17}). 
We follow the standard definition of adversary in \cite{diakonikolas2017beingrobust}, which can adaptively choose which samples to corrupt and arbitrarily replace them with any points. 

\begin{asmp} 
    \label{asmp:adversary} 
        An uncorrupted dataset $S_{\rm good} $ consists of $n$ i.i.d.~samples from a $d$-dimensional sub-Gaussian distribution with  mean $\mu\in\reals^d$ and   covariance ${\mathbb E}[x x^\top]={\mathbf I_d}$, which is  $1$-sub-Gaussian, i.e., ${\mathbb E}[\exp(v^\top x)]\leq \exp(\|v\|_2^2/2)$ for all $v\in \reals^d$. 
        For some $\alpha\in(0,1/2)$, we are given a corrupted dataset $S=\{x_i\in{\mathbb R}^d \}_{i=1}^n$ where an adversary  adaptively inspects all the samples in $S_{\rm good}$, removes $\alpha n$ of them, and replaces them with $S_{\rm bad}$ which are $\alpha n$  arbitrary points in ${\mathbb R}^d$. 
\end{asmp} 
Similarly, we consider the same problem for heavy-tailed distributions with a bounded covariance.
We present the  assumption and main results for covariance bounded distributions in  Appendix~\ref{sec:heavytail}.

\noindent{\bf Notations.} 
Let  $[n]=\{1,2,\ldots,n\}$.
For $x\in{\mathbb R}^d$, we use $\|x\|_2=(\sum_{i\in[d]} (x_i)^2)^{1/2}$ to denote the Euclidean norm. 
For $X\in {\mathbb R}^{d\times d}$, we use $\|X\|_2=\max_{\|v\|_2=1} \| X v \|_2$ to denote the spectral norm. The $d\times d$ identity matrix is ${\mathbf I}_{d\times d}$. 
Whenever it is clear from context, we use $S$ to denote both a set of data points and also the set of indices of those data points. 
$\widetilde{O}$ and $\widetilde{\Omega}$ hide poly-logarithmic factors in $d,n,1/\alpha$, and the failure probability.

\noindent{\bf Outline.} 
We present PRIME for  sub-Gaussian distribution in \S\ref{sec:efficient}, 
and present theoretical analysis in \S\ref{sec:prime_analysis}. 
We then introduce  an exponential time algorithm with near optimal guarantee in 
\S\ref{sec:subgauss}. 
Due to space constraints, 
analogous results for heavy-tailed distributions are presented in 
Appendix~\ref{sec:heavytail}.

\section{PRIME: efficient algorithm for robust and DP mean estimation}
\label{sec:efficient}
In order to describe the proposed algorithm  PRIME, we need to first describe  a standard (non-private) iterative filtering algorithm for  robust mean estimation. 
   
   
   \subsection{Background on (non-private) iterative filtering for robust mean estimation} 
   \label{sec:standard} 
   
{\em Non-private} robust mean estimation approaches recursively apply the following {\em filter}, whose framework is first proposed in~\cite{diakonikolas2019robust}. 
Given a dataset $S=\{x_i\}_{i=1}^n$, the current set $S_0\subseteq [n]$ of data points is 
updated starting with  $S_1=[n]$. 
At each step, the following filter (Algorithm 1 in~\cite{li-notes}) attempts to  detect  the corrupted data points and remove them. 
\begin{enumerate} 
    \vspace{-0.15cm}
    \item Compute the top eigenvector $v_t\gets \argmax_{v: \|v\|_2=1}v^\top{\rm Cov}(S_{t-1})v$ of the covariance of the current data set $\{x_i\}_{i\in S_{t-1}}$\;;
    \vspace{-0.1cm}
    \item Compute scores for all data points $j\in S_{t-1}$: $\tau_j \gets \left(v_t^\top \left(x_j-{\rm Mean}(S_{t-1})\right)\right)^2 $ \;;
    \vspace{-0.1cm}
    \item Draw a random threshold: $Z_t \leftarrow {\rm Unif}([0,1])$\;;
    \vspace{-0.1cm}
    \item Remove outliers from $S_{t-1}$ defined as  $\{ i \in S_{t-1} \,:\, \tau_i$ is in the largest $2\alpha$-tail of $ \{\tau_j\}_{j\in S_{t-1}}$  and $\tau_i \geq  Z_t\,\tau_{\rm max}   \}$, where $\tau_{\rm max} = \max_{j\in S_{t-1}} \tau_j $\;
\end{enumerate}
This is repeated until the empirical covariance is sufficiently small and the empirical mean $\hat\mu$ is output. 
At a high level, the correctness of this algorithm relies on the key observation that the $\alpha$-fraction of adversarial corruption can not significantly change the mean of the dataset without introducing large eigenvalues in the empirical covariance. Therefore, the algorithm finds top eigenvector of the empirical covariance in step $1$, and tries to correct the empirical covariance by removing corrupted data points. Each data point is assigned a score in step 2 which indicates the ``badness'' of the data points, and a threshold $Z_t$ in step 3 is carefully designed   such that step 4 guarantees to remove more corrupted data points than good data points (in expectation). 
This guarantees the following bound achieving the near-optimal sample complexity shown in the second row of Table~\ref{tbl:Gauss}. 
A formal description of this  algorithm is in Algorithm~\ref{alg:nonprivatefiltering} in Appendix~\ref{sec:nonprivate}.

\begin{propo}[Corollary of {\cite[Theorem~2.1]{li-notes}}]
    \label{pro:nonprivatefiltering}
    Under assumption \ref{asmp:adversary},  the above filtering algorithm achieves accuracy $\|\hat\mu - \mu\|_2 \leq O(\alpha \sqrt{\log(1/\alpha)})$ w.p.~$0.9$
    if $n\geq \widetilde{\Omega}(d/\alpha^2)$ . 
\end{propo}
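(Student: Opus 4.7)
The plan is to follow the standard filter-based analysis in the spirit of \cite{diakonikolas2017beingrobust,li-notes}, which reduces the proof to two deterministic ingredients (a resilience/concentration property of the clean samples) and one randomized invariant on the iteration (that in expectation the filter removes at least as many bad points as good points).

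First, I would establish a \emph{resilience lemma} for the clean sample set $S_{\rm good}$: for $n = \widetilde\Omega(d/\alpha^2)$ i.i.d.~samples from a sub-Gaussian distribution with mean $\mu$ and covariance $\mathbf{I}$, with probability at least $0.95$, every subset $T \subseteq S_{\rm good}$ with $|T| \geq (1-3\alpha)|S_{\rm good}|$ satisfies $\|\mathrm{Mean}(T) - \mu\|_2 = O(\alpha\sqrt{\log(1/\alpha)})$ and $\|\mathrm{Cov}(T) - \mathbf{I}\|_2 = O(\alpha\log(1/\alpha))$. The mean bound follows from a uniform one-dimensional concentration argument (VC/cover over unit vectors $v\in\mathbb{R}^d$ combined with sub-Gaussian tail integration), and the covariance bound from a standard matrix concentration argument on $\frac{1}{n}\sum x_ix_i^\top$ combined with the fact that truncating the largest $3\alpha n$ scores $(v^\top(x_i-\mu))^2$ barely changes empirical second moments.

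Next, I would set up the core invariant of the filtering loop. Let $A_t = |S_t \cap S_{\rm good}^c|$ (remaining bad points) and $B_t = |S_{\rm good} \setminus S_t|$ (good points wrongly removed), and define the potential $\Phi_t = A_t - B_t$. The heart of the argument is to show that at every iteration for which the covariance still has a large eigenvalue, $\mathbb{E}[B_t - B_{t-1} \mid \mathcal{F}_{t-1}] \leq \mathbb{E}[A_{t-1} - A_t \mid \mathcal{F}_{t-1}]$, i.e.\ the randomized threshold $Z_t$ removes more bad than good in expectation. This is the classical calculation: using the resilience bound, the contribution of good points to $v_t^\top \mathrm{Cov}(S_{t-1})v_t$ is at most $1 + O(\alpha\log(1/\alpha))$, so a spectral gap above this threshold must come from bad points, which forces the distribution of scores $\tau_j$ restricted to bad points to stochastically dominate a rescaling of the good-point scores in the top $2\alpha$-tail. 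Uniformly sampling $Z_t$ and thresholding then gives the desired in-expectation domination. Running Doob's optional stopping on $\Phi_t$ (which is a submartingale bounded above) yields $B_T \leq A_0 = \alpha n$ w.h.p., so $|S_T \triangle S_{\rm good}| = O(\alpha n)$ throughout.

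Finally, I would invoke the termination condition. The algorithm stops when the top eigenvalue of $\mathrm{Cov}(S_T)$ is at most $1 + C\alpha\log(1/\alpha)$; such a $T$ must exist because each non-terminal iteration strictly decreases $|S_t|$ by at least one in expectation while $\Phi_t$ stays nonnegative. Once we stop, the surviving set $S_T$ contains a subset of $S_{\rm good}$ of size at least $(1-2\alpha)n$, so by resilience $\mathrm{Mean}(S_T \cap S_{\rm good})$ is within $O(\alpha\sqrt{\log(1/\alpha)})$ of $\mu$; and the combined bound on $\|\mathrm{Cov}(S_T)-\mathbf{I}\|_2$ plus $|S_T \setminus S_{\rm good}|=O(\alpha n)$ controls how much the bad points in $S_T$ can shift the mean, again at $O(\alpha\sqrt{\log(1/\alpha)})$, giving the claimed accuracy.

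The main obstacle I expect is the randomized-filter expectation argument in the second step: carefully verifying that the \emph{random} threshold $Z_t$ (rather than a deterministic one) yields an in-expectation contraction of $\Phi_t$ requires bookkeeping of conditional distributions of $\tau_j$ over good vs.\ bad points and delicate use of the resilience bound on the truncated variance — all else is routine concentration and a one-shot application of the stopping lemma.
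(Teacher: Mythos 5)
Your proposal follows the standard filter analysis from \cite{diakonikolas2017beingrobust,li-notes}, which is exactly what the proposition is — the paper does not re-prove it but cites \cite[Theorem 2.1]{li-notes} directly, and then reuses the same blueprint internally (resilience/$\alpha$-subgaussian goodness, a supermartingale over the symmetric difference, optional stopping, then Markov) in the privacy-preserving analysis of {\sc DPfilter} in Appendix~D. So the approach matches.

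One bookkeeping slip worth flagging: with $A_t = |S_t \cap S_{\rm good}^c|$ and $B_t = |S_{\rm good}\setminus S_t|$, the potential you want is $d_t := A_t + B_t$, not $\Phi_t := A_t - B_t$. Your key invariant $\E[B_t - B_{t-1}\mid\mathcal F_{t-1}] \le \E[A_{t-1}-A_t\mid\mathcal F_{t-1}]$ is precisely the statement that $d_t$ is a \emph{supermartingale}; $\Phi_t$, by contrast, is not sign-controlled by that invariant, is decreasing deterministically (so it is a supermartingale, not a submartingale), and is not the quantity on which optional stopping buys anything (it need not stay nonnegative, so $\Phi_T\ge 0$ does not follow and hence $B_T \le A_T$ is not obtained). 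The correct conclusion is $\E[d_T] \le d_0 = \alpha n$ (by optional stopping on the nonnegative supermartingale $d_t$), and then Markov gives $d_T \le 10\alpha n$ with probability $0.9$ — not $B_T \le \alpha n$ w.h.p. This exactly matches the paper's treatment of $d_t$ in its own stopping-time argument. With that substitution everything else you wrote (the resilience lemma, the ``bad scores dominate good scores in the $2\alpha$-tail'' argument for the in-expectation domination under the random threshold $Z_t$, and the endgame via the stopping criterion on $\|{\rm Cov}(S_T)-\mathbf I\|_2$) is sound.
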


\noindent{\bf Challenges in making robust mean estimation private.} To get a DP and robust mean, a naive attempt is to apply a standard output perturbation mechanism to  $\hat\mu$. However, this is obviously  challenging
since the end-to-end sensitivity  is  intractable. The  standard recipe to circumvent  this  is to make the current ``state'' $S_t$ private at every iteration. Once  $S_{t-1}$ is private (hence, public knowledge), 
making the next ``state'' $S_{t}$ private is simpler. We only need to analyze the sensitivity of a single step and apply some output perturbation  mechanism with  $(\varepsilon_t,\delta_t)$. 
End-to-end privacy is guaranteed by accounting for all these $(\varepsilon_t,\delta_t)$'s  using the advanced composition \cite{composition}. This recipe has been quite successful, for example,  in training neural networks  with (stochastic) gradient descent  \cite{abadi2016deep}, where the current state can be the optimization variable $\mathbf{x}_t$. However, for the above (non-private) filtering algorithm, this standard recipe fails, since the state $S_t$ is a set and has large sensitivity. 
Changing a single data point in $S_{t}$ can significantly alter  which (and how many) samples are filtered out.

\subsection{A new framework for {\em private} iterative filtering }
\label{sec:novel} 

Instead of making the (highly sensitive) $S_t$ itself private, we  propose a new framework which  makes private only the statistics of $S_t$:  the mean $\mu_t$ and the top principal direction $v_t$. 
There are two versions of this algorithm,  which output the exactly same $\hat{\mu}$ with the exactly same privacy guarantees, but are written from two different perspectives. We present here the {\em interactive} version from the perspective of an analyst accessing the dataset  via DP  queries ($q_{\rm range},q_{\rm size},q_{\rm mean},q_{\rm norm}$ and $q_{\rm PCA}$), because this version makes clear the inner operations of each private mechanisms, hence making $(i)$ the sensitivity analysis  transparent, $(ii)$ checking the correctness of privacy guarantees easy,  and $(iii)$ tracking privacy accountant simple. 
In practice, one should implement  the {\em centralized} version (Algorithm~\ref{alg:privatefiltering} in Appendix~\ref{sec:app_novel}), which is significantly more efficient.

\begin{algorithm2e}[ht]
   \caption{Private iterative filtering (interactive version)}
   \label{alg:DPfilter_interactive_main}
   	\DontPrintSemicolon 
\KwIn{ $S=\{x_i\}_{i\in [n]}$, $\alpha\in (0,1/2)$,  probability $\zeta\in(0,1)$, \# of iterations $T= \Theta(d)$,  $(\varepsilon,\delta)$  }
	\SetKwProg{Fn}{}{:}{}
	{
	$(\bar x, B) \leftarrow q_{\rm range}(S,  0.01\varepsilon,0.01\delta)$  \\
	    $\varepsilon_1\gets  \min\{0.99\varepsilon,0.9\}/(4\sqrt{2T\log(2/\delta)}), \; \delta_1\gets0.99\delta/(8T) $\\
	    {\bf if} $n<(4/\varepsilon_1)\log(1/(2\delta_1))$  {\bf then Output:} $\emptyset$ \label{line3:DPfilter_interactive_main}\; 
	    \For{$t=1,\ldots,T$}{
	    $n_t\gets q_{\rm size}(\{(\mu_\ell,v_\ell,Z_\ell)\}_{\ell\in[t-1]},\varepsilon_1,\bar x, B)$,   
	   {\bf if} $n_t < 3n/4$ {\bf then} {\bf Output: $\emptyset$} \label{line5:DPfilter_interactive_main}\;
	   $ \mu_t \leftarrow q_{\rm mean}(\{(\mu_\ell,v_\ell,Z_\ell)\}_{\ell\in[t-1]}, \varepsilon_1 ,\bar x, B) $ \; 
	   $\lambda_t\gets q_{\rm norm}(\{(\mu_\ell,v_\ell,Z_\ell)\}_{\ell\in[t-1]},\mu_t,\varepsilon_1 ,\bar x, B)$\; 
	   {\bf if} $\lambda_t\leq (C-0.01)\alpha\log1/\alpha $ {\bf then} \KwOut{$\mu_{t}$} \label{line8:DPfilter_interactive_main}
	  $v_t \leftarrow q_{\rm PCA} (\{(\mu_\ell,v_\ell,Z_\ell)\}_{\ell\in[t-1]}, \mu_t, \varepsilon_1, \delta_1 ,\bar x, B) )$ \;
	        $Z_t \leftarrow {\rm Unif}([0,1])$ 
	    }
	    \KwOut{$\mu_t$} 
    } 
\end{algorithm2e} 

We give a high-level  explanation of each step of Algorithm~\ref{alg:DPfilter_interactive_main} here and give the formal definitions of all the queries in Appendix~\ref{sec:app_novel}. 
First, $q_{\rm range}$ returns (the parameters of) a hypercube $\bar{x}+[-B/2,B/2]^d$ that is guaranteed to include all uncorrupted samples while preserving privacy. 
This is achieved by running $d$ coordinate-wise private histograms and selecting $\bar{x}_j$ as the center of the  largest bin for the $j$-th coordinate. Since covariance is ${\bf I}$, $q_{\rm range}$ returns a fixed $B=8\sigma\sqrt{\log(d n /\zeta)}$.
Such an adaptive estimate of the support is critical in tightly bounding the sensitivity of all subsequent queries, which operate on the  clipped   dataset; all data points are projected as ${\cal P}_{\bar{x}+[-B/2,B/2]^d}(x)=\arg\min_{y\in \bar{x}+[-B/2,B/2]^d} \|y-x\|_2$ in all the queries that follow. With clipping, a single data point can now change at most by $B\sqrt{d}$.

The subsequent steps perform 
the non-private filtering algorithm of  \S\ref{sec:standard}, but with private statistics $\mu_t$ and $v_t$. As the set $S_t$ changes over time, we lower bound its size (which we choose to be $|S_t|>n/2$) to upper bound the sensitivity of other queries $q_{\rm mean}, q_{\rm norm}$ and $q_{\rm PCA}$. 


At the $t$-th iterations, every time a query is called the data curator $(i)$ uses $(\bar x, B)$ to clip the data, $(ii)$ computes $S_t$ by 
running $t-1$ steps of the non-private filtering algorithm of  \S\ref{sec:standard} but with a given {\em fixed} set of parameters $\{(\mu_\ell,v_\ell)\}_{\ell\in[t-1]}$ (and the given  randomness $\{Z_\ell\}_{\ell\in[t-1]}$), 
and $(iii)$ computes the queried private statistics of $S_t$.  
If the private spectral norm of the covariance  of $S_t$ (i.e., $\lambda_t$) is sufficiently small, we output the private and robust mean $\hat\mu=\mu_t$
(line~\ref{line8:DPfilter_interactive_main}). 
Otherwise, we compute the private top PCA direction $v_t$ and draw an  randomness $Z_t$ to be used in the next step of filtering, as in the non-private filtering algorithm. 
We emphasize that $\{S_\ell\}$ are not private, and hence never returned to the analyst. 
We also note that this interactive version is redundant as every query is re-computing $S_t$. 
In our setting, the analyst has the dataset and there is no need to separate them. This leads to a {\em centralized} version we provide in Algorithm~\ref{alg:privatefiltering} in the appendix, which avoids redundant computations and hence is significantly more efficient. 

The main  challenge in this framework is the privacy analysis. Because $\{S_\ell\}_{\ell\in[t-1]}$ is not private, each query runs $t-1$ steps of filtering whose end-to-end sensitivity could blow-up. Algorithmically, 
$(i)$ we start with a specific choice of a non-private iterative filtering algorithm (among several variations that are equivalent in non-private setting but widely differ in its sensitivity), and $(ii)$ make appropriate changes in the private queries (Algorithm~\ref{alg:DPfilter_interactive_main}) to keep the sensitivity small. 
Analytically, the following key technical lemma allows a sharp analysis of  the end-to-end  sensitivity of iterative filtering. 
 \begin{lemma}
     \label{lem:DPfilter_sensitivity}
     Let $S_t({\cal S})$ denote the resulting  subset of samples after $t$ iterations of the filtering in the queries ($q_{\rm size}$, $q_{\rm mean}$, $q_{\rm norm}$, and $q_{\rm PCA}$) are applied to a dataset ${\cal S}$ using {\em fixed} parameters $\{(\mu_\ell,v_\ell,Z_\ell)\}_{\ell=1}^{t}$. 
     Then, we have 
     $d_\triangle( S_t({\cal S}), S_t({\cal S}') ) \leq d_\triangle({\cal S},{\cal S'})$,
     where $d_\triangle({\cal S},{\cal S}') \triangleq \max\{|{\cal S}\setminus {\cal S'}|,|{\cal S}' \setminus {\cal S}|\} $. 
 \end{lemma}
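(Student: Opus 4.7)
The plan is to prove Lemma~\ref{lem:DPfilter_sensitivity} by induction on $t$, with the main conceptual step being to establish that when $\{(\mu_\ell,v_\ell,Z_\ell)\}$ are given as fixed external parameters, the filter at iteration $\ell$ reduces to a \emph{pointwise} rule: each sample $x$ is retained or discarded based solely on a predicate $f_\ell(x)$ determined by $(\mu_\ell,v_\ell,Z_\ell)$ (and the fixed clipping box $\bar x+[-B/2,B/2]^d$ carried through the queries), and not on any aggregate statistic of the other surviving samples. Once this is shown, the sensitivity bound becomes an immediate consequence of coupling the two trajectories.

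The first step is therefore to inspect the modified filter used inside the queries $q_{\rm size}, q_{\rm mean}, q_{\rm norm}, q_{\rm PCA}$ and verify that it does not re-use the data-dependent quantities $\tau_{\max}(S_{\ell-1})$ and ``top-$2\alpha$ tail of $\{\tau_j\}_{j\in S_{\ell-1}}$'' from the non-private filter of \S\ref{sec:standard}. The authors' remark that ``we start with a specific choice of non-private iterative filter... and make appropriate changes in the private queries to keep the sensitivity small'' is exactly the license to replace the normalizer $\tau_{\max}$ with an a priori ceiling determined only by the clipping bound $B$ (e.g.\ $B^2 d$) and to drop the $2\alpha$-tail gating. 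After this replacement, whether $x$ is kept at step $\ell$ depends only on whether $\left(v_\ell^{\top}(\mathcal{P}_{\bar x+[-B/2,B/2]^d}(x)-\mu_\ell)\right)^2 \geq Z_\ell \cdot T$ for the fixed constant $T$, which is a pure function of $x$ and of the iteration parameters.

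Given this pointwise structure, I would argue by induction on $\ell\in\{0,1,\ldots,t\}$ the stronger statement that for every sample $x\in\mathcal{S}\cap\mathcal{S}'$,
\[
x\in S_\ell(\mathcal{S})\ \Longleftrightarrow\ x\in S_\ell(\mathcal{S}').
\]
The base case $\ell=0$ is immediate from $S_0(\mathcal{S})=\mathcal{S}$. For the inductive step, $x\in S_\ell(\mathcal{S})$ holds iff $x\in S_{\ell-1}(\mathcal{S})$ \emph{and} $f_\ell(x)=1$; by the inductive hypothesis $x\in S_{\ell-1}(\mathcal{S})\Leftrightarrow x\in S_{\ell-1}(\mathcal{S}')$, and the predicate $f_\ell$ is the same pointwise function in both trajectories, closing the induction. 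Consequently $S_t(\mathcal{S})\,\triangle\,S_t(\mathcal{S}')\subseteq \mathcal{S}\triangle\mathcal{S}'$, which directly yields $|S_t(\mathcal{S})\setminus S_t(\mathcal{S}')|\leq |\mathcal{S}\setminus\mathcal{S}'|$ and symmetrically, so $d_\triangle(S_t(\mathcal{S}),S_t(\mathcal{S}'))\leq d_\triangle(\mathcal{S},\mathcal{S}')$.

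The main obstacle, and the only place where the argument is not a one-line coupling, is verifying the pointwise property of the modified filter, because several natural choices (the normalization by $\tau_{\max}$, the ``top-$2\alpha$'' cutoff, a data-dependent re-sorting) would couple different samples together and make the filter expansive in $d_\triangle$. So the bulk of the write-up is really a careful rendering of the filter step actually used inside the queries, showing that it factorizes over samples conditional on the fixed parameters; the induction itself is then a routine set-algebra argument.
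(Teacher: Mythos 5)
Your proposal is built on an incorrect reading of the filter actually implemented inside $q_{\rm size}$, $q_{\rm mean}$, $q_{\rm norm}$, $q_{\rm PCA}$. You write that the private version ``drops the $2\alpha$-tail gating,'' leaving only the data-independent test $\tau_i \geq dB^2 Z_\ell$, and then your entire argument rests on the resulting pointwise structure. But Algorithm~\ref{alg:DPfilter_interactive} (and the centralized version, Algorithm~\ref{alg:DPfilter}) explicitly keeps the tail condition: a sample is removed only if $i\in\mathcal{T}_{2\alpha}$ \emph{and} $\tau_i \geq dB^2 Z_\ell$, where $\mathcal{T}_{2\alpha}$ is the top-$\lceil 2\alpha n\rceil$ fraction of $\{\tau_j\}_{j\in S_{\ell-1}}$ with the tie-breaking rule of Definition~\ref{def:topset}. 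This tail condition is retained deliberately (it is what bounds the number of good samples removed per step in the accuracy analysis), so the filter is genuinely \emph{not} a pointwise predicate $f_\ell(x)$.

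Consequently your claimed invariant, that $x\in S_\ell(\mathcal{S}) \Leftrightarrow x\in S_\ell(\mathcal{S}')$ for every common sample $x$, is false. Take $\mathcal{S}=\mathcal{D}\cup\{a\}$, $\mathcal{S}'=\mathcal{D}\cup\{a'\}$ with $\tau_a < \tau_x < \tau_{a'}$, where $x\in\mathcal{D}$ is exactly the $\lceil 2\alpha n\rceil$-th highest-scored point in $\mathcal{S}$. Then $x\in\mathcal{T}_{2\alpha}(\mathcal{S})$ but $x\notin\mathcal{T}_{2\alpha}(\mathcal{S}')$ (it is displaced by $a'$); if in addition $\tau_x \geq dB^2 Z_\ell$, the common point $x$ is removed from $\mathcal{S}$ but retained in $\mathcal{S}'$. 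The set-inclusion $S_t(\mathcal{S})\triangle S_t(\mathcal{S}') \subseteq \mathcal{S}\triangle\mathcal{S}'$ that you derive therefore does not hold. What \emph{is} true is the weaker quantitative statement $d_\triangle$ does not increase: in the example, the symmetric difference becomes $\{a\}$ on one side and $\{x\}$ on the other, so the cardinality of the discrepancy is preserved even though its identity migrates.

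The paper's proof is a single-step case analysis precisely to handle this migration. For $d_\triangle(\mathcal{S},\mathcal{S}')=1$ with differing elements $a,a'$, it splits on whether each of $a,a'$ lies in the top-$2\alpha$ tail of its own dataset, and shows in each case (using that the two tail sets can differ in at most one element, which is where the consistent tie-breaking of Definition~\ref{def:topset} is essential) that the post-filter symmetric difference still has at most one element per side. Your write-up correctly flags the top-$2\alpha$ cutoff as the dangerous coupling device, but then resolves the danger by assuming it away rather than proving the contraction in its presence; that is the gap you would need to fill.
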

 Recall that two datasets are neighboring, i.e.,  ${\cal S}\sim {\cal S}'$, iff $d_\triangle({\cal S},{\cal S}')\leq 1$. 
 This lemma implies that if two datasets are neighboring, then  
 they are still neighboring after filtering with the same parameters, no matter how many  times we filter them. 
 Hence, this lemma allows us to use the standard output-perturbation mechanisms with $(\varepsilon_1,\delta_1)$-DP. Advanced composition ensures that end-to-end guarantee of $4T$ such queries is $(0.99\varepsilon,0.99\delta)$-DP. Together with $(0.01\varepsilon,0.01\delta)$-DP budget used in $q_{\rm range}$, this satisfied the target privacy. 
Analyzing the utility of this algorithm, we get the following guarantee. 

\begin{thm} 
    \label{thm:main1}
    Algorithm~\ref{alg:DPfilter_interactive_main} is $(\varepsilon,\delta)$-DP. 
    Under  Assumption~\ref{asmp:adversary}, 
    there exists a universal constant $c\in(0,0.1)$ such that if $\alpha\leq c$ and   $n=\widetilde{\Omega}\left( (d/\alpha^2)+{d^{2}(\log(1/\delta))^{3/2}}/({\varepsilon\alpha })
    \right) $ 
    then    Algorithm~\ref{alg:DPfilter_interactive_main} achieves $\| \hat\mu-\mu\|_2 \leq O(\alpha\sqrt{\log(1/\alpha)}) $ with probability $0.9$. 
\end{thm}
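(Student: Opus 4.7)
The plan is to establish the two claims of Theorem~\ref{thm:main1} separately: first privacy (a clean composition argument), then utility (a perturbed version of the non-private filter analysis behind Proposition~\ref{pro:nonprivatefiltering}).

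\textbf{Privacy.} The key technical tool is Lemma~\ref{lem:DPfilter_sensitivity}: for any fixed sequence of parameters $\{(\mu_\ell,v_\ell,Z_\ell)\}$, the filtered set $S_t(\mathcal{S})$ is 1-sensitive in $d_\triangle$. Hence, even though each private query in iteration $t$ internally simulates $t-1$ filtering steps on the input dataset, its output depends on the dataset only through $S_t(\mathcal{S})$, whose sensitivity is the same as that of the raw dataset. Combined with $(i)$ the initial clipping from $q_{\rm range}$ (which shrinks the diameter to $B\sqrt{d}$) and $(ii)$ the early abort in line~\ref{line5:DPfilter_interactive_main} that enforces $|S_t|\ge 3n/4$, the sensitivities of $q_{\rm size}, q_{\rm mean}, q_{\rm norm}$ (Laplace mechanism) and of $q_{\rm PCA}$ (Gaussian mechanism on the covariance) can be computed as if each were a one-shot query on the clipped data. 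Each such query is made $(\varepsilon_1,\delta_1)$-DP with the choice stated in the algorithm, and the $4T$ queries compose via the advanced composition theorem \cite{composition} to $(0.99\varepsilon,0.99\delta)$-DP. Adding the $(0.01\varepsilon,0.01\delta)$-DP cost of $q_{\rm range}$ and the check on line~\ref{line3:DPfilter_interactive_main} (which prevents pathological outputs when $n$ is too small for stable DP-histograms) yields overall $(\varepsilon,\delta)$-DP.

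\textbf{Utility setup.} Condition on a high-probability event $\mathcal{E}$ over the draws of $S_{\rm good}$ under which $(a)$ the histograms in $q_{\rm range}$ return a box containing all good samples with $B=\Theta(\sqrt{\log(dn/\zeta)})$; $(b)$ the standard resilience/stability condition used in \cite{diakonikolas2017beingrobust,li-notes} holds for $S_{\rm good}$, so that any large-enough subset has sample mean within $O(\alpha\sqrt{\log(1/\alpha)})$ of $\mu$ and bounded spectral deviation; $(c)$ the Laplace/Gaussian noise injected by each of the four queries at every iteration is within its stated high-probability tail. The sample size $n=\widetilde\Omega(d/\alpha^2+d^2(\log(1/\delta))^{3/2}/(\varepsilon\alpha))$ is chosen precisely so that under $\mathcal{E}$ the noise magnitudes in $\mu_t$, $\lambda_t$, and $v_t$ are all $o(\alpha\sqrt{\log(1/\alpha)})$ and, for the PCA noise, $o$ of the target eigenvalue threshold.

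\textbf{Utility induction.} Following the template of Proposition~\ref{pro:nonprivatefiltering}, I would argue by induction on $t\le T=\Theta(d)$ that either the algorithm has already output a mean $\mu_t$ with $\|\mu_t-\mu\|_2=O(\alpha\sqrt{\log(1/\alpha)})$ (via the line~\ref{line8:DPfilter_interactive_main} exit, justified because small private top eigenvalue $\lambda_t$ plus resilience imply the empirical-mean bias is small, and the Laplace noise on $\mu_t$ is smaller still), or the set $S_t$ still contains at least a $(1-\alpha)$ fraction of good points. In the latter case, a noisy top PCA direction $v_t$ that is close enough to the true top eigenvector (because the PCA signal is larger than the added Gaussian noise) together with the randomized threshold $Z_t$ ensures, in expectation, that step~4 of the filter removes more bad points than good. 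Since the number of bad points is at most $\alpha n$ and each round makes non-trivial deterministic progress in eigenvalue (or terminates), the process halts within $T$ rounds. Combining with the privacy-accountant bound $T=\Theta(d)$ controlling the per-step noise, we conclude $\|\hat\mu-\mu\|_2=O(\alpha\sqrt{\log(1/\alpha)})$ w.p.~$0.9$.

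\textbf{Main obstacle.} The sharp privacy composition is routine once Lemma~\ref{lem:DPfilter_sensitivity} is in hand, so the technical bottleneck is the utility analysis: in every iteration, the noise in the private top eigenvector $v_t$ (whose sensitivity scales with $B\sqrt{d}$ and must be rescaled by the minimum set size $n/2$) must be controlled so that the filter's progress lemma survives. Tracking this noise through the $\Theta(d)$ iterations, and verifying that the stated sample complexity makes every private statistic strictly better than the $\alpha\sqrt{\log(1/\alpha)}$ target, is the most delicate step and is precisely what forces the extra $d^{1/2}$ factor over the non-robust DP bound.
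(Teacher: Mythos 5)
Your privacy argument is correct and matches the paper exactly: Lemma~\ref{lem:DPfilter_sensitivity} ensures the simulated filtering is a contraction in $d_\triangle$, so each of the $4T$ queries has the sensitivity of a one-shot query on the clipped data, and advanced composition plus the $q_{\rm range}$ budget gives $(\varepsilon,\delta)$-DP.

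The utility half, however, has a genuine gap. You propose to ``argue by induction on $t\le T$'' that either the algorithm has already returned a good estimate or $S_t$ still retains a $(1-\alpha)$ fraction of good points. This invariant cannot hold deterministically: the filter step removes fewer good points than bad points only \emph{in expectation} over the randomness of $Z_t$, and in any single round an unlucky draw of $Z_t$ may throw out mostly clean samples. Your claim that ``each round makes non-trivial deterministic progress in eigenvalue (or terminates)'' is likewise not true pathwise --- a large $Z_t$ may remove nothing at all. The paper converts the expectation-level statement into a constant-probability guarantee via a stochastic-process argument, and this is the core of the utility proof rather than a technicality: define $d_t=|(S_{\rm good}\cap S)\setminus S_t|+|S_t\setminus(S_{\rm good}\cap S)|$, show via Lemma~\ref{lemma:invariance_pca} that $(d_t)$ is a non-negative super-martingale, invoke the optional stopping theorem to get $\E[d_T]\le d_1=\alpha n$ at the stopping time, and then apply Markov to obtain $|S_T\cap S_{\rm good}|\ge(1-10\alpha)n$ with probability $0.9$. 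A separate Wald-equation plus Markov argument bounds the stopping time itself by $O(dB^2/\log(1/\alpha))$ with probability $0.9$. Without the super-martingale machinery (or an equivalent concentration argument over the coupled sequence of random thresholds), the inductive invariant you posit simply fails, and there is no way to propagate ``still mostly clean'' from round to round.

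A secondary imprecision: once the martingale argument is in place, the invariant one gets is $|S_t\cap S_{\rm good}|\ge(1-10\alpha)n$ (not $(1-\alpha)n$), and only with constant probability; the downstream regularity lemmas (Lemma~\ref{thm:reg}) are stated for exactly this weaker retention guarantee, so the final accuracy bound still goes through.
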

 The first term $O(d/\alpha^2)$ in the sample complexity is optimal  (cf.~Table~\ref{tbl:Gauss}), but there is a factor of $d$ gap in the second term. 
 This is due to the fact that we need to run $O(d)$ iterations in the worst-case. 
 Such numerous accesses to the database result in large noise to be added at each iteration, requiring large sample size to combat that extra noise.  We introduce PRIME  to reduce the number of iterations to $O((\log d)^2)$ and  significantly reduce the  sample complexity.

\subsection{PRIME: novel robust and private mean estimator} 
\label{sec:prime}

Algorithm~\ref{alg:DPfilter_interactive_main}  (specifically Filter($\cdot$) in Algorithm~\ref{alg:DPfilter_interactive_main}) accesses the database $O(d)$ times. This is necessary for two reasons. 
First, the filter checks only one direction $v_t$ at each iteration. 
In the worst case, the corrupted samples can be scattered in $\Omega(d)$ orthogonal directions such that the filter needs to be repeated $O(d)$ times. 
Secondly, even if the corrupted samples are clustered together in one direction, the filter still needs to be repeated $O(d)$ times. 
This is because   
we had to use a large  (random) threshold of  $dB^2Z_t=O(d)$ to make the threshold data-independent  so that we can keep the sensitivity of  Filter($\cdot$) low, which results in slow progress. 
We propose filtering multiple directions simultaneously using a new score $\{\tau_i\}$ based on the matrix multiplicative weights. 
Central to this approach is a novel adaptive filtering algorithm {\sc DPthreshold}($\cdot$) that  guarantees  sufficient decrease in the total score at every iteration.

\subsubsection{Matrix Multiplicative Weight (MMW) scoring} 
\label{sec:MMWfilter}

The  MMW-based approach, pioneered in \cite{dong2019quantum} for non-private robust mean estimation, 
filters out multiple directions simultaneously. 
It runs over $O(\log d)$ epochs and every epoch consists of $O(\log d)$ iterations. At every epoch $s$ and iteration $t$, 
step 2 of the iterative filtering in \S\ref{sec:standard} is replaced by 
a new score $\tau_i =  (x_i-{\rm Mean}(S_{t}^{(s)}))^T U_t^{(s)}  (x_i-{\rm Mean}(S_{t}^{(s)}))$ where $U_t^{(s)}$ now accounts for all directions in ${\mathbb R}^d$ but appropriately weighted. 
Precisely, it is defined via the matrix multiplicative update: 
\begin{eqnarray*}
    U_t^{(s)} \;=\; \frac{\exp\Big( \alpha^{(s)} \sum_{r\in[t]} ({\rm Cov}(S_r^{(s)}) -{\mathbf I})\Big)}{{\rm Tr}\big(\,\exp( \alpha^{(s)} \sum_{r\in[t]} ({\rm Cov}(S_r^{(s)}) -{\mathbf I})) \,\big)}  \;, 
\end{eqnarray*}
for some choice of $\alpha^{(s)}>0$.
If we set the number of iterations to one, a choice of $\alpha^{(s)}=\infty$ recovers the previous score that relied on the top singular vector from \S\ref{sec:standard} and a choice of $\alpha^{(s)}=0$ gives a simple norm based score $\tau_i=\|x_i\|^2_2$. 
An appropriate choice of $\alpha^{(s)}$ smoothly interpolates between these two extremes, which  ensures 
that    $O(\log d)$ iterations are sufficient for the spectral norm of the covariance to decrease strictly by a constant factor. 
This guarantees that after $O(\log d)$ epochs, we sufficiently decrease the covariance to ensure that the empirical mean is accurate enough. 
Critical in achieving this gain is  our carefully designed filtering algorithm  {\sc DPthreshold} that uses the privately computed MMW-based scores using Gaussian mechanism on the covariance matrices as shown in  Algorithm~\ref{alg:1Dfilter} in Appendix \ref{sec:prime_appendix}.

\subsubsection{Adaptive filtering with {\sc DPthreshold}}
\label{sec:DPthreshold} 
{\bf Novelty.} 
The corresponding non-private  filtering of \cite[Algorithm 9]{dong2019quantum} for robust mean estimation takes advantage of  an {\em adaptive threshold}, but filters out each sample independently resulting in a prohibitively large sensitivity;  
the coupling between each sample and the randomness used to filter it can change widely between two neighboring datasets. 
On the other hand, 
Algorithm~\ref{alg:DPfilter_interactive_main}  (i.e., Filter($\cdot$) in Algorithm~\ref{alg:DPfilter_interactive})
takes advantage of jointly filtering  all points above a {\em single threshold} $B^2d Z_t$ with a single randomness $Z_t\sim {\rm  Unif}[0,1]$, but the non-adaptive (and hence large) choice of the range $B^2d$ results in a large number of iterations because each  filtering only decrease the score by little.
To sufficiently reduce the total score  while maintaining  a small sensitivity, we introduce  a filter with a single and  adaptive threshold.   

\noindent
{\bf Algorithm.} 
Our goal here is to privately find a single scalar $\rho$ such that when a randomized filter is applied on the scores $\{\tau_i\}$ with a (random) threshold $\rho Z$ (with $Z$ drawn uniform in $[0,1]$), we filter out enough samples to make progress in each iteration while ensuring that we do not remove too many uncorrupted samples. 
This is a slight generalization of the non-private algorithm in Section~\ref{sec:standard}, which simply set $\rho = \max_{j\in S_{t}}\tau_j$. While this guarantees the filter removes more corrupted samples than good samples, it does not make sufficient progress in reducing the total score of the samples. 

Ideally, we want the thresholding to decrease the total score by a constant multiplicative factor, which will in the end allow the algorithm to terminate within logarithmic iterations. 
To this end, we propose a new scheme of using  the largest $\rho$ such that the following inequality holds: 
\begin{eqnarray}
    \sum_{\tau_i>\rho }(\tau_i - \rho)  \geq  0.31  \sum_{\tau_i \in S_t }(\tau_i - 1)\;.\label{eq:thresholdrule}
\end{eqnarray}
We use a private histogram of the scores to approximate this threshold. 
Similar to \cite{kaplan2020privately,KV17}, we use geometrically increasing bin sizes such that we  use only $O(\log B^2d)$ bins while 
achieving a preferred {\em multiplicative} error in our quantization. 
At each epoch $s$ and iteration $t$, 
we run {\sc DPthreshold} sketched in the following to approximate $\rho$ followed by a random filter. 
Step 3 replaces the non-private condition in Eq.~\eqref{eq:thresholdrule}.  
A complete description is provided in  
Algorithm~\ref{alg:1Dfilter}.
    \vspace{-0.1cm}

\begin{enumerate}
    \itemsep0em 
    \item Privately compute scores for all data points $i\in S_{t}^{(s)}:\, \tau_i \gets (x_i-\mu_{t})^\top U_{t}^{(s)} (x_i-\mu_{t}) $\,;
    \vspace{-0.1cm}
    \item Compute a private histogram $\{\tilde{h}_j\}_{j=1}^{2+\log(B^2d)}$ of the scores over geometrically sized bins $I_1=[1/4,1/2)$, $I_2=[1/2,1),\ldots,I_{2+\log(B^2d)}=[2^{\log(B^2d)-1},2^{\log(B^2d)}]$\,;
        \vspace{-0.1cm}
    \item Privately find the largest $\ell$  satisfying $ \sum_{j \geq \ell} (2^j-2^\ell)\, \tilde h_j \geq 0.31 \sum_{i\in S_{t}^{(s)}} (\tau_i-1) $\;;
        \vspace{-0.1cm}
    \item Output $\rho=2^\ell$ \;.
\end{enumerate}
    \vspace{-0.2cm}


\section{Analyses of PRIME} \label{sec:prime_analysis}

Building on the framework of Algorithm~\ref{alg:DPfilter_interactive_main}, PRIME (Algorithm~\ref{alg:prime}) replaces the  score with the MMW-based score presented in  \S\ref{sec:MMWfilter} and the filter with the adaptive {\sc DPthreshold}. This reduces the number of iterations to $T=O((\log d)^2)$ achieving the following bound.  

\begin{thm}
    \label{thm:main2}
    PRIME is  $(\varepsilon,\delta)$-differentially private. Under  Assumption~\ref{asmp:adversary} there exists a universal constant $c\in(0,0.1)$ such that if $\alpha\leq c$ and   $n=\widetilde\Omega ( (d/\alpha^2) +  (d^{3/2}/(\varepsilon\alpha) )\log(1/\delta) )$, 
    then PRIME achieves $\|\hat\mu-\mu\|_2 = O(\alpha\sqrt{\log (1/\alpha)})$ with probability $0.9$.  
\end{thm}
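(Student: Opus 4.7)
The plan is to establish privacy and utility separately, with privacy relying on the framework of Algorithm~\ref{alg:DPfilter_interactive_main} extended to the MMW-based scoring, and utility relying on a tight analysis of the progress made by the adaptive \textsc{DPthreshold} filter inside the MMW outer loop.

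For privacy, I would first note that PRIME fits into the same interactive-query template as Algorithm~\ref{alg:DPfilter_interactive_main}, except that within each of the $O(\log d)$ epochs we now run $O(\log d)$ MMW iterations, each of which calls (i) $q_{\rm mean}$ and $q_{\rm cov}$ (for computing $\mu_t$ and the covariance to build $U_t^{(s)}$), and (ii) the new \textsc{DPthreshold} subroutine, which itself releases a private histogram and privately selects an index $\ell$. The critical fact is Lemma~\ref{lem:DPfilter_sensitivity}: with the fixed tuple $\{(\mu_\ell,v_\ell,Z_\ell)\}$ (here generalized to the MMW scores and thresholds), neighboring input datasets yield neighboring filtered sets throughout the entire execution, so each query has sensitivity bounded as in the single-iteration case. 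The histogram inside \textsc{DPthreshold} is over the clipped, filtered dataset, so a single input change moves at most one count by $1$ across two adjacent bins, giving $\ell_1$ sensitivity $O(1)$; the selection of $\ell$ can then be handled by Laplace noise (or exponential mechanism). Budgeting $(\varepsilon_1,\delta_1)$ with $\varepsilon_1=\Theta(\varepsilon/\sqrt{T\log(1/\delta)})$ for $T=O((\log d)^2)$ total queries, advanced composition together with the $(0.01\varepsilon,0.01\delta)$ set aside for $q_{\rm range}$ delivers end-to-end $(\varepsilon,\delta)$-DP.

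For utility, I would proceed in three layers. First, by standard resilience results for sub-Gaussian samples, with $n=\widetilde{\Omega}(d/\alpha^2)$ the good set $S_{\rm good}$ satisfies that any large subset has mean within $O(\alpha\sqrt{\log(1/\alpha)})$ of $\mu$ and covariance within $O(\alpha\log(1/\alpha)){\mathbf I}$ of ${\mathbf I}$. This will let me argue that (a) the number of good samples removed in any iteration is at most the number of bad samples removed (up to lower order), so $|S_t^{(s)}|\ge (1-O(\alpha))n$ throughout, and (b) once the private spectral quantity $\lambda_t$ drops below $O(\alpha\log(1/\alpha))$, the empirical mean on $S_t^{(s)}$ gives the claimed accuracy. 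Second, I need to show that the MMW-based score with an appropriately tuned $\alpha^{(s)}$ ensures that at each epoch $s$, $O(\log d)$ iterations of \textsc{DPthreshold} bring $\|{\rm Cov}(S_t^{(s)})-{\mathbf I}\|$ down by a constant factor; this is exactly the potential argument of~\cite{dong2019quantum}, and I would port it by showing that the noisy scores and the quantized threshold $\rho=2^\ell$ satisfy the inequality~\eqref{eq:thresholdrule} up to a constant slack, which is what \textsc{DPthreshold} is engineered to ensure. Third, the adaptive threshold guarantees the filter reduces the sum $\sum_i(\tau_i-1)$ by a constant multiplicative factor per iteration while removing at least as many bad as good points, which, combined with a geometric-series bound, caps the epoch at $O(\log d)$ iterations and the number of epochs at $O(\log d)$.

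The final step is to turn iteration count into sample complexity. With $T=O((\log d)^2)$ total database accesses, each query of sensitivity $O(B\sqrt{d})$ on a clipped dataset with $B=\widetilde O(1)$ incurs Gaussian noise of magnitude $\widetilde O(\sqrt{d\log(1/\delta)}/\varepsilon_1)=\widetilde O(\sqrt{d\log(1/\delta)}/\varepsilon)$ per coordinate; the PCA-style queries feeding MMW have spectral-norm sensitivity $\widetilde O(\sqrt{d}/n)$ on the empirical covariance, so requiring that the injected noise be smaller than $\alpha\sqrt{\log(1/\alpha)}$ in spectral norm forces $n=\widetilde\Omega(d^{3/2}\log(1/\delta)/(\varepsilon\alpha))$. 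Adding the statistical term $\widetilde\Omega(d/\alpha^2)$ from the resilience bound gives the theorem's sample complexity. I expect the main obstacle to be the second layer: carefully verifying that \textsc{DPthreshold}'s quantized, noisy selection of $\rho$ still satisfies a variant of~\eqref{eq:thresholdrule} strong enough to reproduce the constant-factor potential decrease of the noiseless MMW analysis, and doing so while keeping the per-iteration sensitivity (and hence $\varepsilon_1$ budget) small enough that the composition over $O((\log d)^2)$ rounds does not absorb an extra factor of $d^{1/2}$ into the sample complexity.
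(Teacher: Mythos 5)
Your proof plan follows essentially the same route as the paper: split into privacy and utility, prove privacy by showing the filter with fixed private parameters is a contraction on symmetric difference (so each query's sensitivity is bounded conditionally on the past), account privacy loss by advanced composition over $O((\log d)^2)$ accesses, and prove utility by a potential argument that each epoch decreases the spectral norm of the covariance by a constant factor while the resilience of $S_{\rm good}$ bounds the number of good samples lost. The final conversion from iteration count to sample complexity is also along the same lines.

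There are two gaps worth flagging. First, in your utility layer (a) you assert that the filter removes at least as many bad as good points "so $|S_t^{(s)}|\geq(1-O(\alpha))n$ throughout," but the guarantee from \textsc{DPthreshold} (Lemma~\ref{lemma:1dfilter}) is only that the expected number of good removals is at most the expected number of bad removals, conditionally on the history, because the filter is randomized via $Z_t^{(s)}$. To turn this into a constant-probability statement the paper sets up the quantity $d_t^{(s)}=|(S_{\rm good}\cap S^{(1)})\setminus S_t^{(s)}|+|S_t^{(s)}\setminus(S_{\rm good}\cap S^{(1)})|$ as a non-negative super-martingale and applies the optional stopping theorem and Markov's inequality; your sketch needs this step and omits it. Second, the central claim you defer as "the main obstacle" — that \textsc{DPthreshold}'s quantized, noisy threshold $\rho=2^\ell$ still yields a constant-factor decrease in $\sum_i(\tau_i-1)$ and simultaneously removes at least twice as many bad as good points — is precisely Lemma~\ref{lemma:1dfilter}, whose proof is the technical heart of the theorem. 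The paper establishes it by showing that the private histogram incurs only $\widetilde O(B^2 d/(\varepsilon n))$ additive error, that the true threshold $C_2$ achieving a $2/3$ score reduction satisfies $C_2\geq 1$, that the algorithm's pick $\rho$ is always within a factor of $4$ of $C_2$, and then chaining the regularity conditions (Eqs.~\eqref{eq:1dfilter_reg1}--\eqref{eq:1dfilter_reg2}, themselves proved from the $\alpha$-subgaussian-good structure) to get both Eq.~\eqref{eqn:1dprogress} and Eq.~\eqref{eqn:good-bad-ratio}. Until this lemma is in hand, the potential argument in your layer (c) is not actually secured, so your proposal is a correct outline but not yet a proof.

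One smaller point: the selection of the bin index $\ell$ inside \textsc{DPthreshold} does not require Laplace noise or the exponential mechanism, contrary to what you suggest; once the histogram counts $\tilde h_j$ and the aggregate $\tilde\psi$ are released with Gaussian/Laplace noise, the maximization over $\ell$ is a post-processing step and consumes no further privacy budget.
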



A proof is provided in Appendix~\ref{sec:proof_main2}.
The notation $\widetilde\Omega(\cdot)$ hides logarithmic terms in $d$, $R$, and $1/\alpha$. 
To achieve  an error of  $O(\alpha\sqrt{\log(1/\alpha)})$, 
the first term $\widetilde{\Omega}(d/\alpha^2\log(1/\alpha))$ is necessary even if there is  no corruption.  
The accuracy of $O(\alpha\sqrt{\log(1/\alpha)})$ matches the lower bound shown in~\cite{diakonikolas2017statistical} for any polynomial time statistical query algorithm, and it nearly matches the information theoretical lower bound on robust estimation of $\Omega(\alpha)$. 
On the other hand, the second term of $\widetilde{\Omega}(d^{3/2}/(\varepsilon \alpha \log(1/\alpha)))$   has an extra factor of $d^{1/2}$ compared to the optimal one achieved by exponential time 
Algorithm~\ref{alg:exp}. 
It is an open question if this gap can be closed by a polynomial time algorithm. 

The bottleneck is the private matrix multiplicative weights. 
Such spectral analyses are crucial in filter-based robust estimators. 
Even for a special case of privately computing the top principal component, the best polynomial time algorithm requires $O(d^{3/2})$ samples \cite{dwork2014analyze,PPCA,wei2016analysis}, and 
this sample complexity is also necessary as shown in \cite[Corollary 25]{dwork2014analyze}. 


 To boost the success probability to $1-\zeta$ for some small $\zeta>0$, we need an extra $\log(1/\zeta)$ factor in the sample complexity to make sure the dataset satisfies the regularity condition with probability $\zeta/2$. Then we can run PRIME $\log(1/\zeta)$ times and choose the output of a run that satisfies $n^{(s)}>n(1-10\alpha)$ and $\lambda^{(s)}\leq C \alpha \log(1/\alpha)$ at termination.

\vspace{-0.3cm}
\begin{figure}[h]
    \centering 
    \begin{tabular}{ccc}
	\includegraphics[width=0.32\linewidth]{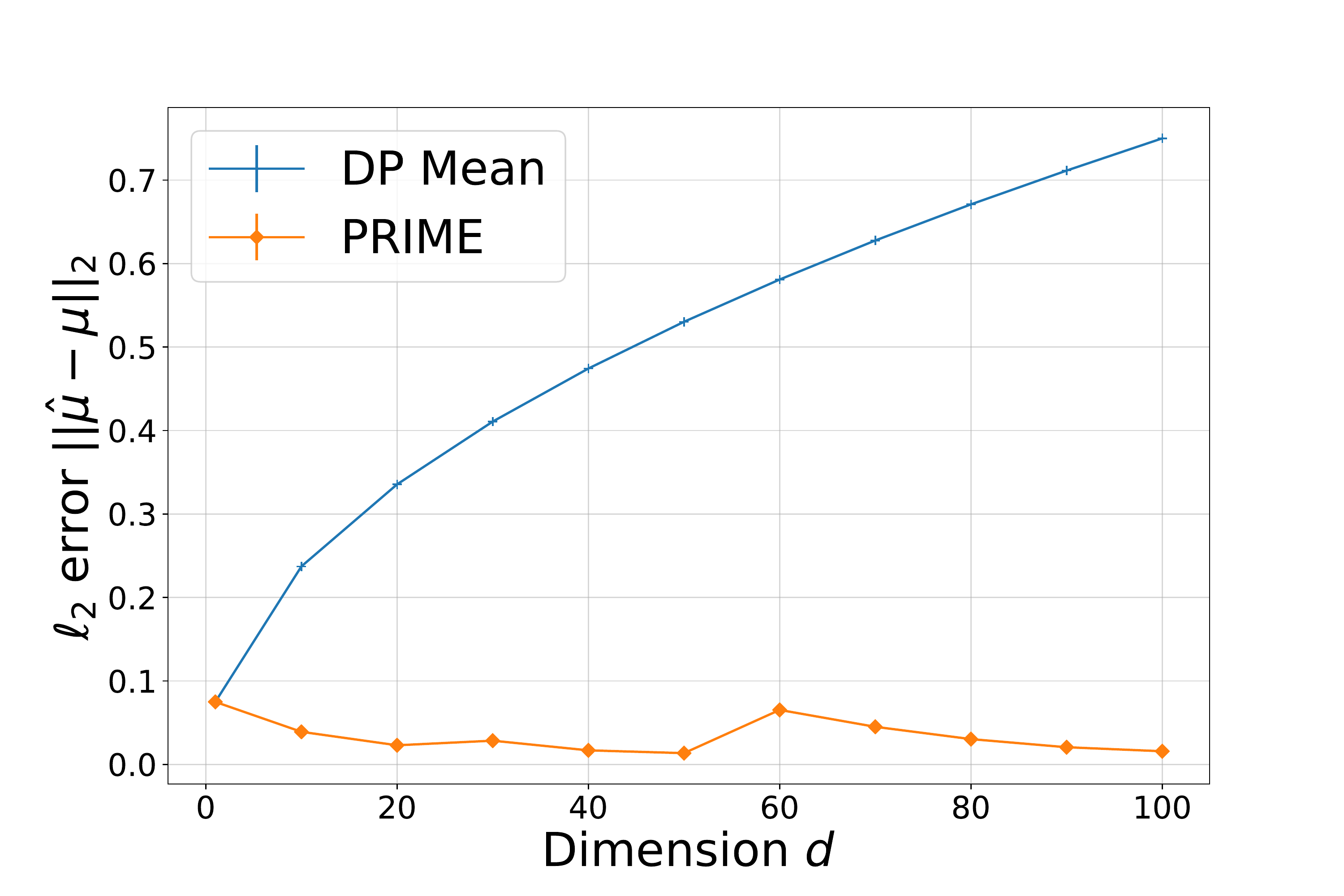}
	&
	\includegraphics[width=.32\linewidth]{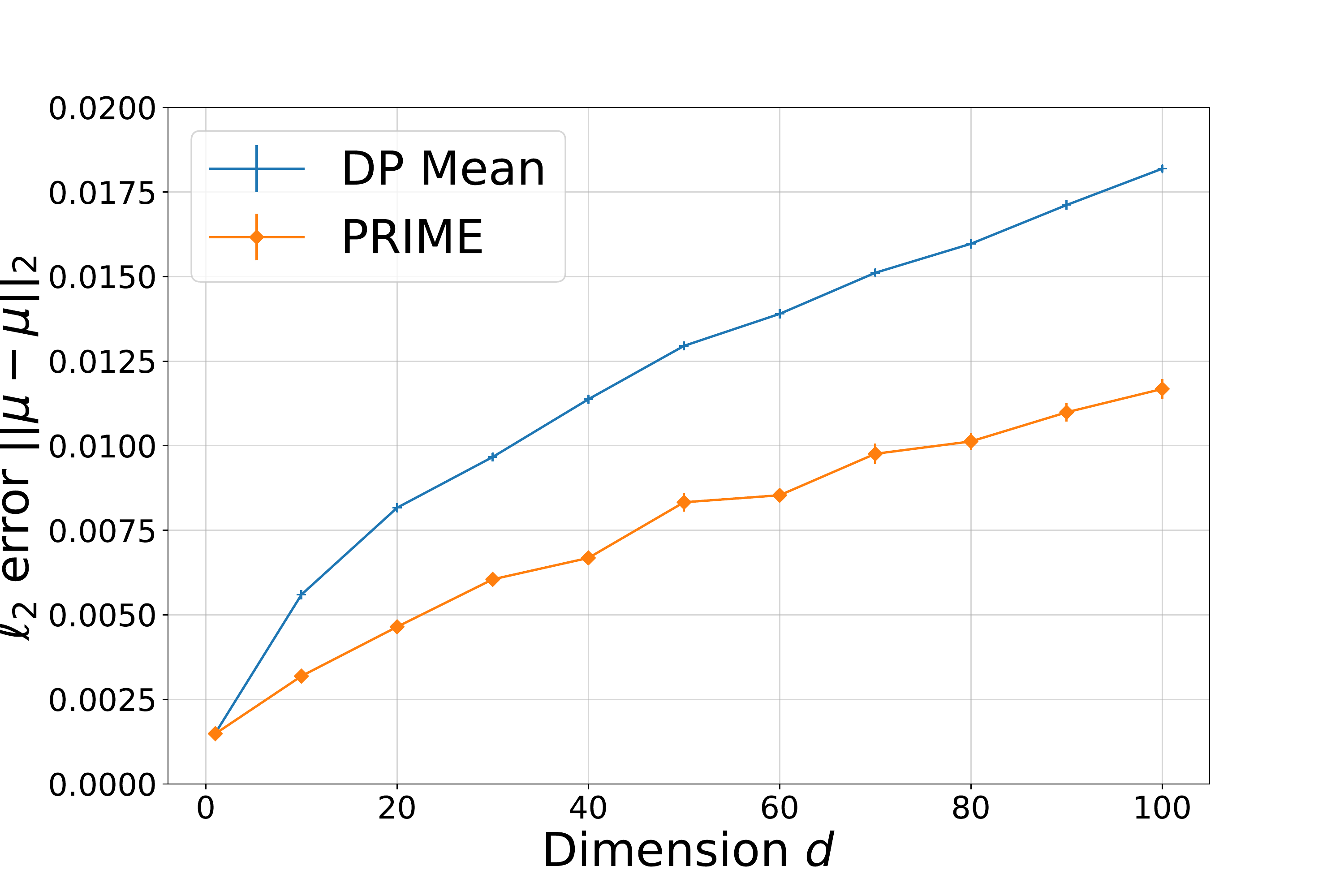}
	&
	\includegraphics[width=.32\linewidth]{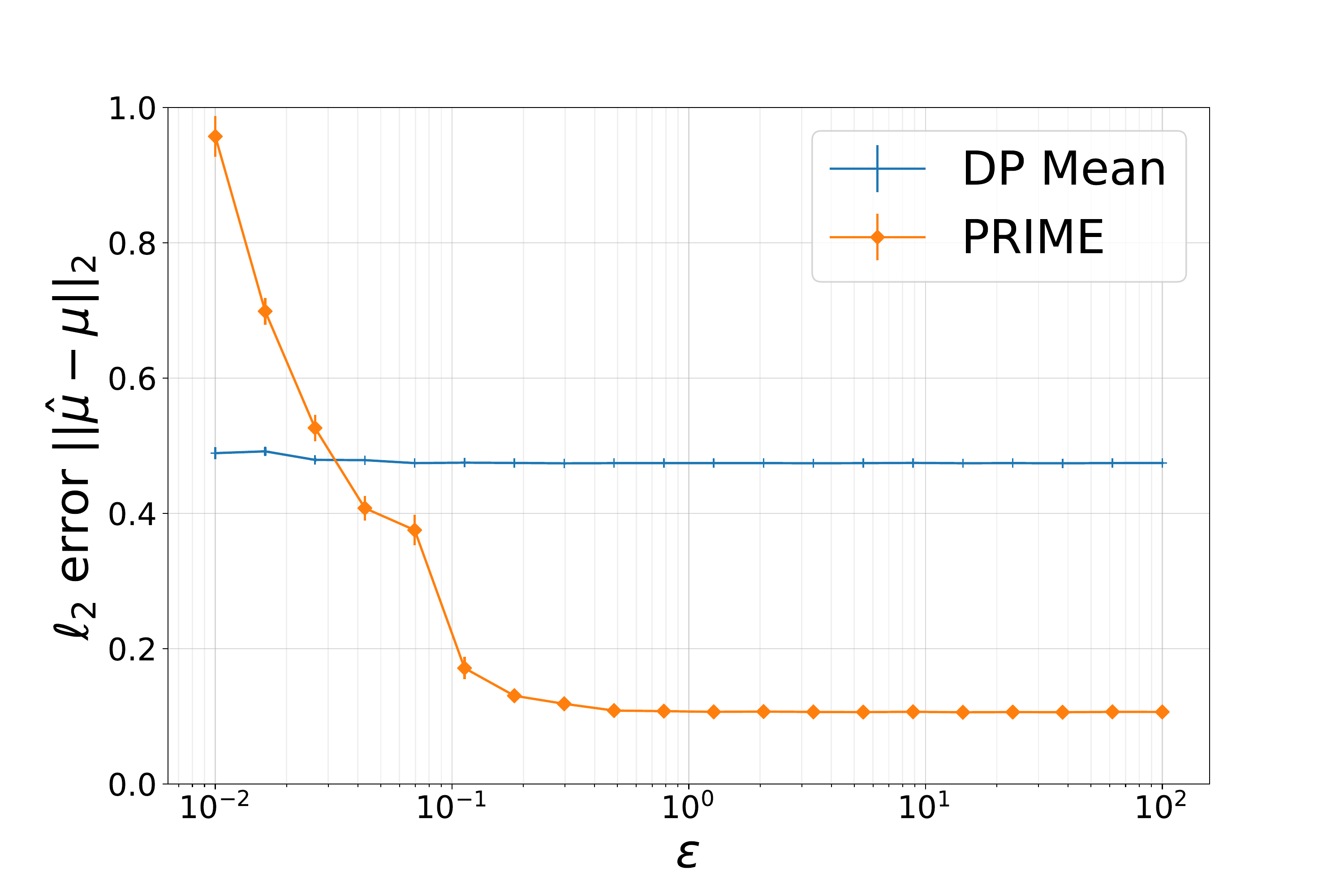}
	\end{tabular}
	\vspace{-0.1cm}
    \caption{Private mean estimators (e.g., DP mean \cite{KLSU19}) are vulnerable to adversarial corruption especially in high dimensions, while the proposed PRIME achieves   robustness (and privacy) regardless of the dimension of the samples. 
    }
    \label{fig:intro}
\end{figure}
\vspace{-0.2cm}

Numerical experiments support  our theoretical claims. 
The left figure with  $(\alpha,\varepsilon,\delta,n)=(0.05,20,0.01,10^6)$ is in the large $\alpha$ regime where the 
DP Mean error is dominates by $\alpha \sqrt{d}$ and PRIME error by $\alpha\sqrt{\log (1/\alpha)}$. Hence, PRIME error is constant whereas DP Mean error increases with the dimension $d$. 
The second figure with 
$(\alpha,\varepsilon,\delta,n)=(0.001,20,0.01,10^6)$ is in the small $\alpha$ regime when DP Mean error consists of $ \alpha\sqrt{d}+\sqrt{d/n}$ and PRIME is dominated by $\sqrt{d/n}$. Both increase with the dimension $d$, and the gap can be made large by increasing $\alpha$. The right figure with $(\alpha,\delta,d,n)=(0.1,0.01,10,10^6)$ is when DP Mean error is dominated by $\alpha\sqrt{d}$ and PRIME by $\alpha\sqrt{\log(1/\alpha)}$ when $\varepsilon > c d^{1.5}/ (\alpha n)$. 
Below this threshold, which happens in this example around $\varepsilon=0.05$, the added noise in the private mechanism starts to dominate with decreasing $\varepsilon$. 
Both algorithms have respective thresholds below which the error increases  with decreasing  $\varepsilon$.  
This threshold is larger for PRIME because it uses the privacy budget to perform multiple operations and hence the noise added to the final output is larger compared to DP Mean. Below this threshold, which can be easily determined based on the known parameters $(\varepsilon, \delta,n, \alpha)$, we should either collect more data (which will decrease the threshold) or give up filtering  and spend all privacy budget on $q_{\rm range}$ and the empirical mean  (which will reduce the  error). 
Details of the experiments are  in Appendix~\ref{sec:experiments}.

\section{Exponential time algorithm with near-optimal  sample  complexity}
\label{sec:subgauss}
\noindent {\bf Novelty.} An existing exponential time algorithm  for robust and private mean estimation in  \cite{bun2019private}
strictly requires the uncorrupted samples to be drawn from a Gaussian distribution. We also provide a similar algorithm based on private Tukey median in Appendix~\ref{sec:tukey} and its analysis in Appendix~\ref{sec:proof_tukey}.
In this section, we introduce a novel estimator that achieves near-optimal guarantees for more general sub-Gaussian  distributions (and also covariance  bounded distributions) but takes  an {\em exponential} run-time. Its innovation is in leveraging on the {\em resilience} property of well-behaved distributions not only to estimate the mean robustly (which is the standard use of the property) but also to adaptively bound the sensitivity of the estimator, thus achieving optimal privacy-accuracy tradeoff. 


\begin{definition}[Resilience from Definition 1 in~\cite{steinhardt2018resilience}] 
A set of points $\{x_i\}_{i\in S}$ lying in $\reals^d$ is $(\sigma,\alpha)$-resilient around a point $\mu$ if   $\|(1/|T|)\sum_{i\in T}(x_i-\mu) \|_2 \le \sigma$ for all subsets $T\subset S$ of size  $(1-\alpha)|S|$.
\end{definition}
{\bf Algorithm.} 
As data is  corrupted, we define $R(S)$ as a surrogate for resilience of the uncorrupted part of the set.
If $S$ indeed consists of a $1-\alpha$ fraction of independent samples from the promised class of distributions, the goodness score $R(S)$ will be close to the resilience property of the good data. 

\begin{definition}[Goodness of a set]
{For $\mu(S)=(1/|S|)\sum_{i\in S} x_i$,} let us define
\begin{eqnarray*}
R(S) &\triangleq& \min_{S'\subset S, |S'|=(1-2\alpha)|S|.}\,\,\max_{T\subset S', |T| = (1-\alpha)|S'|.}\,\|\mu(T)-\mu(S')\|_2\;.\label{eq:goodness}
\end{eqnarray*} 
\label{def:goodness}
\end{definition} 
Algorithm~\ref{alg:exp} first checks if the resilience   matches that of the promised distribution. 
The data is pre-processed with $q_{\rm range}$ to ensure we can check $R(S)$ privately. Once resilience is cleared, we can safely use the exponential mechanism based on the  score function $d(\hat{\mu},S)$ in Definition~\ref{def:dist_exp} 
to select an approximate robust mean $\hat\mu$ privately. The choice of the sensitivity critically relies on the fact that resilient datasets have small sensitivity of $O((1/n)\sqrt{\log(1/\alpha)})$. Without the resilience check, the sensitivity is  $O( d^{1/2}/n)$ resulting in an extra factor of $\sqrt{d}$   in the sample complexity.

\begin{algorithm2e}[ht]
   \caption{Exponential-time private and robust mean estimation}
   \label{alg:exp}
   	\DontPrintSemicolon 
	\KwIn{$S=\{x_i\}_{i\in[n]}$, $\alpha \in(0,1/2)$, $(\varepsilon,\delta)$ }
	\SetKwProg{Fn}{}{:}{}
	{
	{\bf if }{$n<cd^{1/2}\log(1/\delta) /\ (\varepsilon\alpha\sqrt{\log(1/\alpha)})$ }{\bf then }{{\rm {\bf Output:} $\emptyset$}} \hfill[
	    $cd^{1/2}\log(1/\delta) /\ (\varepsilon\alpha)$ for hevay-tail]\\
	    $(\bar{x},B) \gets  q_{\rm range}( S,  (1/3)\varepsilon,(1/3)\delta)$ \hfill[
	    $q_{\rm range-ht}(\cdot)$ for hevay-tail]\\
	    Project the data points onto the ball: $ {x}_i\gets {\cal P}_{{\cal B}_{\sqrt{d}B/2}(
	   \bar{x})}(x_i)$, for all $i\in[n]$ \;
	    $\widehat{R}(S) \gets R(S) + {\rm Lap}(3Bd^{1/2}/(n\varepsilon))$\;
	   {\bf if }{$\widehat{R}(S)> 2\, \alpha \sqrt{\log (1/\alpha)}$ }{\bf then }{{\rm {\bf Output:} $\emptyset$}} 
	   \hfill[$\widehat{R}(S) > 2c_\zeta \sqrt\alpha$ for hevay-tail]
	   \\
	   {\bf else Output:} a randomly drawn point $\hat{\mu}\in{\cal B}_{\sqrt{d}B/2}(
	   \bar{x})$  sampled from a density \;
	   \hspace{1cm} $r(\hat\mu) \propto e^{-(1/(24\sqrt{\log(1/\alpha)}))\varepsilon \,n\, d(\hat\mu,S) }$
	   \hfill[$e^{-(\varepsilon n \sqrt\alpha /(24 c_\zeta))d(\hat\mu,S)}$ for heavy-tail]
    } 
\end{algorithm2e}

We propose the score function $d(\hat\mu, S)$ in the following definition, which is a robust estimator of the distance between the mean and the candidate $\hat\mu$.


\begin{definition}
\label{def:dist_exp}
For a set of data $\{x_i\}_{i\in S}$ lying in $\reals^d$, for any $v\in \mathbb{S}^{d-1}$, define ${\cal{T}}^{v}$ to be the $3\alpha |S|$ points with the largest $v^\top x_i$ value, ${\cal{B}}^{v}$ to be the $3\alpha |S|$ points with the smallest $v^\top x_i$ value, and ${\cal{M}}^{v} = S\setminus({\cal{T}}^{v}\cup {\cal{B}}^{v})$. Define
$d(\hat\mu, S) \;\;\triangleq\;\; \max_{v\in \mathbb{S}^{d-1}}\left|v^\top\left(\mu({\cal M}^{v})-\hat\mu\right)\right|\;.$
\end{definition}

{\bf Analysis.} 
For any direction $v$, the truncated mean estimator $\mu({\cal M}^{v})$ provides a robust estimation of the true mean along the direction $v$, thus the distance can be simply defined by taking the maximum over all directions $v$. We show the sensitivity of this simple estimator is bounded by the resilience property $\sigma$ divided by $n$, which is $O((1/n)\sqrt{\log(1/\alpha)})$ once the resilience check is passed. This leads to the following near-optimal sample complexity. 
 We provide a proof in Appendix~\ref{sec:proof_subgaussian}.

\begin{thm}
    [Exponential time algorithm for sub-Gaussian distributions]
\label{thm:subgauss}
     Algorithm~\ref{alg:exp} is $(\varepsilon,\delta)$-DP. 
    Under Assumption~\ref{asmp:adversary}, this algorithm achieves 
    $\|\hat\mu-\mu\|_2=O(\alpha \sqrt{\log(1/\alpha)})$ with probability $1-\zeta$ if 
    $$ n=\widetilde\Omega\Big(\,\frac{d+\log\frac{1}{\zeta}}{\alpha^2\log\frac1\alpha}+\frac{ d\log\left(d \sqrt{\log(dn/\zeta)}/\alpha\right)+d^{1/2}\log\frac1\delta+\log\frac1\zeta }{\varepsilon \alpha}+\frac{\sqrt{d\log\frac1\delta}\log\frac{d}{\zeta\delta} }{\varepsilon} \,\Big)\;.$$ 
\end{thm}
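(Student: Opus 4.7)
The plan is to establish privacy and utility separately, splitting the budget $(\varepsilon,\delta)$ into three roughly equal pieces for the three mechanisms in Algorithm~\ref{alg:exp}. The central innovation is to use the resilience test as a propose-test-release (PTR) gate so that the exponential mechanism can be analyzed with the data-dependent sensitivity $O(\sqrt{\log(1/\alpha)}/n)$ instead of the worst-case $O(\sqrt{d}B/n)$; this is exactly what shaves the $\sqrt{d}$ factor that would otherwise appear in the exponential-mechanism term of the sample complexity.

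\textbf{Privacy.} The range-finder $q_{\rm range}$ is $(\varepsilon/3,\delta/3)$-DP by assumption. After clipping every sample into the ball of diameter $\sqrt{d}B$, $R(S)$ has $\ell_1$-sensitivity at most $\sqrt{d}B/n$ (changing one sample perturbs each empirical mean in Definition~\ref{def:goodness} by at most diameter$/$subset-size), so the Laplace noise of scale $3\sqrt{d}B/(n\varepsilon)$ makes the test $(\varepsilon/3)$-DP. For the exponential mechanism I would use PTR: with probability at least $1-\delta/3$ the Laplace noise couples neighboring datasets so that whenever $S$ passes the threshold $2\alpha\sqrt{\log(1/\alpha)}$, also $R(S')\leq 3\alpha\sqrt{\log(1/\alpha)}$; on this event I will prove that $d(\hat\mu,\cdot)$ has sensitivity $O(\sqrt{\log(1/\alpha)}/n)$, which makes the sampler $(\varepsilon/3)$-DP by the standard exponential-mechanism guarantee. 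Basic composition then yields $(\varepsilon,\delta)$-DP.

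\textbf{Resilience and utility of the test.} For isotropic sub-Gaussian samples with $n=\widetilde\Omega((d+\log(1/\zeta))/(\alpha^2\log(1/\alpha)))$, classical concentration (as used in \cite{steinhardt2018resilience}) gives that $S_{\rm good}$ is $(\sigma,3\alpha)$-resilient around $\mu$ with $\sigma=O(\alpha\sqrt{\log(1/\alpha)})$ w.h.p. Choosing $S'=S\cap S_{\rm good}$ in Definition~\ref{def:goodness} makes the inner maximum at most $2\sigma$ via two applications of resilience (shift from $\mu(S')$ to $\mu$ and from $\mu$ to $\mu(T)$), so $R(S)\leq O(\alpha\sqrt{\log(1/\alpha)})$ deterministically on the good event. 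The Laplace tail ensures the noisy test passes whenever $n=\widetilde\Omega(\sqrt{d}\log(1/\zeta)/(\varepsilon\alpha))$, absorbing the $\sqrt{d}$ sensitivity of $R$ into this additive term rather than into the exponential mechanism.

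\textbf{Sensitivity of $d(\hat\mu,\cdot)$ and exponential-mechanism utility.} The main technical step is to show that whenever $R(S)\leq O(\sigma)$, the score $d(\hat\mu,S)=\max_v|v^\top(\mu({\cal M}^v)-\hat\mu)|$ has sensitivity $O(\sigma/n)$. Writing the change in $\mu({\cal M}^v)$ as the sum of (i) the direct change when the altered point stays in ${\cal M}^v$ and (ii) the effect of swapping a boundary point in/out of ${\cal M}^v$, the naive bound for (ii) is diameter$/n=O(\sqrt{d}B/n)$; the improvement comes from the fact that the $R$-bound forces every $(1-O(\alpha))$-sized subset to have mean within $O(\sigma)$ of a common reference, so the swap perturbs $v^\top\mu({\cal M}^v)$ by at most $O(\sigma/n)$ uniformly in $v$. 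Plugging this into the standard exponential-mechanism utility lemma and using $d(\mu,S)=O(\alpha\sqrt{\log(1/\alpha)})$ (again by resilience, since every ${\cal M}^v$ intersects $S_{\rm good}$ in a $(1-O(\alpha))$-fraction) yields, with probability $\geq 1-\zeta/3$, $d(\hat\mu,S)\leq O(\alpha\sqrt{\log(1/\alpha)})+O((d\sigma/(\varepsilon n))\log(\sqrt{d}B/\sigma))$; the second term is $O(\alpha\sqrt{\log(1/\alpha)})$ precisely under the $d\log(\cdot)/(\varepsilon\alpha)$ sample bound. Finally, substituting $v=(\hat\mu-\mu)/\|\hat\mu-\mu\|_2$ into the max and using $|v^\top(\mu({\cal M}^v)-\mu)|=O(\sigma)$ turns $d(\hat\mu,S)=O(\alpha\sqrt{\log(1/\alpha)})$ into $\|\hat\mu-\mu\|_2=O(\alpha\sqrt{\log(1/\alpha)})$. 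The three terms of the stated $n$ correspond to (i) resilience of the good data, (ii) the Laplace test, and (iii) exponential-mechanism volume ratio, with the extra $\sqrt{d\log(1/\delta)}\log(d/(\zeta\delta))/\varepsilon$ term coming from $q_{\rm range}$.

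The main obstacle is the sensitivity analysis of $d(\hat\mu,\cdot)$ under the resilience promise: one must argue \emph{uniformly} over all directions $v\in\mathbb{S}^{d-1}$ that boundary swaps in ${\cal M}^v$ are controlled by $\sigma$ rather than by the ball diameter $\sqrt{d}B$, and one must carefully couple this with the PTR noise on $R$ so the exponential mechanism's sensitivity bound holds on pairs $(S,S')$ where only one of the two is guaranteed to have $R$ below the threshold. Getting the constants tight across the four moving pieces (resilience of good data, PTR coupling, score sensitivity, and volume ratio in the exponential mechanism) is where the bulk of the work lies.
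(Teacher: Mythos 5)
Your proposal follows the same architecture as the paper's proof in Appendix~\ref{sec:proof_subgaussian}/\ref{sec:proof_heavytail_exp}: split the budget across $q_{\rm range}$, the Laplace-noised resilience check, and the exponential mechanism; treat the resilience check as a gate so that on the passing event the score has data-dependent sensitivity; and finish with the volume-ratio utility bound for the exponential mechanism. The PTR phrasing is a legitimate rephrasing of the paper's two-case analysis ($R(S)>3\alpha\sqrt{\log(1/\alpha)}$ forces $\emptyset$ on both neighbors with probability $\ge 1-\delta/3$; $R(S)\le 3\alpha\sqrt{\log(1/\alpha)}$ gives bounded sensitivity regardless of $R(S')$).

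There is, however, a real gap in the sensitivity step, which is the crux of the whole argument. In your third paragraph you write that $d(\hat\mu,\cdot)$ has sensitivity $O(\sigma/n)$ with $\sigma=O(\alpha\sqrt{\log(1/\alpha)})$, justified because the $R$-bound ``forces every $(1-O(\alpha))$-sized subset to have mean within $O(\sigma)$ of a common reference.'' This does not give what you need, for two reasons. First, the fact that both $\mu({\cal M}^v(S))$ and $\mu({\cal M}^v(S'))$ lie within $O(\sigma)$ of $\mu$ only bounds $|v^\top(\mu({\cal M}^v(S))-\mu({\cal M}^v(S')))|$ by $O(\sigma)$, with no $1/n$ decay; the resilience of large-subset means is simply too coarse to see a single-point swap. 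Second, the actual sensitivity is $O(\sigma/(\alpha n))=O(\sqrt{\log(1/\alpha)}/n)$ --- larger than your $O(\sigma/n)$ by a factor of $1/\alpha$. This is not a cosmetic constant: it is the value hard-coded into the exponential mechanism's density $r(\hat\mu)\propto e^{-\varepsilon n\,d(\hat\mu,S)/(24\sqrt{\log(1/\alpha)})}$, and using $\sigma/n$ would render the mechanism non-private. The correct mechanism (Lemma~\ref{lem:sense_exp_ht} and its sub-Gaussian analog, via Lemma~\ref{lem:tail_resilience}) is pointwise control of the quantile boundary: $(\sigma,\alpha)$-resilience implies that the means of the top/bottom $3\alpha|S|$ tails of $S_{\rm good}$ along any $v$ lie within $\frac{2-\alpha}{\alpha}\sigma=O(\sigma/\alpha)$ of $\mu(S_{\rm good})$; hence the gap between $\min_{i\in S_{\rm good}\cap{\cal T}^v}v^\top x_i$ and $\max_{i\in S_{\rm good}\cap{\cal B}^v}v^\top x_i$ is $O(\sigma/\alpha)$, uniformly in $v$. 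A single swap can only move the trimmed set by one boundary element, whose projection along $v$ is pinned to this $O(\sigma/\alpha)$-wide band, so $v^\top\mu({\cal M}^v)$ moves by $O(\sigma/(\alpha n))$. You do quote the correct value $O(\sqrt{\log(1/\alpha)}/n)$ in the privacy paragraph, so this is likely a slip, but the justification offered for the key step would not establish it, and it also propagates into your utility expression $O((d\sigma/(\varepsilon n))\log(\sqrt{d}B/\sigma))$, which should carry the sensitivity $\sqrt{\log(1/\alpha)}/n$, not $\sigma/n$, to recover the stated $n\gtrsim d\log(\cdot)/(\varepsilon\alpha)$ term.
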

\noindent{\bf Run-time.}
 Computing $R(S)$ exactly can take $O(d e^{\Theta(n)})$ operations.  The exponential mechanism implemented with $\alpha$-covering for $\hat\mu$ and a constant covering for $v$ can take $O(nd(\sqrt{\log(dn/\zeta)}/\alpha)^{d})$ operations.

\section{Conclusion}\label{sec:discussion}
Differentially private mean estimation is brittle against a small fraction of the samples being corrupted by an adversary. We show that robustness can be achieved without any increase in the sample complexity by introducing a novel DP mean estimator, which  requires run-time exponential in the dimension of the samples. 
We emphasize that the unknown true mean $\mu$ can be any vector in $\reals^d$, and we do not require a known bound on the norm, $\|\mu\|$, that some previous work requires (e.g., \cite{KV17}).   
The technical contribution is in leveraging the resilience property of well-behaved distributions in an innovative way to not only find robust mean (which is the typical use case of resilience) but also  bound sensitivity for optimal privacy guarantee. 
 To cope with the computational challenge, 
 we  propose an efficient algorithm, which we call PRIME, that achieves the optimal target accuracy at the cost of an increased sample complexity. 
  Again,  the unknown true mean $\mu$ can be any vector in $\reals^d$, and PRIME does not require a known bound on the norm, $\|\mu\|$, that some previous work requires (e.g., \cite{KV17}). 
 The technical contributions are  $(i)$ a novel framework for private iterative filtering and its tight analysis of the end-to-end sensitivity and $(ii)$ novel filtering algorithm of {\sc DPthreshold} which is critical in  privately running  matrix multiplicative weights and hence  significantly reducing the number of accesses to the database. 
 With appropriately chosen parameters, 
we show that  our exponential time approach achieves near-optimal guarantees for both sub-Gaussian and covariance bounded distributions and PRIME achieves the same accuracy efficiently but at the cost of an increased sample complexity by a $d^{1/2}$ factor.

There are several directions for improving our results further and applying the framework to solve other problems. 
PRIME provides a new design principle for private and robust estimation. This can be more broadly applied to fundamental statistical analyses such as 
robust covariance estimation  \cite{diakonikolas2019robust,diakonikolas2017beingrobust,li2020robust}
 robust PCA \cite{kong2020robust,jambulapati2020robust}, and robust linear regression \cite{klivans2018efficient,diakonikolas2019efficient}.

PRIME could be improved in a few directions. 
First, the sample complexity of   
$\widetilde\Omega ( (d/(\alpha^2\log(1/\alpha))) +  (d^{3/2}/(\varepsilon\alpha\log(1/\alpha)) )\log(1/\delta) )$
in Theorem~\ref{thm:main2}
is suboptimal in the second term. 
Improving the $d^{3/2}$ factor requires bypassing differentially private singular value decomposition, which seems to be a challenging task. 
However, it might be possible to separate the $\log(1/\delta)$ factor from the rest of the terms and get an additive error of the form 
$\widetilde\Omega ( (d/(\alpha^2\log(1/\alpha))) +  (d^{3/2}/(\varepsilon\alpha\log(1/\alpha)) )+(1/\varepsilon)\log(1/\delta) )$. This requires using Laplace mechanism in private MMW (line~\ref{line:mmw} Algortihm~\ref{alg:DPMMWfilter}). 
Secondly, the time complexity of PRIME is dominated by computation time of the matrix exponential in (line~\ref{line:mmw} Algortihm~\ref{alg:DPMMWfilter}). 
Total number of operations scale as $\widetilde{O}(d^3+nd^2)$. 
One might hope to achieve 
$\widetilde{O}(nd)$ time complexity using approximate computations of $\tau_j$'s using techniques from \cite{dong2019quantum}. This does not improve the sample complexity, as the number of times the dataset is accessed remains the same.  
Finally, 
for (non-robust) private mean estimation, {\sc CoinPress} provides a practical improvement in the small sample regime by progressively refining the search space  \cite{biswas2020coinpress}. 
The same principle could be applied to PRIME to design a robust version of {\sc CoinPress}. 
One important question remains open; how are differential privacy and robust statistics fundamentally related? We believe our exponential time algorithm hints on a fundamental connection between robust statistics of a data projected onto one-dimensional subspace and sensitivity of resulting score function for the exponential mechanism. It is an interesting direction to pursue this connection further to design novel algorithms that bridge privacy and robustness.

\section*{Acknowledgement}
Sham Kakade acknowledges funding from the National Science Foundation
under award CCF-1703574. 
Sewoong Oh acknowledges funding from Google faculty research award, NSF grants IIS-1929955, CCF-1705007, CNS-2002664, CCF 2019844 as a part of Institute for Foundation of Machine Learning, and CNS-2112471 as a part of Institute  for Future Edge Networks and Distributed Intelligence. 

\bibliographystyle{plain}
\bibliography{references}

\newpage 
\appendix

\addcontentsline{toc}{section}{Appendix} 
\part{Appendix} 
\parttoc



\section{Related work}
\label{sec:related}

{\bf Private statistical analysis.} 
Traditional private data analyses require bounded support of the samples to leverage the resulting bounded sensitivity. For example, each entry is constrained to have finite $\ell_2$ norm in standard private principal component analysis \cite{PPCA}, which does not apply to Gaussian samples.  
Fundamentally departing from these approaches,  \cite{KV17} first established an optimal mean estimation of Gaussian samples with {\em unbounded} support. The breakthrough is in first  adaptively estimating the range of the data using a private histogram, thus bounding the support and the resulting sensitivity.  This spurred the design of private algorithms for high-dimensional mean and covariance estimation \cite{KLSU19,biswas2020coinpress},  heavy-tailed mean estimation \cite{kamath2020private}, learning mixture of Gaussian   \cite{kamath2020differentially}, learning Markov random fields \cite{zhang2020privately}, and statistical testing \cite{canonne2019private}. 
Under the Gaussian distribution  with no adversary, \cite{aden2020sample} achieves an accuracy of $\|\hat{\mu}-\mu\|_2 \leq \tilde\alpha$ 
with the best known sample complexity of $n=\widetilde{O}((d/\tilde\alpha^2)+(d/\tilde\alpha \varepsilon) + (1/\varepsilon)\log(1/\delta))$ while guaranteeing $(\varepsilon,\delta)$-differential privacy. This nearly matches the known lower bounds of 
$\Omega(d/\tilde\alpha^2)$ for non-private finite sample complexity, $\widetilde\Omega((1/\varepsilon)\log(1/\delta))$ for privately learning one-dimensional unit variance Gaussian \cite{KV17}, and $\widetilde\Omega(d/\tilde\alpha\varepsilon)$ for multi-dimensional Gaussian estimation \cite{KLSU19}. However, this does not generalize to sub-Gaussian distributions and \cite{aden2020sample} does not provide a tractable algorithm. 
A polynomial time algorithm is proposed in \cite{KLSU19} that achieves a slightly worse sample complexity of $\widetilde{O}((d/\tilde\alpha^2)+(d\log^{1/2}(1/\delta)/\tilde\alpha \varepsilon) )$, which can also seamlessly generalized to sub-Gaussian distributions. 

\cite{cai2019cost} takes a different approach of deviating from standard definition of sub-Gaussianity to provide a larger lower bound on the sample complexity scaling as $n = \Omega(d \sqrt{\log(1/\delta)} / (\alpha \varepsilon))$ 
for mean estimation with a known covariance. Concretely, they consider distributions satisfying $ {\mathbb E}_{x\sim P}[e^{\lambda \langle x-\mu , e_k \rangle}] \leq e^{\lambda^2\sigma^2}$ for all $k\in[d]$ where $e_k$ is the $k$-th standard basis vector. Notice that this condition only requires sub-Gaussianity when projected onto standard bases. Standard definition of high-dimensional sub-Gaussianity (which is assumed in this paper)  requires sub-Gaussianity in  all directions. Therefore, their lower bound is not comparable with our achievable upper bounds. Further, the example they construct to show the lower bound does not satisfy our sub-Gaussianity assumptions. 

 In an attempt to design efficient algorithms for robust and private mean estimation, 
 \cite{dhar2020designing} proposed an algorithm with a mis-calculated sensitivity, which can result in violating the privacy guarantee. 
 This can be corrected by pre-processing with 
 our approach of checking the  resilience (as in Algorithm~\ref{alg:exp}), but this requires a run-time exponential in the  dimension. 

For estimating the mean of a {\em covariance bounded} distributions up to an error  of $\|\hat\mu-\mu\|
_2 =O(\tilde\alpha^{1/2})$, \cite{kamath2020private} shows that  $\Omega(d/(\tilde\alpha \varepsilon))$ samples are necessary and provides an efficient algorithm matching this up to a factor of $\log^{1/2}(1/\delta)$. 
For a more general family of distributions with bounded $k$-moment, \cite{kamath2020private} shows that an error  of $\|\hat\mu-\mu\|
_2 =O(\tilde\alpha^{(k-1)/k})$ can be achieved with $n=\widetilde{O}((d/\tilde\alpha^{2(k-1)/k})+(d\log^{1/2}(1/\delta)/(\varepsilon\tilde\alpha)))$ samples. 

However,  under $\alpha$-corruption, 
\cite{hopkins2019hard} shows that achieving an error better than $O(\alpha^{1/2})$ under $k$-th moment bound 
is as computationally hard as  
the small-set expansion problem, even without requiring DP. 
Hence, under the assumption of ${\rm P}\neq {\rm NP}$, no polynomial-time algorithm exists  
that can outperform our {\sc PRIME-ht} even if we have stronger assumptions of $k$-th moment bound. On the other hand, there exists  an exponential time algorithm for non-private robust mean estimation that achieves $\|\mu-\hat\mu\|_2=O(\alpha^{(k-1)/k})$ \cite{zhu2019generalized}.
Combining it with the bound of \cite{hopkins2019hard}, 
an interesting open question is whether there is an (exponential time) algorithm that achieves $\|\mu-\hat\mu\|_2=O(\alpha^{(k-1)/k})$ with sample complexity $n=\widetilde{O}( (d/ \alpha^{2(k-1)/k})+(d\log^{1/2}(1/\delta)/(\varepsilon \alpha)) )$ under $\alpha$-corruption and $(\varepsilon,\delta)$-DP.

\medskip\noindent{\bf Robust estimation.} 
Designing robust estimators under the presence of outliers has been considered by statistics community since 1960s~\cite{tukey1960survey, anscombe1960rejection, huber1964robust}. Recently, \cite{diakonikolas2019robust, lai2016agnostic} give the first polynomial time algorithm for mean and covariance estimation with no (or very weak) dependency on the dimensionality in the estimation error. Since then, there has been a flurry of research on robust estimation problems, including mean estimation~\cite{diakonikolas2017beingrobust, dong2019quantum, hopkins2020robust, hopkins2020mean, diakonikolas2018robustly}, covariance estimation~\cite{cheng2019faster, li2020robust}, linear regression and sparse regression \cite{bhatia2015robust,BhatiaJKK17, BDLS17, gao2020robust, prasad2018robust, klivans2018efficient, diakonikolas2019sever, liu2018high, karmalkar2019compressed,dalalyan2019outlier, mukhoty2019globally, diakonikolas2019efficient, karmalkar2019list}, principal component analysis~\cite{kong2020robust, jambulapati2020robust}, mixture models~\cite{diakonikolas2020robustly, jia2019robustly, kothari2018robust, hopkins2018mixture} and list-decodable learning \cite{diakonikolas2018list, raghavendra2020list, charikar2017learning, bakshi2020list, cherapanamjeri2020list}. See~\cite{diakonikolas2019recent} for a survey of recent work.

One line of work that is particularly related to our algorithm PRIME is ~\cite{cheng2019high,dong2019quantum, depersin2019robust, cheng2019faster, cherapanamjeri2020list}, which leverage the ideas from matrix multiplicative weight and fast SDP solver to achieve faster, sometimes nearly linear time, algorithms for mean and covariance estimation. In PRIME, we use a matrix multiplicative weight approach similar to~\cite{dong2019quantum} to reduce the iteration complexity to logarithmic, which enables us to achieve the $d^{3/2}$ dependency in the sample complexity.

The concept of \textit{resilience} is introduced in~\cite{steinhardt2018resilience} as a sufficient condition such that learning in the presence of adversarial corruption is information-theoretically possible. The idea of resilience is later generalized in \cite{zhu2019generalized} for a wider range of adversarial corruption models. While there exists simple exponential time robust estimation algorithm under resilience condition, it is challenging  to achieve differential privacy  due to high sensitivity.  We propose a novel approach to leverage the resilience property in our exponential time algorithm for sub-gaussian and heavy-tailed distributions.

\section{Main results under heavy-tailed distributions}
\label{sec:heavytail}

We consider distributions with bounded covariance as defined as follows.

\begin{asmp}
\label{asmp:adversary2}
An uncorrupted dataset $S_{\rm good}$ consists of $n$ i.i.d.~samples from a distribution with mean $\mu\in \reals^d$ and covariance $\Sigma\preceq {\mathbf I}$. For some $\alpha\in(0,1/2)$,
we are given a corrupted dataset  $S=\{x_i\}_{i=1}^n$  where an adversary adaptively inspects all  samples in $S_{\rm good}$, removes $\alpha n$ of them and replaces them with $S_{\rm bad}$ that are $\alpha n$ arbitrary points in $\reals^d$. 
\end{asmp}

Under these assumptions, Algorithm~\ref{alg:exp} achieves near optimal guarantees but takes exponential time. 
The dominant term in the sample complexity $\widetilde{\Omega}(d/(\varepsilon \alpha))$
cannot be improved as it matches that of the optimal non-robust private estimation \cite{kamath2020private}. The accuracy $O(\sqrt\alpha)$ cannot be improved as it matches that of the optimal non-private robust estimation \cite{dong2019quantum}. 
We provide a proof in Appendix~\ref{sec:proof_heavytail_exp}. 

\begin{thm}[Exponential time algorithm for covariance bounded distributions]
    \label{thm:heavytail_exp}
    Algorithm~\ref{alg:exp} is $(\varepsilon,\delta)$-differentially private. 
    Under Assumption~\ref{asmp:adversary2}, if $$n=\Omega\Big( \frac{d\log(d/\alpha^{1.5}) +d^{1/2}\log(1/\delta)}{\varepsilon \alpha} +  \frac{d^{1/2}\log^{3/2}(1/\delta)log(d/\delta)}{\varepsilon} \Big)\;,$$ 
    this algorithm achieves 
    $\|\hat\mu-\mu\|_2=O(\sqrt{\alpha})$ with probability $0.9$. 
\end{thm}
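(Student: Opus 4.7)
\textbf{Proof plan for Theorem~\ref{thm:heavytail_exp}.}

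The plan is to mirror the four-step argument used for Theorem~\ref{thm:subgauss}, but adapting every bound to the weaker resilience guarantee available under Assumption~\ref{asmp:adversary2}. Concretely, for a covariance-bounded distribution the good set $S_{\rm good}$ is $(\sigma,\alpha)$-resilient around $\mu$ with $\sigma = O(\sqrt{\alpha})$ with high probability (as opposed to $\sigma = O(\alpha\sqrt{\log(1/\alpha)})$ in the sub-Gaussian case); this is the quantity $c_\zeta\sqrt{\alpha}$ that appears in the modified threshold. The algorithm is the heavy-tail variant of Algorithm~\ref{alg:exp}, i.e.\ the range query $q_{\rm range-ht}$, the resilience check with threshold $2c_\zeta\sqrt{\alpha}$, and the exponential mechanism with the rescaled temperature $\varepsilon n \sqrt{\alpha}/(24 c_\zeta)$.

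\textbf{Privacy.} I would split the $(\varepsilon,\delta)$-budget in three equal parts, one each for $q_{\rm range-ht}$, the noisy resilience check, and the exponential mechanism, then appeal to basic composition. The radius query $q_{\rm range-ht}$ is $(\varepsilon/3,\delta/3)$-DP by construction (the heavy-tail analogue of the private histogram used for $q_{\rm range}$). After projecting onto the ball of radius $\sqrt{d}B/2$, the sensitivity of the statistic $R(S)$ is at most $3Bd^{1/2}/n$ (changing one data point can shift the mean of any resilience-witnessing subset by at most $B\sqrt{d}/|T|$, and there are a constant number of such subsets in the min-max definition), so the Laplace noise makes the resilience test $(\varepsilon/3,0)$-DP. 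The third piece is the exponential mechanism, whose privacy requires bounding the sensitivity of $d(\hat\mu,S)$; the key point is that \emph{conditioned on the resilience check having passed}, $R(S)\le O(\sqrt\alpha)$, so a calculation essentially identical to the one behind Algorithm~\ref{alg:exp} gives sensitivity $O(\sqrt{\alpha}/n)$, making the exponential mechanism with temperature $\varepsilon n\sqrt{\alpha}/(24 c_\zeta)$ differentially private with the remaining $\varepsilon/3$. Basic composition then yields the overall $(\varepsilon,\delta)$-DP guarantee.

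\textbf{Utility.} I would first invoke standard concentration for covariance-bounded distributions to show that, with probability $1-\zeta$, the uncorrupted set is $(O(\sqrt{\alpha}),\alpha)$-resilient around $\mu$ and moreover $R(S_{\rm good}\cap S)\le c_\zeta\sqrt{\alpha}$; this is where the $d/(\alpha\varepsilon)$ sample requirement appears, needed both for the histogram-based $q_{\rm range-ht}$ to locate $\mu$ within an $O(\sqrt{d})$-sized box and for the resilience concentration itself. Because the adversary changes at most $\alpha n$ points, $R(S)\le 2c_\zeta\sqrt{\alpha}$, so the Laplace noise of magnitude $O(Bd^{1/2}/(n\varepsilon))$ is, under the stated $n$, small enough that the check is passed whp, and the algorithm does not output $\emptyset$. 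For the exponential mechanism, resilience implies $d(\mu,S)=O(\sqrt{\alpha})$ (using the same truncated-direction argument that underlies Definition~\ref{def:dist_exp}), so $\mu$ has large density. A covering argument over an $O(\sqrt{\alpha}/\sqrt{d})$-net inside the ball $\mathcal{B}_{\sqrt{d}B/2}(\bar x)$ then shows that, with the prescribed temperature, the sampled $\hat\mu$ satisfies $d(\hat\mu,S)=O(\sqrt{\alpha})$ with high probability once $n \gtrsim d\log(d/\alpha^{1.5})/(\varepsilon\alpha)$. Finally, I would convert $d(\hat\mu,S)=O(\sqrt\alpha)$ into $\|\hat\mu-\mu\|_2=O(\sqrt{\alpha})$ by picking $v = (\hat\mu-\mu)/\|\hat\mu-\mu\|$ in Definition~\ref{def:dist_exp} and using resilience of $S$ in that direction.

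\textbf{Main obstacle.} The substantive difficulty, relative to the sub-Gaussian case, is controlling the sensitivity of $d(\hat\mu,S)$ uniformly over $v\in\mathbb{S}^{d-1}$ when only a second moment bound is available. Removing or inserting a single point can in principle move the truncated mean $\mu(\mathcal{M}^v)$ by as much as $\|x\|/|\mathcal{M}^v|$, which under heavy-tailed data is not $O(1/n)$. The resilience-based argument circumvents this: once we condition on the resilience check passing, any $3\alpha$-truncated direction has a truncated mean whose contribution from the extreme points is bounded by the resilience parameter $O(\sqrt\alpha)$, giving the desired sensitivity $O(\sqrt\alpha/n)$. Executing this carefully (and showing that the $\sqrt{d\log(1/\delta)}$ additive terms in $n$ are exactly what is needed to absorb the Laplace noise in the sensitivity test) is the technical crux; the remaining bookkeeping is parallel to the proof of Theorem~\ref{thm:subgauss}.
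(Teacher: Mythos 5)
Your overall structure mirrors the paper's (range estimation via $q_{\rm range-ht}$, a propose-test-release style case analysis on $R(S)$, and an exponential mechanism over the score $d(\hat\mu,S)$), but there is a genuine error in the sensitivity bound that sits at the heart of the privacy argument. You claim that once $R(S)=O(\sqrt{\alpha})$, the sensitivity of $d(\hat\mu,S)$ is $O(\sqrt{\alpha}/n)$, ``because the truncated mean's contribution from the extreme points is bounded by the resilience parameter.'' That reasoning misses a $1/\alpha$ amplification. Resilience with parameter $\sigma$ controls the deviation of \emph{large} subsets of size $(1-\alpha)|S|$; the mean of an $\alpha$-fraction \emph{tail} can deviate by as much as $\tfrac{2-\alpha}{\alpha}\sigma = O(\sigma/\alpha)$ (this is exactly Lemma~\ref{lem:tail_resilience}, and the proof needs it). What actually bounds how much $\mu(\mathcal M^v)$ can move when one point changes is the width of the quantile gap, i.e.\ the distance between the $3\alpha$-th and $(1-3\alpha)$-th quantiles of $\{v^\top x_i\}$. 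Via the tail-resilience lemma that gap is $O(\sigma/\alpha) = O(1/\sqrt{\alpha})$, not $O(\sqrt{\alpha})$, giving $\Delta_d = O(1/(n\sqrt{\alpha}))$ -- larger than your claim by a factor of $1/\alpha$. This is also what Lemma~\ref{lem:sense_exp_ht} states.

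A quick consistency check exposes the mismatch: the algorithm's temperature $\varepsilon n\sqrt{\alpha}/(24c_\zeta)$ is chosen exactly so that the exponential mechanism is $\varepsilon$-DP when $\Delta_d \approx 12c_\zeta/(n\sqrt{\alpha})$. If the sensitivity were really $O(\sqrt{\alpha}/n)$, you could afford a temperature a factor of $1/\alpha$ larger, which would drive the dominant sample-complexity term down to $\Omega(d\log(\cdot)/\varepsilon)$ rather than the stated $\Omega(d\log(\cdot)/(\varepsilon\alpha))$. The theorem and algorithm match the larger (correct) sensitivity, so your bound cannot be right. A smaller but worth-flagging issue: phrasing the privacy argument as ``conditioned on the resilience check having passed'' is not quite how a DP proof proceeds -- one case-analyzes on the data-dependent quantity $R(S)\le 3c_\zeta\sqrt\alpha$ versus $R(S)>3c_\zeta\sqrt\alpha$ (and uses that a passing check is unlikely in the second case), not on the random outcome of the noisy test.
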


We propose an efficient algorithm {\sc PRIME-ht} and show that it achieves the same optimal accuracy but at the cost of increased sample complexity of $O(d
^{3/2}\log(1/\delta)/(\varepsilon\alpha))$. 
In the first step, we need increase the radius of the ball to $O(\sqrt{d/\alpha})$ to include a $1-\alpha$ fraction  of the clean samples, where $q_{\rm range-ht}$ returns $B=O(1/\sqrt{\alpha})$ and ${\cal B}_{\sqrt{d} B/2}(\bar{x})$ is a $\ell_2$-ball of radius $\sqrt{d} B/2$ centered at $\bar{x}$.  
This is followed by  a matrix multiplicative weight filter similar to {\sc DPMMWfilterr} but the parameter choices are tailored for covariance bounded distributions. We provide a proof in Appendix~\ref{sec:proof_heavytail_poly}.

\begin{thm}[Efficient algorithm for covariance bounded distributions]
    {\sc PRIME-ht} is  $(\varepsilon,\delta)$-differentially private. 
    Under  Assumption~\ref{asmp:adversary2} there exists a universal constant $c\in(0,0.1)$ such that if $\alpha\leq c$, and $n=\widetilde\Omega (    (d^{3/2}/(\varepsilon\alpha) )\log(1/\delta) )$, 
    then {\sc PRIME-ht} achieves $\|\hat\mu-\mu\|_2 = O(\alpha^{1/2})$ with probability $0.9$. The notation $\widetilde\Omega(\cdot)$ hides logarithmic terms in $d$, and $1/\alpha$.  
    \label{thm:heavytail_poly}
\end{thm}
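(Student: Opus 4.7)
The plan is to adapt the argument behind Theorem~\ref{thm:main2} (efficient PRIME for sub-Gaussian data) to the covariance bounded setting, modifying three components: (i) the range-estimation subroutine which now must cope with heavier tails and therefore returns a larger clipping ball; (ii) the deterministic ``good-set'' regularity conditions that drive the utility analysis, which weaken from sub-Gaussian concentration to moment-based concentration; and (iii) the termination threshold and the MMW step-size constants in the adaptive {\sc DPthreshold} step. Privacy will follow from essentially the same composition analysis used for PRIME; the substantive work is in the utility argument and in balancing parameters to yield the stated sample complexity.

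For privacy, I would argue that {\sc PRIME-ht} makes the same interactive accesses to the database as Algorithm~\ref{alg:DPfilter_interactive_main}: an initial budget is spent on $q_{\rm range\text{-}ht}$ to produce a clipping ball ${\cal B}_{\sqrt{d}B/2}(\bar x)$ with $B=O(1/\sqrt\alpha)$, chosen so that at least a $(1-\alpha)$-fraction of uncorrupted samples lies inside; subsequently, at most $T=O((\log d)^2)$ iterations are performed, each accessing the clipped database $O(1)$ times via analogues of $q_{\rm size}, q_{\rm mean}, q_{\rm norm}$, and the MMW-covariance query. The crucial point is that Lemma~\ref{lem:DPfilter_sensitivity} still applies verbatim because the filter structure is unchanged: the surviving set $S_t$ under fixed parameters $\{(\mu_\ell,v_\ell,Z_\ell)\}$ is insensitive, in the $d_\triangle$ sense, to swapping one datapoint. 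Each per-iteration query is then privatized by Laplace or Gaussian mechanism with $(\varepsilon_1,\delta_1)$ where $\varepsilon_1=\Theta(\varepsilon/\sqrt{T\log(1/\delta)})$, and advanced composition delivers the claimed $(\varepsilon,\delta)$-DP.

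For utility, the first step is to establish the deterministic regularity conditions on the uncorrupted set $S_{\rm good}$. With $n=\widetilde\Omega(d/\alpha)$ samples from a distribution with $\Sigma\preceq \mathbf{I}$, one has with high probability $\|\mu(S_{\rm good})-\mu\|_2=O(\sqrt\alpha)$ and $\|{\rm Cov}(S_{\rm good})\|_2\le 1+O(\sqrt\alpha)$, together with a resilience-type bound: for every $T\subset S_{\rm good}$ with $|T|\ge (1-2\alpha)|S_{\rm good}|$, $\|\mu(T)-\mu\|_2=O(\sqrt\alpha)$. These replace the sub-Gaussian concentration inequalities used in the proof of Theorem~\ref{thm:main2}. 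Conditioned on this event, the standard ``gap lemma'' for filter-based robust mean estimation implies that once the MMW filter has reduced the top spectral norm of the covariance of the current working set to $O(1)$, the empirical mean of the surviving set lies within $O(\sqrt\alpha)$ of $\mu$; this is what motivates replacing the termination rule $\lambda_t\le C\alpha\log(1/\alpha)$ (sub-Gaussian) by $\lambda_t\le C$ in the heavy-tailed variant.

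Next, I would verify that the MMW filter of \S\ref{sec:MMWfilter}, driven by {\sc DPthreshold}, brings the spectral norm down to $O(1)$ within $O((\log d)^2)$ iterations despite the noise injected by the Gaussian mechanism on the covariance. The key quantitative point is that, under clipping to a ball of radius $O(\sqrt{d/\alpha})$, the per-coordinate sensitivity of a single empirical covariance is $O(B^2)=O(1/\alpha)$, so the Gaussian noise added at each MMW step has spectral norm $O(B^2\sqrt{d\log(1/\delta)}/(\varepsilon_1 n))$. With $\varepsilon_1=\Theta(\varepsilon/\sqrt{T\log(1/\delta)})$ and $T=O((\log d)^2)$, this noise is $o(\sqrt\alpha)$ precisely when $n=\widetilde\Omega(d^{3/2}\log(1/\delta)/(\varepsilon\alpha))$, which is the sample size assumed in the theorem. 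Under this noise level, {\sc DPthreshold} still identifies a cutoff that removes strictly more bad than good points in expectation while achieving a constant multiplicative decrease in the total score, so that the epoch-by-epoch MMW progress argument of \cite{dong2019quantum} closes in $O(\log d)$ epochs of $O(\log d)$ iterations each.

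The main obstacle I anticipate is the last step: one must combine the noise-tolerant MMW convergence argument with the weaker heavy-tailed resilience so that the termination test $\lambda_t\le C$ still certifies $\|\hat\mu-\mu\|_2=O(\sqrt\alpha)$ after all the accumulated DP noise, and so that the parameter choices $(B,\varepsilon_1,\delta_1,T)$ simultaneously satisfy the composition accounting, the clipping/resilience constraints, and the multiplicative progress inequality \eqref{eq:thresholdrule}. Getting these three constraints to line up cleanly against $n=\widetilde\Omega(d^{3/2}\log(1/\delta)/(\varepsilon\alpha))$ is where most of the bookkeeping will lie; once done, the final error bound $\|\hat\mu-\mu\|_2=O(\sqrt\alpha)$ drops out of the gap lemma applied at the termination iterate.
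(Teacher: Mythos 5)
Your overall plan mirrors the paper's proof of Theorem~\ref{thm:heavytail_poly}: clip to a ball of radius $\sqrt{d}B/2$ with $B=O(1/\sqrt\alpha)$, invoke the filter-sensitivity contraction (Lemma~\ref{lem:DPfilter_sensitivity}) together with advanced composition for privacy, replace sub-Gaussian regularity with moment-based $\alpha$-goodness, switch the termination rule from $\lambda_t\le C\alpha\log(1/\alpha)$ to $\lambda_t\le C$, and run the {\sc DPMMWfilter-ht}/{\sc DPthreshold-ht} machinery for $O((\log d)^2)$ iterations. This matches the paper's architecture (Lemma~\ref{lem:dprange-ht}, Theorem~\ref{thm:accuracy_mmw_ht}, and the surrounding lemmas).

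However, the noise-accounting step has a genuine error on two counts. First, you claim the Gaussian noise added to the covariance has spectral norm $O\bigl(B^2\sqrt{d\log(1/\delta)}/(\varepsilon_1 n)\bigr)$, using a per-coordinate sensitivity $O(B^2/n)$. The Gaussian mechanism applied to the flattened $d\times d$ second-moment matrix is calibrated to its $\ell_2$ sensitivity, which under clipping to a ball of radius $\sqrt{d}B/2$ is $\|x_ix_i^\top - x_i'x_i'^\top\|_F / n = O(dB^2/n)$, not $O(B^2/n)$; this is what makes the per-entry standard deviation in Algorithm~\ref{alg:DPMMWfilter_ht} proportional to $B^2 d/(n\varepsilon_2)$. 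Combining with the spectral norm of a symmetric Gaussian matrix ($\sim\sigma\sqrt{d}$), the true spectral noise is $O\bigl(d^{3/2}B^2\sqrt{\log(1/\delta)}/(n\varepsilon_2)\bigr)$ — a full factor of $d$ larger than your estimate. Second, you require this noise to be $o(\sqrt\alpha)$; the paper only requires it to be a sufficiently small absolute constant (see the list of events in the proof of Lemma~\ref{lemma:good_tau_ht}, each bounded by $0.001$). This is because the termination test is at the constant scale $\lambda^{(s)}\le C$, and the gap lemma (Lemma~\ref{lemma:reg2}) converts $\|M(T)\|_2=O(1)$ into $\|\mu(T)-\mu\|_2=O(\sqrt\alpha)$; demanding $o(\sqrt\alpha)$ noise would force an unnecessary extra $1/\sqrt\alpha$ in the sample complexity. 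Once you use the correct sensitivity $dB^2/n$ together with the constant noise target, the bound $d^{3/2}B^2\sqrt{\log(1/\delta)}/(n\varepsilon_2) = O(1)$ with $B^2=O(1/\alpha)$ and $\varepsilon_2=\Theta(\varepsilon/\sqrt{T\log(1/\delta)})$ gives exactly $n=\widetilde\Omega\bigl(d^{3/2}\log(1/\delta)/(\varepsilon\alpha)\bigr)$. As written, your two errors do not cancel: plugging your noise formula and $n=d^{3/2}\log(1/\delta)/(\varepsilon\alpha)$ yields noise $\sim 1/d$, which is not $o(\sqrt\alpha)$ uniformly. Also note one structural detail worth tracking: in the heavy-tailed filter the removal step does not restrict to the top-$2\alpha$ tail ${\cal T}_{2\alpha}$ (unlike the sub-Gaussian {\sc DPMMWfilter}), and the regularity conditions feeding {\sc DPthreshold-ht} are correspondingly stated over all of $S_{\rm good}\cap S$ rather than $S_{\rm good}\cap{\cal T}_{2\alpha}$.
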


 {\em Remark 1.} To boost the success probability to $1-\zeta$ for some small $\zeta>0$, we will randomly split the data into $O(\log(1/\zeta))$ subsets of equal sizes, and run Algorithm~\ref{alg:prime-ht} to obtain a mean estimation from each of the subset. Then we can apply multivariate ``mean-of-means'' type estimator~\cite{lugosi2019sub} to get $\|\hat{\mu}-\mu\|_2 = O(\alpha^{1/2})$ with probability $1-\zeta$.
This is efficient as 
we only have $O(\log 1/\zeta)$ trials and run-time of mean-of-means is dominated by the time it takes to find all pairwise distances, which is only $O(d \, ({\rm log}(1/\zeta))^2)$. There are  $({\rm log}(1/\zeta))^2$ pairs, and for each pair we compute the distance between means in $d$ operations.

\begin{algorithm2e}[h]
   \caption{PRIvate and robust Mean Estimation for covariance bounded distributions ({\sc PRIME-ht})}
   \label{alg:prime-ht} 
   	\DontPrintSemicolon 
	\KwIn{$ S  =  \{ x_{i} \in {\mathbb R}^d\}_{i=1}^{n} $, adversarial fraction $\alpha \in(0,1/2)$, 
	number of iterations $T_1=O(\log (d/\alpha)),T_2 = O(\log d)$, target privacy $(\varepsilon,\delta)$   }
	\SetKwProg{Fn}{}{:}{}
	{
	$(\bar x, B) \leftarrow q_{\rm range-ht} (S, 0.01\varepsilon,0.01\delta)$  \hfill [Algorithm~\ref{alg:DPrange-ht} in Appendix~\ref{sec:proof_heavytail}]\\
	Project the data onto the ball: $\tilde x_i \leftarrow {\cal P}_{{\cal B}_{\sqrt{d}B/2}(\bar{x})}(x_i)$, for all $i\in[n]$\;
	$\hat\mu \leftarrow$ \text{\sc DPMMWfilter-ht}($ \{\tilde x_i\}_{i=1}^n,\alpha,T_1,T_2,0.99\varepsilon,0.99\delta$) \hfill [Algorithm~\ref{alg:DPMMWfilter_ht} in Appendix~\ref{sec:proof_heavytail}]  \\
	\KwOut{$\hat\mu$}
	} 
\end{algorithm2e}

\newpage
\section{Background on (non-private) robust mean estimation}
\label{sec:nonprivate}

The following tie-breaking rule is not essential for robust estimation, but is critical for proving differential privacy, as shown later in Appendix~\ref{sec:proof_main2_dp}. 

\begin{definition}[Subset of the largest $\alpha$ fraction]
    \label{def:topset}
    Given a set of scalar values $\{\tau_i=\langle V, (x_i-\mu)(x_i-\mu)^\top\rangle \}_{i\in S'}$ for a subset $S'\subseteq[n]$,  
    define the sorted list $\pi$ of $S'$ such that $\tau_{\pi(i)} \geq \tau_{\pi(i+1)}$ for all $i\in [|S'|-1]$. When there is a tie such that $\tau_i=\tau_j$, it is broken  
    by $\pi^{-1}(i) \leq \pi^{-1}(j) \Leftrightarrow x_{i,1}\geq x_{j,1}$. Further ties are broken by comparing the remaining entries of $x_i$ and $x_j$, in an increasing order of the coordinate.  If $x_i=x_j$ ,then the tie is broken arbitrarily. 
    We define ${\cal T}_{\alpha}=\{\pi(1),\ldots,\pi(\lceil n\alpha\rceil)\}$ to be the set of largest $\lceil n\alpha\rceil $ valued samples. 
\end{definition}
With this definition of $\alpha$-tail, we can now provide a complete description of the robust mean estimation that achieves the guarantee provided in Proposition~\ref{pro:nonprivatefiltering}. 

\begin{algorithm2e}[ht]
   \caption{ Non-private robust mean estimation \cite{li-notes} }
   \label{alg:nonprivatefiltering}
   	\DontPrintSemicolon 
	\KwIn{$S =  \{ x_{i}  \}_{i=1}^{n} $, $\alpha \in(0,1)$, $S_0 = [n]$ }
	\SetKwProg{Fn}{}{:}{}
	{
	    \For{$t=1,\ldots$}{
	    \label{line:PCA1}
	    \If{$ \| \sum_{i\in S_{t-1}}(x_i-\mu_{t-1})(x_i-\mu_{t-1})^\top -{\mathbf I}\|_2 < C \alpha \log(1/\alpha) $ }{
	    \KwOut{$\hat\mu = \sum_{i\in S_{t-1}}   x_i$} \label{line:nonprivate_stop}
	    } 
	    \Else{ 
	            $\mu_t \leftarrow  (1/|S_{t-1}|)\sum_{i\in S_{t-1} }   x_i $ \label{line01}\;
            $v_t\leftarrow \text{1st principal direction of }( \, \{ (x_i-\mu_t)\}_{i\in S_{t-1} } )$  \;
    $Z_t \leftarrow {\rm Unif}([0,1])$\;
            $S_t \leftarrow S_{t-1} \setminus$  $\{ i \,|\, i\in{\cal T}_{2\alpha} \text{ for }\{\tau_j = (v_t^\top (x_j-\mu_t) )^2\}_{j\in S_{t-1}}$  and $\tau_i \geq  Z_t\,\max_{j\in S_{t-1}}(v_t^\top(x_j-\mu_t))^2  \}$, where ${\cal T}_{2\alpha}$ is defined in Definition~\ref{def:topset}. \label{line02} \;
   \label{line:PCA2}
   }
   }
    }
\end{algorithm2e}

\section{A new framework for {\em private} iterative filtering} 
\label{sec:app_novel}

We provide complete descriptions of all algorithms used in private iterative filtering. We present the {\em interactive} version first, followed by the {\em centralized} version. 

\subsection{Interactive version of the algorithm} 
\label{sec:proof_dprange_lemma}

Adaptive estimation of the range of the dataset is essential in computing private statistics of data. We use the following algorithm proposed in \cite{KV17}. It computes a private histogram of a set of  1-dimensional points and select the largest bin as the one potentially containing the mean of the data. Note that $B$ does not  need not be chosen adaptively  to include all the uncorrupted data with a high probability.

\begin{algorithm2e}[ht]
   \caption{Differentially private range estimation ($q_{\rm range}$) {\cite[Algorithm 1]{KV17}} }
   \label{alg:DPrange}
   \DontPrintSemicolon 
   	\KwIn{${\cal D}_n  =  \{ x_{i} \}_{i=1}^{n} $, $\varepsilon$, $\delta$, $\sigma=1$ }
   \SetKwProg{Fn}{}{:}{}
   {
      \For{$j\gets 1$ \KwTo $d$}{
      $R_{\rm max}^{(j)}\leftarrow \max_{i\in [n]}x_{i}^{(j)}$ and $R_{\rm min}^{(j)}\leftarrow \min_{i\in [n]}x_{i}^{(j)}$ where $x_i^{(j)}$ is the $j$-th coordinate of $x_i$\;
   	Run the histogram learner of Lemma~\ref{lem:hist-KV17} with privacy parameters $\left(\min\{\eps, 0.9\}/2\sqrt{2d\log(2/\delta)},\delta/(2d)\right)$ and bins $B_l = (2\sigma\ell,2\sigma(\ell+1)]$ for all $\ell\in\{\lceil R_{\rm min}^{(j)}/2\sigma\rceil-1,\ldots,\lceil R_{\rm max}^{(j)}/2\sigma\rceil \}$ on input $\mathcal{D}_n$ to obtain noisy estimates $\{\tilde{h}_{j,l}\}_{l=\lceil R_{\rm min}^{(j)}/2\sigma\rceil-1}^{\lceil R_{\rm max}^{(j)}/2\sigma\rceil}$\;
	$\bar x_j \leftarrow  2\sigma \cdot \argmax_{\ell\in\{\lceil R_{\rm min}^{(j)}/2\sigma\rceil-1,\ldots,\lceil R_{\rm max}^{(j)}/2\sigma\rceil \}} \tilde{h}_{j,\ell} $\;
	}
		\KwOut{$(\bar{x},B= 8\sigma\sqrt{\log(dn/\zeta)} )$}
	}
\end{algorithm2e} 
The following guarantee (and the algorithm description) is used in the analysis (and the implementation) of the query $q_{\rm range}$. 

\begin{lemma}[Histogram Learner, Lemma 2.3 in~\cite{KV17}]\label{lem:hist-KV17} For every $K\in \mathbb{N}\cup \infty$, domain $\Omega$, for every collection of disjoint bins $B_1,\ldots, B_K$ defined on $\Omega$, $n\in \mathbb{N}$, $\eps,\delta\in(0,1/n)$, $\beta>0$ and $\alpha\in (0,1)$ there exists an $(\eps,\delta)$-differentially private algorithm $M:\Omega^n\to \mathbb{R}^K$ such that for any set of data $X_1,\ldots,X_n\in \Omega^n$
\begin{enumerate}
\item $\hat{p}_k = \frac{1}{n}\sum_{X_i\in B_k}1$
\item $(\tilde{p}_1,\ldots,\tilde{p}_K)\gets M(X_1,\ldots,X_n),$ and
\item
$$
n\ge \min\left\{\frac{8}{\eps\beta}\log(2K/\alpha),\frac{8}{\eps\beta}\log(4/\alpha\delta)\right\} 
$$
\end{enumerate}
then,
$$
\mathbb{P}(|\tilde{p}_k-\hat{p}_k|\le\beta)\ge 1-\alpha
$$
\end{lemma}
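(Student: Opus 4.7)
The plan is to prove this by exhibiting an $(\varepsilon,\delta)$-DP histogram mechanism that meets the stated error bound, splitting into two cases corresponding to the two terms inside the $\min$. The first term is achieved by a pure-DP Laplace mechanism that needs $\log K$ in $n$; the second term is achieved by a stability-based (``propose-test-release'' style) thresholded histogram that removes the $K$-dependence at the cost of an additive $\delta$ in the privacy guarantee. Throughout, let $c_k = n\hat p_k$ be the integer bin counts, so that moving one datapoint changes the count vector by $\pm 1$ in at most two coordinates.

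\textbf{Case 1 (Laplace mechanism, valid when $n\ge 8\log(2K/\alpha)/(\varepsilon\beta)$).} The plan is to release $\tilde p_k=\hat p_k+Z_k/n$ with $Z_k\stackrel{\text{iid}}{\sim}\mathrm{Lap}(2/\varepsilon)$. Privacy follows because the $\ell_1$-sensitivity of the vector $(c_1,\dots,c_K)$ is at most $2$, so independent Laplace noise of scale $2/\varepsilon$ yields $\varepsilon$-DP (hence $(\varepsilon,\delta)$-DP for any $\delta\ge0$). For utility, the Laplace tail gives $\mathbb{P}(|Z_k|>n\beta)\le e^{-n\varepsilon\beta/2}$, so a union bound over the $K$ bins yields failure probability at most $Ke^{-n\varepsilon\beta/2}\le\alpha$ under the stated sample-size assumption (the constant $8$ absorbs the factor-of-two from the sensitivity).

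\textbf{Case 2 (stability-based truncation, valid when $n\ge 8\log(4/\alpha\delta)/(\varepsilon\beta)$).} Here $K$ may be infinite, so I cannot pay $\log K$. The plan is to only consider bins that actually contain data. Form noisy counts $\tilde c_k = c_k + Z_k$ with $Z_k\sim\mathrm{Lap}(2/\varepsilon)$ for every bin $k$ with $c_k\ge 1$; release $\tilde p_k = \tilde c_k/n$ if $\tilde c_k\ge \tau$ for a threshold $\tau = 1+(2/\varepsilon)\log(2/\delta)$, and report $\tilde p_k = 0$ otherwise (and for all untouched bins). Privacy is the standard stable-histogram analysis: on neighboring datasets, any bin affected only on one side has true count $\le 1$, hence its noisy count exceeds $\tau$ only with probability $\le\delta/2$ by the Laplace tail, which supplies the additive $\delta$; on all other bins we apply the Laplace mechanism to a count of sensitivity $1$, giving $\varepsilon$-DP. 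For utility, choose $n$ so that $\tau/n\le \beta$ (using $n\ge (2/(\varepsilon\beta))\log(2/\delta)$, absorbed into the constant $8$); then any bin with $\hat p_k\le \beta/2$ is reported as $0$ with error $\le\beta$, while for bins with $\hat p_k>\beta/2$ the Laplace tail together with a union bound over only the $\le 2/\beta$ nonempty bins of mass $>\beta/2$ gives the desired accuracy with probability $\ge 1-\alpha$.

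\textbf{Main obstacle.} The only genuinely delicate step is the second case: one must argue simultaneously that (i) the thresholding never spuriously releases a bin that is empty on a neighboring database (this gives the $\delta$ in privacy), and (ii) the threshold $\tau/n$ is small enough to be absorbed into the target accuracy $\beta$. This is exactly where the $\log(1/\delta)$ appears and where one pays a factor of $1/\alpha$ through a union bound that, crucially, is over the at most $O(n/\tau)$ bins that can ever be released rather than over all $K$ bins. Combining the two cases yields the claimed $n$-bound as the minimum of the two.
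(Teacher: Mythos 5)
The paper does not prove this lemma: it is stated as Lemma~2.3 of~\cite{KV17}, and the ``proof'' in the appendix simply cites that reference and remarks on how the infinite-bin version is implemented (only occupied bins receive noise). So your proposal and the paper aren't taking the same or different routes --- the paper takes none. That said, your two-case construction (a pure Laplace histogram for the $\log K$ branch and a stability-thresholded histogram for the $\log(1/\delta)$ branch) is exactly how this result is established in the private-histogram literature, and Case~1 is correct as written up to constants.

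There is a genuine gap in your utility argument for Case~2. You union-bound only over the ``$\le 2/\beta$ nonempty bins of mass $>\beta/2$,'' asserting that bins with $\hat p_k\le\beta/2$ are reported as $0$ with error $\le\beta$. But a bin with $\hat p_k\le\beta/2$ is \emph{not} necessarily reported as $0$: if its Laplace noise is large and positive it will clear the threshold $\tau$, and in that event its reported error is $|Z_k|/n$, which can exceed $\beta$. These spuriously-released light bins are exactly the bins your union bound skips, so the argument as written does not control the uniform error. The correct repair is to union-bound over \emph{all} nonempty bins. There are at most $n$ of them, which naively costs a $\log n$ factor in the sample complexity --- but this is precisely where the hypothesis $\delta\in(0,1/n)$ is needed: it gives $\log n < \log(1/\delta)$, so the $\log n$ is absorbed by the $\log(4/\alpha\delta)$ already present. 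You never invoke the assumption $\delta<1/n$, which is a signal that something was going to be off; once you use it, Case~2 closes.

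One smaller remark: in your ``main obstacle'' paragraph you say the union bound is over ``the at most $O(n/\tau)$ bins that can ever be released.'' That is not quite the right statement either --- any nonempty bin \emph{can} be released with sufficiently large noise --- and it is inconsistent with the $2/\beta$ figure earlier in the same case. The clean count to union-bound over is the number of nonempty bins, $\le n$, and then the $\delta<1/n$ hypothesis does the work.
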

\begin{proof}
This is an intermediate result in the proof of Lemma 2.3 in~\cite{KV17}.
Note that, conceptually, we are applying the private histogram algorithm to an infinite number of bins in the intervals $ \{\cdots, (-4\sigma,-2\sigma],(-2\sigma,0],(0,2\sigma],(2\sigma,4\sigma],(4\sigma,6\sigma]\cdots \}$ each of length $2\sigma$. This is possible because the algorithm only changes the bins that are occupied by at least on sample. Practically, we only need to add noise to those bins that are occupied, and hence we limit the range from $R_{\rm min}^{(j)}$ to $R_{\rm max}^{(j)}$ without loss of generality and without any changes to the privacy guarantee of the algorithm.

\end{proof}

The rest of the queries ($q_{\rm size}$, $q_{\rm mean}$, $q_{\rm PCA}$, and $q_{\rm norm}$) are provided below. The most innovative part is the repeated application of filtering that is run every time one of the queries is called. In the Filter query below, because we choose $(i)$ to use the sampling version of robust mean estimation as opposed to weighting version which assigned  a weight on each sample between zero and one measuring how good (i.e., score one) or bad (i.e., score zero) each sample point is, and 
$(ii)$ we switched the threshold to be $dB^2 Z_\ell$, we can show that this filtering with fixed parameters $\{\mu_\ell,v_\ell,Z_\ell\}_{\ell\in[t-1]}$ preserves sensitivity in Lemma~\ref{lem:DPfilter_sensitivity}. 
  This justifies the choice of noise in each output perturbation mechanism,  satisfying the desired level of $(\varepsilon,\delta)$-DP.
  We provide the complete privacy analysis in Appendix~\ref{sec:proof_main1} and also the analysis of the utility of the algorithm as measure by the accuracy. 
  
\begin{algorithm2e}[ht]
   \caption{Interactive private queries used in Algorithm~\ref{alg:DPfilter_interactive_main} }
   \label{alg:DPfilter_interactive}
   	\DontPrintSemicolon 
	\SetKwProg{Fn}{}{:}{}
	{
    } 
    \vspace*{5pt}
 \Fn{\text{\rm Filter} ($\{(\mu_\ell,v_\ell,Z_\ell)\}_{\ell \in[t-1]},\bar x, B $)}{
        $S_0 \leftarrow [n] $ \;
        	Clip the data points: $ x_i \leftarrow {\cal P}_{\bar{x}+[-B/2,B/2]^d}(x_i)$, for all $i\in[n]$\;
        \For{$\ell=1,\ldots,t-1$}{
            $S_\ell \leftarrow S_{\ell-1} \setminus$  $\{ i\in S_{\ell-1} :$  $i\in {\cal T}_{2\alpha}$ for  $\{\tau_j=(v_\ell^\top (x_j-\mu_\ell) )^2\}_{j\in S_{\ell-1}}$ and    
   $\tau_i \geq  d\,B^2 \,Z_{\ell} \}$ 
        }
    }    
	\Fn{$q_{\rm mean}(\{(\mu_\ell,v_\ell,Z_\ell)\}_{\ell \in[t-1]},\varepsilon,\bar x, B)$}{
	    Filter($\{(\mu_\ell,v_\ell,Z_\ell)\}_{\ell \in[t-1]},\bar x, B$)\;
        \KwRet{$\mu_t\leftarrow  (1/|S_{t-1}|) \big( \sum_{i\in S_{t-1}} x_i\big)  + {\rm Lap}({2B}/({n \varepsilon })) $}
	}
	\Fn{$q_{\rm PCA}(\{(\mu_\ell,v_\ell,Z_\ell)\}_{\ell \in[t-1]},\mu_t,\varepsilon,\delta,\bar x, B )$}{
        Filter($\{(\mu_\ell,v_\ell,Z_\ell)\}_{\ell \in[t-1]},\bar x, B$)\;
        {\bf return} $v_t\leftarrow$ \text{\rm top singular vector of} $\Sigma_{t-1} = $\\
        $(1/n)\sum_{i\in S_{t-1}}(x_i-\mu_t)(x_i-\mu_t)^\top + {\cal N}(0,(B^2d\sqrt{2\log(1.25/\delta)} /(n\varepsilon ) )^2{\mathbf I}_{d^2\times d^2})$
	}
	
	\Fn{$q_{\rm norm} (\{(\mu_\ell, v_\ell,Z_\ell)\}_{\ell \in[t-1]},\mu_t,\varepsilon,\bar x, B)$}{
        Filter($\{(\mu_\ell,v_\ell,Z_\ell)\}_{\ell \in[t-1]},\bar x, B$)\;
        \KwRet{$\lambda_t \gets \|(1/n)\sum_{i\in S_{t-1}} (x_i-\mu_t)(x_i-\mu_t)^\top\|_2 + {\rm Lap}(2B^2d/(n\varepsilon))$
        }
	}
\vspace*{5pt}
	\Fn{$q_{\rm size} (\{(\mu_\ell,v_\ell,Z_\ell)\}_{\ell \in[t-1]},\varepsilon,\bar x, B)$}{
        Filter($\{(\mu_\ell,v_\ell,Z_\ell)\}_{\ell \in[t-1]},\bar x, B$)\;
        \KwRet{$n_t\leftarrow |S_{t-1}|+{\rm Lap}(1/\varepsilon)$ 
        }
	}
\end{algorithm2e} 

\subsection{Centralized version of the algorithm}

In practice, one should run the centralized version of the private iterative filtering, in order to avoid multiple redundant computations of the interactive version. 
The main difference is that the redundant filtering repeated every time a query is called in the interactive version is now merged into a single run. The resulting estimation and the privacy loss are exactly the same.

\begin{algorithm2e}[H]
   \caption{Private iterative filtering (centralized version)}
   \label{alg:privatefiltering} 
   	\DontPrintSemicolon 
	\KwIn{$ S  =  \{ x_{i} \in {\mathbb R}^d\}_{i=1}^{n} $, adversarial fraction $\alpha \in(0,1)$, target probability $\eta\in(0,1)$, number of iterations $T = \widetilde{\Theta}(d)$, target privacy $(\varepsilon,\delta)$  }
	\SetKwProg{Fn}{}{:}{}
	{
	$(\bar x, B) \leftarrow q_{ \rm range}(S, 0.01\varepsilon,0.01\delta)$  \hfill [Algorithm~\ref{alg:DPrange}]\\
	Clip the data points: $\tilde x_i \leftarrow {\cal P}_{\bar{x}+[-B/2,B/2]^d}(x_i)$, for all $i\in[n]$\\
	$\hat\mu \leftarrow$ \text{ \sc DPfilter}($ \{\tilde x_i\}_{i=1}^n,\alpha,T,0.99\varepsilon,0.99\delta$) \hfill \label{line:dpfilter}  [Algorithm~\ref{alg:DPfilter}]\\
	\KwOut{$\hat\mu$}
	} 
\end{algorithm2e} 

First, $q_{\rm range}$ introduced in 
\cite{KV17}, returns a hypercube $\bar{x}+[-B,B]^d$ that is guaranteed to include all uncorrupted samples, while preserving privacy. It is followed by a private filtering  {\sc DPfilter} in Algorithm~\ref{alg:DPfilter}.

\begin{algorithm2e}[ht]
   \caption{Differentially private filtering ({\sc DPfilter}) }
   \label{alg:DPfilter}
   	\DontPrintSemicolon 
	\KwIn{$S  =  \{ x_{i} \in \bar{x}+[-B/2,B/2]^d\}_{i=1}^{n} $, $\alpha \in(0,1/2)$, $T=\widetilde{O}(d B^2\log(dB^2 /(\alpha\log(1/\alpha)))  )$, $(\varepsilon,\delta)$ }
	\SetKwProg{Fn}{}{:}{}
	{
	    $S_0 \leftarrow [n]$, $\varepsilon_1\gets \min\{\varepsilon,0.9\}/(4\sqrt{2T\log(2/\delta)}) $, 
	    $\delta_1 \gets \delta/(8T)$\;
	    {\bf if} $n<(4/\varepsilon_1)\log(1/(2\delta_1))$  {\bf then Output:} $\emptyset$ \;
	    \For{$t=1,\ldots,T$}{
	        $n_t\gets |S_{t-1}|+{\rm Lap}(1/\varepsilon_1)$\;
	        \If{$n_t < 3n/4$}{{\bf Output:} $\emptyset$}
	        $\mu_t \leftarrow (1/|S_{t-1}|) \sum_{i\in S_{t-1}} x_i + {\rm Lap}( 2B/(n\, \varepsilon_1) )$  \label{line1}\;
	        $\lambda_t \gets \|(1/n)\sum_{i\in S_{t-1}}(x_i-\mu_t)(x_i-\mu_t)^\top-{\mathbf I}\|_2 + {\rm Lap}(2B^2d/(n\varepsilon_1))$\;
	        \If{$\lambda_t\leq (C-0.01)\alpha\log(1/\alpha) $}{\KwOut{$\mu_{t}$} }
            $v_t\leftarrow \text{top singular vector of }  \Sigma_{t-1} \triangleq\frac{1}{n}\sum_{i\in S_{t-1}} (x_i-\mu_t)(x_i-\mu_t)^\top + {\cal N}(0,(B^2 d\sqrt{2\log(1.25/\delta)} / (n\varepsilon_1) )^2{\mathbf I}_{d^2\times d^2})$ \label{line2} \;
            $Z_t \leftarrow {\rm Unif}([0,1])$\;
            $S_t \leftarrow S_{t-1} \setminus$  $\{i\,|\, i\in {\cal T}_
            {2\alpha} $  for $\{\tau_j=(v_t^\top (x_j-\mu_t) )^2\}_{j\in S_{t-1}}$ and $\tau_i \geq  d\,B^2 \,Z_t \}$, where ${\cal T}_{2\alpha}$ is defined in Definition~\ref{def:topset}. \label{line3}
	    }
    }
\end{algorithm2e} 

\subsection{The analysis of private iterative filtering (Algorithms \ref{alg:DPfilter_interactive_main} and \ref{alg:privatefiltering}) and a proof of Theorem~\ref{thm:main1}}
\label{sec:proof_main1}





$q_{\rm range}$, introduced in 
\cite{KV17}, returns a hypercube $\bar{x}+[-B,B]^d$ that is guaranteed to include all uncorrupted samples, while preserving privacy. 
In the following lemma, we show that $q_{\rm range}$ is also {\em robust} to adversarial corruption. 
Such adaptive bounding of the support is critical in privacy analysis of the subsequent steps. 
We clip all data points by projecting all the points with ${\cal P}_{\bar{x}+[-B/2,B/2]^d}(x)=\arg\min_{y\in \bar{x}+[-B/2,B/2]^d} \|y-x\|_2$ to lie inside the  hypercube and pass them to {\sc DPfilter} for filtering. 
 The algorithm and a proof are provided in \S\ref{sec:proof_DPrange}.

\begin{lemma}
    \label{lem:DPrange} 
    $q_{\rm range}(S,\varepsilon,\delta)$ (Algorithm~\ref{alg:DPrange}) is $(\varepsilon,\delta)$-differentially private. Under Assumption~\ref{asmp:adversary}, 
    $q_{\rm range}(S,\varepsilon,\delta)$ returns $(\bar{x},B)$ such that if $n =  \Omega \left( (\sqrt{d\log(1/\delta)}\log(d/(\zeta\delta))/\varepsilon)\right)$ and $\alpha<0.1$, then all uncorrupted samples in $S$ are in $\bar{x}+[-B,B]^d$ with probability $1-\zeta$. 
\end{lemma}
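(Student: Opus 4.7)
I would split the proof into a privacy argument and a utility argument, handling the two claims of the lemma separately.

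\textbf{Privacy.} Algorithm~\ref{alg:DPrange} makes $d$ adaptive calls to the histogram learner of Lemma~\ref{lem:hist-KV17}, one per coordinate, each with parameters $\varepsilon_0 = \min\{\varepsilon,0.9\}/(2\sqrt{2d\log(2/\delta)})$ and $\delta_0=\delta/(2d)$. Since each call is $(\varepsilon_0,\delta_0)$-DP and the final $\bar{x}_j$ is a post-processing (argmax) of the released noisy histogram, advanced composition (Dwork--Rothblum--Vadhan) across the $d$ coordinates yields overall $(\varepsilon,\delta)$-DP. The fixed output $B=8\sigma\sqrt{\log(dn/\zeta)}$ depends only on public parameters $(d,n,\zeta,\sigma)$ and is likewise post-processing.

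\textbf{Utility.} Fix a coordinate $j\in[d]$ and let $\mu_j$ be the $j$-th entry of the true mean, which lies in some bin $B_{\ell^\star}=(2\sigma\ell^\star,2\sigma(\ell^\star+1)]$. I would proceed in three steps.

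Step 1 (a well-populated good bin exists). Since each uncorrupted $x_i$ is $1$-sub-Gaussian in every direction, projection onto coordinate $j$ is $1$-sub-Gaussian around $\mu_j$, so by Hoeffding-type concentration a constant fraction of the uncorrupted samples has $|x_i^{(j)}-\mu_j|\le 2\sigma$ with high probability. Hence one of the three bins $B_{\ell^\star-1}, B_{\ell^\star}, B_{\ell^\star+1}$ has true empirical fraction at least some absolute constant $c_1>0$ among the uncorrupted samples.

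Step 2 (bin selection is robust to corruption $+$ DP noise). The $\alpha n$ corrupted samples can shift any bin's fraction by at most $\alpha\le 0.1$. By Lemma~\ref{lem:hist-KV17} applied with a small absolute constant $\beta<c_1/10$ and failure probability $\zeta/4$, provided
\[
n \;=\; \Omega\!\Big(\tfrac{1}{\varepsilon_0\,\beta}\log\tfrac{d}{\zeta\delta_0}\Big) \;=\; \Omega\!\Big(\tfrac{\sqrt{d\log(1/\delta)}}{\varepsilon}\log\tfrac{d}{\zeta\delta}\Big),
\]
the released counts $\tilde p_k$ satisfy $|\tilde p_k-\hat p_k|\le\beta$ for every occupied bin in every coordinate (union bound over the at most $nd$ occupied bins and over $d$ coordinates). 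Conversely, by sub-Gaussian tails, any bin at distance $>C\sigma$ from $\mu_j$ has uncorrupted fraction $o(1)$, so after adding at most $\alpha$ corruption mass and $\beta$ noise it cannot beat the good bin. Thus $\bar{x}_j$ lies within $O(\sigma)$ of $\mu_j$ for every $j$.

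Step 3 (coverage). By a sub-Gaussian tail bound and a union bound over $n$ samples and $d$ coordinates, with probability at least $1-\zeta/2$ every uncorrupted sample satisfies $|x_i^{(j)}-\mu_j|\le 2\sigma\sqrt{\log(dn/\zeta)}$ for all $j$. Combined with $|\bar{x}_j-\mu_j|=O(\sigma)$ from Step 2, the triangle inequality gives $|x_i^{(j)}-\bar{x}_j|\le B = 8\sigma\sqrt{\log(dn/\zeta)}$, proving containment in $\bar{x}+[-B,B]^d$.

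\textbf{Main obstacle.} The delicate point is Step 2: showing that the argmax of the noisy histogram picks a bin near $\mu_j$ despite two simultaneous perturbations---the adversarial shift of size $\alpha$ and the privacy noise of size $\beta$---and that this can be done with a DP noise budget that scales as $1/\varepsilon_0 \propto \sqrt{d\log(1/\delta)}/\varepsilon$ per coordinate. The key realization that makes everything go through is that both the adversarial fraction $\alpha\le 0.1$ and the histogram error $\beta$ can be made absolute constants strictly less than the constant mass $c_1$ guaranteed in Step 1, so no quantitative tightness is needed beyond Lemma~\ref{lem:hist-KV17}.
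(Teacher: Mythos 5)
Your proposal is substantively correct and follows the same decomposition as the paper's proof: privacy via per-coordinate histogram learners combined by advanced composition, and utility via the three-step chain of (i) a constant-mass bin near $\mu_j$, (ii) robustness of the argmax to corruption plus histogram noise, (iii) sub-Gaussian tail coverage and a triangle inequality. The paper implements the sampling concentration in Step 1/Step 2 with the Dvoretzky--Kiefer--Wolfowitz inequality (uniform over all bins and union-bounded over $j\in[d]$), which makes the far-bin bound clean: once the empirical mass in the three central bins is $\geq 0.96-o(1)$, every other bin carries at most $\approx 0.04$ empirical mass, so the argmax cannot wander. You appeal instead to ``Hoeffding-type concentration'' for the central bins and to population sub-Gaussian tails for the far bins; note that the far-bin claim must be stated about \emph{empirical} fractions, not population probabilities --- the cleanest route is exactly to observe that the far bins' total empirical mass is the complement of the central bins' mass, so your Step~1 concentration already delivers it. With that small bookkeeping made explicit, your argument is the same as the paper's.
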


In {\sc DPfilter}, we  make only the mean $\mu_t$ and the top principal direction $v_t$ private to decrease sensitivity. 
The analysis is now more challenging since  
  $(\mu_t,v_t)$  depends on all  past iterates $\{(\mu_j,v_j)\}_{j=1}^{t-1}$ and  internal randomness $\{Z_j\}_{j=1}^{t-1}$.
 To decrease the sensitivity, we modify the filter in 
 line~\ref{line3} 
 to use the maximum support $dB^2$ (which is data independent) 
 instead of the maximum contribution $\max_i (v_t^\top(x_i-\mu_t))^2 $ 
 (which is data dependent and sensitive).
 While one data point can 
 significantly change $\max_i (v_t^\top(x_i-\mu_t))^2 $ and the output of one step of the filter in Algorithm~\ref{alg:nonprivatefiltering}, the 
 sensitivity of the proposed filter is  bounded  conditioned on all past $\{(\mu_j,v_j)\}_{j=1}^{t-1}$, as we show in the 
following lemma. 
This follows from the fact that
conditioned on $(\mu_j,v_j)$, the proposed filter is a contraction. 
We provide a proof in Appendix~\ref{sec:proof_DPfilter_part1} and Appendix~\ref{sec:proof_DPfilter_part2}.  Putting together Lemmas~\ref{lem:DPrange} and \ref{lem:DPfilter}, we get the desired result in Theorem~\ref{thm:main1}. 

\begin{lemma}
    \label{lem:DPfilter} 
    {\sc DPfilter}$(S,\alpha,T,\varepsilon,\delta)$ is $(\varepsilon,\delta)$-differentially private.
    Under the hypotheses of Theorem~\ref{thm:main1}, 
    {\sc DPfilter}$(S,\alpha,T=\widetilde{\Theta}(B^2d),\varepsilon,\delta)$ achieves  
    $\| \hat\mu-\mu \|_2 = O(\alpha\sqrt{\log(1/\alpha)})$ with probability $0.9$, if 
$n=\widetilde{\Omega}(d/\alpha^2 + B^3 d^2\log(1/\delta)/(\varepsilon\alpha))$ 
and $B$ is large enough such that the original uncorrupted samples are inside the hypercube $\bar{x}+[-B/2,B/2]^d $. 
\end{lemma}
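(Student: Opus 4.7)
The key ingredient is Lemma~\ref{lem:DPfilter_sensitivity}, which guarantees that with any fixed parameter sequence $\{(\mu_\ell,v_\ell,Z_\ell)\}$, applying the filter to neighboring datasets $\mathcal{S}\sim\mathcal{S}'$ yields sets $S_t(\mathcal{S})$ and $S_t(\mathcal{S}')$ that remain neighboring. Conditioning on all past parameters, the per-iteration sensitivity analysis then reduces to a single-step computation on a clipped set (each point lies in a box of side $B$) of size at least $3n/4$ (enforced by the $n_t$ check): $q_{\rm size}$ has sensitivity $1$; $q_{\rm mean}$ has $\ell_1$-sensitivity $O(B/n)$; $q_{\rm norm}$ has sensitivity $O(B^2 d/n)$; and the covariance matrix returned inside $q_{\rm PCA}$ has Frobenius sensitivity $O(B^2d/n)$. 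The Laplace and Gaussian mechanisms, with the noise scales specified in Algorithm~\ref{alg:DPfilter}, make each query $(\varepsilon_1,\delta_1)$-DP. Composing $4T$ such queries via advanced composition with $\varepsilon_1 = \min\{\varepsilon,0.9\}/(4\sqrt{2T\log(2/\delta)})$ and $\delta_1 = \delta/(8T)$ yields $(0.99\varepsilon,0.99\delta)$, and combining with the $(0.01\varepsilon,0.01\delta)$ budget for $q_{\rm range}$ from Lemma~\ref{lem:DPrange} closes the privacy argument.

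\textbf{Utility.} I would work on the high-probability event that the good samples satisfy standard resilience/regularity: for any large subset of good points, the empirical mean is $O(\alpha\sqrt{\log(1/\alpha)})$-close to $\mu$ and the empirical covariance is close to $\mathbf{I}$ in operator norm; these hold when $n=\widetilde\Omega(d/\alpha^2)$. On this event, I would mimic the analysis of the non-private filter (Proposition~\ref{pro:nonprivatefiltering}) inductively, tracking at each step: the private mean noise $O(B/(n\varepsilon_1))$, the scalar norm noise $O(B^2d/(n\varepsilon_1))$, and the operator-norm perturbation in $q_{\rm PCA}$, which by standard Gaussian matrix concentration is $O(B^2d^{3/2}\sqrt{\log(1/\delta_1)}/(n\varepsilon_1))$. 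Plugging in $T=\widetilde\Theta(B^2d)$ and the advanced-composition choice of $\varepsilon_1$, all three of these are $o(\alpha\log(1/\alpha))$ precisely when $n=\widetilde\Omega(B^3 d^2\log(1/\delta)/(\varepsilon\alpha))$, matching the stated bound. Whenever $\lambda_t>C\alpha\log(1/\alpha)$, the noisy top eigenvector $v_t$ aligns with the true adversarial direction, and the randomized filter with threshold $dB^2Z_t$ removes in expectation strictly more bad than good points. Upon termination ($\lambda_t\le (C-0.01)\alpha\log(1/\alpha)$), small covariance plus resilience of the surviving good samples plus the small mean-noise gives $\|\mu_t-\mu\|_2=O(\alpha\sqrt{\log(1/\alpha)})$.

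\textbf{Main obstacle.} The hardest step is the progress argument for the modified filter. The non-private filter in Algorithm~\ref{alg:nonprivatefiltering} uses the data-dependent threshold $\max_j\tau_j$ and converges in $O(d)$ iterations, but here we must use the data-independent threshold $dB^2Z_t$ to keep each iteration's sensitivity under control (this is precisely what enables Lemma~\ref{lem:DPfilter_sensitivity}). With this loose threshold each iteration can only reduce the excess-variance potential by a factor of $\Omega(1/(dB^2))$, so $T=\widetilde\Theta(dB^2)$ iterations are unavoidable, which in turn inflates the per-iteration noise via advanced composition and yields the $B^3d^2$ factor in $n$. I would therefore need to: (i) set up a potential function (e.g., $\sum_{i\in S_t}\tau_i^{(t)}$ minus what is attributable to good samples) that provably decreases by a multiplicative $(1-\Omega(1/(dB^2)))$ factor per step in expectation, using the fact that the expected score removed is proportional to $\int_0^1 \mathbf{1}\{\tau_i\ge dB^2 z\}\,dz = \tau_i/(dB^2)$; (ii) control the deviation of the filter's actual behavior from its expectation by a martingale/Azuma argument across the $T$ rounds; and (iii) show that neither the private noise in $v_t$ nor in $\mu_t$ destroys this expected decrease. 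Taking a union bound over iterations and combining with the resilience conditions then forces termination within $T$ steps and delivers the claimed accuracy.
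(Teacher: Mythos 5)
Your privacy argument matches the paper's: bound the end-to-end sensitivity via Lemma~\ref{lem:DPfilter_sensitivity} (the filter with fixed parameters is a contraction in $d_\triangle$), then show each of the four queries is $(\varepsilon_1,\delta_1)$-DP given the size check and clipping, and compose $4T$ queries by advanced composition (Lemma~\ref{lem:composition}) together with the separate budget for $q_{\rm range}$. One small correction: the noisy check $n_t > 3n/4$ guarantees $|S_{t-1}|\geq n/2$ (not $\geq 3n/4$) except with probability $\delta_1$; the sensitivity bounds are stated under the $|S_{t-1}|\geq n/2$ event and the failure mass is folded into $\delta_1$.

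\textbf{Utility.} Your sample-complexity bookkeeping is correct: $\varepsilon_1 = \widetilde\Theta(\varepsilon/(B\sqrt{d\log(1/\delta)}))$ from $T=\widetilde\Theta(B^2d)$, the PCA noise has operator norm $\widetilde O(B^2 d^{3/2}\sqrt{\log(1/\delta)}/(n\varepsilon_1)) = \widetilde O(B^3 d^2\log(1/\delta)/(n\varepsilon))$, and demanding this be $o(\alpha\log(1/\alpha))$ gives the claimed $n$. But the progress argument you propose does not match the paper's and has a real gap. You suggest tracking a potential $\sum_{i\in S_t}\tau_i^{(t)}$ that decreases multiplicatively by $(1-\Omega(1/(dB^2)))$ per round. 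This potential is not well-defined across rounds, because $\tau_i^{(t)} = (v_t^\top(x_i-\mu_t))^2$ depends on the direction $v_t$ and centre $\mu_t$, which both change each iteration; removing points that are large along $v_t$ need not shrink $\sum_i\tau_i^{(t+1)}$ computed against a different $v_{t+1}$. (Multiplicative-potential arguments of this flavour \emph{do} work for the MMW-based score in PRIME, precisely because MMW supplies a regret bound summing over all rounds' directions; {\sc DPfilter} has no such regret machinery.) The paper instead avoids any cross-round potential: it uses the super-martingale $d_t = |(S_{\rm good}\cap S)\setminus S_t| + |S_t\setminus(S_{\rm good}\cap S)|$ (Lemma~\ref{lemma:invariance_pca}: expected good removals $\leq$ expected bad removals), applies the optional stopping theorem and Markov to keep $|S_t\cap S_{\rm good}|\geq (1-10\alpha)n$ with probability $0.9$, and separately bounds the iteration count via Wald's equation on $W_t = |S_{t-1}\setminus S_t|/n$, using that each non-terminating round removes an expected $\Omega\bigl(\|M(S_{t-1})-\mathbf I\|_2/(dB^2)\bigr) \geq \Omega(\alpha\log(1/\alpha)/(dB^2))$ fraction of the points while $\sum_t W_t \leq 10\alpha$. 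This is a fraction-of-samples-removed accounting, not a score-decrease accounting, and sidesteps the direction-dependence problem entirely. Likewise, your planned Azuma bound is not used; Markov on the stopped super-martingale and on $T$ suffices. The terminal accuracy bound then comes from Lemma~\ref{thm:reg} applied to the set $S_t$ once $\lambda_t$ is small, not from ``alignment'' of the noisy eigenvector (what is actually shown, in Lemma~\ref{lemma:sample_complexity_singular_value}, is that $v_t^\top(M(S_{t-1})-\mathbf I)v_t$ is within $O(\alpha\log(1/\alpha))$ of the top eigenvalue).
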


\noindent
{\bf Differential privacy guarantee.} To achieve $(\varepsilon_0,\delta_0)$  end-to-end target privacy guarantee, 
Algorithm~\ref{alg:privatefiltering} separates the privacy budget into two.
The ($0.01\varepsilon_0,0.01\delta_0$)-DP guarantee of $q_{\rm range}$ follows from  Lemma~\ref{lem:DPrange}. 
The ($0.99\varepsilon_0,0.99\delta_0$)-DP guarantee of 
{\sc DPfilter} follows from Lemma~\ref{lem:DPfilter}. 

\medskip
\noindent{\bf Accuracy.} 
From  Lemma~\ref{lem:DPrange} $q_{\rm range}$ is guaranteed to return a hypercube that includes all clean data in the dataset. 
It follows from Lemma~\ref{lem:DPfilter} that 
when $n=\widetilde{\Omega}(d/\alpha^2 + d^2\log(1/\delta)/(\varepsilon\alpha))$, we have 
$\|\mu-\hat\mu\|_2 = O(\alpha\sqrt{\log(1/\alpha)} )$.

\subsubsection{Proof of Lemma~\ref{lem:DPrange} and the analysis of $q_{\rm range}$ in Algorithm~\ref{alg:DPrange}} 
\label{sec:proof_DPrange}

Assuming the distribution is $\sigma^2$ sub-Gaussian, we use $\mathcal{P}$ to denote the sub-Gaussian distribution. 
Denote $I_l = [2\sigma l, 2\sigma (l+1)]$ as the interval of the $l$'th bin. Denote the population probability in the $l$'th bin $h_{j,l}=\mathbb{P}_{x\sim \mathcal{P}}[x_j\in I_l]$, empirical probability in the $l$'th bin $\tilde{h}_{j,l}=\frac{1}{n}\sum_{x_i\in \mathcal{D}}\mathbf{1}\{x_{i,j}\in I_l\}$, and the noisy version $\hat{h}_{j,l}$ computed by the histogram learner of Lemma~\ref{lem:hist-KV17}. Notice that Lemma~\ref{lem:hist-KV17} with $d$ compositions 
(Lemma~\ref{lem:composition}) 
immediately implies that our algorithm is $(\eps,\delta)$-differentially private. 

For the utility of the algorithm, we will first show that for all dimension $j\in [d]$, the output $|\bar{x}_j-\mu_j|= O(\sigma)$. Note that by the definition of $\sigma^2$-subgaussian, it holds that for all $i\in [d]$, $\mathbb{P}[|x_i-\mu_i|\ge z]\le 2\exp(-z^2/\sigma^2)$ where $x$ is drawn from distribution $\mathcal{P}$. This implies that $\mathbb{P}[|x_i-\mu_i|\ge 2\sigma]\le 2\exp(-4)\le 0.04.$ Suppose the $k$'th bin contains $\mu_j$, namely $\mu_j\in I_k$. Then it is clear that  $[\mu_j-2\sigma,\mu_j+2\sigma] \subset (I_{k-1}\cup I_{k}\cup I_{k+1})$. This implies $h_{j,k-1}+h_{j,k}+h_{j,k+1}\ge 1-0.04 = 0.96$, hence $\min(h_{j,k-1},h_{j,k},h_{j,k+1})\ge 0.32$. 

Recall that $\mathcal{G}$ is the set of clean data drawn from distribution $P$. By Dvoretzky-Kiefer-Wolfowitz inequality and an union bound over $j\in [d]$, we have that with probability $1-\zeta$, $\max_{j,l}(|h_{j,l}-\frac{1}{n}\sum_{x\in G}x_j|)\le \sqrt{\frac{\log(d/\zeta)}{n}}$. The deviation due to corruption is at most $\alpha$ on each bin, hence we have $\max_{j,l}(|h_{j,l}-\hat{h}_{j,l})\le \sqrt{\frac{\log(d/\zeta)}{n}}+\alpha$. Lemma~\ref{lem:hist-KV17} and a union bound over $j\in [d]$ implies that with probability $1-\zeta$ , $\max_{j,l}(|\tilde{h}_{j,l}-\hat{h}_{j,l}|) \le \beta$ when $n\ge \Omega\left(\frac{\sqrt{d\log(1/\delta)}}{\eps\beta}\log(d/\zeta\delta)\right)$. 

Assuming that  $n=\Omega\left(\frac{\sqrt{d\log(1/\delta)}}{\eps\beta}\log(d/\zeta\delta)\right) $, we have that with probability $1-\zeta$, $\max_{j,l}(|{h}_{j,l}-\hat{h}_{j,l}|) \le 0.01+\alpha$. Using the assumption that $\alpha\le 0.1$, since $\min(h_{j,k-1},h_{j,k},h_{j,k+1})-0.11\ge 0.31 \ge 0.04+0.11\ge \max_{l\ne k-1, k, k+1}h_{j,l}+0.11$. This implies that with probability $1-\zeta$, the algorithm choose the bin from $k-1, k, k+1$, which means the estimate $|\bar{x}_j-\mu|\le 4\sigma$. By the tail bound of sub-Gaussian distribution and a union bound over $n, d$, we have that with probability $1-\zeta$, for all $x_i\in \mathcal{D}$ and $j\in [d]$, $x_{i,j}\in [\bar{x}_j-8\sigma\sqrt{\log(nd/\zeta)}, \bar{x}_j+8\sigma\sqrt{\log(nd/\zeta)}]$. 

\subsubsection{Proofs  of the sensitivity of the filtering in  Lemma~\ref{lem:DPfilter_sensitivity} and Lemma~\ref{lem:DPMMWfilter_sensitivity}}
\label{sec:proof_sensitivity}
\noindent{\bf Proof of Lemma~\ref{lem:DPfilter_sensitivity}.} We only need to show that one step of the proposed filter is a contraction. 
To this end, we only need to show contraction for two datasets at distance 1, i.e., $d_\triangle({\cal D},{\cal D'})=1$. For fixed $(\mu,v)$ and $Z$, we apply filter to set of scalars $(v^\top({\cal D}-\mu))^2 $ and $(v^\top({\cal D'}-\mu))^2$, whose distance is also one. 
If the entries that are different (say $a\in{\cal D}$ and $a' \in {\cal D'}$) are both below the subset of the top $2n\alpha$ points (as in Definition~\ref{def:topset}), then the same set of points will be removed for both and the distance is preserved $d_\triangle(S({\cal D}),S({\cal D}'))=1$. 
If they are both above the top $2n\alpha$ subset, then either both are removed, one of them is removed, or both remain. The rest of the points that are removed coincide in both sets. Hence, 
$d_\triangle(S({\cal D}),S({\cal D}'))\leq 1$. 
If $a$ is below and $a'$ is above the top $2n\alpha$ subset of respective datasets, then either 
$a'$ is not removed (in which case $d_\triangle(S({\cal D}),S({\cal D}')) = 1$) or $a'$ is removed (in which case $S({\cal D}) = S({\cal D}') \cup \{a\}$ and the distance remains one).

Note that when there are ties, it is critical to resolve them in a consistent manner in both datasets ${\cal D}$ and ${\cal D'}$. The tie breaking rule of Definition~\ref{def:topset} is critical in sorting those samples with the same score $\tau_i$'s in a consistent manner.

\medskip
\noindent{\bf Proof of Lemma~\ref{lem:DPMMWfilter_sensitivity}.}
The analysis of contraction of the filtering step in {\sc DPMMWfilter} is analogous to that of 
private iterative filtering in Lemma~\ref{lem:DPfilter_sensitivity}.

\subsubsection{Proof of part 1 of    Lemma~\ref{lem:DPfilter} on differential privacy of  {\sc DPfilter}}

\label{sec:proof_DPfilter_part1}

We explicitly write out how many times we access the database and how much privacy is lost each time in an interactive version of {\sc DPfilter} in Algorithm~\ref{alg:DPfilter_interactive_main}, which performs the same operations as {\sc DPfilter}. 
In order to apply Lemma~\ref{lem:composition}, we cap $\varepsilon$ at 0.9 in initializing $\varepsilon_1$. 
We call  $q_{\rm mean}$, $q_{\rm PCA}$, $q_{\rm norm}$ and $q_{\rm size}$ $T$ times, each with $(\varepsilon_1,\delta_1)$ guarantee. In total this accounts for $(\varepsilon,\delta)$ privacy loss, using Lemma~\ref{lem:composition} and our choice of $\varepsilon_1$ and $\delta_1$.  

This proof is analogous to the proof of DP for {\sc DPMMWfilter} in Appendix~\ref{sec:proof_main2_dp}, and we omit the details here. 
We will assume for now that $|S_r|\geq n/2$ for all  $r\in[t]$ and prove privacy. This happens with probability larger than $1-\delta_1$, hence ensuring the privacy guarantee. 
In all sub-routines, we run Filter$(\cdot)$ in Algorithm~\ref{alg:DPfilter_interactive_main}  to simulate the filtering process so far and get the current set of samples $S_t$.
Lemma~\ref{lem:DPfilter_sensitivity}  allows us to prove privacy of all interactive mechanisms. 
This shows that the two data datasets $S_t$ and $S_t'$ are neighboring, if they are resulting from the identical filtering but starting from two neighboring datasets ${\cal D}_n$ and ${\cal D}'_n$. 
As all four sub-routines are output perturbation mechanisms with appropriately chosen sensitivities, they satisfy the desired ($\varepsilon_1,\delta_1$)-DP guarantees. Further, the probability that $n_t>3/4n$ and $|S_t|\leq n/2$ is less than $\delta_1$ for $n=\tilde\Omega((1/\varepsilon_1)\log(1/\delta_1))$.

\subsubsection{Proof of part 2 of    Lemma~\ref{lem:DPfilter} on accuracy of  {\sc DPfilter}}
\label{sec:proof_DPfilter_part2}
The following theorem analyzing {\sc DPfilter} implies the desired Lemma~\ref{lem:DPfilter} when the good set is $\alpha$-subgaussian good, which follows from \ref{lemma:alpha-good-sample-complexity} and the assumption that $n=\widetilde\Omega(d/\alpha^2)$.

\begin{thm}[Anlaysis of {\sc DPfilter}]\label{thm:DPFilter}
	Let $S$ be an $\alpha$-corrupted sub-Gaussian dataset under Assumption~\ref{asmp:adversary}, where $\alpha\leq c$ for some universal constant $c\in (0,1/2)$. Let $S_{\rm good}$ be $\alpha$-subgaussian good with respect to $\mu\in \reals^d$. Suppose  ${\cal D} = \{ x_{i} \in \bar{x}+[-B/2,B/2]^d\}_{i=1}^{n}$ be the projected dataset where all of the uncorrupted samples are contained in $\bar{x}+[-B/2,B/2]^d$. If $
	n =  \widetilde\Omega\left( 
	 {d^{2}B^3\log(1/\delta)}/{(\varepsilon\alpha) }\right)$, then {\sc DPfilter} terminates after at most $O\left(dB^2 \right)$ iterations and outputs $S_t$ such that with probability $0.9$, we have $|S_t\cap S_{\mathrm{good}}|\geq (1-10\alpha )n$ and
	\begin{eqnarray*}
		\|\mu(S_{t})-\mu\|_2\lesssim \alpha\sqrt{\log 1/\alpha}\;.
	\end{eqnarray*}
\end{thm}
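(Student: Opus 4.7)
\textbf{Proof proposal for Theorem~\ref{thm:DPFilter}.} Since the privacy guarantee is already established in the previous subsection, the task reduces to showing that, with the stated sample size, the utility argument of the non-private filter (Proposition~\ref{pro:nonprivatefiltering}) survives all the perturbations introduced for privacy and the threshold modification that replaced $\max_{j\in S_{t-1}}(v_t^\top(x_j-\mu_t))^2$ by the data-independent quantity $dB^2$. The plan is to condition on a high-probability event where every Laplace/Gaussian perturbation is small, then mimic the standard potential-function analysis of iterative filtering.

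First, I would quantify the noise sizes. With $\varepsilon_1=\Theta(\varepsilon/\sqrt{T\log(1/\delta)})$ and $T=\Theta(dB^2)$, the mean noise $\mathrm{Lap}(2B/(n\varepsilon_1))$ has $\ell_2$ magnitude $\widetilde O(B^2\sqrt{d^2}/(n\varepsilon)\sqrt{\log(1/\delta)})$, the scalar $\lambda_t$-noise is $\widetilde O(B^3d\sqrt{d\log(1/\delta)}/(n\varepsilon))$, and the Gaussian noise added to the covariance has spectral norm $\widetilde O(B^3d\sqrt{d^2\log(1/\delta)}/(n\varepsilon))$. Under the assumed $n=\widetilde\Omega(B^3d^2\log(1/\delta)/(\varepsilon\alpha))$ all three noise terms are $o(\alpha\log(1/\alpha))$ uniformly over the $T=\widetilde\Theta(dB^2)$ iterations (via a union bound and $T$-fold advanced composition bookkeeping). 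I would also note that, because the clipped data lies in $\bar x+[-B/2,B/2]^d$, every score obeys $\tau_j\le dB^2$, so the filter's threshold is well-posed and the event $|S_t|\ge 3n/4$ is preserved throughout.

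Next, I would import the standard filter analysis (e.g., the argument behind Algorithm~\ref{alg:nonprivatefiltering} from \cite{li-notes}). Its two pillars carry over: (i) if $\lambda_t\gtrsim \alpha\log(1/\alpha)$, then the \emph{true} top eigenvector of the clipped empirical covariance witnesses large variance contributed by the adversarial points, and the privately computed $v_t$ is within $o(1)$ of it in the relevant Davis--Kahan sense because the Gaussian noise added to $\Sigma_{t-1}$ is spectrally smaller than $\lambda_t-1$; (ii) the randomized threshold filter removes, in expectation, at least as many adversarial samples as good samples. The only twist is that we compare $\tau_i$ to $dB^2Z_t$ rather than $\tau_{\max}Z_t$. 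Since $\mathbb{P}[\tau_i\ge dB^2Z_t]=\tau_i/(dB^2)$, the inequality ``expected bad removals $\ge$ expected good removals'' still holds, because the ratio in front of both sides is the same $1/(dB^2)$; this is the key reason the (possibly loose) data-independent threshold preserves correctness, and I would formalize it by reproducing the corresponding step of the classical proof with $\tau_{\max}$ replaced by $dB^2$.

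The main obstacle, and what the rest of the proof must handle, is the iteration count. Using $dB^2$ in place of $\tau_{\max}$ costs a factor of $dB^2/\tau_{\max}=O(dB^2)$ in the per-iteration progress rate: each round shrinks the excess second moment $\sum_{i\in S_t}(\tau_i-1)_+$ by at most a multiplicative factor of $(1-\Omega(1/(dB^2)))$ rather than a constant. A potential-function/amortization argument (as in~\cite{li-notes}) then shows that after $T=\widetilde O(dB^2)$ iterations either the stopping condition $\lambda_t\le (C-0.01)\alpha\log(1/\alpha)$ triggers or the invariant $|S_t\cap S_{\mathrm{good}}|\ge(1-10\alpha)n$ would be violated — which our bad-vs-good removal inequality forbids, so termination must occur. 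Once we terminate, the standard resilience/concentration lemma for $\alpha$-subgaussian-good sets (Lemma~\ref{lemma:alpha-good-sample-complexity}) combined with $\lambda_t=O(\alpha\log(1/\alpha))$ yields the desired bound $\|\mu(S_t)-\mu\|_2\lesssim \alpha\sqrt{\log(1/\alpha)}$, after which the $O(B/(n\varepsilon_1))$-sized Laplace error on $\mu_t$ is absorbed into the same order by our choice of $n$.
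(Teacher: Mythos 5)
Your plan follows the same route as the paper's proof of Theorem~\ref{thm:DPFilter}: condition on the privacy noise being small, import the non-private filter's two pillars (the regularity conditions of Lemma~\ref{lemma:progress_conditions} and the ``more bad than good removed in expectation'' property of Lemma~\ref{lemma:invariance_pca}), then control the invariant $|S_t\cap S_{\mathrm{good}}|\geq(1-10\alpha)n$ and the iteration count. Your observation that replacing $\tau_{\max}$ by $dB^2$ leaves the good-vs-bad comparison intact because both expected removal counts carry the same $1/(dB^2)$ prefactor is exactly the point the paper's Lemma~\ref{lemma:invariance_pca} rests on.

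Where you deviate is the bookkeeping. You propose tracking the excess second moment $\sum_i(\tau_i-1)_+$ and arguing $(1-\Omega(1/(dB^2)))$-multiplicative decay; that potential is awkward because $\mu_t$ and $v_t$ change every round, so the scores of the \emph{surviving} points move and the quantity is not monotone. The paper instead tracks the removed fraction $W_t=|S_{t-1}\setminus S_t|/n$: conditional on not having stopped, $\E[W_t]\geq\Omega(\alpha\log(1/\alpha)/(dB^2))$, while the invariant caps $\sum_t W_t = O(\alpha)$; a Wald-type identity then gives $\E[T]=O(dB^2/\log(1/\alpha))$ and Markov finishes. Likewise, what you call a ``potential-function/amortization argument'' for the good-sample invariant is, in the paper, the explicit non-negative super-martingale $d_t=|(S_{\mathrm{good}}\cap S)\setminus S_t|+|S_t\setminus(S_{\mathrm{good}}\cap S)|$ together with optional stopping and Markov. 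These formal steps are what turn the per-round expectation inequality into a constant-probability statement, so you should spell them out rather than delegating entirely to~\cite{li-notes}. Finally, the paper does not invoke Davis--Kahan for $v_t$: it bounds $v_t^\top(M(S_{t-1})-\mathbf{I})v_t$ directly via Lemma~\ref{lemma:sample_complexity_singular_value} (a Weyl-type perturbation bound), which is the cleaner tool here since one only needs the Rayleigh quotient along the noisy top direction, not eigenvector proximity.
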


To prove this theorem, 
we use the following lemma to first show that we do not remove too many uncorrupted samples. The upper bound on the accuracy follows immediately from Lemma~\ref{thm:reg} and the stopping criteria of the algorithm.

\begin{lemma}\label{lemma:invariance_pca}
If  
$
	n \gtrsim \frac{B^2d^{3/2}}{\varepsilon_1 \alpha \log1/\alpha}\log(1/\delta)
$, $\lambda_t\geq (C-0.01)\cdot \alpha\log1/\alpha$ and $|S_t\cap S_{\rm good}|\geq (1-10\alpha) n$, then there exists constant $C>0$ such that for each  iteration $t$, with probability $1-O(1/d)$,  we have Eq.~\eqref{eq:reg_pca} holds. If this condition holds, we have
\begin{eqnarray*}
 \E\left|(S_t\setminus S_{t+1})\cap S_{\rm good}\right|\leq \E\left|S_t\setminus S_{t+1}\cap S_{\rm bad}\right|\;.
\end{eqnarray*}

\end{lemma}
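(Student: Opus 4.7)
\textbf{Proof plan for Lemma~\ref{lemma:invariance_pca}.} The strategy is the standard two-part filter analysis, adapted to account for the privacy noise injected into $\mu_t$ and $v_t$. The high-level goal is to show that the private direction $v_t$ is close enough to the true top eigenvector of the empirical covariance that the classical ``more bad than good are removed'' inequality still goes through unchanged, up to adjusted constants.

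First I would control the privacy noise. The query $q_{\rm PCA}$ adds a symmetric Gaussian matrix $G$ with entrywise standard deviation $\sigma_G = B^2 d\sqrt{2\log(1.25/\delta)}/(n\varepsilon_1)$, so by standard Gaussian random matrix concentration $\|G\|_2 = O(\sigma_G \sqrt{d}) = O(B^2 d^{3/2}\sqrt{\log(1/\delta)}/(n\varepsilon_1))$ with probability $1-O(1/d)$. Similarly $\|\mu_t - \mathrm{Mean}(\{x_i\}_{i\in S_{t-1}})\|_2 = O(B\sqrt{d}\log d/(n\varepsilon_1))$ from the Laplace noise, and $|\lambda_t - \|\mathrm{Cov}(S_{t-1})-\mathbf{I}\|_2| = O(B^2 d \log d/(n\varepsilon_1))$. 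The hypothesis on $n$ makes each of these perturbations $o(\alpha \log(1/\alpha))$, so Eq.~\eqref{eq:reg_pca} (the event that $v_t$ is approximately top eigenvector of the true empirical covariance, and that $\mu_t,\lambda_t$ are close to their non-private values) holds with probability $1-O(1/d)$.

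Conditioned on this event, I would invoke the $\alpha$-subgaussian goodness of $S_{\rm good}$ (the resilience/concentration package that yields Proposition~\ref{pro:nonprivatefiltering}) to obtain, for any unit vector $v$,
\begin{equation*}
\sum_{i \in S_t \cap S_{\rm good}} \bigl(v^\top(x_i-\mu_t)\bigr)^2 \;\le\; \bigl(1 + O(\alpha\log(1/\alpha))\bigr)\,|S_t\cap S_{\rm good}|.
\end{equation*}
Combined with $\lambda_t \ge (C-0.01)\alpha\log(1/\alpha)$ and $|S_t\cap S_{\rm good}|\ge(1-10\alpha)n$, the excess spectral mass $v_t^\top(\sum_{i\in S_t}(x_i-\mu_t)(x_i-\mu_t)^\top - \mathbf{I})v_t \gtrsim \alpha\log(1/\alpha)\cdot n$ must be driven by the bad samples in $S_t \cap S_{\rm bad}$. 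That is, for $\tau_i = (v_t^\top(x_i-\mu_t))^2$,
\begin{equation*}
\sum_{i \in S_t \cap S_{\rm bad}} \tau_i \;\ge\; \sum_{i\in S_t \cap S_{\rm good}}\tau_i \;+\; \Omega\bigl(\alpha \log(1/\alpha)\cdot n\bigr),
\end{equation*}
once $C$ is chosen large enough to absorb the good-set contribution and the privacy-noise slack.

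The last step is the probabilistic filter calculation. Because $Z_t \sim \mathrm{Unif}([0,1])$ and all $\tau_i \le dB^2$ (data is clipped), a sample $i\in\mathcal{T}_{2\alpha}$ is removed with probability exactly $\tau_i/(dB^2)$. Hence
\begin{equation*}
\mathbb{E}\bigl[|(S_t\setminus S_{t+1})\cap S_{\rm good}|\bigr] \;=\; \tfrac{1}{dB^2}\!\!\sum_{i\in \mathcal{T}_{2\alpha}\cap S_{\rm good}}\!\!\tau_i, \quad \mathbb{E}\bigl[|(S_t\setminus S_{t+1})\cap S_{\rm bad}|\bigr] \;=\; \tfrac{1}{dB^2}\!\!\sum_{i\in \mathcal{T}_{2\alpha}\cap S_{\rm bad}}\!\!\tau_i.
\end{equation*}
Since every point with score above $\mathcal{T}_{2\alpha}$'s threshold is in $\mathcal{T}_{2\alpha}$, and the good samples outside $\mathcal{T}_{2\alpha}$ only add to $\sum_{S_t\cap S_{\rm good}}\tau_i$, the displayed inequality for the bad tail survives restriction to $\mathcal{T}_{2\alpha}$ (this is where the $2\alpha$ truncation is designed to help: $|\mathcal{T}_{2\alpha}\cap S_{\rm good}| \ge \alpha n \ge |S_t\cap S_{\rm bad}|$). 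Combining gives the claimed $\mathbb{E}|(S_t\setminus S_{t+1})\cap S_{\rm good}| \le \mathbb{E}|(S_t\setminus S_{t+1})\cap S_{\rm bad}|$.

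\textbf{Main obstacle.} The delicate step is bookkeeping between the truncated tail $\mathcal{T}_{2\alpha}$ and the full sets $S_t\cap S_{\rm good/bad}$: the excess-variance inequality is naturally over all of $S_t$, whereas the filter only touches $\mathcal{T}_{2\alpha}$. I would handle this by splitting $\sum_{S_t\cap S_{\rm good}}\tau_i = \sum_{\mathcal{T}_{2\alpha}\cap S_{\rm good}}\tau_i + \sum_{S_t\cap S_{\rm good}\setminus \mathcal{T}_{2\alpha}}\tau_i$ and noting that scores inside $\mathcal{T}_{2\alpha}\cap S_{\rm good}$ are dominated by the sub-Gaussian tail bound on $v_t^\top(x_i-\mu)$ (giving an $O(\log(1/\alpha))$ factor), which is exactly the source of the $\sqrt{\log(1/\alpha)}$ in the final accuracy.
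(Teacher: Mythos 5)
Your high-level plan matches the paper's: control the privacy noise in $\mu_t$, $\lambda_t$, $v_t$; use sub-Gaussian goodness of $S_{\rm good}$ to bound the good-point scores; then compare removal rates on $\mathcal{T}_{2\alpha}$. Two concrete gaps would trip you up if you tried to write this out.

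First, the displayed inequality in your step 3,
\[
\sum_{i \in S_t \cap S_{\rm bad}} \tau_i \;\ge\; \sum_{i\in S_t \cap S_{\rm good}}\tau_i \;+\; \Omega\bigl(\alpha \log(1/\alpha)\cdot n\bigr),
\]
is false as stated. There are $\approx(1-10\alpha)n$ good points each with $\tau_i\approx 1$, so $\sum_{i\in S_t\cap S_{\rm good}}\tau_i\approx n$, while $\sum_{i\in S_t\cap S_{\rm bad}}\tau_i$ can be as small as $\approx C\alpha\log(1/\alpha)n$. The correct excess-variance decomposition is in terms of $\tau_i-1$: from $\frac{1}{n}\sum_{i\in S_{t-1}}(\tau_i-1)\gtrsim \|M(S_{t-1})-\mathbf{I}\|_2$ and $\frac{1}{n}\sum_{i\in S_{\rm good}\cap S_{t-1}}(\tau_i-1)\lesssim\alpha\log(1/\alpha)$ (part 3 of Lemma~\ref{lemma:progress_conditions}), one concludes $\frac{1}{n}\sum_{i\in S_{\rm bad}\cap S_{t-1}}\tau_i\geq 0.8\|M(S_{t-1})-\mathbf{I}\|_2$. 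You only get a lower bound on the bad side, not a domination of the good side in terms of raw $\tau_i$; the good side only becomes comparable after restricting to $\mathcal{T}_{2\alpha}$, which is precisely where Eq.~\eqref{eq:reg_pca} enters.

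Second, in your ``main obstacle'' paragraph you split the \emph{good} sum into tail and off-tail, but that is not the part that needs controlling. What you must control is the \emph{bad} mass that lies \emph{outside} $\mathcal{T}_{2\alpha}$, since you have only a lower bound on the full $\sum_{i\in S_{\rm bad}}\tau_i$ and need to transfer it to $\sum_{i\in S_{\rm bad}\cap\mathcal{T}_{2\alpha}}\tau_i$. The ingredient is exactly your parenthetical cardinality fact $|\mathcal{T}_{2\alpha}\cap S_{\rm good}|\geq\alpha n$, but used differently: for any $i\notin\mathcal{T}_{2\alpha}$, $\tau_i$ is at most the smallest score in $\mathcal{T}_{2\alpha}$, hence at most the average of the good scores in $\mathcal{T}_{2\alpha}$, giving $\alpha\tau_i\leq\frac{\alpha}{|S_{\rm good}\cap\mathcal{T}_{2\alpha}|}\sum_{j\in S_{\rm good}\cap\mathcal{T}_{2\alpha}}\tau_j\leq\frac{1}{1000}\|M(S_{t-1})-\mathbf{I}\|_2$ (part 2 of Lemma~\ref{lemma:progress_conditions}). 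Then $\sum_{i\in S_{\rm bad}\setminus\mathcal{T}_{2\alpha}}\tau_i\leq \alpha n\cdot\max_{i\notin\mathcal{T}_{2\alpha}}\tau_i\leq\frac{n}{1000}\|M(S_{t-1})-\mathbf{I}\|_2$, which lets you subtract off the leakage and conclude $\sum_{i\in S_{\rm bad}\cap\mathcal{T}_{2\alpha}}\tau_i\geq\frac{1}{1000}\|M(S_{t-1})-\mathbf{I}\|_2 n\geq\sum_{i\in S_{\rm good}\cap\mathcal{T}_{2\alpha}}\tau_i$. Without this piece the argument does not close, because the lower bound on $\sum_{S_{\rm bad}}\tau_i$ could in principle be carried entirely by bad points below the $\mathcal{T}_{2\alpha}$ cut.

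Also a minor point: Eq.~\eqref{eq:reg_pca} is not the event ``privacy noise is small,'' it is the specific quantitative statement $\frac{1}{n}\sum_{i\in S_{\rm good}\cap\mathcal{T}_{2\alpha}\cap S_{t-1}}\tau_i\leq\frac{1}{1000}\|M(S_{t-1})-\mathbf{I}\|_2$; the noise bounds and Lemma~\ref{lemma:sample_complexity_singular_value} are ingredients used to establish it, not the event itself.
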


We measure the progress by 
by summing the number of clean samples removed 
up to iteration $t$ and the number of remaining corrupted samples, defined  as  $d_{t} \triangleq|(S_{\mathrm{good}}\cap S)\setminus S_{t}|+|S_{t}\setminus (S_{\mathrm{good}}\cap S)|$. Note that $d_1=\alpha n$, and $d_t\geq0$. At each iteration, we have
	\begin{eqnarray*}
		\E [d_{t+1}-d_{t}|d_1, d_2, \cdots, d_{t}] &=& \E\left[|S_{\mathrm{good}}\cap(S_{t}\setminus S_{t+1})|-|S_{\mathrm{bad}}\cap(S_{t}\setminus S_{t+1})|\right] 
		\;\leq\; 0,
	\end{eqnarray*}
	from the  Lemma~\ref{lemma:invariance_pca}.
	Hence, $d_t$ is a non-negative  super-martingale. By optional stopping theorem, at stopping time, we have $\E[d_t]\leq d_1=\alpha n$. By Markov inequality, $d_t$ is less than $10\alpha n$ with probability $0.9$, i.e. $|S_t\cap S_{\mathrm{good}}|\geq (1-10\alpha )n$. 
The desired bound follows from induction and Lemma~\ref{thm:reg}.

Now we bound the number of iterations under the conditions of Lemma~\ref{lemma:progress_conditions}. Let $W_t=|S_t\setminus S_{t-1}|/n$. Since Eq.~\eqref{line:final}, we have
\begin{eqnarray*}
\E[W_t] \geq \frac1n\sum_{i\in {\cal T}_{2\alpha}}\frac{\tau_i}{dB^2} \geq \frac{0.7\|M(S_{t-1})-\mathbf{I}\|_2  }{\alpha dB^2} \geq \frac{0.7C \alpha\log(1/\alpha)}{dB^2}\;.
\end{eqnarray*}

Let $T$ be the stopping time. We know $\sum_{t=1}^TW_t\leq 10\alpha$. By Wald's equation, we have
\begin{eqnarray*}
\E[\sum_{t=1}^TW_t]=\E[\sum_{t=1}^T\E[W_t]]\geq \E[T] \frac{0.7C \alpha\log(1/\alpha)}{dB^2}\;.
\end{eqnarray*}
This means $\E[T]\leq (15dB^2)/(C\log(1/\alpha))$. By Markov inequality we know with probability $0.9$, we have $T=O(dB^2/\log(1/\alpha))$.

\subsubsection{Proof of Lemma~\ref{lemma:invariance_pca}} 
\label{sec:proof_invariance_pca}

The expected number of removed good points and bad points are proportional to the $\sum_{i\in S_{\mathrm{good}}\cap {\cal T}_{2\alpha}}\tau_i$ and $\sum_{i\in S_{\mathrm{bad}}\cap {\cal T}_{2\alpha}}\tau_i$. It suffices to show
	\begin{eqnarray*}
		\sum_{i\in S_{\mathrm{good}}\cap {\cal T}_{2\alpha}}\tau_i \;\;\leq\;\; \sum_{i\in S_{\mathrm{bad}}\cap {\cal T}_{2\alpha}}\tau_i\;.
	\end{eqnarray*}

 Assuming we have $\|M(S_{t-1})-I\|_2\geq C \alpha\log1/\alpha $ for some $C>0$ sufficiently large, it suffices to show
\begin{eqnarray*}
	\frac{1}{n}\sum_{i\in S_{\mathrm{bad}}\cap {\cal T}_{2\alpha}}\tau_i \geq  \frac{1}{1000}\|M(S_{t-1})-{\mathbf I}\|_2\;.
\end{eqnarray*}

First of all, we have
\begin{eqnarray*}
	\frac{1}{n}\sum_{i\in S_{t-1}}\tau_i-1 &=& v_{t}^\top M(S_{t-1})v_t-1\\ 
	&=&  v_{t}^\top \left(M(S_{t-1})-{\mathbf I}\right)v_t\\
\end{eqnarray*}

Lemma~\ref{lem:variance_lower} shows that the magnitude of the largest eigenvalue of $M(S_{t-1})-{\mathbf I}$ is positive since the magnitudes negative eigenvalues are all less than $c\alpha  \log1/\alpha$. So we have
\begin{eqnarray}
\frac{1}{n}\sum_{i\in S_{t-1}}\tau_i-1 &\geq &  \|M(S_{t-1})-\mathbf{I}\|_2-O(\alpha\log1/\alpha) \\
&\geq & 0.9\|M(S_{t-1})-\mathbf{I}\|_2\label{eq:wT}\;, 
\end{eqnarray}
where the first inequality follows from Lemma~\ref{lemma:sample_complexity_singular_value}, and the second inequality follows from our choice of large constant $C$.
The next lemma regularity conditions for $\tau_i$'s for each iteration is satisfied.

\begin{lemma}\label{lemma:progress_conditions}
		If $n \gtrsim  \frac{B^2d^{3/2}}{\varepsilon_1 \alpha \log1/\alpha}\log(1/\delta)$, then there exists a large constant $C>0$ such that, with probability $1-O(1/d)$, we have
		
		\begin{enumerate}
		    \item \begin{eqnarray}
			\frac{1}{n}\sum_{i\in S_{\mathrm{good}}\cap {\cal T}_{2\alpha}\cap S_{t-1}}\tau_i\leq \frac{1}{1000}\left\|M(S_{t-1})-{\mathbf I}\right\|_2\;.\label{eq:reg_pca}
		\end{eqnarray}
		\item For all $i\notin {\cal T}_{2\alpha}$, 
	\begin{eqnarray*}
		\alpha\tau_i \leq  \frac{1}{1000}\|M(S_{t-1})-{\mathbf I}\|_2\;.
	\end{eqnarray*}
	\item 
	\begin{eqnarray*}
		\frac{1}{n}\sum_{i\in S_{\mathrm{good}}\cap S_{t-1}}\left(\tau_i-1\right)\leq \frac{1}{1000}\|M(S_{t-1})-{\mathbf I}\|_2\;.
	\end{eqnarray*}

		\end{enumerate}
		
	\end{lemma}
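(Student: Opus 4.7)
The plan is to isolate two sources of randomness in the scores $\tau_i = (v_t^\top(x_i-\mu_t))^2$: the intrinsic sub-Gaussian concentration of the clean samples around the true mean, and the extra noise injected by the Laplace/Gaussian mechanisms used to privatize $\mu_t$ and $v_t$. Writing $\tau_i = (v_t^\top(x_i-\mu) + v_t^\top(\mu-\mu_t))^2$, I would first bound $\|\mu-\mu_t\|_2$ via a standard concentration argument on $(1/|S_{t-1}|)\sum_{i\in S_{t-1}} x_i - \mu$ (using that $|S_{t-1}\cap S_{\rm good}|\geq (1-10\alpha)n$ and $|S_{t-1}|\geq n/2$) plus the magnitude of the Laplace noise, which has scale $B/(n\varepsilon_1)$ per coordinate. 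Under the stated sample bound $n\gtrsim B^2 d^{3/2}\log(1/\delta)/(\varepsilon_1\alpha\log(1/\alpha))$ this noise is dominated by $\alpha\sqrt{\log(1/\alpha)}$, so $v_t^\top(\mu-\mu_t)$ is negligible compared to the RHS $\|M(S_{t-1})-\mathbf I\|_2 \geq C\alpha\log(1/\alpha)$, and I can effectively replace $\mu_t$ by $\mu$ up to lower order terms.

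Next, since $v_t$ is computed from a Gaussian-perturbed covariance, $v_t$ is a fixed unit vector conditioned on the noise. The key observation is that regardless of which unit vector $v_t$ turns out to be, the clean data is sub-Gaussian with identity covariance and hence the empirical distribution of $\{v_t^\top(x_i-\mu)\}_{i\in S_{\rm good}}$ is $\alpha$-good in the sense of the resilience/goodness conditions used throughout the paper (Lemma~\ref{thm:reg}, $\alpha$-subgaussian goodness). This goodness, combined with a uniform bound over all unit vectors $v$ obtainable with high probability when $n=\widetilde\Omega(d/\alpha^2)$, gives the two one-dimensional bounds I need: $(i)$ the sum of the largest $\lceil 2\alpha n\rceil$ values of $(v^\top(x_i-\mu))^2$ among good samples is $O(\alpha\log(1/\alpha))n$, and $(ii)$ the $(1-2\alpha)$-quantile of these squared projections is $O(\log(1/\alpha))$, and $(iii)$ $\sum_{i\in S_{\rm good}\cap S_{t-1}}((v^\top(x_i-\mu))^2-1) = O(\alpha\log(1/\alpha))n$.

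From $(i)$ I obtain condition 1: after absorbing the $\mu_t\to\mu$ shift (cross terms controlled by Cauchy--Schwarz plus the sub-Gaussian tail), the sum is at most $O(\alpha\log(1/\alpha))$, and choosing $C$ large enough in the hypothesis $\|M(S_{t-1})-\mathbf I\|_2\geq C\alpha\log(1/\alpha)$ makes this at most $(1/1000)\|M(S_{t-1})-\mathbf I\|_2$. From $(ii)$, any $i\notin {\cal T}_{2\alpha}$ satisfies $\tau_i = O(\log(1/\alpha))$, so $\alpha\tau_i=O(\alpha\log(1/\alpha))$, giving condition 2 by the same choice of $C$. Condition 3 follows directly from $(iii)$, again after absorbing the $\mu_t$ shift.

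The main obstacle will be condition 1, because $v_t$ depends on the data (through the private covariance), so I cannot simply invoke concentration for a fixed direction. I plan to handle this by a standard net argument: discretize $\mathbb{S}^{d-1}$ on a $(1/\mathrm{poly}(d))$-net, apply sub-Gaussian concentration at each direction, union bound, and then upgrade to all unit vectors by $1$-Lipschitzness of $v\mapsto \sum_{i\in{\cal T}_{2\alpha}^v} (v^\top(x_i-\mu))^2$ on bounded data (the dataset is clipped to $\bar x+[-B/2,B/2]^d$, so the relevant Lipschitz constant is polynomial in $B,d$). This is exactly where the $d$-dependence of the sample complexity enters; comparing to $\widetilde\Omega(d/\alpha^2)$ shows the net argument does not worsen the bound. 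The $1/d$ failure probability in the lemma conclusion comes from the union bound over the net combined with the tail bound for the Laplace/Gaussian noise.
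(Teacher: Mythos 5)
Your overall decomposition (isolate the $\mu_t\to\mu$ shift, isolate the privacy noise, appeal to a resilience-type property of the good data uniformly over directions) does track the paper's proof, which invokes the pre-established $\alpha$-subgaussian-goodness of $S_{\rm good}$ (Definition~\ref{def:asubgaussiangood}, Lemma~\ref{lemma:gaussian_subset}) rather than re-proving a direction-uniform bound via a net; your net argument is a legitimate, if redundant, substitute. Your route to condition~2 via a $(1-2\alpha)$-quantile bound is also workable, though the paper gets it for free from condition~1 by noting that any $\tau_i$ with $i\notin{\cal T}_{2\alpha}$ is at most the average of the $\tau_j$ over the at least $\alpha n$ good indices in ${\cal T}_{2\alpha}$.

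The genuine gap is in your control of $\|\mu-\mu_t\|_2$. You claim the shift is ``dominated by $\alpha\sqrt{\log(1/\alpha)}$'' via a ``standard concentration argument on $(1/|S_{t-1}|)\sum_{i\in S_{t-1}}x_i-\mu$.'' But $S_{t-1}$ is not an i.i.d.\ sample: it contains adversarially placed points and has had some good points filtered out, so its empirical mean does \emph{not} concentrate around $\mu$ at the $\alpha\sqrt{\log(1/\alpha)}$ rate. The adversary can shift $\mu(S_{t-1})$ away from $\mu$ by as much as $\Theta\bigl(\sqrt{\alpha\|M(S_{t-1})-\mathbf I\|_2}\bigr)$, which can be much larger than $\alpha\sqrt{\log(1/\alpha)}$ whenever $\|M(S_{t-1})-\mathbf I\|_2$ is large --- and this is exactly the regime the lemma is concerned with, since the filter has not yet terminated. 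The paper handles this through Lemma~\ref{thm:reg}, which gives $\|\mu(S_{t-1})-\mu\|_2 \lesssim \alpha\sqrt{\log(1/\alpha)} + \sqrt{\alpha\bigl(\|M(S_{t-1})-\mathbf I\|_2 + O(\alpha\log(1/\alpha))\bigr)}$, and then shows that the resulting contribution to condition~1, namely $\alpha\,\|\mu-\mu_t\|_2^2 = O(\alpha^2\|M(S_{t-1})-\mathbf I\|_2) + O(\alpha^3\log(1/\alpha))$, is a small \emph{fraction} of $\|M(S_{t-1})-\mathbf I\|_2$ for $\alpha$ small and $C$ large. Your conclusion is salvageable, but the intermediate claim that the cross term reduces to ``$O(\alpha\log(1/\alpha))$'' is false as stated; the bound must be tracked relative to $\|M(S_{t-1})-\mathbf I\|_2$, and your sketch never introduces the resilience bound (Lemma~\ref{thm:reg}) that makes this relative tracking possible.
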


Thus, by combining with Lemma~\ref{lemma:progress_conditions}, we have
\begin{eqnarray*}
	\frac{1}{n}\sum_{i\in S_{t-1}\cap S_{\mathrm{bad}}}\tau_i	&\geq & 0.8\|M(S_{t-1})-\mathbf{I}\|_2\;. 
\end{eqnarray*}

We now have 
\begin{eqnarray}
	\frac{1}{n}\sum_{i\in S_{\mathrm{bad}}\cap {\cal T}_{2\alpha}}\tau_i &\geq & 0.8\|M(S_{t-1})-\mathbf{I}\|_2-\sum_{i\in S_{\mathrm{bad}}\cap S_{t-1}\setminus {\cal T}_{2\alpha}}\tau_i\nonumber\\
	&\geq & 0.8\|M(S_{t-1})-\mathbf{I}\|_2-\max_{i\in S_{\mathrm{bad}}\cap S_{t-1}\setminus {\cal T}_{2\alpha}}\alpha  \tau_i\nonumber\\
	&\geq & 0.8\|M(S_{t-1})-\mathbf{I}\|_2-\frac{1}{1000}\|M(S_{t-1})-\mathbf{I}\|_2 \label{line:final}\\
	&\geq &\frac{1}{n}\sum_{i\in S_{\mathrm{good}}\cap {\cal T}_{2\alpha}}\tau_i\nonumber\;,
\end{eqnarray}
which completes the proof.

\subsubsection{Proof of Lemma~\ref{lemma:progress_conditions}} 
\label{sec:proof_progress_conditions}

	By our choice of sample complexity $n$, with probability $1-O(1/dB^2)$, we have 
	$\|\mu(S_{t-1})-\mu_t\|_2^2\lesssim \alpha \log1/\alpha$, $v_t^\top \left(M(S_{t-1})-\mathbf{I}\right)v_t \gtrsim  \|M(S_{t-1})-\mathbf{I}\|_2-\alpha\log1/\alpha$ (Lemma~\ref{lemma:sample_complexity_singular_value}), and $\|M(S_{t-1})-\mathbf{I}\|_2\geq C\alpha\log 1/\alpha$ simultaneously hold before stopping. 
	
\begin{lemma}\label{lemma:sample_complexity_singular_value}
 If 
	\begin{eqnarray*}
		n\gtrsim  \frac{d^{3/2}B^2}{\eta\varepsilon_1}\sqrt{2\ln{\frac{1.25}{\delta}}}\log{\frac{1}{\zeta}}\;,
	\end{eqnarray*}
	 then with probability $1-\zeta$, we have
	 \begin{eqnarray*}
 v_t^\top \left(M(S_{t-1})-\mathbf{I}\right)v_t \geq  \|M(S_{t-1})-\mathbf{I}\|_2-2\eta-\frac{2|S_{t-1}|}{n}\|\mu_t-\mu(S_{t-1})\|_2^2
	 \end{eqnarray*}
\end{lemma}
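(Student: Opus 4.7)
\textbf{Proof proposal for Lemma~\ref{lemma:sample_complexity_singular_value}.}

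The plan is to decompose the DP perturbed covariance $\Sigma_{t-1}$ into three parts: the centered empirical second moment $M(S_{t-1})$, a rank-one correction coming from the use of the noisy mean $\mu_t$ in place of $\mu(S_{t-1})$, and the symmetric Gaussian noise matrix $W$ injected by the Gaussian mechanism with entry-wise standard deviation $\sigma_W = B^2 d\sqrt{2\ln(1.25/\delta)}/(n\varepsilon_1)$. Concretely, by expanding $(x_i-\mu_t) = (x_i - \mu(S_{t-1})) + (\mu(S_{t-1}) - \mu_t)$ and noting that the cross terms sum to zero after centering, one obtains
\begin{equation*}
\Sigma_{t-1} \;=\; M(S_{t-1}) \;+\; \tfrac{|S_{t-1}|}{n}\,(\mu(S_{t-1})-\mu_t)(\mu(S_{t-1})-\mu_t)^\top \;+\; W.
\end{equation*}
Since $v_t$ is the top eigenvector of $\Sigma_{t-1}$, we have $v_t^\top \Sigma_{t-1} v_t = \|\Sigma_{t-1}\|_2 \geq \|M(S_{t-1})\|_2 - \tfrac{|S_{t-1}|}{n}\|\mu(S_{t-1})-\mu_t\|_2^2 - \|W\|_2$ by the triangle inequality on spectral norms (using that the rank-one term is PSD, one can drop it from the lower bound on $\|M(S_{t-1})\|_2$ but must still subtract it when comparing $v_t^\top M v_t$ to $v_t^\top \Sigma_{t-1} v_t$).

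The next step is to pass from $v_t^\top \Sigma_{t-1} v_t$ back to $v_t^\top M(S_{t-1}) v_t$ along the same eigenvector. Applying the decomposition a second time gives
\begin{equation*}
v_t^\top M(S_{t-1}) v_t \;\geq\; v_t^\top \Sigma_{t-1} v_t - \|W\|_2 - \tfrac{|S_{t-1}|}{n}\|\mu(S_{t-1})-\mu_t\|_2^2,
\end{equation*}
so chaining the two bounds yields a $2\|W\|_2$ penalty and a $2(|S_{t-1}|/n)\|\mu_t-\mu(S_{t-1})\|_2^2$ penalty. Subtracting $1 = v_t^\top \mathbf{I} v_t$ from both sides, and using the assumption (implicit from the regime where $\|M(S_{t-1})-\mathbf I\|_2$ is large and positive) that $\|M(S_{t-1})-\mathbf I\|_2 = \|M(S_{t-1})\|_2-1$, converts the inequality into the form required by the lemma. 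Hence the claim reduces to establishing $\|W\|_2 \leq \eta$ with probability at least $1-\zeta$.

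The main work, and the main obstacle, is this high-probability bound on the spectral norm of the Gaussian noise matrix $W$. Since $W$ (symmetrized if necessary) has independent Gaussian entries with standard deviation $\sigma_W$, standard results on the operator norm of Gaussian matrices (e.g.\ Gaussian concentration combined with $\mathbb E\|W\|_2 \lesssim \sigma_W\sqrt{d}$, or a Lipschitz concentration argument on $W \mapsto \|W\|_2$) give $\|W\|_2 \leq C\,\sigma_W\bigl(\sqrt{d}+\sqrt{\log(1/\zeta)}\bigr)$ with probability $1-\zeta$. Substituting $\sigma_W$ and demanding the right-hand side be at most $\eta$ yields
\begin{equation*}
n \;\gtrsim\; \frac{B^2\,d^{3/2}}{\eta\,\varepsilon_1}\sqrt{2\ln(1.25/\delta)}\,\log(1/\zeta),
\end{equation*}
which is precisely the hypothesis of the lemma. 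The only subtlety to handle carefully is whether the Gaussian mechanism is applied to the full $d^2$ entries or only to the symmetric half (to ensure $\Sigma_{t-1}$ remains symmetric so that "top eigenvector'' is well-defined); in either case the spectral norm concentration scales the same way up to constants, so the stated bound goes through.
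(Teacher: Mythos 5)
Your proposal is correct and follows essentially the same route as the paper: decompose the noisy covariance $\Sigma_{t-1}$ into $M(S_{t-1})$, the rank-one correction from replacing $\mu(S_{t-1})$ by the privatized $\mu_t$, and the Gaussian noise matrix $W$; then chain two applications of the triangle inequality on spectral norms (one to lower bound $\|\Sigma_{t-1}\|_2$, one to pass from $v_t^\top \Sigma_{t-1} v_t$ back to $v_t^\top M(S_{t-1}) v_t$) to pick up the $2\|W\|_2$ and $2\frac{|S_{t-1}|}{n}\|\mu_t - \mu(S_{t-1})\|_2^2$ penalties, and finally control $\|W\|_2\le\eta$ via the concentration of the spectral norm of a symmetric Gaussian matrix. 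Your writeup is somewhat more explicit than the paper's about where each penalty term originates and correctly flags that the positivity of the top eigenvalue of $M(S_{t-1})-\mathbf{I}$ (via Lemma~\ref{lem:variance_lower}) is what lets one identify $\|M(S_{t-1})-\mathbf{I}\|_2$ with $\|M(S_{t-1})\|_2-1$ and the top singular vector with the top eigenvector, a point the paper leaves mostly implicit.
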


	We first consider the upper bound of the good points.
\begin{align*}
	& \frac{1}{n}\sum_{i\in S_{\mathrm{good}}\cap {\cal T}_{2\alpha}\cap S_{t-1}}\tau_i =\frac{1}{n}\sum_{i\in S_{\mathrm{good}}\cap {\cal T}_{2\alpha}\cap S_{t-1}} \ip{x_i-\mu_t}{v_t}^2\\
	&\overset{(a)}{\leq}  \frac{2}{n}\sum_{i\in S_{\mathrm{good}}\cap {\cal T}_{2\alpha}\cap S_{t-1}} \ip{x_i-\mu}{v_t}^2+\frac{2}{n}| S_{\mathrm{good}}\cap {\cal T}_{2\alpha}\cap S_{t-1}| \ip{\mu-\mu_t}{v_t}^2\notag\\
	&\leq  O(\alpha \log1/\alpha) + \alpha  \left(\|\mu-\mu(S_{t-1})\|_2+\|\mu_t-\mu(S_{t-1})\|_2\right)^2\\
	&\overset{(b)}{\leq} O(\alpha \log1/\alpha)+\alpha  \left(O(\alpha\sqrt{\log1/\alpha})+\sqrt{\alpha \left(\|M(S_{t-1})-{\mathbf I}\|_2+O(\alpha\log1/\alpha)\right)}
+O(\sqrt{\alpha\log1/\alpha})\right)^2\\
	&\leq  O(\alpha \log1/\alpha)+\alpha^2 \|M(S_{t-1})-{\mathbf I}\|_2\\
	&\overset{(c)}{\leq}   \frac{1}{1000}\|M(S_{t-1})-I\|_2
\end{align*}
where the $(a)$ is implied by the fact that for any vector $x, y, z$, we have $(x-y)(x-y)^{\top} \preceq 2(x-z)(x-z)^{\top}+2(y-z)(y-z)^{\top}$, $(b)$ follows from Lemma~\ref{thm:reg} and $c$ follows from our choice of large constant $C$.

Since $|S_{\rm bad}\cap {\cal T}_{2\alpha}|\leq \alpha n$, we know $|S_{\rm good}\cap {\cal T}_{2\alpha}|\geq  \alpha n$, so we have for $i\notin {\cal T}_{2\alpha}$,
	\begin{eqnarray*}
		\alpha\tau_i \leq \frac{\alpha}{|S_{\rm good}\cap {\cal T}_{2\alpha}\cap S_{t-1}|}\sum_{i\in S_{\mathrm{good}}\cap {\cal T}_{2\alpha}\cap S_{t-1}}\tau_i \leq \frac{1}{1000}\|M(S_{t-1})-{\mathbf I}\|_2\;.
	\end{eqnarray*}
	
Since $|S_{\mathrm{good}}\cap S_{t-1}|\geq (1-10\alpha)n$, we have 
	\begin{align}
		&\frac{1}{n}\sum_{i\in S_{\mathrm{good}}\cap S_{t-1}}\tau_i = \frac{1}{n}\sum_{i\in S_{\mathrm{good}}\cap S_{t-1}}\ip{x_i-\mu(S_{t-1})}{v_t}^2\\
		&= \frac{1}{n}\sum_{i\in S_{\mathrm{good}}\cap S_{t-1}}\ip{x_i-\mu(S_{\mathrm{good}}\cap S_{t-1})}{v_t}^2+\frac{|S_{\mathrm{good}}\cap S_{t-1}|}{n}\ip{\mu(S_{\mathrm{good}}\cap S_{t-1})-\mu(S_{t-1})}{v_t}^2\\
		&\overset{(a)}{\leq}  c\alpha \log1/\alpha +1+ \|\mu(S_{\mathrm{good}}\cap S_{t-1})-\mu(S_{t-1})\|_2^2\\
		&\leq   c\alpha  \log1/\alpha +1+ \left(\|\mu(S_{\mathrm{good}}\cap S_{t-1})-\mu\|_2+\|\mu-\mu(S_{t-1})\|_2\right)^2\\
		&\overset{(b)}{\leq}  c\alpha  \log1/\alpha +1 +\alpha \|M(S_{t-1})-{\mathbf I}\|_2+O(\alpha\log1/\alpha)\label{eq:w0}\\
		&\overset{(c)}{\leq} \frac{1}{1000}\|M(S_{t-1})-{\mathbf I}\|_2\;,
	\end{align}
where $(a)$ follows from Lemma~\ref{lem:variance_lower}, and $(b)$ follows from Lemma~\ref{thm:reg}, and $(c)$ follows from our choice of large constant $C$.

\subsubsection{Proof of Lemma~\ref{lemma:sample_complexity_singular_value}} 
\begin{proof}
We have following identity.
	\begin{eqnarray*}
		&&\frac{1}{n}\sum_{i\in S_{t-1}}(x_i-\mu_t)(x_i-\mu_t)^\top \\
		&= & \frac{1}{n}\sum_{i\in S_{t-1}}(x_i-\mu(S_{t-1}))(x_i-\mu(S_{t-1}))^\top+\frac{|S_{t-1}|}{n}(\mu(S_{t-1})-\mu_t)(\mu(S_{t-1})-\mu_t)^\top\;.
	\end{eqnarray*}

So we have, 
\begin{eqnarray*}
	&& v_t^\top \left(M(S_{t-1}) -\mathbf{I}\right)v_t \\
\geq && v_t^\top \left(\frac{1}{n}\sum_{i\in S_{t-1}}(x_i-\mu_t)(x_i-\mu_t)^\top -\mathbf{I}\right)v_t-\frac{|S_{t-1}|}{n}\|\mu_t-\mu(S_{t-1})\|_2^2\\
\geq &&  \|M(S_{t-1})-\mathbf{I}\|_2-2\eta-\frac{2|S_{t-1}|}{n}\|\mu_t-\mu(S_{t-1})\|_2^2
\end{eqnarray*}
where the last inequality follows from Lemma~\ref{lem:variance_lower}, which shows that the magnitude of the largest eigenvalue of $M(S_{t-1})-{\mathbf I}$ must be positive.
\end{proof}



\section{PRIME: efficient algorithm for private and robust mean estimation}
\label{sec:prime_appendix}

We provide our main algorithms, Algorithm~\ref{alg:prime} and Algorithm~\ref{alg:DPMMWfilter}, in Appendix~\ref{sec:prime_algorithm} and the corresponding proof in Appendix~\ref{sec:proof_main2}. We provide our novel {\sc DPthreshold} and its anlysis in Appendix~\ref{sec:threshold_algorithm}.

We define $S_{\rm good}$ as the original set of $n$ clean samples (as defined in Assumption~\ref{asmp:adversary} and \ref{asmp:adversary2})  and $S_{\rm bad}$ as the set of corrupted samples that replace $\alpha n$ of the clean samples. The (rescaled) covariance is denoted by $M(S^{(s)}) \triangleq (1/n) \sum_{i\in S^{(s)}} (x_i-\mu(S^{(s)}))(x_i-\mu(S^{(s)}))^\top $, where 
$\mu(S^{(s)})\triangleq (1/|S^{(s)}|)\sum_{i\in S^{(s)}} x_i$ denotes the   mean.

\subsection{PRIvate and robust Mean Estimation (PRIME)}
\label{sec:prime_algorithm}

\begin{algorithm2e}[H]
   \caption{PRIvate and robust Mean Estimation (PRIME)}
   \label{alg:prime} 
   	\DontPrintSemicolon 
	\KwIn{$ S  =  \{ x_{i} \in {\mathbb R}^d\}_{i=1}^{n} $, adversarial fraction $\alpha \in(0,1/2)$, 
	number of iterations $T_1=O(\log d),T_2 = O(\log d)$, target privacy $(\varepsilon,\delta)$   }
	\SetKwProg{Fn}{}{:}{}
	{
	$(\bar x, B) \leftarrow q_{\rm range} (\{ x_i\}_{i=1}^n,0.01\varepsilon,0.01\delta)$  \hfill [Algorithm~\ref{alg:DPrange} in Appendix~\ref{sec:proof_DPrange}]\\
	Clip the data points: $\tilde x_i \leftarrow {\cal P}_{\bar{x}+[-B/2,B/2]^d}(x_i)$, for all $i\in[n]$\;
	$\hat\mu \leftarrow$ \text{ \sc DPMMWfilter}($ \{\tilde x_i\}_{i=1}^n,\alpha,T_1,T_2,0.99\varepsilon,0.99\delta$) \hfill [Algorithm~\ref{alg:DPMMWfilter}]\label{line:dpmmwfilter}\\
	\KwOut{$\hat\mu$}
	} 
\end{algorithm2e}

\begin{algorithm2e}[H]
  \caption{Differentially private filtering with matrix multiplicative weights ({\sc DPMMWfilter}) } 
  \label{alg:DPMMWfilter} 
  \DontPrintSemicolon 
  \KwIn{$ S  =  \{ x_{i} \in \bar{x}+[-B/2,B/2]^d\}_{i=1}^{n} $, $\alpha \in(0,1/2)$, $T_1 = O(\log(B\sqrt{d})), T_2 = O(\log d)$,   privacy $(\varepsilon,\delta)$  
  } 
  \SetKwProg{Fn}{}{:}{}
  { 
    Initialize $S^{(1)} \leftarrow [n]$,  $\varepsilon_1 \leftarrow \varepsilon/(4T_1)$, $\delta_1\leftarrow \delta/(4T_1) $, $\varepsilon_2 \leftarrow \min\{0.9,\varepsilon\}/(4\sqrt{10T_1 T_2\log(4/\delta)})$, $\delta_2\leftarrow \delta/(20T_1T_2)$, a large enough constant $C
    >0$\; 
    {\bf if} $n < (4/\varepsilon_1)\log(1/(2\delta_1))$  {\bf then Output:} $\emptyset$ \;
	\For{{\rm epoch} $s=1,2,\ldots, T_1$}
	{
      $ \textcolor{black}{\lambda^{(s)}} \leftarrow \| M(S^{(s)})- {\mathbf I}\|_2 + {\rm Lap}(2B^2d/(n\varepsilon_1))$ \;
      $ n^{(s)} \leftarrow |S^{(s)}| + {\rm Lap}(1/\varepsilon_1)$ \;
      {\bf if }$n^{(s)}\leq 3n/4$ {\bf then} {\rm {\bf Output:} $\emptyset$}\;
      \If
      {$ \lambda^{(s)}\leq C\,\alpha\log(1/\alpha)$ }{
        \KwOut
        {
           $\textcolor{black}{\mu^{(s)}} \leftarrow   (1/|S^{(s)}|) \big( \sum_{i\in S^{(s)} } x_i\big)  + \cN(0,(2B\sqrt{2d\log(1.25/\delta_1)}/({n\, \varepsilon_1 }))^2{\mathbf I}_{d\times d}) $
        }
      } 
      $\alpha^{(s)} \leftarrow 1/(100(0.1/C+1.01)\lambda^{(s)})$\;
      $S^{(s)}_1 \leftarrow  S^{(s)}$\;
	  \For{$t=1,2,\ldots,T_2$}
	  {
	     \label{line:mmw1} $\textcolor{black}{\lambda_t^{(s)}} \leftarrow \| M(S_t^{(s)})- {\mathbf I}\|_2  + {\rm Lap}(2B^2 d/(n\varepsilon_2))$\;
	     \eIf{$\lambda_t^{(s)}\leq 0.5 \lambda_0^{(s)}$}
	     { 
	       {\rm terminate epoch}
	     }{
	        $\textcolor{black}{\Sigma_t^{(s)}} \leftarrow  M(S_{t}^{(s)}) + \cN(0,(4B^2d\sqrt{2\log(1.25/\delta_2)}/(n\varepsilon_2))^2{\mathbf I_{d^2\times d^2}}) $\label{line:inputprivate}\;
	        $U_t^{(s)} \leftarrow (1/\Tr(\exp( \alpha^{(s)}\sum_{r=1}^{t}(\Sigma_r^{(s)}-{\mathbf I} ))))\exp( \alpha^{(s)}\sum_{r=1}^{t}(\Sigma_r^{(s)}-{\mathbf I})) $ \label{line:mmw}\;
	        $\textcolor{black}{\psi_t^{(s)}} \leftarrow \ip{M(S_t^{(s)})-{\mathbf I}}{U_t{^{(s)}}} + {\rm Lap}(2B^2 d /( n \varepsilon_2))$\;
	        \eIf{$\psi_t^{(s)}\leq (1/5.5)\lambda_t^{(s)}$}
	        {
	          $S_{t+1}^{(s)}\leftarrow S_t^{(s)}$ 
	        }
	        {
              $Z_t^{(s)} \leftarrow {\rm Unif}([0,1])$\;
              $\textcolor{black}{\mu_t^{(s)}}  \leftarrow  (1/|S_{ t }^{(s)}|) \big( \sum_{i\in S_{t}} x_i\big)  + \cN(0,(2B\sqrt{2d\log(1.25/\delta_2)}/({n\, \varepsilon_2 }){\mathbf I}_{d\times d})^2) $\;
              $\textcolor{black}{\rho_t^{(s)}} \leftarrow \text{{\sc DPthreshold}}(\mu_t^{(s)},U_t^{(s)}, \alpha,\varepsilon_2,\delta_2,S^{(s)}_t)$\hfill [Algorithm~\ref{alg:1Dfilter}]\label{line:1Dfilter}\;
              $S_{t+1}^{(s)} \leftarrow S_{t}^{(s)} \setminus$  $\{ i  \,|\, i \in {\cal T}_{2\alpha}$ for   $\{\tau_j = (x_j-\mu_t^{(s)})^\top U_t^{(s)} (x_j-\mu_t^{(s)})\}_{j\in S_{t}^{(s)}} $ and $\tau_i \geq \rho_t^{(s)} \,Z_t^{(s)} \}$,  where ${\cal T}_{2\alpha}$ is defined in Definition~\ref{def:topset}.
              \label{line:newfilter}
            } 
	        \label{line:mmw2}
	      } 
	    } 
        $S^{(s+1)} \leftarrow S^{(s)}_t$\;
	  } 
	  \KwOut{$\mu^{(T_1)}$} 
    }
\end{algorithm2e}

\subsection{Algorithm and analysis of {\sc DPthreshold}}
\label{sec:threshold_algorithm}

\begin{algorithm2e}[H]
   \caption{Differentially private estimation of the threshold  ({\sc DPthreshold}) } 
   \label{alg:1Dfilter} 
   	\DontPrintSemicolon 
	\KwIn{
	$\mu$, $U$, $\alpha \in(0,1/2)$, target privacy $(\varepsilon,\delta)$, $S=\{x_i\in\bar{x}+[-B/2,B/2]^d \}$  } 
	\SetKwProg{Fn}{}{:}{}
	{ 
	    Set $ \tau_i \leftarrow (x_i - \mu)^\top U (x_i-\mu)$ for all $i\in S$\; 
	    Set $\tilde\psi \leftarrow (1/n)\sum_{i\in S} (\tau_i-1) + {\rm Lap}(2B^2d/n\varepsilon))$\;
	    Compute a histogram over geometrically sized bins $I_1=[1/4,1/2),I_2=[1/2,1),\ldots,I_{2+\log(B^2d)}=[2^{\log(B^2d)-1},2^{\log(B^2d)}]$
	    $$h_j \leftarrow \frac{1}{n}\cdot |\{i\in S\,|\, \tau_i \in [2^{-3+j},2^{-2+j})\}| \;,\;\;\;\;\text{ for all } j= 1, \ldots, 2+\log(B^2d)$$\; 
	    Compute a privatized histogram $\tilde{h}_j \leftarrow h_j + {\cal N}(0,(4\sqrt{2\log(1.25/\delta)}/(n\varepsilon))^2)$, 
	    for all  $j\in[2+\log(B^2d)]$\;
	    Set $\tilde\tau_j\leftarrow 2^{-3+j}$, for all  $j\in[2+\log(B^2d)]$\;
	    Find the largest $\ell\in[2+\log(B^2d)]$  satisfying $ \sum_{j \geq \ell} (\tilde\tau_j-\tilde\tau_\ell)\, \tilde h_j \geq 0.31 \tilde\psi $\;
	    \KwOut{$\rho=\tilde\tau_\ell $} 
    }
\end{algorithm2e} 

\begin{lemma}[{\sc DPthreshold}: picking threshold privately]\label{lemma:1dfilter}
Algorithm {\sc DPthreshold}($\mu,U,\alpha,\varepsilon,\delta,S$) running on a dataset $\{\tau_i =  (x_i-\mu)^\top U (x_i-\mu) \}_{i\in S}$ is 
$(\varepsilon,\delta)$-DP. 
Define $\psi \triangleq \frac{1}{n}\sum_{i\in S}(\tau_i-1)$. 
If $\tau_i$'s satisfy 
\begin{eqnarray*}
\frac{1}{n} \sum_{i\in S_{\rm good}\cap{\cal T}_{2\alpha}\cap S}\tau_i &\leq & \psi/1000\\
\frac{1}{n} \sum_{i\in S_{\rm good}\cap S} (\tau_i-1) &\leq & \psi/1000\;,
\end{eqnarray*}
 and $n\geq  \widetilde\Omega\left(\frac{B^2d\sqrt{\log(1/\delta)}}{\varepsilon \alpha} \right)$, 
then {\sc DPthreshold}
 outputs a threshold $\rho$ such that with probability $1-O(1/\log^3 d )$, 
\begin{eqnarray}
\frac{1}{n}\sum_{\tau_i<\rho }(\tau_i - 1) \;\; \le \;\; 0.75 \psi  \; \text{ and }\label{eqn:1dprogress}
\end{eqnarray}
\begin{eqnarray}
2(\sum_{i\in S_{\rm good}\cap {\cal T}_{2\alpha}} \textbf{1}\{\tau_i\le \rho \}\frac{\tau_i}{\rho}+ \textbf{1}\{\tau_i > \rho \}   ) \le \sum_{i\in S_{\rm bad}\cap {\cal T}_{2\alpha}} \textbf{1}\{\tau_i\le \rho \}\frac{\tau_i}{\rho} + \textbf{1}\{\tau_i > \rho \} \;.\label{eqn:good-bad-ratio}
\end{eqnarray}
\end{lemma}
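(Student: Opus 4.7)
The plan is to prove privacy and utility separately. For privacy, the only database-dependent releases are the Laplace-noised average $\tilde\psi$ and the Gaussian-noised histogram $\tilde h$; the final threshold $\rho$ is post-processing. Since $x_i \in \bar x + [-B/2,B/2]^d$ and $U$ is a normalized matrix exponential with $\|U\|_2 \le \mathrm{Tr}(U) = 1$, each $\tau_i = (x_i-\mu)^\top U (x_i-\mu)$ lies in $[0, O(B^2 d)]$. Changing one sample therefore changes $\psi$ by at most $2B^2 d/n$ in $\ell_1$, so the Laplace noise of scale $2B^2 d/(n\varepsilon)$ gives $\varepsilon/2$-DP after absorbing constants; a single-sample change shifts the histogram mass between at most two bins, so the $\ell_2$-sensitivity is $\sqrt 2 /n$ and the stated Gaussian noise yields $(\varepsilon/2,\delta)$-DP. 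Basic composition finishes the privacy argument.

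For the utility claims I first establish high-probability concentration of the noise. Under the sample complexity $n \ge \widetilde\Omega(B^2 d \sqrt{\log(1/\delta)}/(\varepsilon\alpha))$, a Laplace tail bound gives $|\tilde\psi - \psi| \ll \alpha\,\psi$ and a Gaussian tail bound with a union bound over the $2+\log(B^2d)$ bins gives $\max_j |\tilde h_j - h_j| \ll \alpha/\log(B^2 d)$, each with probability $1 - O(1/\log^3 d)$. Combined with the observation that $\psi \gtrsim \alpha\log(1/\alpha)$ whenever the outer loop has not yet terminated, these noise levels are strictly dominated by the signal, so the privatized selection rule $\sum_{j\ge\ell}(\tilde\tau_j-\tilde\tau_\ell)\tilde h_j \ge 0.31\tilde\psi$ implies its population analog $\frac{1}{n}\sum_{\tau_i \ge \rho}(\tau_i-\rho) \ge 0.30\,\psi$ after accounting for the errors.

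For \eqref{eqn:1dprogress}, the geometric bin structure provides a two-sided multiplicative approximation: for $\tau_i \in [2^{-3+j},2^{-2+j})$ with $j \ge \ell$, the contribution $\tau_i - \rho$ is at most $2(2^{-3+j} - 2^{-3+\ell})$, so the true tail sum is at most a constant factor larger than the histogram estimate. Consequently the chosen $\ell$ is essentially maximal: if we decrease $\ell$ by one, the criterion fails, so the non-private tail bound $\frac{1}{n}\sum_{\tau_i \ge \rho}(\tau_i - \rho) \gtrsim 0.30\,\psi$ propagates to $\frac{1}{n}\sum_{\tau_i \ge \rho}(\tau_i - 1) \ge 0.25\,\psi$ by splitting into $\rho \ge 1$ (trivial) and $\rho < 1$ (where $\rho \ge 1/4$ bounds the extra $(1-\rho)$ term using the histogram mass), yielding $\frac{1}{n}\sum_{\tau_i < \rho}(\tau_i - 1) \le 0.75\,\psi$.

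For \eqref{eqn:good-bad-ratio}, I split each side into the fractional piece $\mathbf{1}\{\tau_i \le \rho\}\tau_i/\rho$ and the removed piece $\mathbf{1}\{\tau_i > \rho\}$. The two hypotheses control the good contribution: $\frac{1}{n}\sum_{i\in S_{\mathrm{good}}\cap \mathcal T_{2\alpha}}\tau_i \le \psi/1000$ bounds the fractional part, and the same bound together with $\rho \ge 1/4$ limits the number of good points above $\rho$. On the bad side, the second hypothesis gives $\frac{1}{n}\sum_{i\in S_{\mathrm{bad}}\cap S}(\tau_i-1) \ge (1-1/1000)\psi$, and the threshold rule ensures a constant fraction of this mass is captured in the bad-point filtered/removed contribution; the factor-of-two slack in \eqref{eqn:good-bad-ratio} is easily absorbed because the good side is $O(\psi/1000)$ while the bad side is $\Omega(\psi)$. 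The main obstacle is managing the constants through the discretization: I must track simultaneously the $\le 2\times$ error from geometric binning, the additive error from Laplace/Gaussian noise, and the loss from restricting to $\mathcal T_{2\alpha}$, while still preserving the exact constant $0.31$ in the threshold rule well enough to yield $0.75$ in \eqref{eqn:1dprogress} and the factor of $2$ in \eqref{eqn:good-bad-ratio}; this is what dictates the precise polylogarithmic factors hidden in $\widetilde\Omega$.
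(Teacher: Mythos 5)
Your privacy argument, noise-concentration setup, and the treatment of \eqref{eqn:1dprogress} are essentially on track (though the $\rho<1$ case is handled more circuitously than necessary — the paper simply observes that for $\rho<1$ every term of $\sum_{\tau_i<\rho}(\tau_i-1)$ is negative, so the bound $\le 0.75\psi$ holds trivially; your route via bounding the extra $(1-\rho)$ term is an unnecessary detour). You also state the maximality the wrong way round: the algorithm takes the \emph{largest} $\ell$ satisfying the criterion, so it fails at $\ell+1$, not $\ell-1$, and this is used to lower-bound $\rho$, not to propagate the tail bound downward.

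The genuine gap is in \eqref{eqn:good-bad-ratio}. You write that ``the threshold rule ensures a constant fraction of this mass is captured in the bad-point filtered/removed contribution'' and that this is absorbed because ``the bad side is $\Omega(\psi)$'' — but that lower bound is precisely what must be proved, and it is not a consequence of $\frac{1}{n}\sum_{i\in S_{\rm bad}\cap S}(\tau_i-1)\ge(998/1000)\psi$ alone. Both sides of \eqref{eqn:good-bad-ratio} are sums of $\min(\tau_i,\rho)/\rho$, i.e.\ of $\tau_i$ \emph{capped at $\rho$}. Since the adversary can place a few very large $\tau_i$, the bad capped sum $\sum_{i\in S_{\rm bad}\cap\mathcal{T}_{2\alpha}}\min(\tau_i,\rho)$ can be much smaller than the total bad mass if $\rho$ is small; nothing in your argument rules this out. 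The paper's proof introduces an auxiliary reference threshold $C_2$ (the point at which $\frac{1}{n}\sum_{\tau_i>C_2}(\tau_i-C_2)=(2/3)\psi$), shows $C_2\ge 1$, and then — and this is the step you skip — uses the geometric binning to argue that the noisy criterion is satisfied at bin level $2^{b-1}$ where $2^b\le C_2\le 2^{b+1}$, so the \emph{largest} $\ell$ satisfying it gives $\rho\ge 2^{b-1}\ge C_2/4$. This lower bound on $\rho$, together with Lemma~\ref{lem:bad-good-ratio} applied at $C_2$ and the monotonicity of the capped sums, gives the bad-capped-sum lower bound and transfers it to $\rho$ at the cost of a factor of $4$, leaving $10/4>2$. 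Without establishing $\rho\ge C_2/4$ (or an equivalent quantitative lower bound on $\rho$), the claim that the bad side dominates by a factor of $2$ after capping cannot be concluded, so \eqref{eqn:good-bad-ratio} remains unproved.
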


\subsection{Proof of Lemma~\ref{lemma:1dfilter}}

\textbf{1. Threshold $\rho$ sufficiently reduces the total score.} 

Let $\rho$ be the threshold picked by the algorithm. Let $\hat{\tau}_i$ denote the minimum value of the interval of  the bin that $\tau_i$ belongs to. It holds that
\begin{eqnarray*}
&& \frac{1}{n}\sum_{\tau_i\ge \rho, i\in[n]}({\tau}_i - \rho) 
 \ge \frac{1}{n}\sum_{\hat\tau_i\ge \rho, i\in [n]}({\hat\tau}_i - \rho) \nonumber\\
 = &&\sum_{\tilde\tau_j\ge\rho, j\in[2+\log(B^2d)]}(\tilde\tau_j-\rho){h}_j\nonumber\\
\overset{(a)}{\ge} &&\sum_{\tilde\tau_j\ge\rho, j\in[2+\log(B^2d)]}(\tilde\tau_j-\rho)\tilde{h}_j- O\left(\log(B^2d)\cdot B^2d\cdot \frac{\sqrt{\log(\log(B^2d)\log d)\log(1/\delta)}}{\eps n}\right)\nonumber\\
\overset{(b)}{\ge} &&0.31 \tilde\psi -  \tilde{O}(\frac{B^2d}{\eps n})\nonumber\\
\overset{(c)}{\geq}  &&0.3 \psi -  \tilde{O}(\frac{B^2d}{\eps n})\nonumber\;,
\end{eqnarray*}
where $(a)$ holds due to the accuracy of the private histogram (Lemma~\ref{lem:hist}), $(b)$ holds by the definition of $\rho$ in our algorithm, and $(c)$ holds due to the accuracy of $\tilde{\psi}$. This implies if $\rho <1$, then $\frac{1}{n}\sum_{\tau_i < \rho}({\tau}_i - 1)$ is negative and if $\rho\geq 1$, then
\begin{eqnarray*}
\frac{1}{n}\sum_{\tau_i < \rho}({\tau}_i - 1) = \psi-\frac{1}{n}\sum_{\tau_i \geq \rho}({\tau}_i - 1) \leq \psi-\frac{1}{n}\sum_{\tau_i \geq \rho}({\tau}_i - \rho) \le 0.7\psi +  \tilde{O}({B^2d/\eps n}).
\end{eqnarray*}

By Lemma~\ref{def:cond-good-data}, it holds that 
\begin{eqnarray*}
\frac{1}{n}\sum_{i \in S\setminus {\cal{T}}_{2\alpha}}({\tau}_i - 1)
 &= &\psi - \frac{1}{n}\sum_{i \in S_{\rm good}\cap{\cal T}_{2\alpha}}({\tau}_i - 1) - \frac{1}{n}\sum_{i \in S_{\rm bad}\cap{\cal T}_{2\alpha}}({\tau}_i - 1)\\
 &\le & \psi - \frac{1}{n}\sum_{i \in S_{\rm bad}\cap{\cal T}_{2\alpha}}({\tau}_i - 1)\\
&\le & (2/1000) \psi
\end{eqnarray*}
And we conclude that 
\begin{eqnarray*}
\frac{1}{n}\sum_{\tau_i < \rho \text{ or } i\notin {\cal T}_{2\alpha}}({\tau}_i - 1)  \le 0.71\psi +  \tilde{O}(B^2d/\eps n) \leq 0.75 \psi
\end{eqnarray*}

\textbf{2. Threshold $\rho$ removes more bad data points than good data points.} 

Define $C_2$ to be the threshold such that $\frac{1}{n}\sum_{\tau_i>C_2}(\tau_i-C_2)= (2/3) \psi$. Suppose $2^b\le C_2 \le 2^{b+1}$, $\frac{1}{n}\sum_{\hat\tau_i\ge 2^{b-1}} (\hat\tau_i-2^{b-1})\ge (1/3) \psi $ because $\forall \tau_i\ge C_2$, $(\hat\tau_i-2^{b-1}) \ge \frac{1}{2}(\tau_i-C_2) $. Trivially $C_2\ge 1$ due to the fact that $\frac{1}{n}\sum_{\tau_i\ge 1}\tau_i-1 \ge \psi $. Then we have the threshold picked by the algorithm $\rho\ge 2^{b-1}$, which implies $\rho \ge \frac{1}{4}C_2$. 
Suppose $\rho<C_2$, since $\rho \ge \frac{1}{4}C_2$, we have
\begin{eqnarray*}
(\sum_{i\in S_{\rm bad}\cap {\cal T}_{2\alpha}, \tau_i<\rho} \tau_i + \sum_{i\in S_{\rm bad}\cap {\cal T}_{2\alpha}, \tau_i\geq\rho} \rho) &\ge &\frac{1}{4}(\sum_{i\in S_{\rm bad}\cap {\cal T}_{2\alpha}, \tau_i<C_2} \tau_i + \sum_{i\in S_{\rm bad}\cap {\cal T}_{2\alpha}, \tau_i\geq C_2} C_2)\\
&\overset{(a)}{\ge} &\frac{10}{4}(\sum_{i\in S_{\rm good}\cap {\cal T}_{2\alpha}, \tau_i<C_2} \tau_i + \sum_{i\in S_{\rm good}\cap {\cal T}_{2\alpha}, \tau_i\geq C_2} C_2)\\
&\overset{(b)}{\ge} &\frac{10}{4}(\sum_{i\in S_{\rm good}\cap {\cal T}_{2\alpha}, \tau_i<\rho} \tau_i + \sum_{i\in S_{\rm good}\cap {\cal T}_{2\alpha}, \tau_i>=\rho} \rho),
\end{eqnarray*}
where (a) holds by Lemma~\ref{lem:bad-good-ratio}, and (b) holds since $\rho\le C_2$.
If $\rho\ge C_2$, the statement of the Lemma~\ref{lem:bad-good-ratio} directly implies Equation~\eqref{eqn:good-bad-ratio}.

\begin{lemma}\label{def:cond-good-data}[Conditions for $\tau_i$'s] Suppose \begin{eqnarray*}
\frac{1}{n} \sum_{i\in S_{\rm good}\cap S} (\tau_i-1) \le \psi/1000\\
\frac{1}{n} \sum_{i\in S_{\rm good}\cap{\cal T}_{2\alpha}} \tau_i \le \psi/1000
\end{eqnarray*}

then, we have
\begin{eqnarray*}
	\alpha\tau_{2\alpha n} &\le & \psi/1000\\
\frac{1}{n} \sum_{i\in S_{\rm bad}\cap{\cal T}_{2\alpha}} (\tau_i-1)&\ge & (998/1000)\psi
\end{eqnarray*}
\end{lemma}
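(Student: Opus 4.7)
The plan is to derive both inequalities directly from the two hypotheses together with the observation that $S_{\rm bad}$ contains at most $\alpha n$ samples, so that at least $\alpha n$ points of the top $2\alpha n$ must come from $S_{\rm good}$. No new probabilistic estimates are needed; this is a deterministic bookkeeping argument.

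For the first bound, I would use that $|S_{\rm bad}|\le \alpha n$ forces $|S_{\rm good}\cap {\cal T}_{2\alpha}|\ge \alpha n$, and every index $i\in {\cal T}_{2\alpha}$ satisfies $\tau_i \ge \tau_{2\alpha n}$ (where $\tau_{2\alpha n}$ denotes the $(2\alpha n)$-th largest score, i.e.\ the smallest score inside ${\cal T}_{2\alpha}$). Combining these,
\begin{equation*}
\alpha\,\tau_{2\alpha n} \;\le\; \frac{|S_{\rm good}\cap {\cal T}_{2\alpha}|}{n}\,\tau_{2\alpha n} \;\le\; \frac{1}{n}\sum_{i\in S_{\rm good}\cap {\cal T}_{2\alpha}} \tau_i \;\le\; \psi/1000,
\end{equation*}
by the second hypothesis, which yields the first conclusion.

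For the second bound, I would split $\psi$ into its contributions from $S_{\rm good}\cap S$ and $S_{\rm bad}\cap S$. The first hypothesis forces the good contribution to be at most $\psi/1000$, hence the bad contribution satisfies $\tfrac{1}{n}\sum_{i\in S_{\rm bad}\cap S}(\tau_i-1) \ge (999/1000)\psi$. To pass from $S_{\rm bad}\cap S$ to $S_{\rm bad}\cap {\cal T}_{2\alpha}$, I need to upper bound the "tail" $A \triangleq \tfrac{1}{n}\sum_{i\in S_{\rm bad}\cap(S\setminus {\cal T}_{2\alpha})}(\tau_i-1)$. Here every $\tau_i \le \tau_{2\alpha n}$ and at most $\alpha n$ such bad indices exist, so a case split on whether $\tau_{2\alpha n}\ge 1$ gives $A \le \alpha\,\tau_{2\alpha n} \le \psi/1000$ in the first case (using the first conclusion just established) and $A\le 0$ in the second. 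Subtracting yields $\tfrac{1}{n}\sum_{i\in S_{\rm bad}\cap {\cal T}_{2\alpha}}(\tau_i-1) \ge (999/1000)\psi - \psi/1000 = (998/1000)\psi$.

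The main (very mild) obstacle is the sign bookkeeping in the tail estimate: since $\tau_i-1$ can be negative when $\tau_{2\alpha n}<1$, one cannot naively bound the sum by (count)$\times$(max). The clean resolution is the two-case argument above, which uses the first conclusion exactly where it is needed. Everything else is linearity of the sum and the size constraint $|S_{\rm bad}|\le \alpha n$.
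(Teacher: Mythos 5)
Your proposal is correct and follows the same route as the paper: for the first conclusion, use $|S_{\rm good}\cap{\cal T}_{2\alpha}|\ge\alpha n$ together with the second hypothesis; for the second, split $S_{\rm bad}$ into its part inside and outside ${\cal T}_{2\alpha}$, use the first hypothesis to lower bound the full bad contribution by $(999/1000)\psi$, and bound the tail by $\psi/1000$. The only difference is that the paper simply asserts $\frac1n\sum_{i\in S_{\rm bad}\cap S\setminus{\cal T}_{2\alpha}}(\tau_i-1)\le\psi/1000$ without comment, whereas you explicitly handle the sign issue with the two-case split on $\tau_{2\alpha n}\gtrless 1$; your version fills in a small step that the paper leaves implicit, and is otherwise the same argument.
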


\begin{proof}
Since $|S_{\rm good}\cap {\cal T}_{2\alpha}|\ge \alpha n$, it holds
\begin{eqnarray*}
\alpha\tau_{2\alpha n} \le \psi/1000.
\end{eqnarray*}

\begin{eqnarray*}
\frac{1}{n} \sum_{i\in S_{\rm bad}\cap{\cal T}_{2\alpha}} (\tau_i-1)&
= &\frac{1}{n} \sum_{i\in S_{\rm bad}\cap S} (\tau_i-1) - \frac{1}{n} \sum_{i\in S_{\rm bad}\cap S\backslash {\cal T}_{2\alpha}} (\tau_i-1)\\
&\ge &(999/1000) \psi - \frac{1}{n} \sum_{i\in S_{\rm bad}\cap S\backslash {\cal T}_{2\alpha}} (\tau_i-1)\\
&\ge &(999/1000) \psi - (1/1000)\psi\\
&=& (998/1000) \psi
\end{eqnarray*}
\end{proof}
\begin{lemma}\label{lem:bad-good-ratio}
Assuming that the conditions in Lemma~\ref{def:cond-good-data} holds, and for any $C$ such that
\begin{eqnarray*}
\frac{1}{n}\sum_{i\in S, \tau_i < C }(\tau_i-1) + \frac{1}{n}\sum_{i\in S, \tau_i \ge C}(C-1) \ge (1/3) \psi\;,
\end{eqnarray*}
we have
\begin{eqnarray*}
\sum_{i\in S_{\rm bad}\cap{\cal T}_{2\alpha}, \tau_i < C} \tau_i + \sum_{i\in S_{\rm bad}\cap{\cal T}_{2\alpha}, \tau_i \ge C} C \ge {10}( \sum_{i\in S_{\rm good}\cap{\cal T}_{2\alpha}, \tau_i < C} \tau_i + \sum_{i\in S_{\rm good}\cap{\cal T}_{2\alpha}, \tau_i \ge C} C )
\end{eqnarray*}
\end{lemma}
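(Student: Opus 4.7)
\textbf{Proof proposal for Lemma~\ref{lem:bad-good-ratio}.}

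The plan is to work directly with the truncated scores $g(\tau_i,C) := \min(\tau_i,C) = \tau_i\mathbf{1}\{\tau_i<C\}+C\mathbf{1}\{\tau_i\ge C\}$, so that the conclusion becomes the clean statement
\[
T_{\mathrm{bad}} \;\ge\; 10\,T_{\mathrm{good}}, \quad\text{where}\quad T_{\mathrm{bad}}:=\!\!\sum_{i\in S_{\rm bad}\cap{\cal T}_{2\alpha}}\!\!g(\tau_i,C),\;\; T_{\mathrm{good}}:=\!\!\sum_{i\in S_{\rm good}\cap{\cal T}_{2\alpha}}\!\!g(\tau_i,C).
\]
I would upper bound $T_{\mathrm{good}}$ trivially and lower bound $T_{\mathrm{bad}}$ by leveraging the condition on $C$, subtracting off the good-in-${\cal T}_{2\alpha}$ and outside-of-${\cal T}_{2\alpha}$ contributions.

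First, the easy side: since $g(\tau_i,C)\le \tau_i$ pointwise and the good-in-${\cal T}_{2\alpha}$ assumption gives $\sum_{i\in S_{\rm good}\cap{\cal T}_{2\alpha}}\tau_i \le n\psi/1000$, we immediately get $T_{\mathrm{good}}\le n\psi/1000$. Equivalently, writing $N := |S_{\rm good}\cap{\cal T}_{2\alpha}|$, the shifted sum satisfies $\sum_{i\in S_{\rm good}\cap{\cal T}_{2\alpha}}(g(\tau_i,C)-1) \le n\psi/1000 - N$.

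For the harder side, I would apply the hypothesis on $C$ to write $\sum_{i\in S}(g(\tau_i,C)-1)\ge n\psi/3$ and then split the sum into three pieces: good-in-${\cal T}_{2\alpha}$, bad-in-${\cal T}_{2\alpha}$, and outside. The outside piece is bounded using $g(\tau_i,C)\le \tau_i$, so that
\[
\sum_{i\in S\setminus{\cal T}_{2\alpha}}(g(\tau_i,C)-1) \;\le\; \sum_{i\in S\setminus{\cal T}_{2\alpha}}(\tau_i-1) \;=\; n\psi - \sum_{i\in{\cal T}_{2\alpha}}(\tau_i-1).
\]
The key step now is to lower bound $\sum_{i\in{\cal T}_{2\alpha}}(\tau_i-1)$: Lemma~\ref{def:cond-good-data} gives $\sum_{i\in S_{\rm bad}\cap{\cal T}_{2\alpha}}(\tau_i-1)\ge 998n\psi/1000$, while $\tau_i\ge 0$ yields $\sum_{i\in S_{\rm good}\cap{\cal T}_{2\alpha}}(\tau_i-1)\ge -N$. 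Combining, $\sum_{i\in S\setminus{\cal T}_{2\alpha}}(g(\tau_i,C)-1)\le 2n\psi/1000 + N$.

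Putting the pieces together, rearranging the split identity gives
\[
\sum_{i\in S_{\rm bad}\cap{\cal T}_{2\alpha}}(g(\tau_i,C)-1) \;\ge\; \tfrac{n\psi}{3} - \bigl(\tfrac{n\psi}{1000} - N\bigr) - \bigl(\tfrac{2n\psi}{1000} + N\bigr) \;=\; \tfrac{n\psi}{3} - \tfrac{3n\psi}{1000},
\]
so that $T_{\mathrm{bad}}\ge n\psi/3 - 3n\psi/1000 \ge (1000/3 - 3)\cdot T_{\mathrm{good}} \ge 10\,T_{\mathrm{good}}$, which is the desired inequality. The main (but pleasant) observation making the argument work is that the slack terms $\pm N$ coming from the good-in-${\cal T}_{2\alpha}$ upper bound and the outside upper bound cancel exactly, so no quantitative assumption on the magnitude of $\psi$ relative to $\alpha$ is needed. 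The only potential obstacle is keeping the accounting of ``$-1$'' shifts and the $|S_{\rm good}\cap{\cal T}_{2\alpha}|$ terms consistent across the three subsets; that is why I would carry the explicit bookkeeping with $N$ from the outset rather than absorbing it into $O(\cdot)$ notation.
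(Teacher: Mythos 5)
Your proof is correct and follows essentially the same approach as the paper: decompose the total shifted sum $\sum_{i\in S}(\min(\tau_i,C)-1)\ge n\psi/3$ and peel off the good and outside-${\cal T}_{2\alpha}$ contributions using the hypotheses and derived conclusions of Lemma~\ref{def:cond-good-data}. The only bookkeeping difference is that you bound all of $S\setminus{\cal T}_{2\alpha}$ at once (via $n\psi-\sum_{{\cal T}_{2\alpha}}(\tau_i-1)$, so the $N=|S_{\rm good}\cap{\cal T}_{2\alpha}|$ terms cancel), whereas the paper first lower-bounds $\sum_{S_{\rm bad}\cap S}(g-1)\ge(1/3-1/1000)n\psi$ and then subtracts $\sum_{S_{\rm bad}\setminus{\cal T}_{2\alpha}}(g-1)\le\alpha n\tau_{2\alpha n}\le n\psi/1000$; both give a $(1/3-O(1/1000))\psi$ lower bound, comfortably exceeding $10\cdot(\psi/1000)$.
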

\begin{proof}
First we show an upper bound on $S_{\rm good}\cap{\cal T}_{2\alpha}$:

\begin{eqnarray*}
\frac{1}{n}\sum_{i\in S_{\rm good}\cap{\cal T}_{2\alpha}, \tau_i < C} \tau_i + \frac{1}{n} \sum_{i\in S_{\rm good}\cap{\cal T}_{2\alpha}, \tau_i \ge C} C  \le \frac{1}{n}\sum_{i\in S_{\rm good}\cap{\cal T}_{2\alpha}} \tau_i \le \psi/1000.
\end{eqnarray*}

Then we show an lower bound on $S_{\rm bad}\cap {\cal T}_{2\alpha}$:
\begin{eqnarray*}
&&\frac{1}{n}\sum_{i\in S_{\rm bad}\cap S, \tau_i<C} (\tau_i-1) + \frac{1}{n}\sum_{i\in S_{\rm bad}\cap S, \tau_i>C}(C-1)\\
=&& \frac{1}{n}\sum_{i\in S, \tau_i<C} (\tau_i-1) + \frac{1}{n}\sum_{i\in S, \tau_i\ge C} (C-1)\\
&&- (\frac{1}{n}\sum_{i\in S_{\rm good}\cap S, \tau_i<C} (\tau_i-1) + \frac{1}{n}\sum_{i\in S_{\rm good}\cap S, \tau_i\ge C} (C-1))\\ 
\ge &&(1/3-1/1000)\psi\;.
\end{eqnarray*}
We have
\begin{align*}
&\frac{1}{n}\sum_{i\in S_{\rm bad}\cap{\cal T}_{2\alpha}, \tau_i<C} \tau_i + \frac{1}{n}\sum_{i\in S_{\rm bad}\cap{\cal T}_{2\alpha}, \tau_i>C}C 
\ge \frac{1}{n}\sum_{i\in S_{\rm bad}\cap{\cal T}_{2\alpha}, \tau_i<C} (\tau_i-1) + \frac{1}{n}\sum_{i\in S_{\rm bad}\cap{\cal T}_{2\alpha}, \tau_i>C}(C-1)\\
&= \frac{1}{n}\sum_{i\in S_{\rm bad}\cap S, \tau_i<\rho} (\tau_i-1) + \frac{1}{n}\sum_{i\in S_{\rm bad}\cap S, \tau_i>C}(C-1)\\
&- \left(\frac{1}{n}\sum_{i\in S_{\rm bad}\cap S\backslash{\cal T}_{2\alpha}, \tau_i<C} (\tau_i-1) + \frac{1}{n}\sum_{i\in S_{\rm bad}\cap S\backslash{\cal T}_{2\alpha}, \tau_i>C}(C-1)\right)\\
&\ge (1/3-1/1000)\psi - \alpha\tau_{2\alpha n}\\
&\ge (1/3-2/1000)\psi
\end{align*}
Combing the lower bound and the upper bound yields the desired statement
\end{proof}

\newpage
\section{The analysis of PRIME and the proof of Theorem~\ref{thm:main2}}

\label{sec:proof_main2}

\subsection{Proof of part 1 of Theorem~\ref{thm:main2} on differential privacy}
\label{sec:proof_main2_dp}

Let $(\varepsilon_0,\delta_0)$ be the end-to-end target privacy guarantee. 
The ($0.01\varepsilon_0,0.01\delta_0$)-DP guarantee of $q_{\rm range}$ follows from  Lemma~\ref{lem:DPrange}. We are left to show that 
{\sc DPMMWfilter} in Algorithm~\ref{alg:DPMMWfilter} satisfy $(0.99\varepsilon_0,0.99\delta_0)$-DP. 
To this end, we explicitly write out how many times we access the database and how much privacy is lost each time in an interactive version of {\sc DPMMWfilter} in Algorithm~\ref{alg:MMWfiltering_interactive}, which performs the same operations as {\sc DPMMWfilter}.

In order to apply Lemma~\ref{lem:composition}, we cap $\varepsilon$ at 0.9 in initializing 
$\varepsilon_2$. 
We call  $q_{\rm spectral}$ and $q_{\rm size}$ $T_1$ times, each with $(\varepsilon_1,\delta_1)$ guarantee. In total this accounts for $(0.5\varepsilon,0.5\delta)$ privacy loss. 
The rest of the mechanisms are called $5T_1T_2$ times ($q_{\rm spectral}(\cdot)$ and $q_{\rm MMW}(\cdot)$ each call two DP mechanisms internally), each with $(\varepsilon_2,\delta_2)$ guarantee. In total this accounts for $(0.5\varepsilon,0.5\delta)$ privacy loss. 
Altogether, this is within the privacy budget of $(\varepsilon=0.99\varepsilon_0,\delta=0.99\delta_0)$.

We are left to show privacy of $q_{\rm spectral}$, $q_{\rm MMW}$, and $q_{\rm 1Dfilter}$, and $q_{\rm size}$ in Algorithm~\ref{alg:DPmechanisms}. 
We will assume for now that $|S^{(\ell)}_r|\geq n/2$ for all $\ell\in[T_1]$ and $r\in[T_2]$ and prove privacy. We show in the end that this happens with probability larger than $1-\delta_1$. In all sub-routines, we run Filter$(\cdot)$ in Algorithm~\ref{alg:DPmechanisms}  to simulate the filtering process so far and get the current set of samples $S^{(s)}_{t_s}$.
The following main technical lemma allows us to prove privacy of all interactive mechanisms. This is a counterpart of  Lemma~\ref{lem:DPfilter_sensitivity} used for {\sc DPfilter}.  
We provide a proof in Appendix~\ref{sec:proof_sensitivity}. 

 \begin{lemma}
     \label{lem:DPMMWfilter_sensitivity}
     Let $S({\cal D}_n)\subseteq {\cal D}_n$ denote the output of the simulated filtering process ${\rm Filter}(\cdot)$ on ${\cal D}_n$ for a given set  of parameters $(\{\{\Psi_{r}^{(\ell)}\}_{r\in[t_\ell]}\}_{\ell\in[s]},\{(\mu^{(\ell)},\lambda^{(\ell)})\}_{\ell\in[s]})$ in Algorithm~\ref{alg:DPmechanisms}.  
     Then we have 
     $d_\triangle( S({\cal D}_n), S({\cal D}')_n ) \leq d_\triangle({\cal D}_n,{\cal D'}_n)$,
     where $d_\triangle({\cal D},{\cal D}') \triangleq \max\{|{\cal D}\setminus {\cal D'}|,|{\cal D}' \setminus {\cal D}|\} $. 
 \end{lemma}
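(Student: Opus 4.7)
\textbf{Proof plan for Lemma~\ref{lem:DPMMWfilter_sensitivity}.} The plan is to mimic the structure of the proof of Lemma~\ref{lem:DPfilter_sensitivity} (as sketched in Appendix~\ref{sec:proof_sensitivity}), adapting it to the MMW-based filter of {\sc DPMMWfilter}. Because neighboring datasets at arbitrary distance can be connected by a chain of single-point swaps, it suffices to prove the contraction $d_\triangle(S({\cal D}_n),S({\cal D}'_n))\le 1$ whenever $d_\triangle({\cal D}_n,{\cal D}'_n)=1$, and then chain the argument. From there I would induct on the total number of filtering steps performed by Filter$(\cdot)$ across all epochs $s$ and inner iterations $t$, showing that each individual application of the filter in line~\ref{line:newfilter} of Algorithm~\ref{alg:DPMMWfilter} is a contraction once the parameters $(\mu_t^{(s)}, U_t^{(s)}, Z_t^{(s)}, \rho_t^{(s)})$ and the fixed parameter tuples $\{\Psi_r^{(\ell)}\}$ are frozen.

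For the one-step contraction, fix the parameters and let $a\in{\cal D}_n$ and $a'\in{\cal D}'_n$ be the single differing samples at the current iteration. Their scores $\tau_a=(a-\mu_t^{(s)})^\top U_t^{(s)} (a-\mu_t^{(s)})$ and $\tau_{a'}$ depend only on the fixed parameters and on the data point itself, not on the rest of the dataset. The filter removes an index $i$ iff $i\in {\cal T}_{2\alpha}$ (w.r.t.~the tie-broken ordering of Definition~\ref{def:topset}) and $\tau_i\ge \rho_t^{(s)} Z_t^{(s)}$. I would then do a case split exactly as in the proof of Lemma~\ref{lem:DPfilter_sensitivity}: (i)~if both $a$ and $a'$ rank outside ${\cal T}_{2\alpha}$ in their respective datasets, then the top-$\alpha$ sets and the removal decisions for every common index are identical, so the filtered sets still differ by at most one; (ii)~if both rank inside ${\cal T}_{2\alpha}$, the common indices receive the same decision and only the swapped pair is treated differently, so again the distance is preserved; (iii)~if exactly one of them is in the top $2\alpha$-tail, removing (or not removing) that single point changes the symmetric difference by at most one. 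In all three cases $d_\triangle(S_{t+1}^{(s)}({\cal D}_n),S_{t+1}^{(s)}({\cal D}'_n))\le 1$.

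The key subtlety, and the main place where I would be careful, is that the ordering of scores within the top-$2\alpha$ tail can shift between ${\cal D}_n$ and ${\cal D}'_n$ because removing $a$ and inserting $a'$ can push one additional index across the $\lceil 2\alpha n\rceil$ boundary of the sorted list; the consistent deterministic tie-breaking of Definition~\ref{def:topset} is what prevents this shift from cascading into multiple extra differences. I would argue that the only way a non-swapped index's removal status changes is if that index sat exactly at the boundary rank, and boundary rank changes by at most one when a single sample is swapped, which is absorbed in the $\le 1$ bound. A short lemma formalizing "swapping one sample perturbs the rank-by-$\tau$ ordering by at most one position per index" takes care of this.

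Finally, with the one-step contraction in hand, induction over all steps of the simulated filtering (across all epochs and iterations produced by Filter$(\cdot)$) yields $d_\triangle(S({\cal D}_n),S({\cal D}'_n))\le 1=d_\triangle({\cal D}_n,{\cal D}'_n)$, and the general case $d_\triangle({\cal D}_n,{\cal D}'_n)=k$ follows by applying the single-swap bound $k$ times through a hybrid argument. The hardest part will be arguing that the fixed-parameter assumption really does neutralize the apparent data-dependence of $\rho_t^{(s)}$: in the centralized algorithm $\rho_t^{(s)}$ is produced by {\sc DPthreshold}, but in the interactive simulation used here it is supplied as a pre-specified parameter, so the sensitivity of $\rho_t^{(s)}$ itself is accounted for separately (once, via the Laplace/Gaussian mechanisms inside {\sc DPthreshold}) rather than re-incurred inside Filter$(\cdot)$; this is exactly the separation of concerns that makes the end-to-end composition argument in Appendix~\ref{sec:proof_main2_dp} go through.
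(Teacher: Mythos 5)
Your proposal matches the paper's approach: the paper's proof defers to Lemma~\ref{lem:DPfilter_sensitivity} as an analogous argument, and your one-step contraction via the case split on whether the swapped pair lies above or below the top-$2\alpha$ boundary, together with the consistent tie-breaking of Definition~\ref{def:topset}, is precisely the argument used there. Your explicit acknowledgment of the boundary-rank shift and the observation that $\rho_t^{(s)}$ enters Filter$(\cdot)$ as a frozen parameter are welcome clarifications but do not change the route taken.
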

 This is a powerful tool for designing private mechanisms, as it guarantees that we can safely simulate the filtering process with privatized parameters and preserve the neighborhood of the dataset; 
 if ${\cal D}_n\sim {\cal D}'_n$ are neighboring (i.e., $d_\Delta({\cal D}_n,{\cal D}_n')\leq 1$) then so are the filtered pair $S({\cal D}_n)$ and $S({\cal D}_n') $ (i.e., $d_\Delta(S({\cal D}_n),S({\cal D}_n'))  \leq  1 $). 
 Note that  in  all the interactive mechanisms in Algorithm~\ref{alg:DPmechanisms}, the noise we need to add is proportional to the set sensitivity of  Filter$(\cdot)$ defined as $\Delta_{\rm set} \triangleq \max_{{\cal D}_n\sim {\cal D}'_n} d_\Delta(S({\cal D}_n),S({\cal D}_n'))$. If the repeated application of the Filter$(\cdot)$ is not a contraction in $d_\Delta(\cdot,\cdot)$,
 this results in a sensitivity blow-up. Fortunately, the above lemma ensures contraction of the filtering, proving that $\Delta_{\rm set}=1$. Hence, it is sufficient for us to prove privacy for two neighboring filtered sets $S\sim S'$ (as opposed to proving privacy for two neighboring original datasets before filtering ${\cal D}_n \sim {\cal D}_n'$).

In $q_{\rm spectral}$, $\lambda$ satisfy  $(\varepsilon,0)$-DP as the $L_1$ sensitivity is $\Delta_1 = (1/n)B^2d$ (Definition~\ref{def:output}) and we add ${\rm Lap}(\Delta_1/\varepsilon)$. 
The release of $\mu$ also satisfy 
$(\varepsilon,\delta)$-DP as the $L_2$ sensitivity is $\Delta_2=2B\sqrt{d}/n$, assuming $|S|\geq n/2$ as ensured by the stopping criteria, and we add ${\cal N}(0,\Delta_2(2\log(1.25/\delta))/\varepsilon)^2{\mathbf I})$.
Note that in the outer loop call of $q_{\rm spectral}$, we only release $\mu$ once in the end, and hence we count $q_{\rm spectral}$ as one access. On the other hand, in the inner loop, we use both $\mu$ and $\lambda$ from $q_{\rm spectral}$ so we count it as two accesses. 

In $q_{\rm size}$, the returned set size 
$(\varepsilon,0)$-DP as the $L_1$ sensitivity is $\Delta_1 = 1$ and we add ${\rm Lap}(\Delta_1/\varepsilon)$. One caveat is that 
we need to ensure that the stopping criteria of checking $n^{(s)}>3n/4$ ensures that $|S^{(s)}_t|>n/2$ with probability at least $1-\delta_1$. This guarantees that the rest of the private mechanisms can assume $|S^{(s)}_t|>n/2$ in analyzing the sensitivity. 
Since Laplace distribution follows $f(z)=(\varepsilon/2)e^{-\varepsilon |z|}$, we have 
${\mathbb P}(n^{(s)}>3n/4\text{ and }|S^{(s)}_t|<n/2)\leq (1/2)e^{-n\varepsilon/4}$. Hence, the desired privacy is ensured for $(1/2)e^{-n\varepsilon/4} \leq \delta_1$ 
(i.e., $n\geq(4/\varepsilon_1)\log(1/(2\delta_1))$).

In $q_{\rm MMW}$, $\Sigma$ is 
$(\varepsilon,\delta)$-DP as the $L_2$ sensitivity is $\Delta_2 = B^2d/n$, and we add ${\cal N}(0,\Delta_2(2\log(1.25/\delta))/\varepsilon)^2{\mathbf I})$. $\psi$ is $(\varepsilon,0)$-DP as the $L_1$ sensitivity is $\Delta_1 = 2B^2d/n$ and we add ${\rm Lap}(\Delta_1/\varepsilon)$. This is made formal in the following theorem with a proof. in Appendix~\ref{sec:DPPCA_proof}. 
This algorithm is identical to the MOD-SULQ algorithm introduced in \cite{blum2005practical} and analyzed in \cite[Theorem  5]{PPCA}, up to the choice of the noise variance. 
But a tighter analysis improves over the  MOD-SULQ analysis from \cite{PPCA} 
by a factor of $d$ in the variance of added Gaussian noise as noted in \cite{dwork2014analyze}.   

\begin{lemma}[Differentially Private PCA]
    \label{lem:DPPCA}
    Consider a dataset $\{x_i\in{\mathbb R}^d\}_{i=1}^n$. 
    If $\|x_i\|_2 \leq 1 $ for all $i\in[n]$, the following privatized second moment matrix satisfies  $(\varepsilon,\delta)$-differential privacy: 
    \begin{eqnarray*}
        \frac{1}{n}\sum_{i=1}^n x_ix_i^\top + Z\;,
    \end{eqnarray*}
    with $Z_{i,j}\sim {\cal N}(0,( \,(1/(n\varepsilon))\sqrt{2\log(1.25/\delta) }\, )^2 )$ for $i \geq j$ and $Z_{i,j}=Z_{j,i}$ for $i<j$. 
\end{lemma}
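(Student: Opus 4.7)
The plan is to reduce the claim to a single invocation of the Gaussian mechanism from Definition~\ref{def:output}. Because $Z$ is constrained to be symmetric with $Z_{ji}=Z_{ij}$, it carries only $\binom{d+1}{2}$ independent Gaussian coordinates, corresponding to the diagonal together with the strictly upper-triangular entries. I would therefore view the release as the vector-valued query
$$ f(S) \;=\; \bigl(\,\{M(S)_{ii}\}_{i=1}^d,\; \{M(S)_{ij}\}_{1\le i<j\le d}\,\bigr), \qquad M(S) \;\triangleq\; \tfrac{1}{n}\sum_{i=1}^n x_i x_i^\top, $$
perturbed by i.i.d.\ $\cN(0,\sigma^2)$ on each coordinate with $\sigma = (1/(n\varepsilon))\sqrt{2\log(1.25/\delta)}$. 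Reconstructing the full symmetric matrix by mirroring is post-processing, so an $(\varepsilon,\delta)$-DP guarantee for this vectorized release transfers to the matrix output stated in the lemma.

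The core computation is the $L_2$ sensitivity $\Delta_2$ of $f$ under neighboring pairs $S\sim S'$. For an add/remove neighbor, $f(S)-f(S')$ equals $\pm(1/n)$ times the upper-triangular vectorization of $xx^\top$ for a single sample $x$ with $\|x\|_2\le 1$, whose squared Euclidean norm is
$$ \tfrac{1}{n^2}\Big(\,\sum_{i=1}^d x_i^4 \;+\; \sum_{1\le i<j\le d} x_i^2 x_j^2\,\Big) \;=\; \tfrac{1}{2n^2}\Big(\sum_i x_i^4 + \|x\|_2^4\Big) \;\le\; \tfrac{1}{n^2}, $$
using the elementary identity $\sum_{i<j}x_i^2x_j^2 = \tfrac12\bigl(\|x\|_2^4-\sum_i x_i^4\bigr)$ together with $\sum_i x_i^4 \le \|x\|_2^2 \|x\|_\infty^2 \le \|x\|_2^4 \le 1$. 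Hence $\Delta_2 \le 1/n$.

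Plugging this into the Gaussian mechanism of Definition~\ref{def:output} with sensitivity $\Delta_2=1/n$ yields exactly the per-coordinate noise scale $\sigma=(\Delta_2/\varepsilon)\sqrt{2\log(1.25/\delta)} = (1/(n\varepsilon))\sqrt{2\log(1.25/\delta)}$ prescribed in the lemma, proving $(\varepsilon,\delta)$-DP. The only subtle bookkeeping — and the place I would be careful — is matching the symmetric noise convention $Z_{ij}=Z_{ji}$ to the $\binom{d+1}{2}$-dimensional independent-coordinate view of $f$; using the naive Frobenius-norm sensitivity on the full $d\times d$ matrix while adding noise only to the upper triangle would double-count off-diagonal coordinates and give a looser bound than $1/n$. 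Once this identification is made explicit, the result is immediate from the standard Gaussian mechanism.
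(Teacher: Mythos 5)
Your reorganization of the argument — vectorize the upper triangle into $\binom{d+1}{2}$ independent Gaussian coordinates, compute the $L_2$ sensitivity of the vectorized query, and invoke the Gaussian mechanism with post-processing by mirroring — is cleaner than the paper's own proof, which computes the log-likelihood ratio inline and then detours through the operator norm of the noise matrix. Your bookkeeping of the upper-triangular coordinates is exactly the right thing to be careful about, and your per-sample computation $\tfrac12\bigl(\sum_i x_i^4 + \|x\|_2^4\bigr)\le 1$ for the squared norm of the vectorized rank-one update is correct.

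The gap is that you only bound the sensitivity for add/remove neighbors, where $f(S)-f(S')$ is a single term $\pm\tfrac1n$ times the vectorization of $xx^\top$. But Definition~\ref{def:dp} also admits replacement ($|S\setminus S'|=|S'\setminus S|=1$), and the paper's own proof explicitly constructs the replacement neighbor $\tilde{\cal D}={\cal D}\cup\{\tilde x_n\}\setminus\{x_n\}$. Under replacement the difference is $\tfrac1n$ times the vectorization of $x_nx_n^\top-\tilde x_n\tilde x_n^\top$; taking $x_n=e_1$ and $\tilde x_n=e_2$ gives the diagonal entries $1$ and $-1$ and hence squared norm $2$, so the $L_2$ sensitivity is $\sqrt 2/n$, not $1/n$. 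Plugging $\Delta_2=\sqrt 2/n$ into the Gaussian mechanism prescribes noise $(\sqrt 2/(n\varepsilon))\sqrt{2\log(1.25/\delta)}$, a factor $\sqrt 2$ larger than the lemma states. Your proof needs to address the replacement case: either establish the $(\varepsilon,\delta)$ guarantee at this sensitivity directly (which does not follow from the black-box Gaussian mechanism at the lemma's noise level), or state explicitly that the argument as written covers only the add/remove relation and handle replacement separately (e.g., by triangle inequality with a corresponding constant).
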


In $q_{\rm 1Dfilter}$, the $(\varepsilon,\delta)$ differential privacy follows from that  of {\sc DPthreshold} proved in Lemma~\ref{lemma:1dfilter}.

\subsubsection{Proof of  Lemma~\ref{lem:DPPCA}}
\label{sec:DPPCA_proof}


Consider neighboring two databases ${\cal D}=\{x_i\}_{i=1}^n$ and $\tilde{\cal D}={\cal D}\cup \{\tilde x_n\} \setminus \{x_n\}$, and let $A=(1/n)\sum_{x_i\in{\cal D}} x_ix_i^\top$ and $\tilde{A}=(1/n)\sum_{x_i\in\tilde{\cal D}} x_ix_i^\top$. Let $B$ and $\tilde{B}$ be the Gaussian noise matrix with $\beta^2$ as variance. Let $G=A+B$ and $\tilde{G}=\tilde{A}+\tilde{B}$. At point $H$, we have
\begin{eqnarray*}
	\ell_{D,\tilde D}\;=\;\log\frac{f_{G}(H)}{f_{\tilde{G}}(H)} &\;=\;&\sum_{1 \leq i \leq j \leq d}\left(-\frac{1}{2 \beta^{2}}\left(H_{i j}-A_{i j}\right)^{2}+\frac{1}{2 \beta^{2}}\left(H_{i j}-\hat{A}_{i j}\right)^{2}\right)\\
	&\;=\;& \frac{1}{2 \beta^{2}} \sum_{1 \leq i \leq j \leq d}\left(\frac{2}{n}\left(H_{i j}-A_{i j}\right)\left(x_{n, i}x_{n, j}-\hat{x}_{n, i} \hat{x}_{n, j}\right)+\frac{1}{n^{2}}\left(\hat{x}_{n, i} \hat{x}_{n, j}-x_{n, i} x_{n, j}\right)^{2}\right)\;.
\end{eqnarray*}
Since $\|x_n\|_2\leq 1$ and $\|\tilde{x}_n\|_2\leq 1$, we have $\sum_{1 \leq i \leq j \leq d}\left(\hat{x}_{n, i} \hat{x}_{n, j}-x_{n, i} x_{n, j}\right)^{2} =1/2\|\tilde{x}_n\tilde{x}_n^\top-x_nx_n^\top\|_F^2\leq 2$. 

Now we bound the first term,
\begin{eqnarray*}
	2\sum_{1 \leq i \leq j \leq d}\left(H_{i j}-A_{i j}\right)\left(x_{n, i}x_{n, j}-\hat{x}_{n, i} \hat{x}_{n, j}\right)&\;=\;& \ip{H-A}{x_nx_n^\top-\tilde{x}_n\tilde{x}_n^\top}\\
	&\;=\;& x_n^\top B x_n-\tilde{x}_n^\top B \tilde{x}_n \\
	&\;\leq \;& 2\|B\|_2 \;.
\end{eqnarray*}
So we have $|\ell_{D,\tilde{D}}|\leq \varepsilon$ whenever $\|B\|_2\leq n\varepsilon\beta^2-1/n$. 

For any fixed unit vector $\|v\|_2=1$, we have
\begin{eqnarray*}
	v^\top B v= 2\sum_{1\leq i\leq j\leq d}B_{ij}v_iv_j \sim \cN(0, 2\sum_{1\leq i\leq j\leq d}v_i^2v_j^2) \;\;=\;\; \cN(0,1)\;.
\end{eqnarray*}
Then we have
\begin{eqnarray*}
	\prob\left(|\ell_{D,\tilde{D}}|\geq \varepsilon\right) &\;\;\leq\;\;& \prob\left(\|B\|_2\geq n\varepsilon\beta^2-1/n\right)\\
	&\;\; = \;\;& \prob\left( \cN(0,1) \geq n\varepsilon\beta^2-\frac1n\right)\\
	&\;\; = \;\;& \Phi\left(\frac1n-n\varepsilon\beta^2\right)\;,
\end{eqnarray*}
where $\Phi$ is CDF of standard Gaussian. 
According to Gaussian mechanism, if $\beta=(1/(n\varepsilon ))\sqrt{2\log(1.25/\delta)}$, we have $\Phi\left(\frac1n-n\varepsilon\beta^2\right)\leq \delta$.

\begin{algorithm2e}[ht]
   \caption{Interactive differentially private mechanisms for {\sc DPMMWfilter}}
   \label{alg:DPmechanisms}
   \DontPrintSemicolon 
   	\SetKwProg{Fn}{}{:}{}
	{
\vspace*{5pt}
	\Fn{$q_{\rm spectral} (\{\{\Psi_{r}^{(\ell)}\}_{r\in[t_\ell]}\}_{\ell\in[s]},\{(\mu^{(\ell)},\lambda^{(\ell)})\}_{\ell\in[s]},\varepsilon,\delta)$}{ 
	$ S  \leftarrow {\rm Filter}
	(\{\{\Psi_{r}^{(\ell)}\}_{r\in[t_\ell]}\}_{\ell\in[s]},\{(\mu^{(\ell)},\lambda^{(\ell)})\}_{\ell\in[s]},\varepsilon,\delta)$\;
   $\mu \leftarrow  (1/|S| ) \big( \sum_{i\in S} x_i\big)  + {\cal N}(0,(2B\sqrt{2d\log(1.25/\delta)}/(n\varepsilon))^2{\mathbf I})$\;
    $\lambda \leftarrow \| M(S) - {\mathbf I} \|_2 + {\rm Lap}(2B^2d/(n\varepsilon)) $\;
    \KwRet{$(\mu,\lambda ) $}
	}
	\Fn{$q_{\rm size} (\{\{\Psi_{r}^{(\ell)}\}_{r\in[t_\ell]}\}_{\ell\in[s]},\{(\mu^{(\ell)},\lambda^{(\ell)})\}_{\ell\in[s]},\varepsilon,\delta)$}{
		$ S  \leftarrow {\rm Filter}
	(\{\{\Psi_{r}^{(\ell)}\}_{r\in[t_\ell]}\}_{\ell\in[s]},\{(\mu^{(\ell)},\lambda^{(\ell)})\}_{\ell\in[s]},\varepsilon,\delta)$\;
	\KwRet{$|S| + {\rm Lap}(1/\varepsilon)$}
	}
	\Fn{$q_{\rm MMW} (\{\{\Psi_{r}^{(\ell)}\}_{r\in[t_\ell]}\}_{\ell\in[s]},\{(\mu^{(\ell)},\lambda^{(\ell)})\}_{\ell\in[s]},\alpha^{(s)},\mu^{(s)}_t,\varepsilon,\delta)$}{
		$ S  \leftarrow {\rm Filter}
	(\{\{\Psi_{r}^{(\ell)}\}_{r\in[t_\ell]}\}_{\ell\in[s]},\{(\mu^{(\ell)},\lambda^{(\ell)})\}_{\ell\in[s]},\varepsilon,\delta)$\;
	$\Sigma^{(s)}_{t_s+1}\leftarrow M(S) + {\cal N}(0,(4B^2d\sqrt{2\log(1.25/\delta)}/(n\varepsilon))^2{\mathbf I})$\;
	$U\leftarrow (1/{\rm Tr}(\exp(\alpha^{(s)}\sum_{r=1}^{t_s+1}(\Sigma_r^{(s)}-{\mathbf I}))))\exp(\alpha^{(s)}\sum_{r=1}^{t_s+1}(\Sigma_r^{(s)}-{\mathbf I}))$\;
	$\psi\leftarrow \langle M(S)-{\mathbf I},U \rangle +{\rm Lap}(2B^2d/(n\varepsilon))$\;
	\KwRet{$(\Sigma_{t_s+1}^{(s)}  ,U,\psi  ) $}
	}
	\Fn{$q_{\rm 1Dfilter} (\{\{\Psi_{r}^{(\ell)}\}_{r\in[t_\ell]}\}_{\ell\in[s]},\{(\mu^{(\ell)},\lambda^{(\ell)})\}_{\ell\in[s]}, \mu,U ,\alpha, \varepsilon,\delta)$}{
		$ S  \leftarrow {\rm Filter}
	(\{\{\Psi_{r}^{(\ell)}\}_{r\in[t_\ell]}\}_{\ell\in[s]},\{(\mu^{(\ell)},\lambda^{(\ell)})\}_{\ell\in[s]},\varepsilon,\delta)$\;
	\KwRet{$\rho \leftarrow \text{\sc DPthreshold}(\mu,U,\alpha,\varepsilon,\delta,S)$}
	}
	
	\Fn{{\rm Filter}$(\{\{\Psi_{r}^{(\ell)}\}_{r\in[t_\ell]}\}_{\ell\in[s]},\{(\mu^{(\ell)},\lambda^{(\ell)})\}_{\ell\in[s]})$}{ 
	$S^{(1)}\leftarrow [n] $\;
	    \For{{\rm epoch} $\ell=1,\ldots,s$}{ 
	        $\alpha^{(\ell)} \leftarrow 1/(100(0.1/C+1.01)\lambda^{(\ell)}) $\;
	        $S_1^{(\ell)} \leftarrow S^{(\ell)}$\;
	        \For{$r=1,\ldots,t_s$}{ 
	        $S_{r+1}^{(\ell)} \leftarrow S^{(\ell)}_r \setminus$  $\{ i  \,|\, i \in {\cal T}_{2\alpha}$ for   $\{\tau_j = (x_j-\mu_r^{(\ell)})^\top U_r^{(\ell)} (x_j-\mu_r^{(\ell)})\}_{j\in S_{r}^{(\ell)}} $ and $\tau_i \geq \rho_r^{(\ell)} \,Z_r^{(\ell)} \}$,  where ${\cal T}_{2\alpha}$ is defined in Definition~\ref{def:topset}.
	        }
	        
	   }
	    \KwOut{$S^{(s)}_{t_s}$} 
	} 
}
\end{algorithm2e}

\begin{algorithm2e}[ht]
   \caption{Interactive version of {\sc DPMMWfilter} }
   \label{alg:MMWfiltering_interactive}
   	\DontPrintSemicolon 
	\SetKwFunction{qmean}{$q_{\rm mean}$}
	\SetKwFunction{qPCA}{$q_{\rm PCA}$}	
	\KwIn{$\alpha \in(0,1)$,  $T_1,T_2$, $\varepsilon_1 = \varepsilon /(4 T_1)$ ,
	    $\delta_1= \delta/(4T_1) $, 
	    $\varepsilon_2 = \min\{0.9,\varepsilon\}/(4\sqrt{10T_1 T_2\log(4/\delta)})$, 
	    $\delta_2= \delta/(20T_1T_2)$
	}
		\SetKwProg{Fn}{}{:}{}
	{ 
    {\bf if} $n <  (4/\varepsilon_1)\log(1/(2\delta_1))$  {\bf then Output:} $\emptyset$ \;
	    \For{{\rm epoch} $s=1,2,\ldots, T_1$}{
            $ (\mu^{(s)},\lambda^{(s)})  \leftarrow q_{\rm spectral} (\{\{\Psi_{r}^{(\ell)}\}_{r\in[t_\ell]}\}_{\ell\in[s-1]},\{(\mu^{(\ell)},\lambda^{(\ell)})\}_{\ell\in[s-1]},\varepsilon_1,\delta_1)$ \;
            $n^{(s)}\leftarrow q_{\rm size}(\{\{\Psi_{r}^{(\ell)}\}_{r\in[t_\ell]}\}_{\ell\in[s-1]},\{(\mu^{(\ell)},\lambda^{(\ell)})\}_{\ell\in[s-1]},\varepsilon_1,\delta_1)$\;
            {\bf if }$n^{(s)}\leq 3n/4$ {\bf then} {\rm terminate}\;
            \If{$ \lambda^{(s)}\leq C\alpha \log(1/\alpha) $ }{\KwOut{$ \mu^{(s)}$}} 
            $\alpha^{(s)} \leftarrow 1/(100(0.1/C+1.01)\lambda^{(s)})$\;
	        $t_s \leftarrow 0$\;
	        \;
	    \For{$t=1,2,\ldots,T_2$}{
	        $(\mu_t^{(s)},\lambda_t^{(s)}) \leftarrow q_{\rm spectral}(\{\{\Psi_{r}^{(\ell)}\}_{r\in[t_\ell]}\}_{\ell\in[s]},\{(\mu^{(\ell)},\lambda^{(\ell)})\}_{\ell\in[s]},\varepsilon_2,\delta_2) $\;
	        \eIf{$\lambda_t^{(s)}\leq 0.5 \lambda^{(s)}$}{ 
	            terminate epoch
	        }{
	            $(\Sigma_t^{(s)},U_t^{(s)}, \psi_t^{(s)} )  \leftarrow q_{\rm PMMW}( \{\{\Psi_{r}^{(\ell)}\}_{r\in[t_\ell]}\}_{\ell\in[s]},\{(\mu^{(\ell)},\lambda^{(\ell)})\}_{\ell\in[s]},\alpha^{(s)},\mu_t^{(s)},\varepsilon_2,\delta_2)$ \;
	            
	            \eIf{$\psi_t^{(s)}\leq (1/5.5)\lambda_t^{(s)}$}
	            {
	            $\alpha^{(s)}_t \leftarrow 0$\;
	            }
	            {
            $Z_t^{(s)} \leftarrow {\rm Unif}([0,1])$\;
            $\rho_t^{(s)} \leftarrow q_{\text{1Dfilter}}(\{\{\Psi_{r}^{(\ell)}\}_{r\in[t_\ell]}\}_{\ell\in[s]},\{(\mu^{(\ell)},\lambda^{(\ell)})\}_{\ell\in[s]},\mu_t^{(s)},U_t^{(s)},\alpha, \varepsilon_2,\delta_2)$\;
            $\alpha_t^{(s)} \leftarrow \alpha$
            }
	        }
            $\Psi^{(s)}_{t} \leftarrow (\mu_t^{(s)},\lambda^{(s)}_t,\Sigma_t^{(s)},  U^{(s)}_t,\psi^{(s)}_t,Z^{(s)}_t,\rho^{(s)}_t,\alpha^{(s)}_t)$\;
            $t_s \leftarrow t $\;
	    }
	    }
	    \KwOut{$\mu^{(T_1)}_{t_{T_1}}$} 
    }
\end{algorithm2e}

\subsection{Proof of part 2 of Theorem~\ref{thm:main2} on accuracy}
\label{sec:proof_main2_acc}

The accuracy of PRIME follows from 
the fact that $q_{\rm range}$ returns a hypercube that contains all the clean data with high probability (Lemma~\ref{lem:DPrange}) and that {\sc DPMMWfilter} achieves the desired accuracy (Theorem~\ref{thm:main}) if the original uncorrupted dataset $S_{\rm good}$ is $\alpha$-subgaussian good. 
$S_{\rm good}$ is  $\alpha$-subgaussian good if we have $n=\widetilde\Omega(d/\alpha^2)$ as shown in Lemma~\ref{lemma:alpha-good-sample-complexity}. 
We present the proof of Theorem~\ref{thm:main} below. 

\begin{thm}[Analysis of accuracy of  {\sc DPMMWfilter}] \label{thm:main}
	Let $S$ be an $\alpha$-corrupted sub-Gaussian dataset, where $\alpha\leq c$ for some universal constant $c\in (0,1/2)$. Let $S_{\rm good}$ be $\alpha$-subgaussian good with respect to $\mu\in \reals^d$. Suppose  ${\cal D} = \{ x_{i} \in \bar{x}+[-B/2,B/2]^d\}_{i=1}^{n}$ be the projected dataset. If $
	n \geq  \widetilde\Omega\left(\frac{d^{3/2}B^2\log(2/\delta)}{\varepsilon\alpha\log1/\alpha }\right)$, then {\sc DPMMWfilter} terminates after at most $O(\log dB^2)$ epochs and outputs $S^{(s)}$ such that with probability $0.9$, we have $|S_t^{(s)}\cap S_{\mathrm{good}}|\geq (1-10\alpha )n$ and
	\begin{eqnarray*}
		\|\mu(S^{(s)})-\mu\|_2\lesssim \alpha\sqrt{\log 1/\alpha}\;.
	\end{eqnarray*}
	Moreover, each epoch runs for at most $O(\log d)$ iterations.
\end{thm}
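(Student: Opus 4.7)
The plan is to decompose the argument into two parallel threads that together imply both termination and accuracy. The first thread shows the \emph{safety} invariant $|S_t^{(s)}\cap S_{\mathrm{good}}|\ge (1-10\alpha)n$ is preserved throughout all epochs and iterations. The second thread shows \emph{progress}: within each epoch of at most $O(\log d)$ iterations we reduce the spectral norm by a constant factor, so after $O(\log(dB^2/(\alpha\log(1/\alpha))))=O(\log(dB^2))$ epochs the stopping criterion $\lambda^{(s)}\le C\alpha\log(1/\alpha)$ triggers. Once it triggers, the output $\mu(S^{(s)})$ enjoys $\|\mu(S^{(s)})-\mu\|_2=O(\alpha\sqrt{\log(1/\alpha)})$ by Lemma~\ref{thm:reg} applied to a set whose centered second moment has spectral norm bounded by $O(\alpha\log(1/\alpha))$, together with the safety invariant.

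For the safety invariant I would mimic the supermartingale argument used for {\sc DPfilter} in Appendix~\ref{sec:proof_DPfilter_part2}. Define $d_t^{(s)}\triangleq |(S_{\rm good}\cap S)\setminus S_t^{(s)}|+|S_t^{(s)}\setminus (S_{\rm good}\cap S)|$. Conditioned on the regularity events of Lemma~\ref{def:cond-good-data} holding at iteration $(s,t)$ (which I will verify by a Lemma~\ref{lemma:progress_conditions}-type computation tailored to the MMW score $\tau_i=(x_i-\mu_t^{(s)})^\top U_t^{(s)}(x_i-\mu_t^{(s)})$), Eq.~\eqref{eqn:good-bad-ratio} of Lemma~\ref{lemma:1dfilter} gives $\E|(S_t\setminus S_{t+1})\cap S_{\rm good}|\le \E|(S_t\setminus S_{t+1})\cap S_{\rm bad}|/2$ after randomization over $Z_t^{(s)}$, so $\{d_t^{(s)}\}$ is a nonnegative supermartingale; optional stopping and Markov yield the $1-10\alpha$ bound with probability $0.9$. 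The subtlety is verifying the Lemma~\ref{lemma:1dfilter} hypotheses on the good-sample sums under the MMW scoring; this reduces to bounding $v^\top M(S_t^{(s)}\cap S_{\rm good})v$ uniformly in $v$, which is handled by the $\alpha$-subgaussian goodness assumption plus a one-line argument that $U_t^{(s)}$ is PSD with trace one.

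For the progress thread I would adapt the matrix multiplicative weights regret analysis (in the spirit of \cite{dong2019quantum}): with the step size $\alpha^{(s)}=1/(100(0.1/C+1.01)\lambda^{(s)})$, after $T_2$ inner iterations of MMW on the matrices $\{\Sigma_r^{(s)}-\mathbf{I}\}_{r\le T_2}$, one has
\[
\max_{\|v\|_2=1} v^\top\!\Bigl(\tfrac{1}{T_2}\!\sum_{r=1}^{T_2}\!(\Sigma_r^{(s)}\!-\!\mathbf{I})\Bigr)v \;-\; \tfrac{1}{T_2}\!\sum_{r=1}^{T_2}\!\langle U_r^{(s)},\Sigma_r^{(s)}\!-\!\mathbf{I}\rangle \;\le\; \frac{\log d}{\alpha^{(s)} T_2}+O(\alpha^{(s)}\lambda^{(s)})\,.
\]
Choosing $T_2=\Theta(\log d)$ makes the right-hand side a small constant fraction of $\lambda^{(s)}$. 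The inner loop either (i) observes $\psi_t^{(s)}\le \lambda_t^{(s)}/5.5$, in which case the regret bound forces $\tfrac1{T_2}\sum_r \lambda_r^{(s)}\le \lambda^{(s)}/2$ so some iterate already witnessed $\lambda_t^{(s)}\le \lambda^{(s)}/2$ and the epoch terminates, or (ii) observes $\psi_t^{(s)}>\lambda_t^{(s)}/5.5$, in which case Eq.~\eqref{eqn:1dprogress} of Lemma~\ref{lemma:1dfilter} guarantees the total score $\sum_i(\tau_i-1)$ drops by a constant multiplicative factor while Eq.~\eqref{eqn:good-bad-ratio} preserves safety. Since the score is initially at most $O(dB^2)$ and must exceed $\Omega(\alpha\log(1/\alpha))$ to avoid triggering the stopping rule, at most $O(\log(dB^2))$ filtering steps of type (ii) can occur per epoch, bounding the inner loop length by $O(\log d)$ and the number of epochs by $O(\log(dB^2))$.

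The main obstacle, and where the sample complexity enters, is the noise analysis that must be woven through both threads. I need each of the privatized statistics $\mu_t^{(s)}, \lambda_t^{(s)}, \Sigma_t^{(s)}, \psi_t^{(s)}$ and the {\sc DPthreshold} output $\rho_t^{(s)}$ to be accurate to within $O(\alpha\log(1/\alpha))$ uniformly across all $O((\log d)^2)$ access points. The binding constraint comes from the Gaussian mechanism on $\Sigma_t^{(s)}$ in line~\ref{line:inputprivate}: its spectral error is $O(B^2 d^{3/2}\sqrt{\log(1/\delta)}/(n\varepsilon_2))$, which must be $O(\alpha\log(1/\alpha))$; unfolding the composition choice $\varepsilon_2=\Theta(\varepsilon/\sqrt{T_1T_2\log(1/\delta)})$ yields exactly the stated $n=\widetilde\Omega(d^{3/2}B^2\log(2/\delta)/(\varepsilon\alpha\log(1/\alpha)))$. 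The other noises ($\mu,\lambda,\psi$, histogram in {\sc DPthreshold}) are strictly weaker under this $n$ by a $\sqrt d$ factor. A union bound over all $O((\log d)^2)$ queries and a standard matrix concentration step (to transfer the MMW regret bound from noiseless to noisy $\Sigma_r^{(s)}$) then close the argument.
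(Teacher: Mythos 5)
Your proposal is essentially correct and follows the same approach as the paper's proof: a supermartingale argument for safety, an MMW regret bound for epoch-wise spectral contraction, and a noise-accuracy analysis that identifies the Gaussian mechanism on the covariance matrix as the binding constraint for the sample complexity. The paper's Lemma~\ref{lemma:invariant_main_formal} packages your two threads, with Lemma~\ref{lemma:progress} handling per-iteration safety and score drop, Lemma~\ref{lemma:progress_epoch} doing the regret-based $0.98$-contraction, and Lemma~\ref{lemma:samples_need} doing the noise accounting.

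One place where your accounting is slightly off from the paper's, though it does not change the conclusion: in the progress thread you split into cases on whether $\psi_t^{(s)}\le \lambda_t^{(s)}/5.5$ and argue termination of the epoch by tracking the score $\sum_i(\tau_i-1)$ dropping across iterations. The paper instead proves a \emph{uniform} bound $\langle M(S_{t+1}^{(s)})-\mathbf{I},U_t^{(s)}\rangle\le 0.76\,\|M(S_1^{(s)})-\mathbf{I}\|_2$ that holds in both cases (filter or no-filter), feeds it into the regret bound of Lemma~\ref{lemma:regret_bound}, and then uses the monotonicity $M(S_{T_2+1}^{(s)})\preceq M(S_{t+1}^{(s)})$ (Lemma~\ref{lemma:mono}) together with the negative-eigenvalue floor of Lemma~\ref{lem:variance_lower} to pass from the averaged matrix $\frac1{T_2}\sum_t(\Sigma_{t+1}^{(s)}-\mathbf{I})$ to the final iterate $M(S^{(s+1)})-\mathbf{I}$; the inner loop length is just the fixed $T_2=O(\log d)$ rather than something derived from a score-halving argument. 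Your version conflates the epoch-termination check (on $\lambda_t^{(s)}$) with a bound on $\frac1{T_2}\sum_r\lambda_r^{(s)}$ that the regret inequality does not directly supply; the cleaner route is to push everything through the averaged-matrix bound as the paper does. Aside from this, and the fact that verifying the regularity conditions Eqs.~\eqref{eq:1dfilter_reg1}--\eqref{eq:1dfilter_reg2} for the MMW score (Lemma~\ref{lemma:good_tau}) is a somewhat heavier computation than your "one-line argument that $U_t^{(s)}$ is PSD with trace one" suggests, the structure and the identification of the $d^{3/2}$ bottleneck at line~\ref{line:inputprivate} are right.
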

\begin{proof}
In $s=O(\log_{0.98} ((C\alpha\log(1/\alpha))/\|M(S^{(1)})-{\mathbf I}\|_2))$ epochs, following Lemma~\ref{lemma:invariant_main_formal} guarantees that we find a candidate set $S^{(s)}$ of  samples  with $\|M(S^{(s)}-{\mathbf I}\|_2\leq C\alpha \log (1/\alpha)$. We provide proof of Lemma~\ref{lemma:invariant_main_formal} in the Appendix~\ref{sec:proof_invariant_main}.

\begin{lemma}
\label{lemma:invariant_main_formal}
     Let $S$ be an $\alpha$-corrupted sub-Gaussian dataset under Assumption~\ref{asmp:adversary}. 
     For an epoch $s\in[T_1]$ and an iteration $t\in[T_2]$, under the hypotheses of Lemma~\ref{lemma:progress}, 
     if $S_{\rm good}$ is $\alpha$-subgaussian good with respect to $\mu\in \reals^d$ as in Definition~\ref{def:asubgaussiangood},
     $n=\widetilde\Omega(  d^{3/2}\log(1/\delta)/(\varepsilon\alpha))$, and $|S_t^{(s)}\cap S_{\rm good}|\geq (1-10\alpha)n$
    then 
	 with probability $1-O(1/\log^3 d)$ the conditions in 
	 Eqs.~\eqref{eq:1dfilter_reg1} and \eqref{eq:1dfilter_reg2} hold. 
When these two conditions hold,  more corrupted samples are removed in expectation than the uncorrupted samples, i.e., $\E|(S_t^{(s)}\setminus S_{t+1}^{(s)})\cap S_{\rm good}|\leq \E|(S_t^{(s)}\setminus S_{t+1}^{(s)})\cap S_{\rm bad}|$. 
Further, for an epoch $s\in[T_1]$ there exists a constant $C>0$  such that   if  $\|M(S^{(s)})-{\mathbf I}\|_2\geq C\,\alpha\log(1/\alpha)$, then with probability $1-O(1/\log^2 d)$, the $s$-th epoch terminates after $O(\log d)$ iterations and outputs $S^{(s+1)}$ such that $\|M(S^{(s+1)})-{\mathbf I}\|_2\leq 0.98\|M(S^{(s)})-{\mathbf I}\|_2$.
\end{lemma}

Lemma~\ref{thm:reg} ensures that we get the desired bound of $\|\mu(S^{(s)})-\mu\|_2 = O(\alpha\sqrt{\log(1/\alpha)})$ as long as $S^{(s)}$ has enough clean data, i.e.,~$|S^{(s)}\cap S_{\rm good} |\geq n(1-\alpha)$.
Since Lemma~\ref{lemma:invariant_main_formal} gets invoked at most $O((\log d)^2)$ times, we can take a union bound, and the  following argument conditions on the good events in Lemma~\ref{lemma:invariant_main_formal} holding, which happens with probability at least $0.99$. To turn the average case guarantee of Lemma~\ref{lemma:invariant_main_formal} into a constant probability guarantee, we apply the optional stopping theorem. 
Recall that the $s$-th epoch starts with a set $S^{(s)}$ and outputs a filtered set $S^{(s)}_t$ at the $t$-th inner iteration. 
We measure the progress by 
by summing the number of clean samples removed 
up to epoch $s$ and 
iteration $t$ and the number of remaining corrupted samples, 
defined  as  $d_{t}^{(s)} \triangleq|(S_{\mathrm{good}}\cap S^{(1)})\setminus S_{t}^{(s)}|+|S_{t}^{(s)}\setminus (S_{\mathrm{good}}\cap S^{(1)})|$. Note that $d_1^{(1)}=\alpha n$, and $d_t^{(s)}\geq0$. At each epoch and iteration, we have
	\begin{eqnarray*}
		\E [d_{t+1}^{(s)}-d_{t}^{(s)}|d_1^{(1)}, d_2^{(1)}, \cdots, d_{t}^{(s)}] &=& \E\left[|S_{\mathrm{good}}\cap(S_{t}^{(s)}\setminus S_{t+1}^{(s)})|-|S_{\mathrm{bad}}\cap(S_{t}^{(s)}\setminus S_{t+1}^{(s)})|\right] 
		\;\leq\; 0,
	\end{eqnarray*}
	from part 1 of Lemma~\ref{lemma:invariant_main_formal}.
	Hence, $d_t^{(s)}$ is a non-negative  super-martingale. By the  optional stopping theorem, at stopping time, we have $\E[d_t^{(s)}]\leq d_1^{(1)}=\alpha n$. By the Markov inequality, $d_t^{(s)}$ is less than $10\alpha n$ with probability $0.9$, i.e., $|S_t^{(s)}\cap S_{\mathrm{good}}|\geq (1-10\alpha )n$. 
The desired bound in Theorem~\ref{thm:main} follows from Lemma~\ref{thm:reg}.

\end{proof}

\subsection{Proof of Lemma~\ref{lemma:invariant_main_formal}} 
\label{sec:proof_invariant_main}

Lemma~\ref{lemma:invariant_main_formal} is a combination of Lemma~\ref{lemma:progress} and Lemma~\ref{lemma:progress_epoch}. We state the technical lemmas and subsequently provide the proofs. 

\begin{lemma}
\label{lemma:progress}
For an epoch $s$ and  an iteration $t$ such that $\lambda^{(s)}>C \alpha\log(1/\alpha)$, $\lambda_t^{(s)}> 0.5 \lambda_0^{(s)}$,  and $n^{(s)}>3n/4$,
if $n \gtrsim \frac{B^2 (\log B) d^{3/2}\log(1/\delta)}{\varepsilon\alpha} $   and $|S_t^{(s)}\cap S_{\rm good}|\geq (1-10\alpha)n$
then with probability $1-O(1/\log^3 d)$, the conditions in 
	 Eqs.~\eqref{eq:1dfilter_reg1} and \eqref{eq:1dfilter_reg2} hold. When these two conditions hold  we have  $\E|S_t^{(s)}\setminus S_{t+1}^{(s)}\cap S_{\rm good}|\leq \E|S_t^{(s)}\setminus S_{t+1}^{(s)}\cap S_{\rm bad}|$. If $n \gtrsim \frac{B^2 (\log B) d^{3/2}\log(1/\delta)}{\varepsilon\alpha} $,  $\psi_t^{(s)}>\frac{1}{5.5}\lambda_t^{(s)}$,  and $n^{(s)}>3n/4$, then we have with probability $1-O(1/\log^3 d)$,
	 $\ip{M(S_{t+1}^{(s)})-{\mathbf I}}{U_t^{(s)}}\leq 0.76\ip{M(S_{t}^{(s)})-{\mathbf I}}{U_t^{(s)}}$.
\end{lemma}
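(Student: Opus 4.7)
The plan is to reduce Lemma~\ref{lemma:progress} to the guarantees of {\sc DPthreshold} (Lemma~\ref{lemma:1dfilter}) by verifying that, at iteration $t$ of epoch $s$, the MMW-weighted scores $\tau_i = (x_i-\mu_t^{(s)})^\top U_t^{(s)}(x_i-\mu_t^{(s)})$ satisfy its hypotheses, and then translating the score-level progress that {\sc DPthreshold} delivers into the spectral progress stated in the lemma. This parallels the structure of the single-direction analysis in Appendix~\ref{sec:proof_DPfilter_part2} (Lemmas~\ref{lemma:invariance_pca}--\ref{lemma:progress_conditions}) but with the rank-one projector $v_tv_t^\top$ replaced by $U_t^{(s)}$.

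\textbf{Step 1 (noise control).} I would first bound the additive error caused by privacy noise in $\mu_t^{(s)}$, $\Sigma_r^{(s)}$ and the various Laplace-perturbed scalars. The Gaussian mechanism on $\mu_t^{(s)}$ contributes $\|\mu_t^{(s)}-\mu(S_t^{(s)})\|_2 = \widetilde{O}(B\sqrt{d\log(1/\delta)}/(n\varepsilon_2))$, and on $\Sigma_r^{(s)}$ it contributes spectral error $\widetilde{O}(B^2 d^{3/2}\sqrt{\log(1/\delta)}/(n\varepsilon_2))$; the Laplace perturbations on $\lambda_t^{(s)}$, $\psi_t^{(s)}$, $n^{(s)}$ are $\widetilde{O}(B^2 d/(n\varepsilon_2))$. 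Under the sample-complexity assumption $n \gtrsim B^2(\log B)\,d^{3/2}\log(1/\delta)/(\varepsilon\alpha)$, each of these errors is at most a tiny fraction of $\alpha\log(1/\alpha)\le \tfrac12\lambda_0^{(s)}\le \lambda_t^{(s)}$, so all privatized quantities track their non-private counterparts up to lower-order terms. A union bound over $O(\log^2 d)$ epochs/iterations absorbs the failure probability into $O(1/\log^3 d)$.

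\textbf{Step 2 (regularity of $\tau_i$).} Using the $\alpha$-subgaussian goodness of $S_{\rm good}$ (Definition~\ref{def:asubgaussiangood}, via Lemma~\ref{thm:reg}) together with $|S_t^{(s)}\cap S_{\rm good}|\geq (1-10\alpha)n$ and $\mathrm{Tr}(U_t^{(s)})=1$, I would show, just as in the proof of Lemma~\ref{lemma:progress_conditions}, that
\[
\tfrac1n\!\!\sum_{i\in S_{\rm good}\cap\mathcal T_{2\alpha}\cap S_t^{(s)}}\!\!\tau_i\;\le\;\tfrac{1}{1000}\,\psi_t^{(s)}, \qquad \tfrac1n\!\!\sum_{i\in S_{\rm good}\cap S_t^{(s)}}\!(\tau_i-1)\;\le\;\tfrac{1}{1000}\,\psi_t^{(s)}.
\]
The bias $\|\mu-\mu_t^{(s)}\|_2=O(\sqrt{\alpha\log(1/\alpha)})$ from Step~1 and Lemma~\ref{thm:reg} pushes only an $O(\alpha\log(1/\alpha))$ term into these inequalities, which is absorbed since $\psi_t^{(s)}\gtrsim \lambda_t^{(s)} \ge \tfrac12 C\alpha\log(1/\alpha)$ with the large constant $C$. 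These are precisely the conditions \eqref{eq:1dfilter_reg1}--\eqref{eq:1dfilter_reg2} needed to invoke Lemma~\ref{lemma:1dfilter}.

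\textbf{Step 3 (applying {\sc DPthreshold} and converting to spectral progress).} Lemma~\ref{lemma:1dfilter} gives a private threshold $\rho_t^{(s)}$ for which \eqref{eqn:good-bad-ratio} and \eqref{eqn:1dprogress} hold. Because the filter keeps sample $i\in\mathcal T_{2\alpha}$ with probability $1-\min(\tau_i/\rho_t^{(s)},1)$ due to the uniform $Z_t^{(s)}$, \eqref{eqn:good-bad-ratio} immediately yields $\mathbb{E}|(S_t^{(s)}\setminus S_{t+1}^{(s)})\cap S_{\rm good}|\le \mathbb{E}|(S_t^{(s)}\setminus S_{t+1}^{(s)})\cap S_{\rm bad}|$. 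For the second claim, a direct computation shows
\[
\mathbb{E}\!\sum_{i\in S_{t+1}^{(s)}}(\tau_i-1)\;=\;\sum_{i\notin\mathcal T_{2\alpha}}(\tau_i-1)\;+\!\!\sum_{i\in\mathcal T_{2\alpha},\,\tau_i\le\rho}\!\!(\tau_i-1)\;+\!\!\sum_{i\in\mathcal T_{2\alpha},\,\tau_i>\rho}\!\!\bigl(\rho-1\bigr),
\]
which by \eqref{eqn:1dprogress} is at most $0.75\,\psi_t^{(s)}$. Then I convert $\sum_i(\tau_i-1)$ back to $\langle M(S_{t+1}^{(s)})-\mathbf{I},U_t^{(s)}\rangle$: the gap comes from the mean shift $\mu(S_{t+1}^{(s)})-\mu_t^{(s)}$, and since only $O(\alpha n)$ samples (each clipped into $\bar{x}+[-B/2,B/2]^d$) are removed, this shift contributes $O(\alpha B^2 d)$, again dwarfed by $\lambda_t^{(s)}$ under the sample-complexity assumption. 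Collecting constants and using that the epoch-level condition ensures $\psi_t^{(s)}\le \tfrac{1}{5.5}\lambda_t^{(s)}$ is \emph{false} when the filter fires, one obtains the claimed $0.76$ contraction; Markov's inequality promotes the expectation bound to a $1-O(1/\log^3 d)$ high-probability statement.

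\textbf{Main obstacle.} I expect the hardest part is the constant-tracking in Step~3: the $0.76$ factor is tight, so every slack term (privacy noise in $\Sigma_r^{(s)}$ and $\mu_t^{(s)}$, mean-shift between iterations, the $\mathrm{Tr}(U_t^{(s)})=1$ normalization, and the slack between $\psi$ and $\lambda$) must be charged to a constant strictly less than $0.76-\tfrac{1}{5.5}\cdot(\text{reduction of }\psi)$. Keeping this bookkeeping consistent with the filter's $0.31$ constant in \eqref{eq:thresholdrule} and the MMW weight normalization is where care is required; the conceptual content of each estimate, however, already appears in Step~1 and the rank-one analogue in Appendix~\ref{sec:proof_invariance_pca}.
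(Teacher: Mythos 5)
Your proposal follows the paper's overall structure: control the privacy noise (this is the paper's Lemma~\ref{lemma:samples_need}), verify the {\sc DPthreshold} regularity conditions (Lemma~\ref{lemma:good_tau}), invoke Lemma~\ref{lemma:1dfilter}, then translate score progress into spectral progress. The first two steps are essentially the paper's argument. However, there are genuine gaps in Step~3.

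The most serious issue is your mean-shift estimate. You bound the error from replacing $\mu_t^{(s)}$ by $\mu(S_{t+1}^{(s)})$ in the conversion $\frac{1}{n}\sum_i(\tau_i-1)\rightarrow\ip{M(S_{t+1}^{(s)})-\mathbf{I}}{U_t^{(s)}}$ by $O(\alpha B^2 d)$ and claim this is ``dwarfed by $\lambda_t^{(s)}$ under the sample-complexity assumption.'' But the sample-complexity assumption only makes the Gaussian/Laplace noise small as a function of $n$; it does nothing to the purely deterministic quantity $\alpha B^2 d$. Since $\lambda_t^{(s)}$ is on the order of $\alpha\log(1/\alpha)$ near termination, absorbing $O(\alpha B^2 d)$ into $\lambda_t^{(s)}$ would require $B^2 d\lesssim\log(1/\alpha)$, which fails badly (recall $B=\Theta(\sqrt{\log(dn)})$). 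The paper's argument is sharper: it splits $\mu(S_{t+1}^{(s)})-\mu_t^{(s)}$ into the privacy perturbation $\mu(S_t^{(s)})-\mu_t^{(s)}$, whose squared norm is $\leq 0.001\,\alpha\log(1/\alpha)$ by Lemma~\ref{lemma:samples_need}, and the data shift $\mu(S_{t+1}^{(s)})-\mu(S_t^{(s)})$, which is controlled not by a worst-case $B\sqrt{d}$ clipping bound but by the $\alpha$-subgaussian goodness of $S_{\rm good}$ via Lemma~\ref{thm:reg}: both means are $O(\alpha\sqrt{\log 1/\alpha}+\sqrt{\alpha\|M(\cdot)-\mathbf{I}\|_2})$-close to $\mu$, so the resulting squared shift is $\lesssim \alpha\|M(S_t^{(s)})-\mathbf{I}\|_2 + \alpha\log(1/\alpha)$, which carries an extra factor of $\alpha$ relative to $\lambda_t^{(s)}$ and is thus absorbable for $\alpha\leq c$. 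Your worst-case clipping estimate is not salvageable here.

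Two smaller points. First, the Markov-inequality ``promotion'' at the end is not how the paper proceeds and would not produce $1-O(1/\log^3 d)$: Markov with a constant gap gives only constant failure probability. The paper instead derives the contraction deterministically in $Z_t^{(s)}$ (via the pointwise bound $\frac{1}{n}\sum_{i\in S_{t+1}^{(s)}}(\tau_i-1)-2\alpha\leq\frac{1}{n}\sum_{\tau_i\leq\rho}(\tau_i-1)$, with the $2\alpha$ accounting for the at most $2\alpha n$ tail samples that may be retained when $Z<1$), so the only stochasticity is in the privacy noise and the {\sc DPthreshold} success, which already carry the $1-O(1/\log^3 d)$ probability. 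Second, your displayed formula for $\mathbb{E}\sum_{i\in S_{t+1}^{(s)}}(\tau_i-1)$ is not the correct expectation over $Z_t^{(s)}$: a sample $i\in\mathcal{T}_{2\alpha}$ with $\tau_i\leq\rho$ survives with probability $1-\tau_i/\rho$ and contributes $(1-\tau_i/\rho)(\tau_i-1)$, not $(\tau_i-1)$, and a sample with $\tau_i>\rho$ is always removed, contributing $0$, not $\rho-1$.
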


\begin{lemma}\label{lemma:progress_epoch}
	For an epoch $s$ and for all  $t=0,1,\cdots,T_2=O(\log d)$ if   Lemma~\ref{lemma:progress} holds,   $n^{(s)}>3n/4$, and $n \gtrsim \frac{B^2 (\log B) d^{3/2}\log(1/\delta)}{\varepsilon\alpha} $, then we have  $\|M(S^{(s+1)})-{\mathbf I}\|_2\leq 0.98\|M(S^{(s)})-{\mathbf I}\|_2$ with probability $1-O(1/\log^2 d)$.
\end{lemma}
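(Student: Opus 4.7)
The plan is to show that the early-stopping check $\lambda_t^{(s)}\le 0.5\lambda_0^{(s)}$ inside \textsc{DPMMWfilter} must trigger within $T_2=O(\log d)$ inner iterations with probability at least $1-O(1/\log^2 d)$. Once it does, $\|M(S^{(s+1)})-\mathbf{I}\|_2\le \lambda_{T_2}^{(s)} + O(B^2 d/(n\varepsilon_2)) \le 0.5\lambda^{(s)} + o(\lambda^{(s)})\le 0.98\|M(S^{(s)})-\mathbf{I}\|_2$ follows, where the Laplace noise on $\lambda_t^{(s)}$ and $\lambda^{(s)}$ is absorbed into the constant $0.98$ under the assumed sample complexity $n=\widetilde\Omega(B^2(\log B)d^{3/2}\log(1/\delta)/(\varepsilon\alpha))$.

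The core argument is an MMW regret analysis on the potential $\Phi_t=\Tr\bigl(\exp(\alpha^{(s)}A_t^{(s)})\bigr)$ with $A_t^{(s)}=\sum_{r=1}^{t}(\Sigma_r^{(s)}-\mathbf{I})$. Because the algorithm's choice $\alpha^{(s)}=\Theta(1/\lambda^{(s)})$ together with $\|\Sigma_t^{(s)}-\mathbf{I}\|_2=(1+o(1))\lambda^{(s)}$ (the Gaussian-mechanism noise on $\Sigma_t^{(s)}$ has spectral norm $O(B^2 d^{3/2}\sqrt{\log(1/\delta)}/(n\varepsilon_2))=o(\lambda^{(s)})$ under the stated $n$) keeps $\|\alpha^{(s)}(\Sigma_t^{(s)}-\mathbf{I})\|_2\le 1$, Golden--Thompson combined with $\exp(\alpha X)\preceq \mathbf{I}+\alpha X+\alpha^2 X^2$ telescopes to the standard regret inequality
\[
\lambda_{\max}(A_{T_2}^{(s)})\;\le\;\sum_{t=1}^{T_2}\langle U_{t-1}^{(s)},\Sigma_t^{(s)}-\mathbf{I}\rangle+\frac{\log d}{\alpha^{(s)}}+\alpha^{(s)}\sum_{t=1}^{T_2}\langle U_{t-1}^{(s)},(\Sigma_t^{(s)}-\mathbf{I})^2\rangle,
\]
where the last two terms are $O(\lambda^{(s)}\log d)$ and $O(T_2\lambda^{(s)})$ respectively under our step-size.

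I argue by contradiction, supposing the epoch runs all $T_2$ iterations without termination (so $\lambda_t^{(s)}>0.5\lambda_0^{(s)}$ for every $t\le T_2$), and bound both sides of the regret inequality. For the upper bound on each summand $\langle U_{t-1}^{(s)},\Sigma_t^{(s)}-\mathbf{I}\rangle=\langle U_{t-1}^{(s)},M(S_t^{(s)})-\mathbf{I}\rangle+o(\lambda^{(s)})$, I split on the algorithm's branch at step $t-1$: in a non-filter step $S_t^{(s)}=S_{t-1}^{(s)}$ and the summand equals $\psi_{t-1}^{(s)}\le \lambda_{t-1}^{(s)}/5.5$, whereas in a filter step Lemma~\ref{lemma:progress} gives the contraction $\langle U_{t-1}^{(s)},M(S_t^{(s)})-\mathbf{I}\rangle\le 0.76\,\psi_{t-1}^{(s)}$. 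For the matching lower bound on $\lambda_{\max}(A_{T_2}^{(s)})$, I exploit the monotone inclusion $S_{t+1}^{(s)}\subseteq S_t^{(s)}$ together with the size guard $n^{(s)}>3n/4$ (which caps samples removed per epoch at $n/4$) and Lemma~\ref{lemma:progress}'s guarantee that each filter step removes more bad than good samples, to pin down a single unit direction $v^\star$ with $v^{\star\top}(M(S_t^{(s)})-\mathbf{I})v^\star\ge c\lambda^{(s)}$ for every $t\le T_2$, giving $\lambda_{\max}(A_{T_2}^{(s)})\ge cT_2\lambda^{(s)}-o(\lambda^{(s)})$. Choosing $T_2=\Theta(\log d)$ with a large enough constant relative to the MMW step-size parameter makes the upper and lower bounds contradict, forcing termination within $T_2$. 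The main obstacle is exactly this lower bound: the top eigenvector of $M(S_t^{(s)})-\mathbf{I}$ can in principle rotate as filter steps remove bad samples, so the existence of a stable direction $v^\star$ must be argued carefully from the sample-removal budget and the balanced-removal property of Lemma~\ref{lemma:progress}. Propagating constants through Golden--Thompson, the contraction factor $0.76$, and the guard $1/5.5$, and union-bounding the Laplace/Gaussian concentration events over $T_2$ iterations, finally yields the claimed $1-O(1/\log^2 d)$ probability.
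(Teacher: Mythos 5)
Your proposal takes a genuinely different route from the paper, and the route has a gap precisely at the step you flag as the "main obstacle." You set up a proof by contradiction, assuming the early-stopping check $\lambda_t^{(s)}\le 0.5\lambda_0^{(s)}$ never fires, and then need a \emph{lower bound} on $\lambda_{\max}(A_{T_2}^{(s)})$ to contradict the regret upper bound. For this you posit a single fixed direction $v^\star$ with $v^{\star\top}(M(S_t^{(s)})-\mathbf{I})v^\star\ge c\lambda^{(s)}$ for \emph{every} $t\le T_2$. As you yourself note, the top eigenvector of $M(S_t^{(s)})-\mathbf{I}$ can rotate as samples are filtered; the balanced-removal property of Lemma~\ref{lemma:progress} controls how many good vs.\ bad samples get cut, but says nothing about the geometry of the remaining large-covariance directions. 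Nothing in the structure you invoke (the removal budget $n/4$, the $2\!:\!1$ good/bad ratio) pins such a $v^\star$ down, and the claim as stated is not true without more work—one can construct filter schedules where the large-eigenvalue direction changes at every step. So the contradiction never closes.

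The paper avoids this entirely and does not argue by contradiction at all. The decisive move is a Loewner-order telescoping argument: by Lemma~\ref{lemma:mono}, $S_{T_2+1}^{(s)}\subseteq S_{t+1}^{(s)}$ implies $M(S_{T_2+1}^{(s)})\preceq M(S_{t+1}^{(s)})$ for every $t\le T_2$, hence
\[
T_2\,\bigl(M(S_{T_2+1}^{(s)})-\mathbf{I}\bigr)\;\preceq\;\sum_{t=1}^{T_2}\bigl(M(S_{t+1}^{(s)})-\mathbf{I}\bigr)\;,
\]
and since Lemma~\ref{lem:variance_lower} guarantees $M(S_{T_2+1}^{(s)})-\mathbf{I}\succeq -c_1\alpha\log(1/\alpha)\,\mathbf{I}$, the spectral norm on the left equals its largest eigenvalue, yielding directly $\|M(S_{T_2+1}^{(s)})-\mathbf{I}\|_2 \le \frac{1}{T_2}\bigl\|\sum_{t}(M(S_{t+1}^{(s)})-\mathbf{I})\bigr\|_2$. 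The right-hand side is then bounded by the MMW regret inequality (Lemma~\ref{lemma:regret_bound}) using the same $\langle U_t^{(s)},\Sigma_{t+1}^{(s)}-\mathbf{I}\rangle\le 0.77\|M(S_1^{(s)})-\mathbf{I}\|_2$ upper bound you derive (via the $0.76$ contraction in filter steps and the $1/5.5$ guard in non-filter steps). Setting $T_2$ large enough relative to $\alpha^{(s)}$ absorbs the $\frac{\log d}{\alpha^{(s)}}$ regret term, giving $0.98\|M(S_1^{(s)})-\mathbf{I}\|_2$ directly, with no need for a stable direction and no need to show early termination actually happens. If you want to salvage your plan, this PSD averaging inequality (valid because the $S_t^{(s)}$ are nested) is exactly the missing lower-bound substitute: it converts the regret bound on the \emph{sum} into a bound on the \emph{final} matrix without ever fixing a direction.
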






\subsubsection{Proof of Lemma~\ref{lemma:progress}}

\begin{proof}[Proof of Lemma~\ref{lemma:progress}] 
To prove that we make progress for each iteration, we first show our dataset satisfies regularity conditions in Eqs.~\eqref{eq:1dfilter_reg1} and \eqref{eq:1dfilter_reg2} that we need for {\sc DPthreshold}.  Following Lemma~\ref{lemma:good_tau} implies with probability $1-1/(\log^3 d)$, our scores satisfies the regularity conditions needed in  Lemma~\ref{lemma:1dfilter}.

\begin{lemma}\label{lemma:good_tau}
For each epoch $s$ and iteration $t$, 
under the hypotheses of Lemma~\ref{lemma:progress},  with probability $1-O(1/\log^3 d)$, we have \begin{eqnarray}
\frac{1}{n} \sum_{i\in S_{\rm good}\cap{\cal T}_{2\alpha}}\tau_i &\le& \psi/1000 \label{eq:1dfilter_reg1}\\
\frac{1}{n} \sum_{i\in S_{\rm good}\cap S_t^{(s)}} (\tau_i-1) &\le & \psi/1000\;,\label{eq:1dfilter_reg2}
\end{eqnarray}
where $\psi\triangleq \frac1n\sum_{i\in S_t^{(s)}}(\tau_i-1)$.
\end{lemma}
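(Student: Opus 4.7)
The plan is to exploit the fact that $U_t^{(s)}$ is a density matrix (positive semidefinite with trace $1$) produced by the matrix exponential in line~\ref{line:mmw}, together with the $\alpha$-subgaussian goodness of $S_{\rm good}$, to control the score sums over the clean portion of $S_t^{(s)}$. First I would rewrite each score as $\tau_i = \langle U_t^{(s)},\,(x_i - \mu_t^{(s)})(x_i-\mu_t^{(s)})^\top\rangle$ and dispose of the mean shift using $\|\mu_t^{(s)} - \mu\|_2 = O(\sqrt{\alpha \log(1/\alpha)})$, which follows from Lemma~\ref{thm:reg} applied to $S_t^{(s)}\cap S_{\rm good}$ (whose size is at least $(1-10\alpha)n$ by hypothesis) together with the sample-size assumption that makes the Gaussian perturbation on $\mu_t^{(s)}$ negligible.

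For the second condition I would write
\[
\tfrac{1}{n}\sum_{i\in S_{\rm good}\cap S_t^{(s)}}(\tau_i - 1) \;=\; \bigl\langle U_t^{(s)},\, M_{g,t} \;-\; \tfrac{|S_{\rm good}\cap S_t^{(s)}|}{n}\mathbf{I}\bigr\rangle \;+\; \Bigl(\tfrac{|S_{\rm good}\cap S_t^{(s)}|}{n} - 1\Bigr),
\]
where $M_{g,t} = (1/n)\sum_{i\in S_{\rm good}\cap S_t^{(s)}}(x_i-\mu_t^{(s)})(x_i-\mu_t^{(s)})^\top$. The $\alpha$-subgaussian goodness property combined with the triangle inequality for the above mean shift gives $\|M_{g,t} - (|S_{\rm good}\cap S_t^{(s)}|/n)\mathbf{I}\|_2 = O(\alpha\log(1/\alpha))$, and since $U_t^{(s)}\succeq 0$ with $\Tr(U_t^{(s)}) = 1$ we have $\langle U_t^{(s)}, A\rangle \leq \|A\|_2$ for any symmetric $A$, yielding an absolute $O(\alpha\log(1/\alpha))$ upper bound on the left-hand side. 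For the first condition, I would use a stronger consequence of $\alpha$-subgaussian goodness: for every density matrix $U$ and every subset $T\subseteq S_{\rm good}$ with $|T|\leq 3\alpha n$, the sum $(1/n)\sum_{i\in T}\langle U, (x_i-\mu)(x_i-\mu)^\top\rangle$ is $O(\alpha\log(1/\alpha))$; this reduces to sub-Gaussian tail concentration of the non-negative quadratic forms $(x_i-\mu)^\top U (x_i-\mu)$ whose expectation equals $\Tr(U)=1$, applied to the largest $3\alpha n$ of them.

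To convert these absolute $O(\alpha\log(1/\alpha))$ bounds into the required $\psi/1000$ relative bounds, I would invoke the Lemma~\ref{lemma:progress} hypothesis $\lambda_t^{(s)} > 0.5\lambda_0^{(s)} \geq (C/2)\alpha\log(1/\alpha)$ together with the branch condition that actually triggers {\sc DPthreshold}, namely $\psi_t^{(s)} > (1/5.5)\lambda_t^{(s)}$; this forces $\psi \gtrsim C\alpha\log(1/\alpha)$, so picking $C$ sufficiently large makes $\psi/1000$ dominate all the absolute bounds above. In the complementary branch where $\psi_t^{(s)} \leq (1/5.5)\lambda_t^{(s)}$ the algorithm sets $S_{t+1}^{(s)} = S_t^{(s)}$ and {\sc DPthreshold} is never invoked, so the inequalities of the lemma are vacuous there.

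The main obstacle will be tracking the privacy noise entering $U_t^{(s)}$ through the noisy covariances $\Sigma_r^{(s)}$ fed into the matrix exponential. A perturbation argument (e.g., Lipschitzness of the matrix exponential, combined with Gaussian operator-norm tail bounds) must show that each $U_t^{(s)}$ is within operator-norm $o(\alpha\log(1/\alpha))$ of its non-private counterpart uniformly over $t$, so that all the deterministic inequalities above carry over with the claimed $1-O(1/\log^3 d)$ probability after a union bound across $O(\log^2 d)$ queries. Exactly this is where the sample-size requirement $n \gtrsim B^2 d^{3/2}\log(1/\delta)/(\varepsilon\alpha)$ enters, since it is what makes the Gaussian noise of standard deviation $O(B^2 d \sqrt{\log(1/\delta)}/(n\varepsilon_2))$ on each $\Sigma_r^{(s)}$ small enough in spectral norm (scaling as $\sqrt{d}$ times that standard deviation) to preserve the regularity conditions.
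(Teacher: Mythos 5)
Your proposal follows the paper's strategy — rewrite each score as $\langle U_t^{(s)},(x_i-\mu_t^{(s)})(x_i-\mu_t^{(s)})^\top\rangle$, use the fact that $U_t^{(s)}$ is a density matrix, invoke the $\alpha$-subgaussian regularity lemmas on the clean part, and then leverage the largeness of $\psi$ (coming from $\lambda_t^{(s)}>0.5\lambda_0^{(s)}\gtrsim C\alpha\log(1/\alpha)$ and the $\psi_t^{(s)}>(1/5.5)\lambda_t^{(s)}$ branch) to turn absolute bounds into the relative $\psi/1000$ bounds. The observation that the other branch is vacuous is correct.

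However, there is a genuine gap in how you dispose of the mean shift. You claim $\|\mu_t^{(s)}-\mu\|_2=O(\sqrt{\alpha\log(1/\alpha)})$ by applying Lemma~\ref{thm:reg} ``to $S_t^{(s)}\cap S_{\rm good}$.'' This does not give a bound on $\mu_t^{(s)}$, because $\mu_t^{(s)}$ is the (noisy) empirical mean of the \emph{entire} filtered set $S_t^{(s)}$, not of its intersection with $S_{\rm good}$. The correct application of Lemma~\ref{thm:reg} is to $T=S_t^{(s)}$, which yields
\[
\|\mu(S_t^{(s)})-\mu\|_2 \;=\; O\!\left(\alpha\sqrt{\log(1/\alpha)}\right)\;+\;O\!\left(\sqrt{\alpha\,\|M(S_t^{(s)})-{\mathbf I}\|_2}\right),
\]
and the second term is \emph{not} an absolute $O(\alpha)$-quantity: early in the algorithm $\|M(S_t^{(s)})-{\mathbf I}\|_2$ can be as large as $\Theta(B^2 d)$. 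Consequently, your claim that $\|M_{g,t}-\tfrac{|S_{\rm good}\cap S_t^{(s)}|}{n}\mathbf{I}\|_2 = O(\alpha\log(1/\alpha))$ is also false in general — it carries an extra $O(\alpha\|M(S_t^{(s)})-{\mathbf I}\|_2)$ piece coming from the $\|\mu-\mu_t^{(s)}\|_2^2$ cross term in the decomposition $(x_i-\mu_t^{(s)})(x_i-\mu_t^{(s)})^\top = (x_i-\mu)(x_i-\mu)^\top + \cdots$. The argument is salvageable precisely because this extra $\alpha\|M(S_t^{(s)})-{\mathbf I}\|_2$ term is itself dominated by $\psi/1000 \approx \|M(S_t^{(s)})-{\mathbf I}\|_2/5500$ when $\alpha$ is below some absolute constant $c$, but identifying and carrying this spectrum-dependent term through to the final comparison with $\psi$ is exactly the technical content of the paper's proof; pretending the mean shift is $O(\sqrt{\alpha\log(1/\alpha)})$ outright skips it. (A smaller issue: in your displayed identity for $\tfrac{1}{n}\sum_{i\in S_{\rm good}\cap S_t^{(s)}}(\tau_i-1)$, since $\Tr(U_t^{(s)})=1$ the left-hand side already equals $\bigl\langle U_t^{(s)},\,M_{g,t}-\tfrac{|S_{\rm good}\cap S_t^{(s)}|}{n}\mathbf{I}\bigr\rangle$; the appended $(\tfrac{|S_{\rm good}\cap S_t^{(s)}|}{n}-1)$ makes the equality false.)
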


Then by Lemma~\ref{lemma:1dfilter} our {\sc DPthreshold} gives us a threshold $\rho$ such that 
	\begin{eqnarray*}
	    \sum_{i\in S_{\rm good}\cap {\cal T}_{2\alpha}} \textbf{1}\{\tau_i\le \rho \}\frac{\tau_i}{\rho}+ \textbf{1}\{\tau_i > \rho \}   \le \sum_{i\in S_{\rm bad}\cap {\cal T}_{2\alpha}} \textbf{1}\{\tau_i\le \rho \}\frac{\tau_i}{\rho} + \textbf{1}\{\tau_i > \rho \}\;.
	\end{eqnarray*}
	
Conditioned on the hypotheses and the claims of Lemma~\ref{lemma:1dfilter}, according to our filter rule from Algorithm~\ref{alg:DPMMWfilter}, we have 
	\begin{eqnarray*}
	    \E|(S_t^{(s)}\setminus S_{t+1}^{(s)})\cap S_{\rm good}| \;=\; \sum_{i\in S_{\rm good}\cap {\cal T}_{2\alpha}} \textbf{1}\{\tau_i\le \rho \}\frac{\tau_i}{\rho}+ \textbf{1}\{\tau_i > \rho \} 
	\end{eqnarray*}
	and \begin{eqnarray*}
	    \E|(S_t^{(s)}\setminus S_{t+1}^{(s)})\cap S_{\rm bad}| \;=\; \sum_{i\in S_{\rm bad}\cap {\cal T}_{2\alpha}} \textbf{1}\{\tau_i\le \rho \}\frac{\tau_i}{\rho} + \textbf{1}\{\tau_i > \rho \}\;.
	\end{eqnarray*}

This implies $\E|(S_t^{(s)}\setminus S_{t+1}^{(s)})\cap S_{\rm good}|\leq  \E|(S_t^{(s)}\setminus S_{t+1}^{(s)})\cap S_{\rm bad}|$. 
At the same time, Lemma~\ref{lemma:1dfilter} gives us a $\rho$ such that
	with probability $1-O(\log^3 d)$
	\begin{eqnarray*}
	    \frac{1}{n}\sum_{i\in S_{t+1}^{(s)}}(\tau_i-1)-2\alpha\leq \frac{1}{n}\sum_{ \tau_i\leq \rho }(\tau_i-1) \leq \frac34\cdot\frac{1}{n}\sum_{i\in S_{t}^{(s)}}(\tau_i-1) \;.
	\end{eqnarray*}

	Hence, we have
	\begin{eqnarray*}
		\ip{M(S_{t}^{(s)})-{\mathbf I}}{U_t^{(s)}} - \ip{M(S_{t+1}^{(s)})-{\mathbf I}}{U_t^{(s)}} &= & 
		\frac1n\sum_{i\in S_{t}^{(s)}\setminus S_{t+1}^{(s)}}(\tau_i-1)\\
		&\geq & \frac{1}{4n}\sum_{i\in S_t^{(s)}}(\tau_i-1)-2\alpha\\
		&\overset{(a)}{\geq} & \frac{1}{4}\cdot \frac{998}{1000}\ip{M(S_{t}^{(s)})-{\mathbf I}}{U_t^{(s)}}\;,
	\end{eqnarray*}
	where $(a)$ follows from our assumption on $\lambda_t$ and stopping criteria. Rearranging the  terms completes the proof.
\end{proof}

\subsubsection{Proof of Lemma~\ref{lemma:good_tau}}
\begin{proof}[Proof of Lemma~\ref{lemma:good_tau}]
First of all, Lemma~\ref{lemma:lap_noise}, Lemma~\ref{lemma:chi-square} and Lemma~\ref{lemma:matrix_sp_norm} gives us following Lemma~\ref{lemma:samples_need}, which basically shows with enough samples, we can make sure the noises added for privacy guarantees are small enough with probability $1-O(1/\log^3 d)$.
\begin{lemma}
\label{lemma:samples_need}
For $\alpha \in (0, 0.5)$, if $n \gtrsim \frac{B^2 (\log B) d^{3/2}\log(1/\delta)}{\varepsilon\alpha}$ and $n^{(s)}>3n/4$
then we have with probability $1-O(1/\log^3 d)$, following conditions simultaneously hold:
\begin{enumerate}
    \item $\|\mu_t^{(s)}-\mu(S_{t}^{(s)})\|_2^2\leq 0.001 \alpha\log1/\alpha$
    \item $|\psi_t^{(s)}-\ip{M(S_t^{(s)})-{\mathbf I}}{U_t^{(s)}}|\leq  0.001 \alpha\log1/\alpha$
    \item $\left|\lambda_t^{(s)}-\|M(S_t^{(s)})-{\mathbf I}\|_2\right|\leq 0.001\alpha\log1/\alpha$
    \item $	\left|\lambda^{(s)}-\|M(S^{(s)})-{\mathbf I}\|_2\right|\leq 0.001\alpha\log1/\alpha$
    \item $\left\|M(S_{t+1}^{(s)})-\Sigma_t^{(s)}\right\|_2\leq 0.001\alpha\log1/\alpha $
    \item $\|\mu^{(s)}-\mu(S^{(s)})\|_2^2\leq 0.001\alpha\log1/\alpha$
\end{enumerate}
\end{lemma}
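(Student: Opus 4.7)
The plan is to treat the six bounds as six separate concentration statements about the privacy noise injected into each query, and then take a union bound. The conditions fall into three categories according to the noise mechanism: $(i)$ items $1$ and $6$ concern the $\ell_2$ length of a $d$-dimensional spherical Gaussian vector added to the means $\mu_t^{(s)}$ and $\mu^{(s)}$; $(ii)$ items $2$, $3$, and $4$ concern a scalar Laplace perturbation of an inner product or spectral norm; $(iii)$ item $5$ concerns the spectral norm of a symmetric $d\times d$ Gaussian noise matrix injected into $\Sigma_t^{(s)}$. So after identifying the scale of each noise, I would match the tail bound to $0.001\alpha\log(1/\alpha)$ and read off the sample complexity each bound imposes; the stated $n$ will be chosen as the largest of these.

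For items $1$ and $6$, the per-coordinate standard deviation is $2B\sqrt{2d\log(1.25/\delta)}/(n\varepsilon)$, so $\chi^2$ concentration (Lemma~\ref{lemma:chi-square}) gives $\|\text{noise}\|_2^2 \lesssim (B^2 d^2/(n\varepsilon)^2)\log(1/\delta)\log(1/\zeta)$, which is $\leq 0.001\alpha\log(1/\alpha)$ once $n \gtrsim Bd\sqrt{\log(1/\delta)}/(\varepsilon\sqrt{\alpha\log(1/\alpha)})$. For items $2$, $3$, and $4$, the Laplace scale is $2B^2 d/(n\varepsilon)$, so the tail bound (Lemma~\ref{lemma:lap_noise}) gives deviation $\lesssim (B^2 d/(n\varepsilon))\log(1/\zeta)$, which suffices when $n \gtrsim B^2 d/(\varepsilon\alpha\log(1/\alpha))$. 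For item $5$, the $d\times d$ symmetric Gaussian noise matrix has per-entry standard deviation $4B^2 d\sqrt{2\log(1.25/\delta)}/(n\varepsilon)$, and Lemma~\ref{lemma:matrix_sp_norm} bounds its spectral norm by $O(B^2 d^{3/2}\sqrt{\log(1/\delta)\log(d/\zeta)}/(n\varepsilon))$; matching this to $0.001\alpha\log(1/\alpha)$ gives $n \gtrsim B^2 d^{3/2}\log(1/\delta)/(\varepsilon\alpha\log(1/\alpha))$ after absorbing logarithmic factors into the $\log B$ in the statement. This last bound dominates and is what dictates the stated sample complexity.

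A small subtlety for item $5$ is that the quantity bounded is $\|M(S_{t+1}^{(s)}) - \Sigma_t^{(s)}\|_2$ rather than $\|M(S_t^{(s)}) - \Sigma_t^{(s)}\|_2$, so I would split this as $\|M(S_{t+1}^{(s)}) - M(S_t^{(s)})\|_2 + \|M(S_t^{(s)}) - \Sigma_t^{(s)}\|_2$. The second term is exactly the spectral norm of the injected Gaussian matrix handled above; the first is a deterministic change of at most $O(B^2 d\cdot|S_t^{(s)}\setminus S_{t+1}^{(s)}|/n)$, which is small because the filter removes at most $O(\alpha n)$ points across all iterations and each removal contributes at most $B^2 d/n$ to the Frobenius (hence spectral) norm. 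Taking $\zeta = 1/\log^3 d$ and union-bounding over the six events, and using $n^{(s)} > 3n/4 \Rightarrow |S^{(s)}| \geq n/2$ (so that the sensitivities used in each mechanism are valid), completes the argument.

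The main obstacle I anticipate is item $5$: correctly accounting for the $\log B$ and $\log(1/\delta)$ factors in the spectral-norm bound and reconciling the $\sqrt{\log(1/\delta)}$ that naturally appears from the Gaussian mechanism with the full $\log(1/\delta)$ that appears in the hypothesis on $n$. This likely comes from a polylogarithmic loss in Lemma~\ref{lemma:matrix_sp_norm} together with the logarithmic failure probability $1/\log^3 d$, both of which I would carefully track and fold into the $\log B$ placeholder.
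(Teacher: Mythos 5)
Your overall plan---treat the six items as six noise-concentration bounds and union-bound over them---is the same as the paper's, which disposes of the lemma in one line by citing Lemma~\ref{lemma:lap_noise}, Lemma~\ref{lemma:chi-square}, and Lemma~\ref{lemma:matrix_sp_norm} and observing that ``the noises added for privacy guarantees are small enough.'' Your matching of items $1$ and $6$ to $\chi^2$ concentration, items $2$--$4$ to the Laplace tail, and the identification of item $5$ as the dominant constraint on $n$ are all consistent with the paper's intent.

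The gap is in your treatment of item $5$. You decompose $\|M(S_{t+1}^{(s)}) - \Sigma_t^{(s)}\|_2 \le \|M(S_{t+1}^{(s)}) - M(S_t^{(s)})\|_2 + \|M(S_t^{(s)}) - \Sigma_t^{(s)}\|_2$ and assert the deterministic first term is small because ``the filter removes at most $O(\alpha n)$ points\dots and each removal contributes at most $B^2 d/n$.'' That yields a bound of $O(\alpha B^2 d)$, which is far larger than the required $0.001\,\alpha\log(1/\alpha)$: here $B^2 d = \Theta(d\log(dn/\zeta))$, so the two differ by a polynomial-in-$d$ factor. And the bound cannot be improved in general---the adversary can plant $\Theta(\alpha n)$ points at radius $\Theta(B\sqrt{d})$ from the mean that all get removed in a single filtering step, so $\|M(S_t^{(s)})-M(S_{t+1}^{(s)})\|_2 = \Theta(\alpha B^2 d)$ is attainable. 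The resolution is that item $5$ is intended purely as a bound on the spectral norm of the injected symmetric Gaussian noise matrix, i.e.\ $\|M(S_t^{(s)}) - \Sigma_t^{(s)}\|_2$; this is exactly how it is invoked in the proof of Lemma~\ref{lemma:progress_epoch} (to control $\|\Sigma_{t+1}^{(s)} - M(S_{t+1}^{(s)})\|_2$), and the mismatched subscript on $S$ in the lemma's statement is an indexing typo, not a real proof obligation. Dropping your deterministic-change term and applying Lemma~\ref{lemma:matrix_sp_norm} to the noise matrix alone, exactly as you already do, closes the argument; everything else in your proposal is correct.
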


Now under above conditions, since $\lambda_1^{(s)} > C \alpha\log1/\alpha$, we have $\|M(S_{t}^{(s)})-{\mathbf I}\|_2> 0.5(C-0.002) \alpha\log1/\alpha $.
Using the fact that $ \mu(S^{(s)}_t) = (1/n)\sum_{i\in S_t^{(s)}} x_i $, we also have
\begin{eqnarray*}
&& \frac1n\sum_{i\in S_t^{(s)}}(\tau_i-1)\\
&=&\frac{1}{n}\sum_{i\in S_{t}^{(s)}}\ip{\left(x_i-\mu_t^{(s)}\right)\left(x_i-\mu_t^{(s)}\right)^\top-{\mathbf I}}{U_t^{(s)}}\\
	&=&\frac{1}{n}\sum_{i\in S_{t}^{(s)}}\ip{\left(x_i-\mu(S_t^{(s)})\right)\left(x_i-\mu(S_t^{(s)})\right)^\top-{\mathbf I}}{U_t^{(s)}}\\
	&&\;\;+\;\;\;\;\frac{|S_t^{(s)}|}{n}\ip{\left(\mu( S_t^{(s)})-\mu_t^{(s)}\right)\left(\mu(S_t^{(s)})-\mu_t^{(s)}\right)^\top}{U_t^{(s)}}\\
	&= &\ip{M(S_t^{(s)})-{\mathbf I}}{U_t^{(s)}}+\frac{|S_t^{(s)}|}{n}\ip{\left(\mu( S_t^{(s)})-\mu_t^{(s)}\right)\left(\mu(S_t^{(s)})-\mu_t^{(s)}\right)^\top}{U_t^{(s)}}\;.
\end{eqnarray*} 
Thus, from the first and the second claims in Lemma~\ref{lemma:samples_need},  we have
\begin{eqnarray}
	|\psi -\psi_t^{(s)}|\leq 0.002\;\alpha \log1/\alpha\;. \label{eq:boundG7}
\end{eqnarray}


For an epoch $s$ and an iteration $t$, since $\alpha n \leq S_{\rm good}\cap {\cal T}_{2\alpha}\cap S_t^{(s)}\leq 2\alpha n$, we have
	\begin{eqnarray*}
		&&\frac{1}{n} \sum_{i\in S_{\rm good}\cap{\cal T}_{2\alpha}\cap S_t^{(s)}} \tau_i 
		\;\; =\;\; \frac{1}{n} \sum_{i\in S_{\rm good}\cap{\cal T}_{2\alpha}\cap S_t^{(s)}} \ip{(x_i-\mu_t^{(s)})(x_i-\mu_t^{(s)})^\top}{U_t^{(s)}}\\
		&\overset{(a)}{\leq} & \frac{2}{n} \sum_{i\in S_{\rm good}\cap{\cal T}_{2\alpha}\cap S_t^{(s)}} \ip{(x_i-\mu)(x_i-\mu)^\top}{U_t^{(s)}}
		+\frac{2|S_{\rm good}\cap {\cal T}_{2\alpha}\cap S_t^{(s)}|}{n} \ip{(\mu-\mu_t^{(s)})(\mu-\mu_t^{(s)})^\top}{U_t^{(s)}} \\
		&\overset{(b)}{\leq} & O(\alpha \log 1/\alpha) +4\alpha \ip{(\mu-\mu_t^{(s)})(\mu-\mu_t^{(s)})^\top}{U_t^{(s)}} \\
		& \leq &O(\alpha \log 1/\alpha)+ 4\alpha \|\mu_t^{(s)}-\mu\|_2^2\\
		&\leq &O(\alpha \log 1/\alpha)+ 4\alpha \left(\|\mu-\mu(S_t^{(s)})\|_2+\|\mu(S_t^{(s)})-\mu_t^{(s)}\|_2\right)^2\\
		&\overset{(c)}{\leq}&O\left(\alpha\log1/\alpha\right)+ 4\alpha \left(O\left(\alpha\sqrt{\log1/\alpha}\right)+\sqrt{\alpha\left(O\left(\alpha\log1/\alpha\right)+\|M(S_{t}^{(s)})-{\mathbf I}\|_2\right)}+\|\mu(S_t^{(s)})-\mu_t^{(s)}\|_2\right)^2\\
		&\leq & O(\alpha \log 1/\alpha)+8\alpha^2\left(\|M(S_{t}^{(s)})-{\mathbf I}\|_2+O\left(\alpha\log1/\alpha\right)\right)+O(8\alpha^3 \log 1 / \alpha)+8\alpha^2\log1/\alpha\\
		&\overset{(d)}{\leq} & \frac{1}{ 1000}\left(\frac{\|M(S_{t}^{(s)})-{\mathbf I}\|_2-0.001\;\alpha\log1/\alpha}{5.5}-0.002\;\alpha\log1/\alpha\right)\\
		&\leq & \frac{\psi_t^{(s)}-0.002\;\alpha\log1/\alpha}{1000}\\
		&\leq & \frac{\psi}{1000}\;,
	\end{eqnarray*}
	
	where $(a)$ follows from the fact that for any vector $x, y, z$, we have $(x-y)(x-y)^{\top} \preceq 2(x-z)(x-z)^{\top}+2(y-z)(y-z)^{\top}$, $(b)$ follows from Lemma~\ref{lemma:gaussian_subset}, $(c)$ follows from Lemma~\ref{thm:reg}, $(d)$ follows from our choice of large constant $C$, and in the last inequality we used Eq.~\eqref{eq:boundG7}.

Similarly we have
\begin{eqnarray*}
	&&\frac{1}{n}\sum_{i\in S_{\rm good}\cap S_{t}^{(s)}}(\tau_i-1)\\
	&=&\frac{1}{n} \sum_{i\in S_{\rm good}\cap S_{t}^{(s)}} \ip{(x_i-\mu_t^{(s)})(x_i-\mu_t^{(s)})^\top-{\mathbf I}}{U_t^{(s)}} \\
	&= & \frac{1}{n} \sum_{i\in S_{\rm good}\cap S_{t}^{(s)}} \ip{\left(x_i-\mu(S_{\rm good}\cap S_t^{(s)})\right)\left(x_i-\mu(S_{\rm good}\cap S_t^{(s)})\right)^\top-{\mathbf I}}{U_t^{(s)}}\\
	&&+\frac{|S_{\rm good}\cap S_{t}^{(s)}|}{n}\ip{\left(\mu(S_{\rm good}\cap S_t^{(s)})-\mu_t^{(s)}\right)\left(\mu(S_{\rm good}\cap S_t^{(s)})-\mu_t^{(s)}\right)^\top}{U_t^{(s)}}\\
	&\overset{(a)}{\leq} & O\left(\alpha\log1/\alpha\right)+ \left\|\mu(S_{\rm good}\cap S_t^{(s)})-\mu_t^{(s)}\right\|_2^2\\
	&\leq & O\left(\alpha\log1/\alpha\right)+ \left(\left\|\mu(S_{\rm good}\cap S_t^{(s)})-\mu\right\|_2+\left\|\mu-\mu(S_t^{(s)})\right\|_2\right)^2+0.001\;\alpha\log1/\alpha\\
	&\overset{(b)}{\leq} & O\left(\alpha\log1/\alpha\right) +\left(O(\alpha\sqrt{\log1/\alpha})+\sqrt{\alpha(\|M(S_t^{(s)})-{\mathbf I}\|_2+O(\alpha\log 1/\alpha))}\right)^2+0.001\;\alpha\log1/\alpha\\
	&\leq & O\left(\alpha\log1/\alpha\right)+\alpha\left(\|M(S_{t}^{(s)})-{\mathbf I}\|_2+O\left(\alpha\log1/\alpha\right)\right)+O(\alpha^2 \log 1 / \alpha)++0.001\;\alpha\log1/\alpha\\
	&\overset{(c)}{\leq} &  \frac{1}{ 1000}\left(\frac{\|M(S_{t}^{(s)})-{\mathbf I}\|_2-0.001\;\alpha\log1/\alpha}{5.5}-0.002\;\alpha\log1/\alpha\right)\\
	&\leq & \frac{\psi_t^{(s)}-0.002\;\alpha\log1/\alpha}{1000}\\
		&\leq & \frac{\psi}{1000}\;,
	\end{eqnarray*}
	where $(a)$ follows from Lemma~\ref{lemma:gaussian_subset}, $(b)$ follows from Lemma~\ref{lemma:gaussian_subset2} and Lemma~\ref{thm:reg} and $(c)$ follows from our choice of large constant $C$.

\end{proof}

\subsubsection{Proof of Lemma~\ref{lemma:progress_epoch}}
\label{sec:proof_progress_epoch}

\begin{proof}[Proof of Lemma~\ref{lemma:progress_epoch}]


Under the conditions of Lemma~\ref{lemma:samples_need}, we have picked $n$ large enough such that with probability $1-O(1/\log^3 d)$, we have
\begin{eqnarray*}
	\|\Sigma_{t+1}^{(s)}-{\mathbf I}\|_2\approx_{0.01}\|M(S_{t+1}^{(s)})-{\mathbf I}\|_2\;.
\end{eqnarray*}

By Lemma~\ref{lemma:progress}, we now have
\begin{eqnarray}
		\ip{M(S_{t+1}^{(s)})-{\mathbf I}}{U_{t}^{(s)}}&\leq & 0.76\ip{M(S_{t}^{(s)})-{\mathbf I}}{U_{t}^{(s)}}\nonumber \\
	&\leq & 0.76\ip{M(S_{1}^{(s)})-{\mathbf I}}{U_{t}^{(s)}} \nonumber\\
	&\leq & 0.76\|M(S_{1}^{(s)})-{\mathbf I}\|_2\;. \label{eq:boundG4}
	\end{eqnarray}
 Since $\lambda_1^{(s)} > C \alpha\log1/\alpha$, we have $\|M(S_{t+1}^{(s)})-{\mathbf I}\|_2> 0.5(C-0.002) \alpha\log1/\alpha $. Combining the above inequality and the fifth claim of Lemma~\ref{lemma:samples_need} together, we have
\begin{eqnarray*}
	\ip{\Sigma_{t+1}^{(s)}-{\mathbf I}}{U_{t}^{(s)}} \leq \ip{M(S_{t+1}^{(s)})-{\mathbf I}}{U_{t}^{(s)}}+\|\Sigma_{t+1}^{(s)}-M(S_{t+1}^{(s)})\|_2\leq 0.77\|M(S_{1}^{(s)})-{\mathbf I}\|_2\;.
\end{eqnarray*}
	
	By Lemma~\ref{lemma:mono}, we have $M(S_{t+1}^{(s)})-{\mathbf I} \preceq M(S_{1}^{(s)})-{\mathbf I}$. By our choice of $\alpha^{(s)}$, we have $\alpha^{(s)}\left(M(S_{t+1}^{(s)})-{\mathbf I}\right)\preceq \frac{1}{100}{\mathbf I}$ and $\alpha^{(s)}\left(\Sigma_{t+1}^{(s)}-{\mathbf I}\right)\preceq \frac{1}{100}{\mathbf I}$. Therefore, by Lemma~\ref{lemma:regret_bound}, we have
	\begin{eqnarray*}
		&&\left\|\sum_{t=1}^{T_2}\Sigma_{t+1}^{(s)}-{\mathbf I}\right\|_{2} \\
		&\leq & \sum_{t=1}^{T_2} \ip{ \Sigma_{t+1}^{(s)}-{\mathbf I}}{U_{t}^{(s)}}+\alpha^{(s)}\sum_{t=1}^{T_2}\ip{ U_{t}^{(s)}}{\left|\Sigma_{t+1}^{(s)}-{\mathbf I}\right|} \|\Sigma_{t+1}^{(s)}-{\mathbf I}\|_2+\frac{\log (d)}{\alpha^{(s)}}\\
		&\overset{(a)}{\leq} & \sum_{t=1}^{T_2} \ip{ \Sigma_{t+1}^{(s)}-{\mathbf I}}{U_{t}^{(s)}}+\frac{1}{100}\sum_{t=1}^{T_2}\ip{ U_{t}^{(s)}}{\left|\Sigma_{t+1}^{(s)}-{\mathbf I}\right|} +200\log(d)\|M(S_1^{(s)})-{\mathbf I}\|_2\\
	\end{eqnarray*}
	where $(a)$ follows from our choice of $\alpha^{(s)}$ and  $C$. 
	By Lemma~\ref{lem:variance_lower}, $M(S_{t+1}^{(s)})-{\mathbf I}\succeq -c_1\alpha\log1/\alpha\cdot I$ for $t=1,2,\cdots,T_2$, we have
	\begin{eqnarray*}
		|M(S_{t+1}^{(s)})-{\mathbf I}|\preceq M(S_{t+1}^{(s)})-{\mathbf I}+2c_1\alpha\log1/\alpha \; {\mathbf I},
	\end{eqnarray*}
	and hence
	\begin{eqnarray*}
	\ip{ U_{t}^{(s)}}{\left|M(S_{t+1}^{(s)})-{\mathbf I}\right|} \leq \ip{ U_{t}^{(s)}}{M(S_{t+1}^{(s)})-{\mathbf I}}+2c_1\alpha\log1/\alpha
	\end{eqnarray*}
	
	Meanwhile, we have
	\begin{eqnarray*}
		 M(S_{t+1}^{(s)})-{\mathbf I}-\|\Sigma_{t+1}^{(s)}-M(S_{t+1}^{(s)})\|_2\; {\mathbf I}\preceq \Sigma_{t+1}^{(s)}-{\mathbf I}\preceq M(S_{t+1}^{(s)})-{\mathbf I}+\|\Sigma_{t+1}^{(s)}-M(S_{t+1}^{(s)})\|_2\; {\mathbf I}\;.
	\end{eqnarray*}
	Hence, 
	\begin{eqnarray*}
		|\Sigma_{t+1}^{(s)}-{\mathbf I}|\preceq M(S_{t+1}^{(s)})-{\mathbf I}+(3\|\Sigma_{t+1}^{(s)}-M(S_{t+1}^{(s)})\|_2+2c_1\alpha\log1/\alpha)\; {\mathbf I}\\
	\end{eqnarray*}
	Together with Eq.~\eqref{eq:boundG4}, we have 
	\begin{eqnarray*}
		&&\ip{ U_{t}^{(s)}}{\left|\Sigma_{t+1}^{(s)}-{\mathbf I}\right|} \\
		&\leq &\ip{ U_{t}^{(s)}}{M(S_{t+1}^{(s)})-{\mathbf I}}+3\|\Sigma_{t+1}^{(s)}-M(S_{t+1}^{(s)})\|_2+2c_1\alpha \log 1/\alpha\\
		&\leq & 0.79\;\left\|M(S_1^{(s)})-{\mathbf I}\right\|_2+2c_1\alpha\log1/\alpha\;.
	\end{eqnarray*}

	By Lemma~\ref{lem:variance_lower}, we have $M(S_{t+1}^{(s)})-{\mathbf I}\succeq -c_1\alpha\log1/\alpha\; {\mathbf I}$. Also, we know $M(S_{t+1}^{(s)})-{\mathbf I} \preceq M(S_{1}^{(s)})-{\mathbf I}$. Then we have

\begin{eqnarray*}
&&		\left\|M(S_{T_2+1}^{(s)})-{\mathbf I}\right\|_2\\
	&\leq &\frac{1}{T_2} \left\|\sum_{i=1}^{T_2}M({S_{t+1}^{(s)}})-{\mathbf I}\right\|_{2} \\
	&\leq & \frac{1}{T_2}\left\|\sum_{i=1}^{T_2}\Sigma_{t+1}^{(s)}-{\mathbf I}\right\|_{2}+0.001\;\alpha\log1/\alpha\\
	&\leq & \frac{1}{T_2}\left(\sum_{t=1}^{T_2} \ip{ \Sigma_{t+1}^{(s)}-{\mathbf I}}{U_{t}^{(s)}}+\frac{1}{100}\sum_{t=1}^{T_2}\ip{ U_{t}^{(s)}}{\left|\Sigma_{t+1}^{(s)}-{\mathbf I}\right|} +200\log(d)\|M(S_1^{(s)})-{\mathbf I}\|_2\right)+0.001\;\alpha\log1/\alpha\\
	&\leq & 0.79\|M(S_1^{(s)})-{\mathbf I}\|_2+2c_1\alpha\log1/\alpha +\frac{200\log(d)}{T_2}\|M(S_1^{(s)})-{\mathbf I}\|_2+0.001\;\alpha\log1/\alpha\\
	&\leq & 0.98\; \|M(S_{1}^{(s)})-{\mathbf I}\|_2\;,
	\end{eqnarray*}
	where the last inequality follows from our assumption that  $\lambda_0^{(s)} > C \alpha\log1/\alpha$, and conditions of Lemma~\ref{lemma:samples_need} hold and we have $\|M(S_{t+1}^{(s)})-{\mathbf I}\|_2> 0.5(C-0.002) \alpha\log1/\alpha $.
\end{proof}

\newpage
\section{Technical lemmas}
\label{sec:technicallemmas}

\subsection{Lemmata for sub-Gaussian regularity from \cite{dong2019quantum} }
\begin{lemma}[{\cite[Lemma 3.4]{dong2019quantum}} ]\label{lemma:mono}
If $S'\subset S$, then $M(S')\preceq M(S)$.
\end{lemma}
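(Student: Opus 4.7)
The plan is to exploit the well-known ``variance is minimized at the mean'' identity in its PSD-matrix form: for any finite set $T \subseteq [n]$ and any vector $y \in \reals^d$,
\begin{equation*}
\sum_{i \in T}(x_i - y)(x_i - y)^\top \;=\; \sum_{i \in T}(x_i - \mu(T))(x_i - \mu(T))^\top \;+\; |T|\,(\mu(T) - y)(\mu(T) - y)^\top,
\end{equation*}
which follows from expanding $(x_i - y) = (x_i - \mu(T)) + (\mu(T) - y)$ and noting that the cross term $\sum_{i \in T}(x_i - \mu(T))$ vanishes by the definition of $\mu(T)$. Since the last term is a PSD rank-one matrix, this immediately yields $\sum_{i \in T}(x_i - y)(x_i - y)^\top \succeq \sum_{i \in T}(x_i - \mu(T))(x_i - \mu(T))^\top$ for every $y$.

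Given $S' \subseteq S$, I would split the defining sum of $M(S)$ into the $S'$ part and the $S \setminus S'$ part, both recentered at $\mu(S)$:
\begin{equation*}
n \cdot M(S) \;=\; \sum_{i \in S'}(x_i - \mu(S))(x_i - \mu(S))^\top \;+\; \sum_{i \in S \setminus S'}(x_i - \mu(S))(x_i - \mu(S))^\top.
\end{equation*}
The second sum is PSD (it is a sum of rank-one PSD outer products), so discarding it gives a lower bound in Loewner order. Applying the identity above to $T = S'$ with $y = \mu(S)$ then shows that the first sum dominates $\sum_{i \in S'}(x_i - \mu(S'))(x_i - \mu(S'))^\top = n \cdot M(S')$. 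Chaining the two $\succeq$ relations and dividing by $n$ yields $M(S) \succeq M(S')$.

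The argument requires no delicate estimates; the only step to verify carefully is the variance decomposition identity, and the fact that both $M(S)$ and $M(S')$ use the \emph{same} normalization $1/n$ (rather than $1/|S|$ and $1/|S'|$, which would invalidate monotonicity) as specified in Appendix~\ref{sec:prime_appendix}. With this normalization the monotonicity is essentially immediate, so I do not anticipate any real obstacle; the proof fits in a few lines.
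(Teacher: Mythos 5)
Your proof is correct and is the standard argument for this fact (the paper simply cites it from Dong et al.; the cited Lemma 3.4 is proved in essentially the same way). You also correctly flag the one subtle point: the shared normalization $1/n$ in the definition of $M(\cdot)$ is what makes the monotonicity go through.
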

\begin{definition}[{\cite[Definition 4.1]{dong2019quantum}} ]
\label{def:asubgaussiangood}
Let $D$ be a distribution with mean $\mu\in \reals^d$ and covariance ${\mathbf I}$. For $0<\alpha<1/2$, we say a set of points $S=\{X_1, X_2,\cdots, X_n\}$ is $\alpha$-subgaussian good with respect to $\mu\in \reals^d$ if   following inequalities are satisfied:
\begin{itemize}
	\item $\|\mu(S)-\mu\|_2\lesssim \alpha\sqrt{\log1/\alpha} $ and $\left\|\frac{1}{|S|} \sum_{i \in S}\left(X_{i}-\mu(S)\right)\left(X_{i}-\mu(S)\right)^{\top}-{\mathbf I}\right\|_{2} \lesssim \alpha \log 1/\alpha$.
	\item for any subset $T\subset S$ so that $|T|=2\alpha |S|$, we have
	\begin{eqnarray*}
\left\| \frac{1}{|T|} \sum_{i \in T} X_{i}-\mu\right\|_2 \lesssim \sqrt{\log1/\alpha} \;\;\text{and}\;\;\left\|\frac{1}{|T|} \sum_{i \in T}\left(X_{i}-\mu(S)\right)\left(X_{i}-\mu(S)\right)^{\top}-{\mathbf I}\right\|_{2} \lesssim\log1/\alpha\;.
\end{eqnarray*}
\end{itemize}	
\end{definition}

\begin{lemma}[{\cite[Lemma 4.1]{dong2019quantum}} ]
\label{lemma:alpha-good-sample-complexity}
	A set of i.i.d. samples from an identity covariance sub-Gaussian distribution of size $n=\Omega\left(\frac{d+\log 1 / \delta}{\alpha^{2} \log 1 / \alpha}\right)$ is $\alpha$-subgaussian good with respect to $\mu$ with probability $1-\delta$.
\end{lemma}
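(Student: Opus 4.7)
The plan is to verify each of the three items in Definition~\ref{def:asubgaussiangood} for $S=\{X_1,\dots,X_n\}$ drawn i.i.d.\ from a $1$-sub-Gaussian distribution with identity covariance, using standard sub-Gaussian concentration for items (i)--(ii) and a one-dimensional order-statistics (resilience) argument for item (iii), together with an $\varepsilon$-net on $\mathbb{S}^{d-1}$.

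For the full-sample bounds, I would first apply standard sub-Gaussian vector concentration (the scalar Hoeffding tail combined with a $(1/4)$-net on $\mathbb{S}^{d-1}$) to conclude $\|\mu(S)-\mu\|_2 \lesssim \sqrt{d/n}+\sqrt{\log(1/\delta)/n}$, which is $O(\alpha\sqrt{\log 1/\alpha})$ at the stated sample size. For the covariance, I would invoke matrix concentration for sub-Gaussian samples (e.g.\ Vershynin's net-plus-Bernstein argument) to get $\|(1/n)\sum_i (X_i-\mu)(X_i-\mu)^\top -\mathbf{I}\|_2 \lesssim \sqrt{d/n}+d/n$; under the hypothesis this is $O(\alpha\log 1/\alpha)$, and translating the center from $\mu$ to $\mu(S)$ costs only $\|\mu-\mu(S)\|_2^2$, which is absorbed.

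The main technical step is item (iii), the subset bounds. I would fix a direction $v\in\mathbb{S}^{d-1}$ and work with the scalar scores $\xi_i^{(v)}=v^\top(X_i-\mu)$, which are $1$-sub-Gaussian. The crux is that for each $v$ the worst subset $T$ of size $2\alpha n$ is simply the one selecting the top $2\alpha n$ values of $\xi_i^{(v)}$ (for the mean) or of $(\xi_i^{(v)})^2$ (for the covariance), so the supremum over $T$ reduces to an order-statistics computation rather than a combinatorial union bound. A dyadic union bound over thresholds shows that with high probability the number of samples with $\xi_i^{(v)}>t$ is at most $2ne^{-t^2/2}$; inverting this gives an order-statistic bound $\xi_{(k)}^{(v)}\lesssim\sqrt{\log(n/k)}$, and hence
\begin{equation*}
\frac{1}{2\alpha n}\sum_{k=1}^{2\alpha n}\xi_{(k)}^{(v)}\;\lesssim\;\frac{1}{2\alpha}\int_0^{2\alpha}\sqrt{\log(1/u)}\,du\;=\;O(\sqrt{\log 1/\alpha}).
\end{equation*}
The analogous sub-exponential computation for $(\xi_i^{(v)})^2$ yields the $O(\log 1/\alpha)$ bound for the subset covariance along direction $v$.

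The remaining obstacle is uniformity in $v$. I would pass to a $(1/4)$-net $\mathcal{N}$ of $\mathbb{S}^{d-1}$ of cardinality $e^{O(d)}$, apply the scalar tail bound above to each $v\in\mathcal{N}$ with failure probability $\delta/|\mathcal{N}|$, and then deduce the full supremum via the standard $\|\cdot\|_2 \le 2\sup_{v\in\mathcal{N}}\langle v,\cdot\rangle$ argument (for vectors) and its quadratic-form analogue (for PSD matrices). The $\log|\mathcal{N}|=O(d)$ contribution from the net is exactly what produces the $d/(\alpha^2\log 1/\alpha)$ factor in the sample size, while the $\log(1/\delta)/(\alpha^2\log 1/\alpha)$ term comes from boosting the per-direction failure probability to $\delta$. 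Taking a final union bound over the three items and the two parts of (iii) completes the proof.
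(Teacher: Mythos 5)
The paper does not prove this lemma itself; it cites it directly from Dong et al.~[dong2019quantum, Lemma~4.1], so there is no in-paper proof to compare against. Your reconstruction---full-sample concentration for the first two bullet points, and a per-direction order-statistics ("resilience") argument combined with a net over $\mathbb{S}^{d-1}$ for the subset bounds---is the standard route to this kind of $\alpha$-goodness statement and is essentially the same argument used in the Dong et al. / Zhu--Jiao--Steinhardt line of work, so the approach is sound and not a genuinely different method.

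Two small things worth tightening if you were to write this out in full. First, for the covariance subset bound the matrices $M_T=\frac{1}{|T|}\sum_{i\in T}(X_i-\mu(S))(X_i-\mu(S))^\top-\mathbf{I}$ are symmetric but not PSD, so you should invoke the net bound $\|M\|_2 \le \frac{1}{1-2\epsilon}\sup_{v\in\mathcal{N}_\epsilon}|v^\top M v|$ for general symmetric matrices rather than the "PSD analogue" as you phrase it; a $\tfrac14$-net still gives a constant loss. Second, the dyadic tail-count argument $|\{i:\xi_i^{(v)}>t\}|\le 2ne^{-t^2/2}$ (and its sub-exponential analogue for $(\xi_i^{(v)})^2$) only gives useful Chernoff concentration when the expected count $ne^{-t^2/2}$ is at least of order $d+\log(1/\delta)$; for larger thresholds you need to cap the count by the value at the last useful threshold and control the remaining top-few contributions by a high-probability bound on $\max_i|\xi_i^{(v)}|\lesssim\sqrt{\log n}$, checking that their contribution to the average over $2\alpha n$ terms is negligible at the stated sample size. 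Both are routine, and neither invalidates your plan; the union bound over the net of size $e^{O(d)}$ is indeed what delivers the $(d+\log(1/\delta))/(\alpha^2\log(1/\alpha))$ rate once the dominant constraint from the full-sample mean bound is accounted for.
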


\begin{lemma}[{\cite[Fact 4.2]{dong2019quantum}}
 ]
 \label{lemma:gaussian_subset}
Let $S$ be an $\alpha$-corrupted sub-Gaussian dataset under Assumption~\ref{asmp:adversary}. If $S_{\rm good}$ is $\alpha$-subgaussian good with respect to $\mu\in \reals^d$, then for any $T\subset S$ such that $|T|\leq 2\alpha|S|$, we have for any unit vector $v\in \reals^d$
\begin{eqnarray*}
	\frac{1}{|S|}\sum_{X_i\in T}\ip{\left(X_i-\mu\right)}{v}^2\lesssim \alpha\log1/\alpha\;.
\end{eqnarray*}
For any subset $T\subset S$ such that $|T|\geq (1-2\alpha)|S|$, we have
\begin{eqnarray*}
	&&\left\|\frac{1}{|S|}\sum_{i\in T}(x_i-\mu)(x_i-\mu)^\top -{\mathbf I}\right\|_2\lesssim \alpha\log1/\alpha\; \text{ and } \;,\\
	&&\left\|\frac{1}{|S|}\sum_{i\in T}(x_i-\mu(T))(x_i-\mu(T))^\top -{\mathbf I}\right\|_2\lesssim \alpha\log1/\alpha
\end{eqnarray*}
\end{lemma}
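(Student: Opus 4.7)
My plan is to reduce both claims to the regularity conditions of $\alpha$-subgaussian goodness (Definition~\ref{def:asubgaussiangood}), exploiting the fact that any subset of size at most $2\alpha|S|$ can be extended to a subset of size exactly $2\alpha|S_{\rm good}|$ within $S_{\rm good}$, and any subset of size at least $(1-2\alpha)|S|$ has complement of size at most $2\alpha|S|$.

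For the first inequality, I would take $T \subset S_{\rm good}\cap S$ with $|T|\leq 2\alpha|S|$ (the bad samples contribute trivially through a separate resilience-type bound, or the statement is applied to the good portion as is done in the subsequent Lemma~\ref{lemma:good_tau} proof). I extend $T$ to $T'\subset S_{\rm good}$ with $|T'|=2\alpha|S_{\rm good}|$; then the second bullet of Definition~\ref{def:asubgaussiangood} gives $\bigl\|\frac{1}{|T'|}\sum_{i\in T'}(X_i-\mu(S_{\rm good}))(X_i-\mu(S_{\rm good}))^\top - {\mathbf I}\bigr\|_2 \lesssim \log(1/\alpha)$, which bounds $\frac{1}{|T'|}\sum_{i\in T'}\langle X_i-\mu(S_{\rm good}),v\rangle^2 \lesssim \log(1/\alpha)$. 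Using $\|\mu(S_{\rm good})-\mu\|_2\lesssim \alpha\sqrt{\log(1/\alpha)}$ from the first bullet and the inequality $\langle X_i-\mu,v\rangle^2 \leq 2\langle X_i-\mu(S_{\rm good}),v\rangle^2 + 2\|\mu(S_{\rm good})-\mu\|_2^2$, along with $|T'|=2\alpha n$, I obtain
\[
\frac{1}{|S|}\sum_{i\in T}\langle X_i-\mu,v\rangle^2 \;\leq\; \frac{1}{|S|}\sum_{i\in T'}\langle X_i-\mu,v\rangle^2 \;\lesssim\; \alpha\log(1/\alpha) + \alpha\cdot\alpha^2\log(1/\alpha)\;\lesssim\;\alpha\log(1/\alpha).
\]

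For the second inequality, I would write $\frac{1}{|S|}\sum_{i\in T}(x_i-\mu)(x_i-\mu)^\top = \frac{1}{|S|}\sum_{i\in S_{\rm good}}(x_i-\mu)(x_i-\mu)^\top - \frac{1}{|S|}\sum_{i\in S_{\rm good}\setminus T}(x_i-\mu)(x_i-\mu)^\top$ (restricting attention to the good portion, which is how this is used). The first term differs from ${\mathbf I}$ by $O(\alpha\log(1/\alpha))$ in spectral norm via the first bullet of Definition~\ref{def:asubgaussiangood} combined with $\|\mu(S_{\rm good})-\mu\|_2\lesssim \alpha\sqrt{\log(1/\alpha)}$ (using the rank-one perturbation $|S_{\rm good}|(\mu(S_{\rm good})-\mu)(\mu(S_{\rm good})-\mu)^\top$). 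The second term has $|S_{\rm good}\setminus T|\leq 2\alpha|S|$, so the first claim applied in the variational form $\sup_{\|v\|=1} v^\top(\cdot) v$ gives a spectral norm bound of $O(\alpha\log(1/\alpha))$. Combining these yields the desired bound around ${\mathbf I}$.

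Finally, for the $\mu(T)$-centered version, I would use the identity $\sum_{i\in T}(x_i-\mu(T))(x_i-\mu(T))^\top = \sum_{i\in T}(x_i-\mu)(x_i-\mu)^\top - |T|(\mu(T)-\mu)(\mu(T)-\mu)^\top$ and bound $\|\mu(T)-\mu\|_2^2$ by applying the first claim to the complement to control $\bigl\|\frac{1}{|S|}\sum_{i\in S_{\rm good}\setminus T}(x_i-\mu)\bigr\|_2$ via Jensen's inequality (or the first bullet condition). The main obstacle will be correctly handling the corrupted portion $T\cap S_{\rm bad}$: in the form in which this lemma is actually invoked later (restricted to intersections with $S_{\rm good}$), the adversarial samples drop out, so the argument reduces cleanly; if one instead wishes to state it for arbitrary $T\subset S$, the bad portion must be controlled by combining the filter's stopping invariant $|S_t\cap S_{\rm good}|\geq (1-10\alpha)n$ with a similar extension argument, which is the subtle point.
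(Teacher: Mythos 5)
The paper does not prove this lemma at all; it imports it verbatim as Fact~4.2 of \cite{dong2019quantum}, so there is no in-paper proof to compare against. Your derivation from Definition~\ref{def:asubgaussiangood} is the natural one and is essentially correct: extend a small set to exactly $2\alpha|S_{\rm good}|$ and invoke the second bullet, translate between centering at $\mu$ and at $\mu(S_{\rm good})$ via the parallelogram inequality together with $\|\mu(S_{\rm good})-\mu\|_2\lesssim\alpha\sqrt{\log(1/\alpha)}$, and pass from the first claim to the third via the rank-one identity $\sum_{i\in T}(x_i-\mu(T))(x_i-\mu(T))^\top=\sum_{i\in T}(x_i-\mu)(x_i-\mu)^\top-|T|(\mu(T)-\mu)(\mu(T)-\mu)^\top$, bounding $\|\mu(T)-\mu\|_2$ by Cauchy–Schwarz applied to the small complement. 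These are the right moves.

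You also correctly flag the genuine imprecision in the paper's restatement: as literally written, the first claim quantifies over all $T\subset S$, but an adversary can place $S_{\rm bad}$ far from $\mu$ along a chosen $v$, making $\frac{1}{|S|}\sum_{i\in T}\langle X_i-\mu,v\rangle^2$ unbounded for $T\subset S_{\rm bad}$. The lemma is only true, and is only ever invoked (see the proof of Lemma~\ref{lemma:good_tau}, where $T=S_{\rm good}\cap{\cal T}_{2\alpha}\cap S_t^{(s)}$ and $T=S_{\rm good}\cap S_t^{(s)}$), for subsets of $S_{\rm good}$. Reading $T\subset S_{\rm good}$ throughout, your argument goes through cleanly; the one minor caveat is that in the spot where you invoke ``Jensen's inequality'' for the linear-sum bound, the estimate you want is
\[
\Big\|\frac{1}{|S|}\sum_{i\in S_{\rm good}\setminus T}(x_i-\mu)\Big\|_2\leq\sqrt{\frac{|S_{\rm good}\setminus T|}{|S|}}\cdot\sup_{\|v\|=1}\sqrt{\frac{1}{|S|}\sum_{i\in S_{\rm good}\setminus T}\langle x_i-\mu,v\rangle^2}\lesssim\sqrt{2\alpha}\cdot\sqrt{\alpha\log(1/\alpha)},
\]
which is Cauchy–Schwarz applied in each direction $v$ (equivalently, Lemma~\ref{lemma:gaussian_subset2}); that gives $\|\mu(T)-\mu\|_2\lesssim\alpha\sqrt{\log(1/\alpha)}$, hence $\|\mu(T)-\mu\|_2^2\lesssim\alpha^2\log(1/\alpha)\ll\alpha\log(1/\alpha)$, completing the third bound.
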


\begin{lemma}[{\cite[Corollary 4.3]{dong2019quantum}}
 ]
 \label{lemma:gaussian_subset2}
Let $S$ be an $\alpha$-corrupted sub-Gaussian dataset under Assumption~\ref{asmp:adversary}. If $S_{\rm good}$ is $\alpha$-subgaussian good with respect to $\mu\in \reals^d$, then for any $T\subset S$ such that $|T|\leq 2\alpha|S|$, we have
\begin{eqnarray*}
	\left\|\frac{1}{|S|}\sum_{X_i\in T}\left(X_i-\mu\right)\right\|_2\lesssim \alpha\sqrt{\log1/\alpha}\;.
\end{eqnarray*}
For any subset $T\subset S$ such that $|T|\geq (1-2\alpha)|S|$, we have
\begin{eqnarray*}
	\left\|\mu(T)-\mu\right\|_2\lesssim \alpha\sqrt{\log1/\alpha}\;.
\end{eqnarray*}
\end{lemma}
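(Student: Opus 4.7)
The corollary is an immediate consequence of Lemma~\ref{lemma:gaussian_subset}, obtained by reducing a first-moment (vector-norm) deviation bound to the second-moment bound already established there, paying exactly one factor of $\sqrt{|T|/|S|}$ via Cauchy--Schwarz. This is precisely what converts the $\alpha\log(1/\alpha)$ scaling in Lemma~\ref{lemma:gaussian_subset} into the $\alpha\sqrt{\log(1/\alpha)}$ scaling claimed here.

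For part~(1), I would express the norm variationally,
$$\Big\|\tfrac{1}{|S|}\sum_{X_i\in T}(X_i-\mu)\Big\|_2 \;=\; \sup_{\|v\|_2=1}\tfrac{1}{|S|}\sum_{X_i\in T}\langle X_i-\mu,\,v\rangle,$$
and apply Cauchy--Schwarz across the sum to obtain
$$\tfrac{1}{|S|}\sum_{X_i\in T}\langle X_i-\mu,v\rangle \;\le\; \sqrt{\tfrac{|T|}{|S|}}\,\cdot\,\sqrt{\tfrac{1}{|S|}\sum_{X_i\in T}\langle X_i-\mu,v\rangle^2}.$$
Lemma~\ref{lemma:gaussian_subset} bounds the second factor by $O(\sqrt{\alpha\log(1/\alpha)})$ uniformly over $v$, while $|T|/|S|\le 2\alpha$. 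Taking the supremum over $v$ yields the desired $O(\alpha\sqrt{\log(1/\alpha)})$ bound.

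For part~(2), set $R \triangleq S_{\rm good}\setminus T$, so that $|R|\le |S\setminus T|\le 2\alpha|S|$ and $|S_{\rm good}|/|T|=1+O(\alpha)$. By the triangle inequality,
$$\|\mu(T)-\mu\|_2 \;\le\; \|\mu(S_{\rm good})-\mu\|_2 \;+\; \|\mu(T)-\mu(S_{\rm good})\|_2,$$
and I decompose the second difference using the identity
$$|T|\,\mu(T) \;=\; |S_{\rm good}|\,\mu(S_{\rm good}) \;-\; \sum_{X_i\in R}X_i \;+\; \sum_{X_i\in T\cap S_{\rm bad}}X_i.$$
The first summand is $O(\alpha\sqrt{\log(1/\alpha)})$ by the $\alpha$-subgaussian goodness of $S_{\rm good}$. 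The missing-good-points term over $R$ is controlled by part~(1) applied to $R$. The intruder term over $T\cap S_{\rm bad}$, of size at most $\alpha|S|$, is handled identically to part~(1) using the variance bound of Lemma~\ref{lemma:gaussian_subset} (which is valid for any subset of size $\le 2\alpha|S|$). Combining these bounds with $|S_{\rm good}|/|T|=O(1)$ gives the claimed $\|\mu(T)-\mu\|_2\lesssim \alpha\sqrt{\log(1/\alpha)}$.

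The main obstacle is purely bookkeeping---keeping track of the three disjoint pieces (the bulk $S_{\rm good}\cap T$, the missing good points $R$, and the adversarial intruders $T\cap S_{\rm bad}$) and verifying that each contributes at the $\alpha\sqrt{\log(1/\alpha)}$ scale. There is no additional probabilistic input beyond the definition of $\alpha$-subgaussian goodness and Lemma~\ref{lemma:gaussian_subset}; the conceptual core is the Cauchy--Schwarz step that trades a variance bound for a norm bound at cost $\sqrt{|T|/|S|}$, which is exactly what produces the square-root improvement over the second-moment scaling.
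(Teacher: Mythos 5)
The paper does not prove this lemma; it is quoted directly from \cite{dong2019quantum} (Corollary~4.3), so there is no in-paper proof to compare against. Assessed on its own merits, your approach is the natural one, and part~(1) is correct: the variational expression for the norm followed by Cauchy--Schwarz converts the second-moment bound of Lemma~\ref{lemma:gaussian_subset} into a first-moment bound at cost $\sqrt{|T|/|S|}\le\sqrt{2\alpha}$, which is exactly how the $\alpha\log(1/\alpha)$ scaling becomes $\alpha\sqrt{\log(1/\alpha)}$.

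Two points in part~(2) need attention. First, the inequality $|R|\le|S\setminus T|$ does not hold as written: $R=S_{\rm good}\setminus T$ contains the $\alpha n$ originally good points that the adversary \emph{removed} when forming $S$, and those points are not in $S$, so the correct bound is $|R|\le|S_{\rm good}\setminus S|+|S\setminus T|\le 3\alpha|S|$. This only perturbs constants. Second, and more substantively, you treat the ``intruder'' term $T\cap S_{\rm bad}$ by applying the second-moment bound of Lemma~\ref{lemma:gaussian_subset} to a subset of adversarial points, taking its hypothesis ``for any $T\subset S$'' at face value. Read literally that hypothesis cannot be right: the adversary may place the $\alpha n$ points of $S_{\rm bad}$ arbitrarily far from $\mu$, which makes $\frac{1}{|S|}\sum_{X_i\in S_{\rm bad}}\langle X_i-\mu,v\rangle^2$ unbounded, so neither Lemma~\ref{lemma:gaussian_subset} nor the present lemma can hold for all $T\subset S$. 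In both statements $T$ is intended to range over subsets of $S_{\rm good}$ (this is consistent with how the paper uses them, e.g.\ in the proof of Lemma~\ref{lemma:good_tau} they are invoked only for $S_{\rm good}\cap {\cal T}_{2\alpha}\cap S_t^{(s)}$ and $S_{\rm good}\cap S_t^{(s)}$, which lie in $S_{\rm good}$). Under that reading $T\cap S_{\rm bad}=\emptyset$, your intruder term vanishes, $R\subset S_{\rm good}$ has size $\le 2\alpha|S_{\rm good}|$, and your identity collapses to the two-term decomposition $|T|(\mu(T)-\mu)=|S_{\rm good}|(\mu(S_{\rm good})-\mu)-\sum_{X_i\in R}(X_i-\mu)$, both summands of which you already control correctly --- the first by the first bullet of Definition~\ref{def:asubgaussiangood}, the second by part~(1). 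So the proof is sound once the hypothesis is read as $T\subset S_{\rm good}$; your handling of the intruder term is a derivation from a misstated lemma rather than an error in your own reasoning.
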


\begin{lemma}[{\cite[Lemma 4.5]{dong2019quantum}} ]\label{lem:variance_lower}
	Let $S$ be an $\alpha$-corrupted sub-Gaussian dataset under Assumption~\ref{asmp:adversary}. If $S_{\rm good}$ is $\alpha$-subgaussian good with respect to $\mu\in \reals^d$, then for any $T\subset S$ such that $|T\cap S_{\rm good}|\geq (1-2\alpha)|S|$, then there is some universal constant $c_1$ such that
	\begin{eqnarray*}
		\frac{1}{|S|}\sum_{i \in T}\left(x_{i}-\mu(T)\right)\left(x_{i}-\mu(T)\right)^{\top}
	\succeq   (1-c_1\alpha\log1/\alpha){\mathbf I}\;.
	\end{eqnarray*}
\end{lemma}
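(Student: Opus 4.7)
The plan is to reduce the statement about the possibly corrupted set $T$ to its clean restriction $G \triangleq T\cap S_{\rm good}$, for which the $\alpha$-subgaussian good property of $S_{\rm good}$ can be invoked directly. The hypothesis $|T\cap S_{\rm good}|\geq (1-2\alpha)|S|$ gives $|G|\geq (1-2\alpha)|S|$, so $G$ is simultaneously a large subset of $S$ and a large subset of $S_{\rm good}$. The bulk of the argument will be purely deterministic: a Loewner-order monotonicity step from $T$ down to $G$, followed by a direct application of Lemma~\ref{lemma:gaussian_subset} on $G$.

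For the monotonicity step I would argue in two moves. First, each summand $(x_i-\mu(T))(x_i-\mu(T))^\top$ is PSD, so dropping the indices in $T\setminus G$ only shrinks the sum in the Loewner order:
\[
\sum_{i\in T}(x_i-\mu(T))(x_i-\mu(T))^\top \;\succeq\; \sum_{i\in G}(x_i-\mu(T))(x_i-\mu(T))^\top.
\]
Second, the identity $\sum_{i\in G}(x_i-c)(x_i-c)^\top = \sum_{i\in G}(x_i-\mu(G))(x_i-\mu(G))^\top + |G|\,(\mu(G)-c)(\mu(G)-c)^\top$ applied with $c=\mu(T)$ shows that re-centering at $\mu(G)$ also only shrinks the sum, yielding
\[
\sum_{i\in T}(x_i-\mu(T))(x_i-\mu(T))^\top \;\succeq\; \sum_{i\in G}(x_i-\mu(G))(x_i-\mu(G))^\top.
\]
The right-hand side, normalized by $|S|$, is exactly what Lemma~\ref{lemma:gaussian_subset} controls: since $|G|\geq(1-2\alpha)|S|$, its second part gives $\big\|(1/|S|)\sum_{i\in G}(x_i-\mu(G))(x_i-\mu(G))^\top - {\mathbf I}\big\|_2 \lesssim \alpha\log(1/\alpha)$, hence a spectral lower bound of $(1-c_1\alpha\log(1/\alpha)){\mathbf I}$ for a universal $c_1$. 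Chaining with the Loewner inequality above completes the proof.

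The main obstacle is really only to be careful with the Loewner bookkeeping in the second move (the center-shift step), since we are not summing over the same index set that defines $\mu(T)$; the parallel-axis-style identity used there must be verified with $c=\mu(T)$ rather than $\mu(G)$. All probabilistic content is offloaded onto the $\alpha$-subgaussian good property of $S_{\rm good}$ and Lemma~\ref{lemma:gaussian_subset}, so no fresh concentration argument is required.
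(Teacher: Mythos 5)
Your proof is correct, and since the paper itself gives no proof of this lemma (it is quoted verbatim from Dong et al.\ and used as a black box), the right standard to check against is internal consistency with the paper's other quoted facts. Your argument passes: both Loewner steps (dropping the PSD rank-one summands indexed by $T\setminus G$, and the parallel-axis re-centering $\sum_{i\in G}(x_i-c)(x_i-c)^\top = \sum_{i\in G}(x_i-\mu(G))(x_i-\mu(G))^\top + |G|(\mu(G)-c)(\mu(G)-c)^\top$ with $c=\mu(T)$) go in the direction needed for a lower bound, and the terminal application of Lemma~\ref{lemma:gaussian_subset} to $G=T\cap S_{\rm good}$ is valid. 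One remark worth making explicit: as transcribed, the second part of Lemma~\ref{lemma:gaussian_subset} reads ``for any $T\subset S$ with $|T|\geq(1-2\alpha)|S|$,'' which, taken literally, cannot hold in the upper-bound direction when $T$ is allowed to contain all of $S_{\rm bad}$ (the adversary can inflate the covariance arbitrarily). The intended hypothesis in Dong et al.'s Fact 4.2 is that the set lie in $S_{\rm good}$; your $G$ does, so the invocation is sound under the corrected reading --- and indeed, reducing from $T$ to $G$ is exactly why your proof needs the two Loewner steps rather than citing the covariance bound for $T$ directly.
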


\begin{lemma}[\cite{dong2019quantum} Lemma 4.6 ]
\label{thm:reg}
 Let $S$ be an $\alpha$-corrupted sub-Gaussian dataset under Assumption~\ref{asmp:adversary}. If $S_{\rm good}$ is $\alpha$-subgaussian good with respect to $\mu\in \reals^d$, then for any $T\subset S$ such that $|T\cap S_{\rm good}|\geq (1-2\alpha)|S|$, we have
\begin{eqnarray*}
	\|\mu(T)-\mu\|_{2} \leq \frac{1}{1-\alpha} \cdot \left( \sqrt{\alpha\left(\left\|M(T)-{\mathbf I}\right\|_{2}+O\left(\alpha \log 1 / \alpha\right)\right)}+O\left(\alpha \sqrt{\log 1 / \alpha}\right)\right)\;.
\end{eqnarray*}
\end{lemma}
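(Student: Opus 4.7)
\textbf{Proof plan for Lemma~\ref{thm:reg}.} Let $T_g=T\cap S_{\rm good}$ and $T_b=T\cap S_{\rm bad}$, so $|T|=|T_g|+|T_b|$ with $|T_g|\ge(1-2\alpha)n$ and $|T_b|\le|T|-|T_g|\le\alpha n$. Set $v=(\mu(T)-\mu)/\|\mu(T)-\mu\|_2$ (unit vector in the error direction; handle the trivial case $\mu(T)=\mu$ separately). The plan is to project the sample-mean identity onto $v$, split the sum over $T$ into $T_g$ and $T_b$, and control each piece by invoking a different piece of the $\alpha$-subgaussian-good hypothesis: resilience of the good mean (Lemma~\ref{lemma:gaussian_subset2}) for the $T_g$ part, and variance regularity (Lemma~\ref{lemma:gaussian_subset}) for the $T_b$ part.

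First, write the mean decomposition $|T|\,\|\mu(T)-\mu\|_2 = v^\top\!\sum_{i\in T_g}(x_i-\mu) + v^\top\!\sum_{i\in T_b}(x_i-\mu)$, and in the second sum swap the centering $\mu\to\mu(T)$ by adding/subtracting, which produces an extra $|T_b|\,\|\mu(T)-\mu\|_2$ term. Moving that term to the left-hand side yields the clean identity
\begin{equation*}
|T_g|\,\|\mu(T)-\mu\|_2 \;=\; v^\top\!\sum_{i\in T_g}(x_i-\mu) \;+\; v^\top\!\sum_{i\in T_b}(x_i-\mu(T)).
\end{equation*}
The first term equals $|T_g|\,v^\top(\mu(T_g)-\mu)$, which is bounded by $|T_g|\cdot O(\alpha\sqrt{\log 1/\alpha})$ via Lemma~\ref{lemma:gaussian_subset2} applied to $T_g\subset S_{\rm good}$ (noting $|T_g|\ge(1-2\alpha)|S_{\rm good}|$). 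The second term is handled by Cauchy--Schwarz: $\bigl(v^\top\!\sum_{i\in T_b}(x_i-\mu(T))\bigr)^2 \le |T_b|\sum_{i\in T_b}(v^\top(x_i-\mu(T)))^2$.

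The heart of the argument, and the main obstacle, is upper bounding $\sum_{i\in T_b}(v^\top(x_i-\mu(T)))^2$ by something proportional to $\|M(T)-\mathbf I\|_2+O(\alpha\log 1/\alpha)$ rather than the naive $1+\|M(T)-\mathbf I\|_2$. The trick is to peel off the $T_g$ contribution to the total variance:
\begin{align*}
\sum_{i\in T_b}(v^\top(x_i-\mu(T)))^2 &= n\,v^\top M(T)v \;-\;\sum_{i\in T_g}(v^\top(x_i-\mu(T)))^2 \\
&\le n\,v^\top M(T)v \;-\;\sum_{i\in T_g}(v^\top(x_i-\mu(T_g)))^2,
\end{align*}
where the last step drops the non-negative cross term $|T_g|(v^\top(\mu(T_g)-\mu(T)))^2$. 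Now Lemma~\ref{lemma:gaussian_subset} applied to $T_g$ gives $\tfrac{1}{n}\sum_{i\in T_g}(v^\top(x_i-\mu(T_g)))^2 \ge 1 - O(\alpha\log 1/\alpha)$, so the right-hand side is at most $n[v^\top(M(T)-\mathbf I)v + O(\alpha\log 1/\alpha)] \le n[\|M(T)-\mathbf I\|_2 + O(\alpha\log 1/\alpha)]$. This is the crucial cancellation that removes the constant $1$.

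Combining and dividing by $|T_g|$ gives
\begin{equation*}
\|\mu(T)-\mu\|_2 \;\le\; O(\alpha\sqrt{\log 1/\alpha}) \;+\; \frac{\sqrt{|T_b|\,n}}{|T_g|}\,\sqrt{\|M(T)-\mathbf I\|_2 + O(\alpha\log 1/\alpha)},
\end{equation*}
and the bound $\sqrt{|T_b|\,n}/|T_g|\le \sqrt{\alpha}/(1-\alpha)$ (using $|T_b|\le\alpha n$ together with the refined observation $|T_g|\ge|T|-\alpha n\ge(1-\alpha)|T|$ when $|T_b|\le\alpha|T|$, or rescaling constants appropriately) yields exactly the stated inequality. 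The delicate step is ensuring the variance-cancellation is applied correctly so that the inner $\sqrt{\cdot}$ is proportional to $\|M(T)-\mathbf I\|_2$ plus the subgaussian additive slack, not a bare constant.
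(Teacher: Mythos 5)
Your argument is correct, and it is the standard proof of this resilience-type bound; the paper itself gives no proof of Lemma~\ref{thm:reg}, importing it verbatim from \cite{dong2019quantum} (Lemma 4.6), so your reconstruction is exactly what is needed. The decomposition $|T_g|\|\mu(T)-\mu\|_2 = v^\top\sum_{T_g}(x_i-\mu)+v^\top\sum_{T_b}(x_i-\mu(T))$, the use of Lemma~\ref{lemma:gaussian_subset2} on $T_g$, Cauchy--Schwarz on the bad part, and the crucial cancellation $\sum_{T_b}(v^\top(x_i-\mu(T)))^2 \le n\,v^\top M(T)v - \sum_{T_g}(v^\top(x_i-\mu(T_g)))^2 \le n(\|M(T)-\mathbf I\|_2+O(\alpha\log 1/\alpha))$ (via Lemma~\ref{lemma:gaussian_subset} or Lemma~\ref{lem:variance_lower}) are all valid. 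The only quibble is the prefactor: $|T_b|\le\alpha n$ and $|T_g|\ge(1-2\alpha)n$ give $\sqrt{|T_b|n}/|T_g|\le\sqrt{\alpha}/(1-2\alpha)$ rather than $\sqrt{\alpha}/(1-\alpha)$, and your parenthetical attempt to recover $1/(1-\alpha)$ via $|T_g|\ge(1-\alpha)|T|$ does not go through since $|T|$ may be smaller than $n$; this is a pure constant-factor discrepancy that is irrelevant everywhere the lemma is used (all downstream bounds are $O(\cdot)$ with $\alpha\le c$).
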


\subsection{Auxiliary Lemmas on Laplace and Gaussian mechanism}
\begin{lemma}[Theorem A.1 in~\cite{dwork2014algorithmic}]\label{lem:gaussian-mech}
Let $\eps\in (0, 1)$ be arbitrary. For $c^2 \ge 2 \ln(1.25/\delta)$, the
Gaussian Mechanism with parameter $\sigma^2 \ge c^2\Delta_2f /\eps$ is $(\eps, \delta)$-differentially
private.
\end{lemma}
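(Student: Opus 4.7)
The plan is to verify the classical Gaussian Mechanism guarantee by directly bounding the privacy loss random variable, following the standard approach in Appendix A of Dwork--Roth. First I would reduce to the one-dimensional case: for any pair of neighboring datasets $D \sim D'$, let $v = f(D) - f(D')$ so that $\|v\|_2 \le \Delta_2 f$. Because the noise distribution $\mathcal{N}(0,\sigma^2 I)$ is spherically symmetric, I can rotate coordinates so that $v$ is aligned with $e_1$, which reduces the analysis of the output density ratio along the direction of $v$ to a one-dimensional calculation involving only the first coordinate; the remaining $d-1$ coordinates have identical densities under $D$ and $D'$ and therefore contribute $0$ to the log-likelihood ratio.

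Next I would write the privacy loss random variable explicitly. Let $X \sim \mathcal{N}(0,\sigma^2)$ denote the first coordinate of the Gaussian noise, and let $\Delta = \|v\|_2 \le \Delta_2 f$. Then the log-ratio of output densities is
\begin{equation*}
L \;=\; \ln\frac{\phi_\sigma(X)}{\phi_\sigma(X+\Delta)} \;=\; \frac{(X+\Delta)^2 - X^2}{2\sigma^2} \;=\; \frac{2X\Delta + \Delta^2}{2\sigma^2}.
\end{equation*}
Since $X$ is centered Gaussian, $L$ is itself Gaussian with mean $\Delta^2/(2\sigma^2)$ and variance $\Delta^2/\sigma^2$. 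The target is to show $\Pr[L > \eps] \le \delta$, which by the standard equivalence between $(\eps,\delta)$-DP and a tail bound on the privacy loss (the ``privacy loss random variable'' formulation) yields the lemma.

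The tail bound step is the main technical step, though it is routine. I would write $\Pr[L > \eps] = \Pr[X > \sigma^2 \eps/\Delta - \Delta/2]$ and apply the standard Gaussian tail inequality $\Pr[\mathcal{N}(0,1) > t] \le (1/(t\sqrt{2\pi})) e^{-t^2/2}$ with $t = \sigma\eps/\Delta - \Delta/(2\sigma)$. Substituting $\sigma = c\,\Delta_2 f / \eps$ with $c^2 \ge 2\ln(1.25/\delta)$ and using $\Delta \le \Delta_2 f$ together with $\eps \le 1$, the resulting bound simplifies to $\delta$ after  elementary algebra; the constant $1.25$ in $\ln(1.25/\delta)$ arises precisely from absorbing the $1/(t\sqrt{2\pi})$ prefactor.

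Finally, I would conclude $(\eps,\delta)$-DP by a short standard argument: letting $B = \{x : L(x) > \eps\}$, for any measurable set $S$ we have
\begin{equation*}
\Pr[M(D) \in S] \;\le\; \Pr[M(D)\in S \setminus B] + \Pr[M(D) \in B] \;\le\; e^{\eps}\Pr[M(D')\in S] + \delta,
\end{equation*}
where the first term uses $L \le \eps$ on the complement of $B$ and the second uses the tail bound just established. The only real obstacle is bookkeeping the constants in the Gaussian tail bound to see that $c^2 \ge 2\ln(1.25/\delta)$ is exactly the threshold that makes the tail probability at most $\delta$ for all $\eps \in (0,1)$; this is where the slightly loose $1.25$ constant (rather than $1$) enters.
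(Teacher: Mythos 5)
Your proof is correct and is essentially the standard argument from Appendix A of Dwork--Roth, which is exactly what this lemma rests on (the paper cites Theorem A.1 rather than reproving it): reduce to one dimension by rotational symmetry, observe the privacy loss is Gaussian, bound its upper tail by $\delta$, and convert the tail bound into $(\eps,\delta)$-DP via the bad-event decomposition. The only remark worth making is that you have (correctly) interpreted the condition as $\sigma \ge c\,\Delta_2 f/\eps$ rather than the literal $\sigma^2 \ge c^2\Delta_2 f/\eps$ appearing in the statement, which is the well-known typographical slip inherited from the original source.
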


\begin{lemma}
\label{lemma:lap_noise}
Let $Y\sim {\rm Lap}(b)$. Then for all $h>0$, we have $\prob(|Y|\geq hb) = e^{-h}$.
\end{lemma}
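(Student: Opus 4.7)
The plan is to prove this by direct integration against the Laplace density. Recall that $Y \sim \mathrm{Lap}(b)$ has probability density $f_Y(y) = (1/(2b))\,e^{-|y|/b}$ on $\mathbb{R}$, as declared in Definition~\ref{def:output} of the paper. Since the density is symmetric about $0$, the event $\{|Y| \ge hb\}$ decomposes as two symmetric tails and one only needs to evaluate one of them and multiply by two.

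The main step is a one-line change of variables: writing $u = y/b$ on $[hb, \infty)$, one gets
\[
\mathbb{P}(|Y| \ge hb) \;=\; 2 \int_{hb}^{\infty} \frac{1}{2b} e^{-y/b}\,dy \;=\; \int_{h}^{\infty} e^{-u}\,du \;=\; e^{-h}.
\]
The assumption $h > 0$ is used only to ensure that the integration range lies in the positive tail, so that the symmetric doubling is valid and no mass near the origin is double-counted.

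There is no real obstacle here; this is a standard tail formula for the Laplace distribution, cited by the paper as a tool for reasoning about the noise introduced by the Laplace mechanism (for example, in controlling events like $\lvert\psi_t^{(s)} - \langle M(S_t^{(s)})-\mathbf{I},U_t^{(s)}\rangle\rvert \le \alpha\log(1/\alpha)$ in Lemma~\ref{lemma:samples_need}). The only subtlety worth flagging is that the statement is an equality rather than an inequality, so one must use the exact density rather than any sub-exponential upper bound; the direct integration above gives precisely $e^{-h}$ without slack.
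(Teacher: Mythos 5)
Your proof is correct and is the standard direct computation; the paper states this lemma without proof (it is a textbook tail identity for the Laplace distribution), so there is no competing argument in the paper to compare against. Your change of variables and use of symmetry are exactly right, and your remark that $h>0$ is needed so the two tails do not overlap is the only point worth noting.
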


\begin{lemma}[Tail bound of $\chi$-square distribution \cite{wainwright2019high}]
\label{lemma:chi-square}
Let $x_i\sim \cN(0, \sigma^2 )$ for $i=1,2,\cdots, d$. Then for all $\zeta \in (0, 1)$, we have $\prob(\|X\|_2\geq \sigma \sqrt{d\log(1/\zeta)}) \leq \zeta$.
\end{lemma}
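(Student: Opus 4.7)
\textbf{Proof plan for Lemma~\ref{lemma:chi-square}.} The plan is to reduce the statement to a standard tail bound for chi-square random variables. First I would normalize: since $x_i \sim \mathcal{N}(0,\sigma^2)$ i.i.d., the scaled squared norm $Y \triangleq \|X\|_2^2/\sigma^2 = \sum_{i=1}^d (x_i/\sigma)^2$ follows a $\chi^2_d$ distribution. Thus the event $\{\|X\|_2 \geq \sigma\sqrt{d\log(1/\zeta)}\}$ is the same as $\{Y \geq d\log(1/\zeta)\}$, so it suffices to upper bound $\prob(Y \geq d\log(1/\zeta))$ by $\zeta$.

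Next I would apply a Chernoff argument using the moment generating function of $\chi^2_d$, namely $\E[e^{tY}] = (1-2t)^{-d/2}$ for $t < 1/2$. By Markov's inequality, for any such $t$,
\begin{equation*}
\prob(Y \geq a) \;\leq\; (1-2t)^{-d/2} e^{-ta}.
\end{equation*}
A clean way to finish is to invoke the Laurent--Massart bound, which states that for all $u > 0$,
\begin{equation*}
\prob\bigl(Y \geq d + 2\sqrt{d u} + 2u\bigr) \;\leq\; e^{-u}.
\end{equation*}
Setting $u = \log(1/\zeta)$ gives $\prob(Y \geq d + 2\sqrt{d\log(1/\zeta)} + 2\log(1/\zeta)) \leq \zeta$.

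Finally I would verify that $d + 2\sqrt{d\log(1/\zeta)} + 2\log(1/\zeta) \leq d\log(1/\zeta)$ in the regime of interest (namely $\log(1/\zeta)$ bounded below by a small absolute constant, which is the regime in which the looser bound $\sigma\sqrt{d\log(1/\zeta)}$ is useful; for larger $\zeta$ the lemma is vacuous since $\prob(\cdot)$ can simply be bounded by $1$ or the inequality rescaled). The main obstacle is bookkeeping the constants to match the stated simplified form $\sigma\sqrt{d\log(1/\zeta)}$ rather than the sharper $\sigma\sqrt{d + \log(1/\zeta)}$ form; this just requires noting $2\sqrt{d\log(1/\zeta)} \leq d + \log(1/\zeta)$ by AM-GM and absorbing lower-order terms into the $\log(1/\zeta)$ factor. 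Once constants are reconciled, the claim follows directly.
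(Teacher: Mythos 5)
The paper does not actually prove this lemma; it is imported with a citation to Wainwright, so there is no internal proof to compare against. Your route --- reduce to a $\chi^2_d$ tail via $Y=\|X\|_2^2/\sigma^2$ and invoke the Laurent--Massart bound $\prob(Y\geq d+2\sqrt{du}+2u)\leq e^{-u}$ with $u=\log(1/\zeta)$ --- is the standard derivation and the right skeleton.

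The gap is in your last step, and it is not just bookkeeping. The inequality you need, $d+2\sqrt{dL}+2L\leq dL$ with $L=\log(1/\zeta)$, genuinely fails in two regimes, and in both the lemma as literally stated is false rather than vacuous. First, for $\zeta$ near $1$ (small $L$) the threshold $dL$ sits below the mean $d$ of $\chi^2_d$, so $\prob(Y\geq dL)\to 1$ as $d\to\infty$ while the claimed bound is $\zeta<1$; your remark that one can ``simply bound $\prob(\cdot)$ by $1$'' does not rescue this, since $1\not\leq\zeta$. Second, for $d=1$ the claim reads $\prob(|g|\geq\sqrt{L})\leq e^{-L}$ for standard Gaussian $g$, which fails (take $L=4$: $\prob(|g|\geq 2)\approx 0.046 > e^{-4}\approx 0.018$); no choice of the Chernoff parameter $t$ can close this, because the statement itself is wrong there. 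A correct write-up must either restate the lemma with a threshold of the form $\sigma\sqrt{2(d+2\log(1/\zeta))}$, which Laurent--Massart yields for all $\zeta\in(0,1)$ and all $d$, or explicitly restrict to the regime in which the paper applies it (namely $\log(1/\zeta)$ bounded below by an absolute constant and $d$ not too small), where your AM--GM absorption does go through. As the lemma is only invoked with $\zeta=O(1/\log^3 d)$ and large $d$, the restricted version suffices for the paper's purposes, but the restriction needs to be stated, not waved away as vacuousness.
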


\begin{lemma}[{\cite[Corollary 2.3.6]{tao2012topics}} ]
\label{lemma:matrix_sp_norm}
	Let $Z\in \reals^{d\times d}$ be a matrix such that $Z_{i,j}\sim {\cal N}(0,\sigma^2 )$ for $i \geq j$ and $Z_{i,j}=Z_{j,i}$ for $i<j$. For $\forall \zeta\in (0,1)$, then with probability $1-\zeta$ we have
		$\left\|Z\right\|_2\leq  \sigma \sqrt{d}\log(1/\zeta) $.
\end{lemma}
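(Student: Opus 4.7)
The plan is to establish the spectral-norm bound by combining the standard $\varepsilon$-net discretization of the unit sphere with Gaussian tail concentration for the quadratic form $v^\top Z v$. Since $Z$ is symmetric, $\|Z\|_2 = \sup_{v\in \mathbb{S}^{d-1}} |v^\top Z v|$, so it suffices to control this supremum.

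First, I would analyze $v^\top Z v$ for a fixed unit vector $v\in\reals^d$. Expanding as $v^\top Z v = \sum_i Z_{ii} v_i^2 + 2\sum_{i<j} Z_{ij} v_i v_j$ and using independence of $\{Z_{ij}\}_{i\le j}$, this is a zero-mean Gaussian with variance $\sigma^2 \bigl(\sum_i v_i^4 + 4\sum_{i<j} v_i^2 v_j^2\bigr) \le 2\sigma^2$, where the bound follows from the identity $\bigl(\sum_i v_i^2\bigr)^2 = 1$. A standard one-dimensional Gaussian tail bound then yields $\prob(|v^\top Z v|\ge t) \le 2\exp(-t^2/(4\sigma^2))$.

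Second, I would pass from a single direction to the supremum via a $1/4$-net $\mathcal{W}\subset \mathbb{S}^{d-1}$ of cardinality at most $9^d$. The usual net inequality for symmetric operators gives $\|Z\|_2 \le 2\max_{v\in\mathcal{W}} |v^\top Z v|$. Combining this with a union bound over $\mathcal{W}$ and the per-vertex tail bound produces $\prob(\|Z\|_2 \ge t) \le 2\cdot 9^d \exp(-t^2/(16\sigma^2))$. Choosing $t = C\sigma\bigl(\sqrt{d}+\sqrt{\log(1/\zeta)}\bigr)$ for a suitable absolute constant $C$ drives the right-hand side below $\zeta$, which in particular implies the (somewhat loose) stated bound $\|Z\|_2 \le \sigma\sqrt{d}\log(1/\zeta)$ for all $\zeta\in(0,1)$.

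There is no real obstacle beyond bookkeeping constants, since the argument is entirely classical; indeed this is precisely Corollary 2.3.6 of Tao's textbook as cited. A fully hands-off alternative is to invoke the non-asymptotic spectral-norm bound for Wigner-type (symmetric sub-Gaussian) matrices from any standard high-dimensional probability reference and read off the stated form directly, tuning the constants to absorb the $\log(1/\zeta)$ factor on the right-hand side.
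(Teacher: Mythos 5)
The paper gives no proof for this lemma at all; it simply cites Tao's Corollary~2.3.6. Your $\varepsilon$-net argument is the standard and correct proof of that result, and the intermediate computations are right: the variance of $v^\top Z v$ is $\sigma^2\big(\sum_i v_i^4 + 4\sum_{i<j} v_i^2 v_j^2\big) = \sigma^2\big(2 - \sum_i v_i^4\big)\le 2\sigma^2$, the $1/4$-net of cardinality $9^d$ with the symmetric-operator comparison $\|Z\|_2\le 2\max_{v\in\mathcal{W}}|v^\top Z v|$ is the right tool, and the union bound gives $\prob(\|Z\|_2\ge t)\le 2\cdot 9^d e^{-t^2/(16\sigma^2)}$, from which $\|Z\|_2\le C\sigma(\sqrt d+\sqrt{\log(1/\zeta)})$ with probability $1-\zeta$ for an explicit absolute constant $C$.

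The one genuine gap is your final sentence. The bound $C\sigma(\sqrt d + \sqrt{\log(1/\zeta)})$ does \emph{not} imply $\sigma\sqrt d\,\log(1/\zeta)$ ``for all $\zeta\in(0,1)$'': as $\zeta\to 1^-$ the factor $\log(1/\zeta)\to 0$, so the claimed right-hand side shrinks to zero while the proved bound stays $\gtrsim\sigma\sqrt d$. Even at moderate values such as $\zeta = e^{-1}$, the stated bound would assert $\|Z\|_2\le\sigma\sqrt d$ with probability $0.63$, which is false since the median operator norm of such a matrix is close to $2\sigma\sqrt d$. The lemma as stated is in fact imprecise: it implicitly needs an absolute constant in front and/or a restriction to $\zeta$ bounded away from $1$ (equivalently $\log(1/\zeta)\gtrsim 1$), which is the regime in which the paper actually invokes it. You should state the result you actually proved --- namely $\|Z\|_2\le C\sigma\big(\sqrt d + \sqrt{\log(1/\zeta)}\big)$ --- and then remark that this matches the cited form once $\zeta$ is small and the constant is absorbed, rather than asserting an unconditional implication that fails at the endpoint.
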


\begin{lemma}[Accuracy of the histogram using Gaussian Mechanism]\label{lem:hist}
Let $f: \mathcal{X}^{n} \rightarrow \reals^{\mathcal{S}}$ be a histogram over $K$ bins.  For any dataset $D\in {\cal X}^n$ and $\varepsilon$, Gaussian Mechanism is an $(\varepsilon, \delta)$-differentially private algorithm $M(D)$ such that given

with probability $1-\zeta$ we have
\begin{eqnarray*}
	\|M(D)-f(D)\|_\infty \leq  O(\frac{\sqrt{\log({K}/{\zeta})\log(1/\delta)}}{\eps n})\;.
\end{eqnarray*}
\end{lemma}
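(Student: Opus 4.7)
The plan is to prove the histogram accuracy bound by combining the standard $L_2$ sensitivity analysis of histograms with the Gaussian mechanism's guarantee and a union bound on Gaussian tails.

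First I would bound the $L_2$ sensitivity of the (normalized) histogram query $f$. For two neighboring datasets $D \sim D'$ that differ in a single element, the vector $f(D) - f(D')$ has at most two nonzero coordinates (the bin containing the removed element and the bin containing the added element), each of magnitude at most $1/n$. Hence $\Delta_2 f \le \sqrt{2}/n$. Applying the Gaussian mechanism (Definition~\ref{def:output} or equivalently Lemma~\ref{lem:gaussian-mech}), I would release $M(D) = f(D) + Z$ where $Z \sim \mathcal{N}(0, \sigma^2 \mathbf{I}_K)$ with $\sigma = \Delta_2 f \cdot \sqrt{2\log(1.25/\delta)}/\varepsilon = O\bigl(\sqrt{\log(1/\delta)}/(n\varepsilon)\bigr)$. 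This yields the stated $(\varepsilon, \delta)$-DP guarantee.

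For the utility bound, each coordinate of $Z$ is an independent $\mathcal{N}(0,\sigma^2)$ random variable. I would invoke the standard one-dimensional Gaussian tail bound $\Pr[|Z_i| \ge t] \le 2\exp(-t^2/(2\sigma^2))$ and take a union bound over the $K$ coordinates. Setting $t = \sigma \sqrt{2\log(2K/\zeta)}$ gives
\begin{equation*}
\Pr\Bigl[\|Z\|_\infty \ge \sigma\sqrt{2\log(2K/\zeta)}\Bigr] \le \zeta.
\end{equation*}
Substituting the value of $\sigma$ then produces
\begin{equation*}
\|M(D) - f(D)\|_\infty \;\le\; O\!\left(\frac{\sqrt{\log(K/\zeta)\,\log(1/\delta)}}{\varepsilon n}\right)
\end{equation*}
with probability at least $1 - \zeta$, which matches the stated bound.

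Neither step involves a serious obstacle: the sensitivity analysis is a one-line observation about how a single point can perturb a histogram, and the tail bound is standard Gaussian concentration combined with a union bound. The only minor subtlety is whether one defines the histogram as fractions (in which case $\Delta_2 = \sqrt{2}/n$ as above) or as raw counts (in which case $\Delta_2 = \sqrt{2}$ and one then divides by $n$); either bookkeeping route leads to the same final bound.
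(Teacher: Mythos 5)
Your proposal is correct and follows essentially the same route as the paper's proof: bound the $\ell_2$ sensitivity of the normalized histogram by $\sqrt{2}/n$, apply the Gaussian mechanism (Lemma~\ref{lem:gaussian-mech}) with the corresponding noise scale, and conclude via a Gaussian tail bound with a union bound over the $K$ bins. No gaps.
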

\begin{proof}
First notice that the $\ell_2$ sensitivity of histogram function $f$ is $\sqrt{2}/n$. Thus, by Lemma~\ref{lem:gaussian-mech}, by adding noise $\cN(0,(\frac{2\sqrt{2\log(1.25/\delta)}}{n\varepsilon})^2)$ to each entry of $f$, we have a $(\eps,\delta)$ differentially private algorithm. Since Gaussian tail bound implies that $\mathbb{P}_{x\sim \cN(0,\sigma^2)} [x\ge \Omega(\sqrt{\log(K/\eta)}\sigma)]\le \eta/K$, we have that with probability $1-\eta$, the $\ell_\infty$ norm of the added noise is bounded by $O(\frac{\sqrt{\log(1/\delta)\log(K/\eta)}}{n\eps})$. This concludes the proof.
\end{proof}

 \begin{lemma}[Composition theorem  of {\cite[Theorem 3.4]{composition}}]
    \label{lem:composition}
    For $\varepsilon\leq0.9$, 
    an end-to-end guarantee of $(\varepsilon,\delta)$-differential privacy is satisfied if a dataset  is accessed $k$ times, each with  a $(\varepsilon/2\sqrt{2k\log(2/\delta)},\delta/2k)$-differential private mechanism. 
 \end{lemma}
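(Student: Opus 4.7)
The standard route is the privacy-loss-random-variable argument of Dwork--Rothblum--Vadhan. For a single $(\varepsilon_0,\delta_0)$-DP mechanism $M$ and neighboring $S\sim S'$, define the privacy loss
\begin{equation*}
L(o) \;=\; \ln\frac{\mathbb{P}[M(S)=o]}{\mathbb{P}[M(S')=o]}.
\end{equation*}
The first step I would establish is the standard ``high-probability boundedness'' reformulation of DP: there is an event $E$ with $\mathbb{P}[M(S)\in E]\geq 1-\delta_0$ on which $|L|\leq \varepsilon_0$, and moreover the conditional expectation satisfies $\mathbb{E}[L\,\mathbf{1}\{o\in E\}]\leq \varepsilon_0(e^{\varepsilon_0}-1)$. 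Both facts are classical: the first is a standard equivalence between $(\varepsilon_0,\delta_0)$-DP and approximate max-divergence; the second follows from bounding the KL divergence between two $\varepsilon_0$-close distributions.

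Next I would handle the $k$-fold adaptive composition. Let $L_i$ denote the loss of the $i$-th mechanism conditioned on outputs $o_1,\dots,o_{i-1}$; since $(\varepsilon_0,\delta_0)$-DP holds uniformly over all adaptive choices, each $L_i$ admits the same two bounds as above (on its own high-probability event $E_i$). Union bounding over $i\in[k]$, all the $E_i$ hold simultaneously with probability $\geq 1-k\delta_0$. On that intersection, $L_i - \mathbb{E}[L_i\mid L_1,\dots,L_{i-1}]$ is a martingale difference sequence bounded in $[-2\varepsilon_0,2\varepsilon_0]$. Applying Azuma--Hoeffding gives, with probability $\geq 1-\delta/2$,
\begin{equation*}
\sum_{i=1}^k L_i \;\leq\; \underbrace{k\,\varepsilon_0(e^{\varepsilon_0}-1)}_{\text{drift}} \;+\; \underbrace{\varepsilon_0\sqrt{2k\ln(2/\delta)}}_{\text{deviation}}.
\end{equation*}
A standard conversion from high-probability boundedness of $\sum_i L_i$ back to $(\varepsilon,\delta)$-DP (matching $\delta = k\delta_0 + \delta/2$) then yields the claim.

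Finally I would plug in the stated parameters. With $\varepsilon_0 = \varepsilon/(2\sqrt{2k\log(2/\delta)})$, the deviation term contributes exactly $\varepsilon/2$. For the drift term, since $\varepsilon\leq 0.9$ forces $\varepsilon_0\leq 1$, we have $e^{\varepsilon_0}-1\leq 2\varepsilon_0$, so the drift is at most $2k\varepsilon_0^2 = \varepsilon^2/(4\log(2/\delta))\leq \varepsilon/2$ whenever $\delta\leq 1/2$ (which we may freely assume, else the claim is trivial). Summing gives total loss at most $\varepsilon$; the failure probability is $k\delta_0+\delta/2 = \delta$.

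\textbf{Expected main obstacle.} The subtle part is justifying the Azuma step in the \emph{adaptive} setting: one must argue that conditioning on $o_1,\dots,o_{i-1}$ does not destroy the bounded-loss guarantee for $M_i$. This requires treating each $M_i$ as $(\varepsilon_0,\delta_0)$-DP for every instantiation of the prior transcript, which is exactly how adaptive DP is usually defined; so the technical work is really in setting up the filtration correctly and verifying that the ``good'' event for the martingale is the conjunction of the per-step good events $E_i$. The other mild nuisance is bookkeeping the constants to land precisely at $(\varepsilon,\delta)$ rather than $(C\varepsilon,C\delta)$, which is why the factor $2$ appears inside $\sqrt{2k\log(2/\delta)}$ in the statement.
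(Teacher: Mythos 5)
The paper does not prove this lemma at all; it is stated as a direct corollary/instantiation of the advanced composition theorem cited as Theorem~3.4 of \cite{composition} (the Dwork--Rothblum--Vadhan style bound), so there is no ``paper's own proof'' to compare against. Your sketch correctly reconstructs the standard argument (privacy-loss random variable, per-step KL/drift bound, Azuma over the adaptive filtration, conversion back to $(\varepsilon,\delta)$-DP), and the arithmetic you carry out does land on $(\varepsilon,\delta)$ with $\varepsilon_0=\varepsilon/(2\sqrt{2k\log(2/\delta)})$, $\delta_0=\delta/(2k)$, $\delta' = \delta/2$, and $\varepsilon \le 0.9$ giving $\varepsilon_0 \le 1$ so $e^{\varepsilon_0}-1\le 2\varepsilon_0$. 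One small but load-bearing point to be careful about: writing that the martingale differences are ``bounded in $[-2\varepsilon_0,2\varepsilon_0]$'' and then invoking Azuma with the symmetric form $\exp(-t^2/(2\sum c_i^2))$ with $c_i=2\varepsilon_0$ would give a deviation of $2\varepsilon_0\sqrt{2k\ln(2/\delta)}$, twice what you claim. The constant you actually use requires the sharper observation that each increment $L_i - \mathbb{E}[L_i\mid\mathcal{F}_{i-1}]$ lies in an \emph{interval of width} $2\varepsilon_0$ (since $L_i\in[-\varepsilon_0,\varepsilon_0]$ and the conditional mean is a deterministic shift given the past), and then applying the Hoeffding form $\exp\bigl(-2t^2/\sum(b_i-a_i)^2\bigr)$ with $b_i-a_i=2\varepsilon_0$, which indeed yields $t=\varepsilon_0\sqrt{2k\ln(2/\delta)}$. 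Your final numbers are consistent with the sharper version, so the proof is correct; just phrase that Azuma step in terms of interval widths rather than symmetric bounds.
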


\subsection{Analysis of $\|M(S_t^{(s)})-\mathbf{I}\|_2$ shrinking}

For any symmetric matrix $A = \sum_{i=1}^d \lambda_i v_iv_i^\top$, we let $|A|$ denote $|A| = \sum_{i=1}^d |\lambda_i| v_iv_i^\top$.

\begin{lemma}[Regret bound, Special case of  {\cite[Theorem 3.1]{allen2015spectral}}]
\label{lemma:regret_bound}
Let 
$$U_t \;\;=\;\; \frac{\exp(\alpha\sum_{k=1}^ {t-1}(\Sigma_k-{\mathbf I}))}{\Tr(\exp(\alpha\sum_{k=1}^ {t-1} (\Sigma_k-{\mathbf I})))}\;,$$ 
and $\alpha$ satisfies $\alpha (\Sigma_{t+1}-{\mathbf I}) \preceq I$ for all $k\in [T]$, then for all $U\succeq 0$, $\Tr(U)=1$, it holds that
	\begin{eqnarray*}
\sum_{t=1}^T\langle  (\Sigma_{t+1}-{\mathbf I}), U - U_{t}\rangle \le \alpha \sum_{t=1}^T \langle|(\Sigma_{t+1}-{\mathbf I}), U_{t}|\rangle \cdot \|(\Sigma_{t+1}-{\mathbf I})\|_2 + \frac{\log d}{\alpha}.
	\end{eqnarray*}

Rearranging terms, and taking a supremum over $U$, we obtain that
	\begin{eqnarray*}
\|\sum_{t=1}^T (\Sigma_{t+1}-{\mathbf I})\|_2 \le \sum_{t=1}^T\langle U_{t}, (\Sigma_{t+1}-{\mathbf I})\rangle + \alpha \sum_{t=1}^T \langle|(\Sigma_{t+1}-{\mathbf I}), U_{t}|\rangle \cdot \|(\Sigma_{t+1}-{\mathbf I})\|_2 + \frac{\log d}{\alpha}.
	\end{eqnarray*}
\end{lemma}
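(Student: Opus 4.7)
The plan is to prove the lemma via the standard potential-function argument for matrix multiplicative weights, specialized to the ``losses'' $\Sigma_{t+1}-\mathbf{I}$ against density matrices $U_t$. I will introduce the potential $\Phi_t := \Tr\!\bigl(\exp\!\bigl(\alpha\sum_{k=1}^{t-1}(\Sigma_k-\mathbf{I})\bigr)\bigr)$, so that $\Phi_1 = d$ and $U_t = \Phi_t^{-1}\exp(\alpha\sum_{k=1}^{t-1}(\Sigma_k-\mathbf{I}))$. The goal is to sandwich $\Phi_{T+1}$ between an upper bound that accumulates the per-step linear losses plus a quadratic correction, and a lower bound in terms of the comparator $U$; taking logs and dividing by $\alpha$ will then produce the first inequality of the lemma, and specializing $U$ to the top eigenvector will give the second.

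For the upper bound, I will iterate the Golden--Thompson inequality $\Tr(e^{X+Y})\le\Tr(e^X e^Y)$ to obtain $\Phi_{t+1} \le \Phi_t\,\langle U_t,\,\exp(\alpha(\Sigma_{t+1}-\mathbf{I}))\rangle$. Under the hypothesis $\alpha(\Sigma_{t+1}-\mathbf{I})\preceq\mathbf{I}$, every eigenvalue $\lambda$ of $\alpha(\Sigma_{t+1}-\mathbf{I})$ satisfies $\lambda\le 1$ and hence the scalar bound $e^\lambda\le 1+\lambda+\lambda^2$ lifts, via the spectral theorem, to
\[
\exp(\alpha(\Sigma_{t+1}-\mathbf{I})) \;\preceq\; \mathbf{I} + \alpha(\Sigma_{t+1}-\mathbf{I}) + \alpha^2(\Sigma_{t+1}-\mathbf{I})^2.
\]
Combining with the elementary operator inequality $A^2 \preceq \|A\|_2\,|A|$ (immediate from the eigendecomposition $A=\sum_i\lambda_iv_iv_i^\top$ and $\lambda_i^2\le|\lambda_i|\|A\|_2$) and $1+x\le e^x$ yields
\[
\langle U_t,\,e^{\alpha(\Sigma_{t+1}-\mathbf{I})}\rangle \;\le\; \exp\!\Bigl(\alpha\langle U_t,\Sigma_{t+1}-\mathbf{I}\rangle + \alpha^2\|\Sigma_{t+1}-\mathbf{I}\|_2\,\langle U_t,|\Sigma_{t+1}-\mathbf{I}|\rangle\Bigr),
\]
so telescoping gives $\log(\Phi_{T+1}/d) \le \alpha\sum_t\langle U_t,\Sigma_{t+1}-\mathbf{I}\rangle + \alpha^2\sum_t\|\Sigma_{t+1}-\mathbf{I}\|_2\,\langle U_t,|\Sigma_{t+1}-\mathbf{I}|\rangle$.

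For the lower bound, for any density matrix $U$ (with $U\succeq 0$ and $\Tr(U)=1$), I will use $\Tr(e^X)\ge e^{\lambda_{\max}(X)}\ge e^{\langle U,X\rangle}$ applied to $X=\alpha\sum_{t=1}^T(\Sigma_{t+1}-\mathbf{I})$ (tracking the off-by-one between the summation in $\Phi$ and the loss indexing by relabeling or extending the potential by one step), yielding $\log\Phi_{T+1} \ge \alpha\,\langle U,\sum_t(\Sigma_{t+1}-\mathbf{I})\rangle$. Combining with the upper bound and dividing by $\alpha$ gives the first displayed inequality. The second, operator-norm form, follows by specializing $U$ to the density matrix concentrated on the top eigenvector of $\sum_t(\Sigma_{t+1}-\mathbf{I})$, which is exactly what the calling Lemma~\ref{lemma:progress_epoch} requires since the negative part of the spectrum is separately controlled by Lemma~\ref{lem:variance_lower}.

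The main obstacle will be carefully justifying the two operator inequalities $\exp(\alpha A)\preceq\mathbf{I}+\alpha A+\alpha^2 A^2$ and $A^2\preceq\|A\|_2|A|$: both lift from scalar inequalities via the spectral theorem, but one must verify that the spectrum of $\alpha A$ lies in $(-\infty,1]$ (exactly the hypothesis) and that the resulting PSD orderings are preserved under the trace pairing with $U_t\succeq 0$. A secondary bookkeeping issue is the off-by-one mismatch between $U_t$'s definition (which uses losses up through $\Sigma_{t-1}$) and the loss $\Sigma_{t+1}-\mathbf{I}$ appearing in the inner product; this is a harmless reparameterization but must be tracked consistently through the telescoping so that the residual potential $\Phi_{T+1}$ correctly contains all $T$ of the relevant loss terms.
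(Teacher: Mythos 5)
The paper does not actually prove this lemma: it is stated as a special case of Theorem~3.1 of Allen-Zhu, Liao, and Orecchia (2015) and imported by citation, so there is no in-paper argument to compare against. Your proposal supplies a correct, self-contained derivation via the classical Golden--Thompson potential argument: the potential $\Phi_t=\Tr(\exp(\alpha\sum_{k<t}(\Sigma_k-\mathbf{I})))$, the one-step bound $\Phi_{t+1}\le\Phi_t\langle U_t,e^{\alpha(\Sigma_{t+1}-\mathbf{I})}\rangle$, the scalar inequality $e^\lambda\le 1+\lambda+\lambda^2$ for $\lambda\le 1$ lifted spectrally (this is exactly where the hypothesis $\alpha(\Sigma_{t+1}-\mathbf{I})\preceq\mathbf{I}$ is used), the operator bound $A^2\preceq\|A\|_2|A|$, and the lower bound $\Tr(e^X)\ge e^{\langle U,X\rangle}$ for any density matrix $U$ --- all of these steps are sound, and together they yield precisely the stated penalty term $\alpha\sum_t\|\Sigma_{t+1}-\mathbf{I}\|_2\langle U_t,|\Sigma_{t+1}-\mathbf{I}|\rangle+\alpha^{-1}\log d$. (Allen-Zhu et al.\ obtain their more general result through an FTRL/Bregman-divergence analysis over a family of regularizers; for the entropy regularizer your potential argument recovers the same bound more elementarily.) You also correctly flag the two genuine blemishes in the lemma as stated: the indexing mismatch between $U_t$ (built from $\Sigma_1,\dots,\Sigma_{t-1}$) and the loss $\Sigma_{t+1}$, which is a harmless relabeling consistent with how the algorithm actually defines $U_t^{(s)}$ from $\Sigma_1,\dots,\Sigma_t$; and the fact that taking a supremum over density matrices yields $\lambda_{\max}(\sum_t(\Sigma_{t+1}-\mathbf{I}))$ rather than the full spectral norm, so the second display is only valid when the most negative eigenvalue is dominated --- which is exactly what Lemma~\ref{lem:variance_lower} guarantees at the point of use in Lemma~\ref{lemma:progress_epoch}. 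Nothing further is needed.
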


\newpage
\section{Exponential time DP robust mean estimation of sub-Gaussian and heavy tailed distributions  (Algorithm~\ref{alg:exp})}
\label{sec:proof_exp}

In this section, we give a self-contained proof of the privacy and utility of our exponential time robust mean estimation algorithm for sub-Gaussian and heavy tailed distributions. The proof relies on the resilience property of the uncorrupted data as shown in the following lemmas.

\begin{lemma}[Lemma 10 in~\cite{steinhardt2018resilience}]\label{lem:tail_resilience}
If a set of points $\{x_i\}_{i\in S}$ lying in $\reals^d$ is $(\sigma,\alpha)$-resilient around a point $\mu$, then 
$$
\|\frac{1}{|T'|}\sum_{i\in T'}(x_i-\mu)\|_2\le \frac{2-\alpha}{\alpha}\sigma. 
$$
for all sets $T'$ of size at least $\alpha |S|$.
\end{lemma}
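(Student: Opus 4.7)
The plan is to reduce the general statement, for subsets $T'$ of size at least $\alpha|S|$, to the extremal case $|T'|=\alpha|S|$ via a directional monotonicity argument, and to handle that extremal case by combining resilience applied to $S$ itself with resilience applied to the $(1-\alpha)|S|$-complement of the extremal top-$\alpha|S|$ subset in a chosen direction.

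The first step is to establish $\|\mu(S)-\mu\|_2\le\sigma$. Either $S$ itself falls under the resilience hypothesis (if ``subset of size $(1-\alpha)|S|$'' is read as $\ge(1-\alpha)|S|$, which is the standard convention), or one averages the inequality $\|\mu(T)-\mu\|_2\le\sigma$ uniformly over all subsets $T$ of size exactly $(1-\alpha)|S|$: each $x_i$ appears in the same number $\binom{|S|-1}{(1-\alpha)|S|-1}$ of such subsets, so $\mu(S)-\mu$ is a convex combination of those vectors and inherits the $\sigma$ bound.

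Next, fix an arbitrary unit vector $v\in\mathbb{S}^{d-1}$, sort the points by $v^\top x_i$, and let $T_{\text{bot}}(v)$ denote the $(1-\alpha)|S|$ points of smallest projection, with complement $T_{\text{top}}(v)$ of size $\alpha|S|$. Resilience applied to $T_{\text{bot}}(v)$ gives $v^\top(\mu(T_{\text{bot}}(v))-\mu)\ge-\sigma$, i.e., $v^\top\sum_{T_{\text{bot}}(v)}(x_i-\mu)\ge-(1-\alpha)|S|\sigma$. Subtracting this from $v^\top\sum_S(x_i-\mu)\le|S|\sigma$ (supplied by the first step) produces $v^\top\sum_{T_{\text{top}}(v)}(x_i-\mu)\le(2-\alpha)|S|\sigma$, and dividing by $|T_{\text{top}}(v)|=\alpha|S|$ gives the extremal bound $v^\top(\mu(T_{\text{top}}(v))-\mu)\le(2-\alpha)\sigma/\alpha$.

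Finally, for arbitrary $T'$ with $|T'|=\beta|S|\ge\alpha|S|$, I would invoke the fact that the mean of the top $k$ values of $v^\top x_i$ is non-increasing in $k$ (adjoining a smaller-than-average value only decreases the mean). Letting $T''_\beta$ denote the top-$\beta|S|$ subset by $v^\top x_i$, this yields $v^\top\mu(T')\le v^\top\mu(T''_\beta)\le v^\top\mu(T_{\text{top}}(v))$ since $T''_\beta$ maximizes $v^\top\sum x_i$ over $\beta|S|$-subsets and $T''_\beta\supseteq T_{\text{top}}(v)$. Hence $v^\top(\mu(T')-\mu)\le(2-\alpha)\sigma/\alpha$; the matching lower bound follows by replacing $v$ with $-v$, and taking a supremum over unit $v$ yields the stated $\ell_2$ bound. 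No individual step is delicate; the main conceptual content is the reduction to the top-$\alpha|S|$ extremal subset via directional monotonicity, which also explains why the constant $(2-\alpha)/\alpha$ is tight only at $|T'|=\alpha|S|$ and loose for larger $T'$.
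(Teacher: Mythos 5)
Your proof is correct: the averaging argument for $\|\mu(S)-\mu\|_2\le\sigma$, the subtraction of the bottom-$(1-\alpha)|S|$ resilience bound from the full-sum bound in a fixed direction, and the top-$k$-mean monotonicity reduction for general $T'$ all hold, and they compose into the claimed $(2-\alpha)\sigma/\alpha$ bound. The paper cites this lemma from \cite{steinhardt2018resilience} without reproducing a proof, so there is no in-paper argument to compare against, but your derivation is the standard one for this fact.
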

\begin{lemma}[Finite sample resilience of sub-Gaussian distributions {\cite[Theorem G.1]{zhu2019generalized}}]
Let $S_{\rm good}$ be a set of i.i.d.~points from a sub-Gaussian distribution $\cal{D}$ with a parameter ${\mathbf I}_d$. Given that $|S_{\rm good}|=\Omega((d+\log(1/\zeta))/(\alpha^2\log1/\alpha)\,)$, $S_{\rm good}$ is $(\alpha\sqrt{\log(1/\alpha)},\alpha)$-resilient around its mean $\mu$ with probability $1-\zeta$.
\end{lemma}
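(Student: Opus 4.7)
The plan is to reduce the resilience condition to two separate concentration statements via a direct algebraic decomposition, and control each by a combination of sub-Gaussian tail bounds and $\varepsilon$-net arguments. Fix any $T\subset S_{\rm good}$ with $|T|=(1-\alpha)n$ and let $R=S_{\rm good}\setminus T$, so $|R|=\alpha n$. The identity
\[
\frac{1}{|T|}\sum_{i\in T}(x_i-\mu) \;=\; \frac{1}{1-\alpha}\bigl(\bar{x}-\mu\bigr) \;-\; \frac{1}{(1-\alpha)n}\sum_{i\in R}(x_i-\mu),
\]
where $\bar{x}=(1/n)\sum_{i\in S_{\rm good}}x_i$, reduces the claim to bounding (i) $\|\bar{x}-\mu\|_2$ and (ii) $\sup_{|R|=\alpha n}\|(1/n)\sum_{i\in R}(x_i-\mu)\|_2$, each by $O(\alpha\sqrt{\log(1/\alpha)})$.

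For (i), a $1/4$-net of the unit sphere (cardinality at most $9^d$) combined with the one-dimensional 1-sub-Gaussian tail bound on $\langle v,\bar{x}-\mu\rangle$ yields $\|\bar{x}-\mu\|_2\lesssim \sqrt{(d+\log(1/\zeta))/n}$ with probability $1-\zeta/2$. Requiring $\sqrt{(d+\log(1/\zeta))/n}\lesssim \alpha\sqrt{\log(1/\alpha)}$ produces exactly the stated sample-size condition $n=\Omega((d+\log(1/\zeta))/(\alpha^2\log(1/\alpha)))$, which turns out to be the dominant constraint.

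For (ii), write $\sup_{|R|=\alpha n}\|(1/n)\sum_{i\in R}(x_i-\mu)\|_2 = (1/n)\sup_{\|v\|=1}\sup_{|R|=\alpha n}\sum_{i\in R}\langle v,x_i-\mu\rangle$. For fixed $v$ the inner supremum is the sum of the top $\alpha n$ values of the i.i.d.~1-sub-Gaussian variables $W_i^v := \langle v,x_i-\mu\rangle$. Truncating at $\tau=\sqrt{2\log(1/\alpha)}$,
\[
\sum_{\text{top }\alpha n} W_i^v \;\le\; \alpha n\,\tau \;+\; \sum_{i=1}^n (W_i^v-\tau)_+ ,
\]
the first term is already $O(\alpha n\sqrt{\log(1/\alpha)})$, while the second is a sum of i.i.d.~non-negative sub-exponential variables with per-term mean and variance of order $\alpha/\sqrt{\log(1/\alpha)}$ and $\alpha/\log(1/\alpha)$ respectively (by direct integration of the sub-Gaussian tail). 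A Bernstein inequality then gives a deviation of $O(\alpha n\sqrt{\log(1/\alpha)})$ with failure probability $\exp(-c\alpha n\log(1/\alpha))$ for each fixed $v$. Union-bounding over a $1/4$-net $\mathcal{N}$ of the sphere of size $9^d$, and using standard doubling to pass from the net supremum to the true supremum, succeeds whenever $\alpha n\log(1/\alpha)\gtrsim d+\log(1/\zeta)$, a condition subsumed by (i). Combining the two parts through the identity proves the lemma.

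The main obstacle is keeping the subset entropy out of the tail exponent in (ii). A naive approach that sub-Gaussian-concentrates $\sum_{i\in R}W_i^v$ for each fixed $R$ and then union-bounds over the $\binom{n}{\alpha n}\le (e/\alpha)^{\alpha n}$ choices of $R$ pays an entropy cost $\alpha n\log(e/\alpha)$ that essentially cancels the Gaussian tail exponent at the target scale $t=\sqrt{2\log(1/\alpha)}$, leaving no margin. The truncation decomposition above is what sidesteps this: it replaces the enumeration over $R$ with a single i.i.d.~concentration on $\sum_i (W_i^v-\tau)_+$, so that the only remaining union bound is the purely geometric one over the sphere net.
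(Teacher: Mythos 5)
The paper does not contain its own proof of this lemma; it is imported verbatim as Theorem~G.1 of Zhu, Jiao, and Steinhardt, so there is no in-paper argument to compare against. Your proof is correct, and the structure you describe --- decompose the resilience target into the global sample mean error plus a uniform control of the top-$\alpha n$ tail mass along all directions, then handle the second piece by truncation at the $\alpha$-quantile scale $\tau=\sqrt{2\log(1/\alpha)}$ to trade a hopeless $\binom{n}{\alpha n}$ union bound for a single i.i.d.\ Bernstein bound on $\sum_i(W_i^v-\tau)_+$ --- is essentially the standard route to this result and is, to my knowledge, the mechanism behind the cited proof. Your identification of the subset-entropy cancellation as the crux is the key insight, and your order-of-magnitude computations for the mean $\E(W-\tau)_+\lesssim\alpha/\sqrt{\log(1/\alpha)}$, the second moment $\E(W-\tau)_+^2\lesssim\alpha/\log(1/\alpha)$, and the Bernstein exponent $\min\{u^2/(n\sigma^2),\,u\tau\}\gtrsim\alpha n\log(1/\alpha)$ at $u\asymp\alpha n\sqrt{\log(1/\alpha)}$ are all consistent. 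Two small points worth making explicit if you were to write this out fully: the Mills-ratio estimates require $\tau\geq 1$, i.e.\ $\alpha$ below a universal constant, which is the same regime in which the lemma is invoked in the paper; and the net-to-sphere step for part (ii) needs the observation that $v\mapsto\sup_{|R|=\alpha n}\sum_{i\in R}\langle v,x_i-\mu\rangle$ is positively homogeneous and subadditive, so the usual $\sup_{\mathbb{S}^{d-1}}\leq(4/3)\max_{\mathcal N}$ doubling inequality applies even though this functional is not symmetric in $v$ (which is why you net the full sphere rather than antipodal pairs). Neither affects the conclusion or the sample-complexity threshold.
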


\begin{lemma}[Finite sample resilience of heavy-tailed distributions {\cite[Theorem G.2]{zhu2019generalized}}]
Let $S_{\rm good}$ be a set of i.i.d. samples drawn from distribution $\cal{D}$ whose mean and covariance are $\mu,\Sigma$ respectively, and that $\Sigma\preceq I$. Given that $|S| =\Omega(d/(\zeta \alpha))$, there exists a constant $c_\zeta$ that only depends on $\zeta$ such that  $S_{\rm good}$ is $(c_\zeta \sqrt{\alpha},\alpha)$-resilient around $\mu$ with probability $1-\zeta$.
\end{lemma}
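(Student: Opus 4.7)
The plan is to bound, uniformly over subsets $T\subset S_{\rm good}$ of size $(1-\alpha)|S_{\rm good}|$, the trimmed-mean deviation $\|(1/|T|)\sum_{i\in T}(x_i-\mu)\|_2$. I would begin with the algebraic identity
\[
\frac{1}{|T|}\sum_{i\in T}(x_i-\mu) \;=\; \frac{1}{1-\alpha}\Bigl(\frac{1}{n}\sum_{i\in S_{\rm good}}(x_i-\mu)\Bigr) \;-\; \frac{\alpha}{1-\alpha}\Bigl(\frac{1}{|T^c|}\sum_{i\in T^c}(x_i-\mu)\Bigr),
\]
so that by the triangle inequality the $(c_\zeta\sqrt{\alpha},\alpha)$-resilience claim reduces to two sub-tasks: (i) concentrating the sample mean of $S_{\rm good}$ to accuracy $O(\sqrt{\alpha})$, and (ii) bounding $\|(1/|T^c|)\sum_{i\in T^c}(x_i-\mu)\|_2$ by $O(1/\sqrt{\alpha})$ (with a $\zeta$-dependent constant) uniformly over every $T^c\subset S_{\rm good}$ of size $\alpha n$.

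Sub-task (i) follows from Chebyshev applied to $\mathbb E\|(1/n)\sum_{i\in S_{\rm good}}(x_i-\mu)\|_2^2 = \mathrm{tr}(\Sigma)/n \leq d/n$: with probability at least $1-\zeta/2$, the sample mean deviates by at most $\sqrt{2d/(n\zeta)}$, which is $O(\sqrt{\alpha})$ under the hypothesis $|S_{\rm good}|=\Omega(d/(\zeta\alpha))$. For sub-task (ii), I would use Cauchy--Schwarz in the inner-product representation of the Euclidean norm: for every unit vector $v$ and every $T^c$ of size $\alpha n$,
\[
\sum_{i\in T^c}\langle v,x_i-\mu\rangle \;\leq\; \sqrt{|T^c|}\cdot\sqrt{\sum_{i\in S_{\rm good}}\langle v,x_i-\mu\rangle^2} \;=\; n\sqrt{\alpha\,v^\top\widehat\Sigma\,v},
\]
where $\widehat\Sigma=(1/n)\sum_{i\in S_{\rm good}}(x_i-\mu)(x_i-\mu)^\top$; taking the supremum over $v$ and dividing by $|T^c|=\alpha n$ reduces (ii) to a high-probability bound of the form $\|\widehat\Sigma\|_{\mathrm{op}}\leq C_\zeta$.

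The main obstacle is establishing this dimension-free operator-norm bound on $\widehat\Sigma$ under only the second-moment hypothesis $\Sigma\preceq\mathbf I$, because the standard matrix concentration tools (Rudelson, matrix Bernstein, Koltchinskii--Lounici) all require sub-exponential or sub-Gaussian tails that are not available for a generic heavy-tailed distribution. My plan is to truncate each sample at a radius $R=\Theta(\sqrt{d/\alpha})$: Markov on $\mathbb E\|x-\mu\|_2^2\leq d$ ensures that at most an $(\alpha/2)$-fraction of samples exceed $R$ (with probability at least $1-\zeta/4$), and matrix Bernstein applied to the truncated covariance then gives $\|\widehat\Sigma_{\mathrm{trunc}}\|_{\mathrm{op}}=O(1)$ once $n=\widetilde\Omega(R^2)=\widetilde\Omega(d/\alpha)$. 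To recover the full-dataset bound needed inside sub-task (ii), I would observe that any adversarially chosen $T^c$ of size $\alpha n$ can always be enlarged to contain every untruncated outlier (there are at most $\alpha n/2$ of them), so that the residual indices of $T^c$ lie inside the truncated sample where the preceding Cauchy--Schwarz bound applies with $\widehat\Sigma_{\mathrm{trunc}}$ in place of $\widehat\Sigma$, and a separate direct-sum bound via Chebyshev handles the outlier contribution. A union bound over sub-task (i), the outlier-count event, and the matrix-Bernstein event then yields $(c_\zeta\sqrt{\alpha},\alpha)$-resilience with $c_\zeta=O(1/\sqrt{\zeta})$.
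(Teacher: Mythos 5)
Your decomposition of the trimmed mean into the full sample mean and the removed-set average, the Chebyshev treatment of sub-task (i), and the Cauchy--Schwarz reduction of sub-task (ii) to a uniform-over-$v$ bound on $\tfrac1n\sum_i\langle v,x_i-\mu\rangle^2$ are all correct and are the right first moves. The genuine gap is in the last step, and it is not a polish issue: under only a second-moment assumption the truncation-plus-matrix-Bernstein plan cannot be made to work, because the two constraints on the truncation radius $R$ are incompatible. Matrix Bernstein requires $R\lesssim\sqrt{n/\log d}$ to make the almost-sure term $R^2\log d/n$ of order one. On the other hand, the outlier contribution you defer to ``Chebyshev'' must also be $O_\zeta(1/\sqrt{\alpha})$ uniformly over $v$, and the only handle you have with a second-moment assumption is the Cauchy--Schwarz estimate $\mathbb{E}\big[\|x-\mu\|\,\mathbf 1\{\|x-\mu\|>R\}\big]\le d/R$, which forces $R\gtrsim d/\sqrt{\alpha}$; bounding $|\langle v,x_i-\mu\rangle|$ by $\|x_i-\mu\|$ here loses a factor of $\sqrt d$ and there is no slack to recover it. With $n=\Theta(d/(\zeta\alpha))$ the two windows $R\lesssim\sqrt{d/(\zeta\alpha\log d)}$ and $R\gtrsim d/\sqrt{\alpha}$ are disjoint for every $d>1$, so no choice of $R$ closes the argument. (Even if the outlier problem disappeared, matrix Bernstein would still inject a $\log d$ factor absent from the claimed $n=\Omega(d/(\zeta\alpha))$.)

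The device that does close it, and which is essentially the route taken in the cited reference, avoids $\|\widehat\Sigma\|_{\mathrm{op}}$ entirely. Instead of Cauchy--Schwarz, write the removed-set average in direction $v$ as at most $\tau+\tfrac1{\alpha n}\sum_i(|\langle v,x_i-\mu\rangle|-\tau)_+$ with $\tau=1/\sqrt{\alpha}$. The first term is already $1/\sqrt{\alpha}$. For the second, the per-direction mean $\mathbb E(|\langle v,x-\mu\rangle|-\tau)_+\le 1/\tau=\sqrt{\alpha}$ by integrating the Chebyshev tail, and the fluctuation
\[
\sup_{\|v\|=1}\Bigl[\sum_i(|\langle v,x_i-\mu\rangle|-\tau)_+-n\,\mathbb E(|\langle v,x-\mu\rangle|-\tau)_+\Bigr]
\]
is controlled by symmetrization plus Ledoux--Talagrand contraction (the map $a\mapsto(|a|-\tau)_+$ is $1$-Lipschitz and vanishes at $0$), which reduces it to $2\,\mathbb E\|\sum_i\epsilon_i(x_i-\mu)\|_2\le 2\sqrt{nd}$. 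With $n=\Omega(d/(\zeta\alpha))$ this is $O(\sqrt{\zeta}\,n\sqrt{\alpha})$, dominated by the $n\sqrt{\alpha}$ mean term, and Markov converts the expectation bound to a $1-\zeta$ high-probability bound with a $\zeta$-dependent constant. No truncation, no matrix concentration, and no logarithmic factor are needed. I would replace your truncation step with this contraction argument; everything upstream of it in your plan can stay as written.
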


\subsection{Case of heavy-tailed distributions and a proof of Theorem~\ref{thm:heavytail_exp}} 
\label{sec:proof_heavytail_exp}

Lemma~\ref{lem:dprange-ht} ensures that $q_{\rm range-ht}$ returns samples in a bounded support of Euclidean distance $\sqrt{d}B/2$ with $B=50/\sqrt{\alpha}$ where $(1-2\alpha)n$  samples are uncorrupted ($\alpha n$ is corrupted by adversary and $\alpha n$ can be corrupted by the pre-processing step).
For a $(c_\zeta\sqrt{3\alpha},3\alpha)$-resilient dataset, we first show that $R(S)$ is robust against corruption.

\begin{lemma}[$\alpha$-corrupted data has small $R(S)$]
Let $S$ be the set of  $2\alpha$-corrupted data. Given that $n=\Omega(d/(\zeta \alpha))$, with probability $1-\zeta$, $R(S)\le c_\zeta \sqrt{3\alpha}$. 
\end{lemma}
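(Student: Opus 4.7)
The plan is to exhibit an explicit choice of $S'$ in the outer minimization for which the inner maximum is small, and then to bound $R(S)$ from above by this quantity. Concretely, let $S_{\rm good}$ denote the uncorrupted samples, so that $|S \cap S_{\rm good}| \geq (1-2\alpha)|S|$. I will choose $S'$ to be any subset of $S \cap S_{\rm good}$ with $|S'| = (1-2\alpha)|S|$, which is well-defined since the intersection is large enough. The intuition is that $S'$ is a ``clean'' subset sitting entirely inside the i.i.d.\ sample $S_{\rm good}$, so resilience of $S_{\rm good}$ controls the deviations of its sub-sample means from the population mean $\mu$, and hence from each other via the triangle inequality.

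The execution proceeds in three steps. First, I invoke the cited finite sample resilience statement for heavy-tailed distributions (Lemma G.2 of \cite{zhu2019generalized}) with the parameter $3\alpha$: provided $n = \Omega(d/(\zeta\alpha))$, with probability $1 - \zeta$ the good set $S_{\rm good}$ is $(c_\zeta \sqrt{3\alpha}, 3\alpha)$-resilient around $\mu$, i.e., for every $U \subset S_{\rm good}$ with $|U| \geq (1-3\alpha)|S_{\rm good}|$ one has $\|\mu(U) - \mu\|_2 \leq c_\zeta \sqrt{3\alpha}$. Second, I verify that both $S'$ and every admissible $T \subset S'$ with $|T| = (1-\alpha)|S'|$ satisfy this size threshold relative to $|S_{\rm good}| = n$: indeed $|S'| = (1-2\alpha)n \geq (1-3\alpha)n$, and $|T| = (1-\alpha)(1-2\alpha)n \geq (1-3\alpha)n$. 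Third, applying resilience twice and the triangle inequality,
\begin{equation*}
\|\mu(T) - \mu(S')\|_2 \;\leq\; \|\mu(T) - \mu\|_2 + \|\mu(S') - \mu\|_2 \;\leq\; 2 c_\zeta \sqrt{3\alpha}.
\end{equation*}
Since the outer minimum in the definition of $R(S)$ is at most the value at this particular $S'$, this yields $R(S) \leq 2 c_\zeta \sqrt{3\alpha}$, which is of the claimed form after absorbing the factor of $2$ into $c_\zeta$ (or choosing a slightly larger constant in the resilience statement).

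The argument is essentially immediate once one notices that both the outer $S'$ and the inner $T$ are large subsets of the same i.i.d.\ sample. The only mildly subtle point, and the step I would double-check, is the bookkeeping of fractions: the $2\alpha$ of corruption in the outer minimization and the additional $\alpha$ in the inner maximization compose to roughly $3\alpha$, which is exactly why the lemma uses resilience with parameter $3\alpha$ and states the bound as $c_\zeta \sqrt{3\alpha}$. There is no analytical obstacle beyond this arithmetic — the resilience property does all the work and no new probabilistic argument is needed beyond invoking the cited finite-sample resilience lemma.
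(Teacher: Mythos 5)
Your proof is correct and follows essentially the same route as the paper's: choose $S'$ to be the uncorrupted $(1-2\alpha)$ fraction of the dataset, observe that both $S'$ and any admissible $T\subset S'$ are subsets of $S_{\rm good}$ of size at least $(1-3\alpha)n$, and apply $(c_\zeta\sqrt{3\alpha},3\alpha)$-resilience together with the triangle inequality. The paper states this as a one-liner; your version is simply more explicit, and your observation about the factor of $2$ from the triangle inequality is accurate — the paper implicitly absorbs it into the constant $c_\zeta$.
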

This follows immediately by selecting $S'$ to be the uncorrupted $(1-2\alpha)$ fraction of the dataset and applying ($c_\zeta\sqrt{3\alpha},3\alpha$)-resilience. 
After pre-processing, we have that $\|x_i-\bar{x}\|_2 \le B\sqrt{d}/2$, and then clearly $R(\cdot)$ has sensitivity $\Delta_R \le B\sqrt{d}/n$.

\begin{lemma}[Sensitivity and Privacy of ${\hat{R}(S)}$]
Given that $\hat{R}(S) = R(S)+{\rm Lap}(\frac{3B\sqrt{d}}{n\eps})$, $\hat{R}(S)$ is $(\eps/3,0)$-differentially private. Further, with probability $1-\delta/3$, $|\hat{R}(S)-R(S)|\le \frac{3B\sqrt{d}\log(3/\delta)}{n\eps}$.
\end{lemma}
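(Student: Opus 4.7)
The proof has two components: establishing the $\ell_1$-sensitivity bound $\Delta_R \le B\sqrt{d}/n$ on the functional $R(\cdot)$, and then invoking the Laplace mechanism of Definition~\ref{def:output} together with the Laplace tail bound of Lemma~\ref{lemma:lap_noise}.

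For the sensitivity bound, I consider neighboring datasets $S \sim \tilde S$ with $d_\triangle(S,\tilde S) \le 1$. After projection every $x_i$ lies in ${\cal B}_{B\sqrt{d}/2}(\bar x)$, so any two data points are within $\ell_2$-distance $B\sqrt{d}$. I exhibit a coupling between the optimizers defining $R(S)$ and $R(\tilde S)$: given an optimizer $S^\ast$ for the outer $\min$ of $R(S)$, construct $\tilde S^\ast \subset \tilde S$ of the prescribed size $(1-2\alpha)|\tilde S|$ that differs from $S^\ast$ in at most one element (absorbing the possible unit discrepancy in subset sizes coming from integer rounding of $(1-2\alpha)|S|$). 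Then for each $T \subset S^\ast$ appearing in the inner $\max$, construct a companion $\tilde T \subset \tilde S^\ast$ of the prescribed size differing from $T$ by at most one element. Because swapping one element in a $\Theta(n)$-sized subset shifts its empirical mean by at most $B\sqrt{d}/n$, the triangle inequality yields
\[
\bigl|\,\|\mu(\tilde T) - \mu(\tilde S^\ast)\|_2 \;-\; \|\mu(T) - \mu(S^\ast)\|_2\,\bigr| \;\le\; B\sqrt{d}/n.
\]
Taking the max over $T$ and then using $\tilde S^\ast$ as a feasible point in the min defining $R(\tilde S)$ gives $R(\tilde S) \le R(S) + B\sqrt{d}/n$; the reverse inequality is symmetric.

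Given $\Delta_R \le B\sqrt{d}/n$, adding noise drawn from ${\rm Lap}(3B\sqrt{d}/(n\varepsilon)) = {\rm Lap}(\Delta_R/(\varepsilon/3))$ yields an $(\varepsilon/3,0)$-DP mechanism by Definition~\ref{def:output}. For the utility claim, since $\hat R(S) - R(S) \sim {\rm Lap}(3B\sqrt{d}/(n\varepsilon))$, Lemma~\ref{lemma:lap_noise} gives $\mathbb{P}\bigl(|\hat R(S) - R(S)| \ge h \cdot 3B\sqrt{d}/(n\varepsilon)\bigr) = e^{-h}$, and choosing $h = \log(3/\delta)$ delivers the stated bound with probability at least $1 - \delta/3$. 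The main technical obstacle is executing the coupling cleanly when $|S|\neq|\tilde S|$: integer rounding of $(1-2\alpha)|S|$ may force the coupled subsets to differ by exactly one element rather than zero, but the $O(B\sqrt{d}/n)$ mean shift from a single-element swap is precisely what the triangle inequality absorbs, so the target sensitivity bound is preserved in all cases.
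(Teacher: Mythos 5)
The paper itself dispatches the sensitivity bound in a single sentence ("After pre-processing, we have that $\|x_i-\bar{x}\|_2 \le B\sqrt{d}/n$, and then clearly $R(\cdot)$ has sensitivity $\Delta_R \le B\sqrt{d}/n$") and then invokes the Laplace mechanism exactly as you do, so your overall structure — coupling argument for the $\ell_1$ sensitivity, then Definition~\ref{def:output} for privacy, then Lemma~\ref{lemma:lap_noise} for the tail — matches the paper's approach and actually fills in the detail the paper omits. Your Laplace-tail computation with $h=\log(3/\delta)$ is exactly right.

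One quantitative point deserves flagging, since it is the only place where the argument is not airtight and it is inherited from the paper's own imprecision: when you swap one element in the inner or outer subset, the mean shifts by $B\sqrt{d}/m$ where $m$ is the \emph{subset} size, not by $B\sqrt{d}/n$. Here the outer subset $S^\ast$ has size $(1-2\alpha)n$ and the inner subset $T$ has size $(1-\alpha)(1-2\alpha)n$, both strictly smaller than $n$. Running the coupling carefully (splitting on whether the swapped element falls only in $S^\ast$, or in both $S^\ast$ and $T$) gives $\Delta_R \le B\sqrt{d}/((1-2\alpha)n)$ rather than $B\sqrt{d}/n$. For $\alpha$ bounded away from $1/2$ this is just a constant factor, and it can be absorbed by scaling the Laplace noise accordingly, but as literally stated both your displayed inequality and the paper's "clearly $\Delta_R \le B\sqrt{d}/n$" are slightly off. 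Worth being explicit that the correct statement carries an $O(1/(1-2\alpha))$ factor, especially since the lemma's privacy budget of $\varepsilon/3$ is quoted with exact constants.
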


In the algorithm, we first compute $\hat{R}(S)$. If $\hat{R}(S)\ge 2 c_\zeta \sqrt{\alpha}$, we stop and output $\emptyset$. Otherwise, we use exponential mechanism with score function $d(\hat\mu, S)$ to find an estimate $\hat{\mu}$. We prove the privacy guarantee of our algorithm as follows.

\begin{lemma}[Privacy]
Algorithm~\ref{alg:exp} is $(\eps,\delta)$-differentially private if $n\geq 6B\sqrt{d} \log(3/\delta)/(c_\zeta \varepsilon \sqrt{\alpha})$.
\end{lemma}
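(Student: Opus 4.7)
The plan is to decompose the end-to-end privacy analysis of Algorithm~\ref{alg:exp} into three DP mechanisms composed via basic composition, each consuming budget $(\varepsilon/3,\delta/3)$. The first mechanism, $q_{\rm range-ht}$, is $(\varepsilon/3,\delta/3)$-DP by the heavy-tailed counterpart of Lemma~\ref{lem:DPrange}. The second, producing $\hat R(S)$, is $(\varepsilon/3,0)$-DP by the Laplace mechanism applied to a statistic of $\ell_1$-sensitivity at most $B\sqrt d/n$ after projection onto ${\cal B}_{\sqrt d B/2}(\bar x)$. The third and most delicate component is the exponential mechanism outputting $\hat\mu$, which I would analyze as a propose-test-release (PTR) step: the test ``$\hat R(S)\le 2c_\zeta\sqrt\alpha$'' gates on the value of $R(\cdot)$, and the exponential release is triggered only when this gate fires.

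The sample-complexity hypothesis enters here. The bound $n\ge 6B\sqrt d\log(3/\delta)/(c_\zeta\varepsilon\sqrt\alpha)$ is precisely the threshold at which the Laplace tail $|\hat R(S)-R(S)|\le c_\zeta\sqrt\alpha/2$ holds with probability at least $1-\delta/3$. On this event, a passing test forces $R(S)\le 2.5\,c_\zeta\sqrt\alpha$; and since the $\ell_1$-sensitivity of $R$ is at most $B\sqrt d/n$, which the sample complexity also forces to be $O(c_\zeta\sqrt\alpha)$, every neighbor $S'$ satisfies $R(S')=O(c_\zeta\sqrt\alpha)$ too. By the definition of $R$, such an $S$ admits a core $S^{\star}\subset S$ of size $(1-2\alpha)n$ that is $(O(c_\zeta\sqrt\alpha),\alpha)$-resilient around its own mean $\mu(S^{\star})$, and the same holds for $S'$.

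The crux is then a restricted-sensitivity bound: on the event ``test passes'', the score $d(\hat\mu,\cdot)$ has sensitivity $\Delta\le 4c_\zeta/(n\sqrt\alpha)$. I plan to prove this in three substeps: (a) apply Lemma~\ref{lem:tail_resilience} to the top-$3\alpha$ fraction in any direction $v$ so that $|v^{\top}(x-\mu(S^{\star}))|=O(c_\zeta/\sqrt\alpha)$ uniformly on ${\cal M}^{v}(S)$; (b) verify $|{\cal M}^{v}(S)\triangle {\cal M}^{v}(S')|\le 2$ by a case analysis on where the swapped points sit in the $v$-order, using that the sizes of ${\cal T}^{v}$ and ${\cal B}^{v}$ are fixed at $3\alpha n$; (c) translate by $v^{\top}\mu(S^{\star})$ (which does not change a difference of equal-size means) to obtain $|v^{\top}(\mu({\cal M}^{v}(S))-\mu({\cal M}^{v}(S')))|\le 4c_\zeta/(n\sqrt\alpha)$, and use the Lipschitz inequality $|\max_{v}f(v)-\max_{v}g(v)|\le \max_{v}|f(v)-g(v)|$ to transfer the bound to $d$. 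Inserting this $\Delta$ into the exponential mechanism of rate $\eta=\varepsilon n\sqrt\alpha/(24 c_\zeta)$ gives $2\eta\Delta\le\varepsilon/3$, as required.

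The main obstacle will be combining the PTR test with the exponential release rigorously: one must separately handle the neighbor pair both failing the test (outputs coincide at $\emptyset$), both passing (restricted sensitivity above), and the boundary case where the Laplace noise lands so that one neighbor passes and the other fails; the latter is absorbed into the $\delta/3$ budget via the noise-tail estimate. A smaller but necessary technicality is the case analysis for step (b), confirming that the cascade induced by a single-point swap in the $v$-ordering changes at most one boundary element of ${\cal T}^{v}$ and at most one of ${\cal B}^{v}$, and that consistent tie-breaking prevents sensitivity inflation at the quantile boundaries.
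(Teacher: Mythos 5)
Your plan reconstructs the paper's proof: the paper also splits into two regimes based on whether $R(S)$ is large or small (Appendix~\ref{sec:proof_heavytail_exp}). In the large-$R$ regime both neighbors output $\emptyset$ with probability $1-\delta/3$ by the Laplace tail bound, exactly your "both fail / boundary" case. In the small-$R$ regime the paper invokes Lemma~\ref{lem:sense_exp_ht} for the restricted sensitivity of $d(\hat\mu,\cdot)$, proved in Appendix~\ref{sec:sense_exp_ht} by the same quantile-boundary argument you sketch in (a)--(c): apply Lemma~\ref{lem:tail_resilience} to $S^\star\cap{\cal T}^v$ and $S^\star\cap{\cal B}^v$, deduce a bound on the width $\min_{S^\star\cap{\cal T}^v}v^\top x - \max_{S^\star\cap{\cal B}^v}v^\top x$, then note that a single swapped point can perturb the trimmed mean by at most (width)$/|{\cal M}^v|$. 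Your observation that the small $\ell_1$-sensitivity of $R$ forces the \emph{neighbor} to also lie in the small-$R$ regime is implicit in the paper but worth making explicit, as you do.

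The one place I'd push back is the claimed constant in the restricted-sensitivity bound. You assert $\Delta\le 4c_\zeta/(n\sqrt\alpha)$, chosen so that with the algorithm's rate $\eta=\varepsilon n\sqrt\alpha/(24c_\zeta)$ you get $2\eta\Delta\le\varepsilon/3$ and a clean $\varepsilon/3+\varepsilon/3+\varepsilon/3$ budget. But the paper proves $\Delta\le 12c_\zeta/(n\sqrt\alpha)$, and that constant is what the resilience argument naturally yields: with $R(S)\le 3c_\zeta\sqrt\alpha$, the core $S^\star$ is $(3c_\zeta\sqrt\alpha,\alpha)$-resilient, so Lemma~\ref{lem:tail_resilience} gives $\|\mu(T')-\mu(S^\star)\|\le\frac{2-\alpha}{\alpha}\cdot 3c_\zeta\sqrt\alpha\approx 6c_\zeta/\sqrt\alpha$ for sets $T'$ of size $\ge\alpha|S^\star|$, hence the quantile window has width $\approx 12c_\zeta/\sqrt\alpha$, and swapping one point moves the trimmed mean by up to that width over $|{\cal M}^v|\approx n$. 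That is a factor of $3$ off your $4c_\zeta/(n\sqrt\alpha)$, and I do not see where you save the factor; with the paper's bound, the exponential step alone is $(\varepsilon,0)$-DP and the three-way $\varepsilon/3$ split does not close. (The paper's own accounting at the end of the small-$R$ case also glosses over this mismatch between $\varepsilon/3+\varepsilon/3+\varepsilon$ and $\varepsilon$.) So either justify where the tighter constant comes from, or adjust the exponential-mechanism rate (or the constants in line 4--5 of Algorithm~\ref{alg:exp}) so the arithmetic is consistent.
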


\begin{proof}
We consider neighboring datasets $S$, $S'$ under the following two scenario

\begin{enumerate}

\item $R(S) > 3 c_\zeta \sqrt{\alpha}$

In this case, given that $n\ge \frac{6B\sqrt{d}\log(3/\delta)}{c_\zeta \sqrt{\alpha}\eps}$, we have $\widehat{R}(S)>2c_\zeta\sqrt{\alpha}$ and the output of the algorithm ${\cal A}(S) =\emptyset$ with probability at least $1-\delta/3$, and ${\cal A}(S') =\emptyset$ with probability at least $1-\delta/3$. Thus, for any set $Q$, $\prob[{\cal A}(S)\in Q]\le \prob[{\cal A}(S')\in Q]+\delta/3$.

\item $R(S)\le 3c_\zeta \sqrt{\alpha}$

\begin{lemma}[Sensitivity of $d(\hat\mu,S)$]
\label{lem:sen-d}
Given that $R(S) \leq 3c_\zeta \sqrt{\alpha}$, for any neighboring dataset $S'$, $|d(\hat\mu, S) - d(\hat\mu, S')| \leq  12c_\zeta  /(n\sqrt{\alpha})$. 
\label{lem:sense_exp_ht}
\end{lemma}

In this case, the privacy guarantee of $\hat{R}(S)$ yields that $\prob[{\hat{R}}(S)\in Q]\le \exp(\eps/3)\cdot \prob[{\hat{R}}(S')\in Q]$. Lemma~\ref{lem:sen-d} yields that $\prob[{\hat{\mu}}(S)\in Q]\le \exp(\eps)\cdot \prob[{\hat{\mu}}(S')\in Q]$. A simple composition of the privacy guarantee with $q_{\rm range-ht}$($\cdot$) and the exponential mechanism gives that 
$$
\prob[({\hat{R}}(S),\hat{\mu}(S))\in Q]\le \exp(\eps)\cdot \prob[({\hat{R}}(S'),\hat{\mu}(S'))\in Q] + \delta/3
$$
This implies that $\prob[{\cal A}(S)\in Q]\le \exp(\eps)\cdot \prob[{\cal A}(S')\in Q] + \delta/3$.

\end{enumerate}
\end{proof}

\begin{lemma}[Utility of the algorithm]
\label{lem:exp_utility}
For an $2\alpha$-corrupted dataset $S$, Algorithm~\ref{alg:exp} achieves 
$\|\hat\mu - \mu^* \|_2\leq c_\zeta \sqrt{\alpha}$ with probability $1-\zeta$, if $n=\Omega(d/(\alpha\zeta) + (d \log(d/\alpha^{1.5})+\log(1/\zeta)/(\varepsilon\alpha))$.
\end{lemma}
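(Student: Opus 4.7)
\textbf{Proof plan for Lemma~\ref{lem:exp_utility}.}

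The plan is to (i) establish a high-probability structural event showing the uncorrupted samples are resilient and the resilience check passes, (ii) show the score $d(\hat\mu,S)$ is both small near $\mu^*$ and grows linearly away from it, and (iii) run a standard exponential-mechanism volume argument to conclude concentration of the output within radius $c_\zeta\sqrt\alpha$. First, by the finite-sample resilience lemma for covariance-bounded distributions, as long as $n=\Omega(d/(\alpha\zeta))$ the uncorrupted $S_{\rm good}$ is $(c_\zeta\sqrt{3\alpha},3\alpha)$-resilient around $\mu^*$ with probability $1-\zeta/4$. By Lemma~\ref{lem:dprange-ht}, the pre-processing step via $q_{\rm range-ht}$ returns a ball ${\cal B}_{\sqrt d B/2}(\bar x)$ containing at least a $(1-2\alpha)$ fraction of the samples (the $\alpha$-corruption plus at most another $\alpha$-fraction lost to the clipping), so after projection $S$ contains a $(1-2\alpha)$-fraction that equals uncorrupted resilient points. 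Hence $R(S)\le c_\zeta\sqrt{3\alpha}$ by choosing $S'$ to be that $(1-2\alpha)$-subset and applying resilience. Then the Laplace noise in $\widehat R(S)$ is $\tilde O(B\sqrt d\log(1/\zeta)/(n\varepsilon))=o(\sqrt\alpha)$ under the stated $n$, so the check $\widehat R(S)\le 2c_\zeta\sqrt\alpha$ passes with probability $1-\zeta/4$.

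Conditioned on this event, I will bound the score $d(\hat\mu,S)$ from above at $\hat\mu=\mu^*$ and from below for $\hat\mu$ far from $\mu^*$. For any unit vector $v$, the truncated mean $\mu({\cal M}^v)$ is computed from a subset of size $(1-6\alpha)|S|$ of $S$, which contains at least a $(1-8\alpha)$-fraction of uncorrupted resilient points; Lemma~\ref{lem:tail_resilience} then yields $|v^\top(\mu({\cal M}^v)-\mu^*)|\le O(c_\zeta\sqrt\alpha)$, and taking the sup over $v$ gives $d(\mu^*,S)\le C_1 c_\zeta\sqrt\alpha$ for some constant $C_1$. Conversely, for any candidate $\hat\mu$ choosing $v=(\hat\mu-\mu^*)/\|\hat\mu-\mu^*\|_2$ gives
\begin{equation*}
d(\hat\mu,S)\;\ge\; |v^\top(\mu({\cal M}^v)-\hat\mu)|\;\ge\; \|\hat\mu-\mu^*\|_2 - |v^\top(\mu({\cal M}^v)-\mu^*)|\;\ge\; \|\hat\mu-\mu^*\|_2 - C_1 c_\zeta\sqrt\alpha\;.
\end{equation*}

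With these two estimates the exponential mechanism argument is routine. Let $\gamma:=\varepsilon n\sqrt\alpha/(24 c_\zeta)$ and let ${\cal B}:={\cal B}_{\sqrt d B/2}(\bar x)$ denote the ball over which the mechanism samples. Set the ``good'' region $G=\{\hat\mu:\|\hat\mu-\mu^*\|_2\le 2 C_1 c_\zeta\sqrt\alpha\}$ and the ``bad'' region $F=\{\hat\mu\in{\cal B}:\|\hat\mu-\mu^*\|_2\ge K c_\zeta\sqrt\alpha\}$ for a sufficiently large $K>2C_1$. On $G$ we have $d\le 3C_1 c_\zeta\sqrt\alpha$, so $\mathrm{Vol}(G)\cdot e^{-3C_1 c_\zeta\sqrt\alpha\cdot\gamma}\le \int_{\cal B} e^{-\gamma d(\hat\mu,S)}\,d\hat\mu$. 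On $F$ we have $d\ge (K-C_1) c_\zeta\sqrt\alpha$, so $\int_F e^{-\gamma d(\hat\mu,S)}\,d\hat\mu\le \mathrm{Vol}({\cal B})\cdot e^{-(K-C_1) c_\zeta\sqrt\alpha\cdot\gamma}$. Therefore the probability of outputting $\hat\mu\in F$ is at most
\begin{equation*}
\frac{\mathrm{Vol}({\cal B})}{\mathrm{Vol}(G)}\cdot \exp\!\Big(-(K-4C_1) c_\zeta\sqrt\alpha\cdot\gamma\Big)\;\le\; \Big(\tfrac{B\sqrt d}{C_1 c_\zeta\sqrt\alpha}\Big)^{\!d}\exp\!\Big(-\tfrac{(K-4C_1)\varepsilon n\alpha}{24}\Big)\;.
\end{equation*}
Plugging in $B=\Theta(1/\sqrt\alpha)$ and taking logs, this is at most $\zeta/2$ provided $n=\Omega((d\log(d/\alpha^{1.5})+\log(1/\zeta))/(\varepsilon\alpha))$, which is exactly the second term in the hypothesized sample complexity. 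A union bound over all failure events yields $\|\hat\mu-\mu^*\|_2=O(c_\zeta\sqrt\alpha)$ with probability $1-\zeta$.

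The main obstacles will be two technical points. First, justifying that every subset ${\cal M}^v$, as $v$ ranges over the sphere, inherits resilience from $S_{\rm good}$ with the same constant: this requires noting that ${\cal M}^v$ deletes only a $6\alpha$-fraction of $S$, so at least $(1-8\alpha)|S_{\rm good}|$ uncorrupted points remain in ${\cal M}^v$, and Lemma~\ref{lem:tail_resilience} applies uniformly in $v$ (no union bound over $v$ is needed since resilience is a purely combinatorial property of $S_{\rm good}$). Second, carefully accounting for the effect of the $q_{\rm range-ht}$ clipping on resilience: the projection may displace an additional $\alpha$-fraction of good points onto the ball boundary, but since these displaced points move by at most $\sqrt dB$ and constitute a subset of size at most $\alpha n$ of a resilient set, the resulting set remains $(O(c_\zeta\sqrt\alpha), 2\alpha)$-resilient around $\mu^*$, which is all that is needed downstream.
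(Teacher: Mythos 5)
Your overall architecture matches the paper's: establish that $d(\hat\mu,S)$ approximates $\|\hat\mu-\mu^*\|_2$ up to additive $O(c_\zeta\sqrt\alpha)$, then run the standard exponential-mechanism volume comparison between a good ball of radius $\Theta(c_\zeta\sqrt\alpha)$ and the complement of a ball of radius $\Theta(K c_\zeta\sqrt\alpha)$ inside ${\cal B}_{\sqrt d B/2}(\bar x)$. The exponential-mechanism bookkeeping, including the choice $\gamma = \varepsilon n\sqrt\alpha/(24c_\zeta)$, the volume ratio of order $(B\sqrt d/(c_\zeta\sqrt\alpha))^d$, and the resulting sample complexity $n = \Omega((d\log(d/\alpha^{1.5}) + \log(1/\zeta))/(\varepsilon\alpha))$, all track the paper's calculation. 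Also, your observation that no union bound over $v$ is needed because resilience of $S_{\rm good}$ is combinatorial and deterministic is correct and matches the paper's implicit reasoning.

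However, there is a genuine gap in the step you state as ``Lemma~\ref{lem:tail_resilience} then yields $|v^\top(\mu({\cal M}^v)-\mu^*)|\le O(c_\zeta\sqrt\alpha)$.'' The tail-resilience lemma bounds the mean of subsets of the \emph{uncorrupted} set $S_{\rm good}$; it says nothing directly about ${\cal M}^v$, which contains up to $2\alpha n$ adversarial points. Your two ``main obstacles'' address only the good points inside ${\cal M}^v$ and the clipping effect, but the contribution of $S_{\rm bad}\cap{\cal M}^v$ is the nontrivial part. Naively, those bad points could sit anywhere inside the ball of radius $\sqrt d B/2 = \Theta(\sqrt{d/\alpha})$, and a $2\alpha$-fraction of such points would shift $v^\top\mu({\cal M}^v)$ by as much as $\Theta(\alpha\cdot\sqrt{d/\alpha}) = \Theta(\sqrt{\alpha d})$, far exceeding the claimed $O(c_\zeta\sqrt\alpha)$. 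The paper's Lemma~\ref{lem:dist_exp_ht} resolves this with a sandwich argument that is missing from your plan: because $|S_{\rm good}\cap{\cal T}^v|$ and $|S_{\rm good}\cap{\cal B}^v|$ are each $\ge\alpha|S|>0$, every $i\in S_{\rm bad}\cap{\cal M}^v$ satisfies $\max_{j\in S_{\rm good}\cap{\cal B}^v} v^\top x_j \le v^\top x_i \le \min_{j\in S_{\rm good}\cap{\cal T}^v} v^\top x_j$, and by tail-resilience applied to the good tails these endpoints lie within $O(c_\zeta/\sqrt\alpha)$ of $v^\top\mu^*$; weighting by the $\le 2\alpha$ fraction of bad points then gives $O(c_\zeta\sqrt\alpha)$. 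Without this projection-and-truncation containment of the bad points, the claim that $d(\hat\mu,S)$ approximates the true distance cannot be established, and both your upper bound on $d(\mu^*,S)$ and your lower bound on $d(\hat\mu,S)$ fail.

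A smaller issue of scope: your discussion of $q_{\rm range-ht}$ and clipping is not actually part of this lemma. The lemma already assumes $S$ is $2\alpha$-corrupted, which is exactly how the paper absorbs the extra $\alpha$-fraction lost to clipping; that accounting lives in Theorem~\ref{thm:heavytail_exp}, not here.
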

\begin{proof}[Proof of Lemma~\ref{lem:exp_utility}]
We use the following lemma showing that $d(\hat\mu,S)$ is a good approximation of $\|\hat\mu-\mu^*\|_2$.

\begin{lemma}[$d(\mu, S)$ approximates $\|\mu-\mu^*\|$]\label{lem:d-approx}
Let $S$ be the set of $2\alpha$-corrupted data. Given that $n=\Omega(d/(\zeta\alpha))$, with probability $1-\zeta$,
$$
\big|\, d(\hat\mu, S) -\|\hat\mu-\mu^*\|_2\,\big| \;\;\leq\;\; 7c_\zeta  \sqrt{\alpha}\;.
$$
\label{lem:dist_exp_ht}
\end{lemma}
This implies that the exponential mechanism achieves the following bounds.  \begin{eqnarray*}
    {\mathbb P}(\|\hat\mu-\mu^*\|\leq c_\zeta \sqrt\alpha) &\geq&
    \frac{1}{A}e^{-\frac{\varepsilon \alpha n}{3}} \, {\rm Vol}(c_\zeta\sqrt\alpha,d) \text{, and}\\
    {\mathbb P}(\|\hat\mu-\mu^*\|\geq 22 c_\zeta \sqrt\alpha) &\leq& \frac{1}{A} e^{-\frac{5\varepsilon \alpha n}{8}} \,B^d\;,
\end{eqnarray*}
where $A$ denotes the normalizing factor for the exponential mechanism and ${\rm Vol}(r,d)$ is the volume of a ball of radius $r$ in $d$ dimensions. 
It follows that 
\begin{eqnarray*}
    \log\Big(\frac{{\mathbb P}(\|\hat\mu-\mu^*\|_2\leq c_\zeta\sqrt\alpha )}{{\mathbb P}(\|\hat\mu-\mu^*\|_2\geq 22 c_\zeta\sqrt\alpha )}\Big) &\geq& 
    \frac{7}{24}\varepsilon\alpha n - C\,d\log(dB/\alpha)\\
    &\geq& \log(1/\zeta)\;,
\end{eqnarray*}
for $n=\Omega((d\log(d/\alpha^{1.5})+\log(1/\zeta))/(\varepsilon \alpha))$.

\end{proof}

\subsubsection{Proof of Lemma~\ref{lem:sense_exp_ht}}
\label{sec:sense_exp_ht}

Since $R(S) \leq 3c_\zeta \sqrt{\alpha}$,  define $S_{\rm good}$ as  the minimizing subset in Definition~\ref{def:goodness} 
such that 
$$R(S) = \max_{T\subset S_{\rm good}, |T| = (1-\alpha)|S_{\rm good}|}\|\mu(T)-\mu(S_{\rm good})\|_2\;.$$ 
By this definition of $S_{\rm good}$ and Lemma~\ref{lem:tail_resilience}, 
$$
|v^\top(\mu(S_{\rm good} \cap {\cal T}^{v})-\mu(S_{\rm good}))|\le 6 c_\zeta \sqrt{1/\alpha}
,\text{ and }$$
$$
|v^\top(\mu(S_{\rm good} \cap {\cal B}^{v})-\mu(S_{\rm good}))|\le 6 c_\zeta \sqrt{1/\alpha}.
$$
Therefore, 
$$
\min_{i\in S_{\rm good} \cap {\cal T}^{v}} |v^\top(x_i-\mu(S_{\rm good}))| \le |v^\top(\mu(S_{\rm good} \cap {\cal T}^{v})-\mu(S_{\rm good}))| \le 6c_\zeta \sqrt{1/\alpha},
$$
and similarly
$$
\min_{i\in S_{\rm good} \cap {\cal B}^{v}} |v^\top(x_i-\mu(S_{\rm good}))| \le |v^\top(\mu(S_{\rm good} \cap {\cal B}^{v})-\mu(S_{\rm good}))| \le 6c_\zeta \sqrt{1/\alpha}
$$
This implies 
\begin{eqnarray}
\min_{i\in S_{\rm good} \cap {\cal T}^{v}}  v^\top x_i - \max_{i\in S_{\rm good} \cap {\cal B}^{v}} v^\top x_i \;\; \le \;\; 12 c_\zeta \sqrt{1/\alpha}\;.\label{eq:quantile_bound}
\end{eqnarray}
This implies that  distribution of one-dimensional points $S_{(v)}=\{v^\top x_i\}$ is dense at the boundary of top and bottom $\alpha$ quantiles, and hence cannot be changed much by changing one entry. Formally, 
consider a neighboring dataset  $S'$ (and the corresponding $S'_{(v)}$) where  
one point $x_i$ in 
${\cal M}^{(v)}(S)$ is replaced by another point $\tilde{x}_i$.
If 
$v^\top \tilde{x}_i\in [\, \max_{i\in S_{\rm good} \cap {\cal B}^{v}} v^\top x_i\,,\, \min_{i\in S_{\rm good} \cap {\cal T}^{v}} v^\top x_i\,]$, then 
Eq.~\eqref{eq:quantile_bound} implies that this only changes the mean by $6c_\zeta /(\sqrt{\alpha} n)$. Otherwise, ${\cal M}^{v}(S')$ will have $x_i$ replaced by either $\arg\min_{i\in S_{\rm good} \cap {\cal T}^{v}} v^\top x_i$ or $\arg\max_{i\in S_{\rm good} \cap {\cal B}^{v}} v^\top x_i$. In both cases, Eq.~\eqref{eq:quantile_bound} implies that this only changes the mean by $12c_\zeta/(\sqrt{\alpha}  n)$. 
The other case of when the replaced sample $x_i\in S$ is not in ${\cal M}^{v}(S)$ follows similarly. 
From this, we upper bounds the maximum difference between $S$ and $S'$ when projected on $v$, that is
$$
\left|v^\top\left(\mu({\cal M}^{v}(S))-\mu({\cal M}^{v}(S'))\right)\right|\le \frac{12c_\zeta}{\sqrt{\alpha}n} \;. 
$$
This implies the sensitivity of $d(\mu,S)$ is bounded by $6c_\zeta/ (\sqrt{\alpha}n)$:
\begin{eqnarray*}
    |d(\mu,S)-d(\mu,S')| &= & 
    \Big|\,\max_{v\in{\mathbb S}^{d-1}} \,v^\top \mu(M^{v}(S))\,-\max_{\tilde{v}\in{\mathbb S}^{d-1}}\, \tilde{v}^\top \mu(M^{v}(S'))\,\,\Big|\\
    &\leq & \max_{v\in{\mathbb S}^{d-1}} 
    \big|\,v^\top (\mu(M^{v}(S))\,-  \mu(M^{v}(S'))\,)\,\big| \;\leq\; \frac{12 c_\zeta }{\sqrt{\alpha} \,n }
\end{eqnarray*}

\subsubsection{Proof of Lemma~\ref{lem:dist_exp_ht}}
\label{sec:dist_exp_ht}
First we show $|v^\top\left(\mu({\cal M}^{v})-\mu^*\right)|\le 7 c_\zeta \sqrt{\alpha}$. Notice that $|S_{\rm good} \cap {\cal T}^{v}|\le 3\alpha |S|$, and $|S_{\rm good} \cap {\cal B}^{v}|\le 3\alpha |S|$. By the ($c_\zeta \sqrt{3\alpha},3\alpha$)-resilience property, we have $|v^\top(\mu(S_{\rm good} \cap {\cal T}^{v})-\mu^*)|\le c_\zeta \sqrt{3/\alpha}$, and $|v^\top(\mu(S_{\rm good} \cap {\cal B}^{v})-\mu^*)|\le c_\zeta \sqrt{3/\alpha}$. 
Since $|S_{\rm good}\cap{\cal M}^{v}|\ge (1-8\alpha)|S_{\rm good}|$, by the $(c_\zeta\sqrt{8\alpha},8\alpha)$-resilience property, 
$$
|v^\top(\mu(S_{\rm good}\cap{\cal M}^{v})-\mu^*)|\;\;\le\;\; c_\zeta\sqrt{8\alpha}\;.
$$ 
Since ${\cal T}^{v}$, ${\cal B}^{v}$ are the largest and smallest $3\alpha n$ points respectively and $|S_{\rm bad}|\le 2\alpha n$, we get 
$$
|v^\top(\mu(S_{bad}\cap{\cal M}^{v})-\mu^*)| \;\; \le \;\;  2c_\zeta\sqrt{3/\alpha}.
$$
Combining $S_{\rm good}\cap{\cal M}^{v}$ and $S_{\rm bad}\cap{\cal M}^{v}$ we get
\begin{align*}
&|v^\top(\mu({\cal M}^{v})-\mu^*)|\\
 &\le  \frac{|S_{bad}\cap{\cal M}^{v}|}{|{\cal M}^{v}|} |v^\top(\mu(S_{bad}\cap{\cal M}^{v})-\mu^*)| + \frac{|\mu(S_{\rm good}\cap{\cal M}^{v}|}{|{\cal M}^{v}|} |v^\top(\mu(S_{\rm good}\cap{\cal M}^{v})-\mu^*)|\\
 &\le 7c_\zeta \sqrt{\alpha}.
\end{align*}

Finally we get that 
\begin{align*}
\big|\,d(\hat\mu, S) -\|\hat\mu-\mu^*\|_2\,\big|
&\overset{(a)}{=} \left|\max_{v\in \mathbb{S}^{d-1}}\left|v^\top\left(\mu({\cal M}^{(v)})-\hat\mu\right)\right| - \max_{v\in \mathbb{S}^{d-1}}|v^\top(\hat\mu-\mu^*)|\right|\\
&\overset{(b)}{\le} \max_{v\in \mathbb{S}^{d-1}}\left|v^\top\left(\mu({\cal M}^{(v)}) -\mu^*\right)\right|\\
&\le 7c_\zeta \sqrt{\alpha},
\end{align*}
where $(a)$ holds by the definition of the distance :
$$
\|\mu-\mu^*\|_2\; =\; \max_{v\in \mathbb{S}^{d-1}}|v^\top(\mu-\mu^*)|,
$$
and $(b)$ holds by triangle inequality.

\subsection{Case of sub-Gaussian distributions and a proof of Theorem~\ref{thm:subgauss}} 
\label{sec:proof_subgaussian}

Th proof is analogous to the previous section, we only state the lemmas that differ. $q_{\rm range}$ returns a hypercube $\bar{x}+[-B/2,B/2]^d$ that includes all uncorrupted data points with a high probability.  

\begin{lemma}[$\alpha$-corrupted data has small $R(S)$]
Let $S$ be the set of  $\alpha$-corrupted data. Given that $n=\Omega(\frac{d+\log(1/\zeta )}{\alpha^2\log1/\alpha})$, with probability $1-\zeta$, $R(S)\le 3\,\alpha\sqrt{\log(1/3\alpha)}$. 
\end{lemma}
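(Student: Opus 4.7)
The plan is to mirror the heavy-tailed argument sketched right after its statement, substituting the sub-Gaussian finite-sample resilience bound for the heavy-tailed one. Specifically, invoke the finite-sample resilience lemma for sub-Gaussian distributions to conclude that, under the sample-size hypothesis $n=\Omega((d+\log(1/\zeta))/(\alpha^{2}\log(1/\alpha)))$, the clean set $S_{\rm good}$ is $(3\alpha\sqrt{\log(1/(3\alpha))},3\alpha)$-resilient around the true mean $\mu$ with probability at least $1-\zeta$. (The stated sample complexity is tailored so the resilience parameter evaluates to the target radius; the same bound follows by taking $\alpha'=3\alpha$ in the cited theorem.)

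Condition on this event and exhibit a concrete minimizer $S'$ in the definition of $R(S)$. Since only $\alpha n$ points are corrupted, $|S_{\rm good}|\geq(1-\alpha)n\geq(1-2\alpha)n$, so we may simply pick any $S'\subseteq S_{\rm good}$ with $|S'|=(1-2\alpha)n=(1-2\alpha)|S|$. For any admissible $T\subseteq S'$ of size $(1-\alpha)|S'|$, both $S'$ and $T$ are subsets of $S_{\rm good}$ that omit at most a $3\alpha$-fraction of $S_{\rm good}$ (omitting an $\alpha/(1-\alpha)\leq 2\alpha$ fraction for $S'$ and $(1-(1-\alpha)(1-2\alpha))/(1-\alpha)\leq 3\alpha$ for $T$). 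Applying the $(3\alpha\sqrt{\log(1/(3\alpha))},3\alpha)$-resilience of $S_{\rm good}$ to both $T$ and $S'$ yields $\|\mu(T)-\mu\|_2\leq 3\alpha\sqrt{\log(1/(3\alpha))}$ and $\|\mu(S')-\mu\|_2\leq 3\alpha\sqrt{\log(1/(3\alpha))}$, and then the triangle inequality produces a bound on $\|\mu(T)-\mu(S')\|_2$. Taking the max over $T$ and noting that $R(S)$ is the min over $S'$ of this max, we conclude $R(S)\leq 3\alpha\sqrt{\log(1/(3\alpha))}$ (possibly up to absorbing a universal constant into the target $3$; if the constant is inconvenient, one tunes the ``$3$'' parameter of the resilience scale used above).

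The main obstacle, as in the heavy-tailed analogue, is cosmetic rather than substantive: getting the numeric constant in the final bound to match the literal ``$3$'' in the lemma statement. Two points need attention. First, the triangle inequality naively doubles the resilience radius, so one must either (i) absorb the factor of $2$ into the sub-Gaussian resilience constant via a slightly larger sample budget, (ii) exploit that $\mu(S')-\mu(T)=\tfrac{|S'\setminus T|}{|T|}\bigl(\mu(S'\setminus T)-\mu(T)\bigr)$ and bound $\|\mu(S'\setminus T)-\mu\|_2$ directly using resilience of $S_{\rm good}$ applied to the $\alpha$-sized set $S'\setminus T$ via Lemma~\ref{lem:tail_resilience}, which gives the sharper constant, or (iii) read the ``$3$'' as a convenient representative of a universal constant. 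Second, one should check that the sample complexity quoted suffices to invoke finite-sample resilience at parameter $3\alpha$ rather than $\alpha$; since $\Omega((d+\log(1/\zeta))/(\alpha^{2}\log(1/\alpha)))=\Omega((d+\log(1/\zeta))/((3\alpha)^{2}\log(1/(3\alpha))))$ up to constants, this is immediate. No other step requires new ideas.
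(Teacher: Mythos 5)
Your proposal is correct and mirrors the paper's (terse) argument for the heavy-tailed analogue: pick $S'$ to be a $(1-2\alpha)$-fraction of $S_{\rm good}$, observe that any admissible $T\subset S'$ and $S'$ itself each omit at most a $3\alpha$-fraction of $S_{\rm good}$, and apply the sub-Gaussian finite-sample resilience lemma at parameter $3\alpha$. Your option (ii) is the right way to beat the naive triangle-inequality factor of $2$ and is more careful than the paper's own one-liner; the only slip there is arithmetic: the identity should read $\mu(S')-\mu(T)=\tfrac{|S'\setminus T|}{|S'|}\bigl(\mu(S'\setminus T)-\mu(T)\bigr)$, i.e.\ with $|S'|$ rather than $|T|$ in the denominator, which only changes the prefactor from $\alpha$ to $\alpha/(1-\alpha)$ and is therefore harmless.
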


\begin{lemma}[Privacy]
Algorithm~\ref{alg:exp} is $(\eps,\delta)$-differentially private if $n\geq 3B\sqrt{d} \log(3/\delta)/( \varepsilon \alpha\sqrt{\log(1/\alpha)})$.
\end{lemma}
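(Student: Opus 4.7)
The plan is to mirror the three-part privacy accounting already carried out in the heavy-tailed case (Appendix~\ref{sec:proof_heavytail_exp}), replacing the resilience parameter $c_\zeta\sqrt{\alpha}$ with its sub-Gaussian counterpart $\alpha\sqrt{\log(1/\alpha)}$ and the threshold $2c_\zeta\sqrt\alpha$ with $2\alpha\sqrt{\log(1/\alpha)}$ as dictated by the algorithm. Splitting the budget evenly, I would account for $(\varepsilon/3,\delta/3)$ for $q_{\rm range}$ via Lemma~\ref{lem:DPrange}; for $(\varepsilon/3,0)$ for $\widehat R(S) = R(S)+\mathrm{Lap}(3B\sqrt d/(n\varepsilon))$, using the fact that after projection onto ${\cal B}_{\sqrt d B/2}(\bar x)$ the $\ell_2$-sensitivity of $R$ is at most $B\sqrt d/n$ (so the Laplace scale $3B\sqrt d/(n\varepsilon)$ yields $\varepsilon/3$-DP by Definition~\ref{def:output}); and for $(\varepsilon/3,0)$ for the exponential mechanism, conditional on the resilience test passing.

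Next I would split on $R(S)$. \emph{Case 1: $R(S) > 3\alpha\sqrt{\log(1/\alpha)}$.} A Laplace tail bound (Lemma~\ref{lemma:lap_noise}) gives $|\widehat R(S)-R(S)|\le (3B\sqrt d/(n\varepsilon))\log(3/\delta)$ with probability at least $1-\delta/3$, and the hypothesis $n\ge 3B\sqrt d\log(3/\delta)/(\varepsilon\alpha\sqrt{\log(1/\alpha)})$ makes this slack less than $\alpha\sqrt{\log(1/\alpha)}$. Hence $\widehat R(S)>2\alpha\sqrt{\log(1/\alpha)}$ and the output is $\emptyset$ w.p.\ $\ge 1-\delta/3$; the same holds for any neighbor $S'$ since $|R(S)-R(S')|\le B\sqrt d/n$, yielding $\Pr[{\cal A}(S)\in Q]\le \Pr[{\cal A}(S')\in Q]+\delta/3$. \emph{Case 2: $R(S)\le 3\alpha\sqrt{\log(1/\alpha)}$.} Here I would prove the sub-Gaussian analog of Lemma~\ref{lem:sense_exp_ht}: conditioning on the resilience certificate, the sensitivity of $d(\hat\mu,S)$ is at most $4\sqrt{\log(1/\alpha)}/n$. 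Granting this, the exponential mechanism with density proportional to $\exp\bigl(-\tfrac{\varepsilon n}{24\sqrt{\log(1/\alpha)}}\,d(\hat\mu,S)\bigr)$ is $(\varepsilon/3,0)$-DP by the standard exponential-mechanism guarantee, and composing the three mechanisms with the $\delta/3$ failure event from the Laplace test yields $(\varepsilon,\delta)$-DP.

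The sensitivity claim is the one real piece of work. I would follow the heavy-tailed template line for line: pick the witness subset $S_{\rm good}\subset S$ realizing the minimum in Definition~\ref{def:goodness}, apply Lemma~\ref{lem:tail_resilience} to the $(\alpha\sqrt{\log(1/\alpha)},\alpha)$-resilient set $S_{\rm good}$ in any direction $v\in\mathbb{S}^{d-1}$ to obtain $|v^\top(\mu(S_{\rm good}\cap{\cal T}^{v})-\mu(S_{\rm good}))|\le (2-\alpha)\sqrt{\log(1/\alpha)}$ and the symmetric bound on ${\cal B}^{v}$, then conclude $\min_{i\in S_{\rm good}\cap{\cal T}^v}v^\top x_i-\max_{i\in S_{\rm good}\cap{\cal B}^v}v^\top x_i\le 4\sqrt{\log(1/\alpha)}$. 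From this, replacing any single point in $S$ by another can shift $\mu({\cal M}^v)$ by at most $4\sqrt{\log(1/\alpha)}/n$ in the $v$-direction, either because the swapped value stays between the two boundary quantiles or because ${\cal M}^v$ absorbs one of the boundary points; taking a supremum over $v$ upper-bounds $|d(\hat\mu,S)-d(\hat\mu,S')|$ by the same quantity.

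The main obstacle will be making the quantile-swap argument fully rigorous under the sub-Gaussian resilience constant (in particular tracking the correct numeric multiple so that it matches the $24\sqrt{\log(1/\alpha)}$ in the exponential density). Everything else — Laplace tail, composition, and the $q_{\rm range}$ accounting — is routine and identical to the heavy-tailed proof, so I expect no additional surprises there.
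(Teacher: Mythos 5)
Your structure matches the paper's: split the budget across $q_{\rm range}$, the Laplace release of $\widehat R(S)$, and the exponential mechanism, then case-split on $R(S)$, with the Laplace tail bound and $n$-hypothesis handling the large-$R(S)$ case and a sensitivity bound on $d(\hat\mu,\cdot)$ handling the small-$R(S)$ case. The one substantive error is in your sensitivity bound.

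You apply Lemma~\ref{lem:tail_resilience} to the witness subset $S_{\rm good}$ with resilience parameter $\sigma=\alpha\sqrt{\log(1/\alpha)}$, and thus arrive at $|d(\hat\mu,S)-d(\hat\mu,S')|\le 4\sqrt{\log(1/\alpha)}/n$, which you then use to declare the exponential mechanism $(\varepsilon/3,0)$-DP with the algorithm's density constant $24\sqrt{\log(1/\alpha)}$. But the only resilience certificate available in the privacy argument is the value of $R(S)$ itself: the minimizing subset $S_{\rm good}$ in Definition~\ref{def:goodness} is $(R(S),\alpha)$-resilient around $\mu(S_{\rm good})$, and in Case~2 you only know $R(S)\le 3\alpha\sqrt{\log(1/\alpha)}$, not $R(S)\le\alpha\sqrt{\log(1/\alpha)}$. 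Privacy must hold for arbitrary (adversarial) $S$, so you cannot invoke the ``true'' resilience of the uncorrupted distribution; you are conflating the distributional resilience bound used in the utility analysis with the certified bound available here. Running the quantile-swap argument with $\sigma=3\alpha\sqrt{\log(1/\alpha)}$ gives $\frac{2-\alpha}{\alpha}\cdot 3\alpha\sqrt{\log(1/\alpha)}\le 6\sqrt{\log(1/\alpha)}$ on each side and thus a sensitivity of $12\sqrt{\log(1/\alpha)}/n$ --- which is what the paper's sensitivity lemma states, and which makes the exponential mechanism with constant $24\sqrt{\log(1/\alpha)}$ an $(\varepsilon,0)$-DP mechanism, not $(\varepsilon/3,0)$. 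Your final composition should therefore mirror the paper's (with the exponential step charged at $\varepsilon$, not $\varepsilon/3$); otherwise the steps --- the Laplace tail argument in Case~1, the $q_{\rm range}$ budget, and the overall two-case structure --- line up with the paper's proof.
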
 
This follows from the following lemma. 
\begin{lemma}[Sensitivity of $d(\hat\mu,S)$]
Given that $R(S) \leq 3 \alpha \sqrt{\log(1/\alpha)}$, for any neighboring dataset $S'$, $|d(\hat\mu, S) - d(\hat\mu, S')| \leq  12\sqrt{\log1/\alpha} /n$. 
\end{lemma}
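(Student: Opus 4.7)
The plan is to mirror the heavy-tailed sensitivity argument in Section~\ref{sec:sense_exp_ht} (proof of Lemma~\ref{lem:sense_exp_ht}), substituting the sub-Gaussian resilience parameter $\alpha\sqrt{\log(1/\alpha)}$ for $c_\zeta\sqrt{\alpha}$ throughout. Since $R(S)\le 3\alpha\sqrt{\log(1/\alpha)}$, by Definition~\ref{def:goodness} there exists a subset $S_{\rm good}\subset S$ of size $(1-2\alpha)|S|$ that is $(3\alpha\sqrt{\log(1/\alpha)},\alpha)$-resilient around its own mean $\mu(S_{\rm good})$. First I would fix an arbitrary unit vector $v$ and apply Lemma~\ref{lem:tail_resilience} to $\{v^\top x_i\}_{i\in S_{\rm good}}$ to conclude that for any subset $T'\subseteq S_{\rm good}$ with $|T'|\ge \alpha |S_{\rm good}|$,
\[
\bigl|\,v^\top(\mu(T')-\mu(S_{\rm good}))\,\bigr| \;\le\; \tfrac{2-\alpha}{\alpha}\cdot 3\alpha\sqrt{\log(1/\alpha)} \;\le\; 6\sqrt{\log(1/\alpha)}.
\]

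Applying this to $T'=S_{\rm good}\cap{\cal T}^v$ and $T'=S_{\rm good}\cap{\cal B}^v$ (both of size at least $\alpha|S_{\rm good}|$, since $|S_{\rm bad}|\le\alpha n$ while ${\cal T}^v$ and ${\cal B}^v$ each have $3\alpha n$ elements) shows that the one-dimensional ``boundaries''
\[
\min_{i\in S_{\rm good}\cap {\cal T}^v} v^\top x_i \quad\text{and}\quad \max_{i\in S_{\rm good}\cap {\cal B}^v} v^\top x_i
\]
both lie within $6\sqrt{\log(1/\alpha)}$ of $v^\top \mu(S_{\rm good})$, hence are separated by at most $12\sqrt{\log(1/\alpha)}$. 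This is the key density estimate: the projection of $S$ onto any $v$ has a ``thick'' middle band, so swapping one point cannot move any quantile boundary very much.

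Next I would reuse the case analysis from Section~\ref{sec:sense_exp_ht} verbatim. Let $S'$ differ from $S$ in a single point $x_i\leftrightarrow \tilde x_i$. If the swapped point's projection falls in the middle band, the membership of ${\cal M}^v$ changes by at most one sample with projected value bounded by $12\sqrt{\log(1/\alpha)}$, altering $v^\top\mu({\cal M}^v)$ by at most $12\sqrt{\log(1/\alpha)}/n$; if it falls outside, one of the boundary samples enters ${\cal M}^v$ in its place, again yielding a change of at most $12\sqrt{\log(1/\alpha)}/n$ by the separation bound above. Finally, taking maxima over $v$ and using
\[
|d(\hat\mu,S)-d(\hat\mu,S')| \;\le\; \max_{v\in\mathbb{S}^{d-1}}\bigl|v^\top(\mu({\cal M}^v(S))-\mu({\cal M}^v(S')))\bigr|
\]
gives the claimed bound $|d(\hat\mu,S)-d(\hat\mu,S')|\le 12\sqrt{\log(1/\alpha)}/n$.

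The main obstacle is essentially bookkeeping rather than conceptual: I need to be careful that the resilience constant $3\alpha\sqrt{\log(1/\alpha)}$ (as opposed to $\alpha\sqrt{\log(1/\alpha)}$) is strong enough when plugged into Lemma~\ref{lem:tail_resilience}, and that the $3\alpha$ trimming thresholds in Definition~\ref{def:dist_exp} leave enough good samples on each side so that the resilience bound applies to both ${\cal T}^v\cap S_{\rm good}$ and ${\cal B}^v\cap S_{\rm good}$ simultaneously. The constants work out because $|S_{\rm bad}|\le\alpha n$ while the trim depth is $3\alpha n$, guaranteeing at least $2\alpha n\ge \alpha|S_{\rm good}|$ good points in each tail, which is exactly the regime where Lemma~\ref{lem:tail_resilience} produces the $O(\sqrt{\log(1/\alpha)})$ bound needed to match the target sensitivity.
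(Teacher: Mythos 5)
Your proposal is correct and is exactly the approach the paper takes: the paper proves the heavy-tailed version (Lemma~\ref{lem:sense_exp_ht} in Appendix~\ref{sec:sense_exp_ht}) and then states in Appendix~\ref{sec:proof_subgaussian} that the sub-Gaussian case is ``analogous,'' which is precisely the substitution you carry out — replacing $c_\zeta\sqrt\alpha$ by $\alpha\sqrt{\log(1/\alpha)}$, reapplying Lemma~\ref{lem:tail_resilience}, and reusing the case analysis on how a single swapped point can shift $\mu({\cal M}^v)$. One small numerical slip in your sanity-check paragraph: since $|S_{\rm good}| = (1-2\alpha)n$, the complement $S\setminus S_{\rm good}$ has size $2\alpha n$ (not $\alpha n$), so each tail of size $3\alpha n$ is guaranteed to contain at least $\alpha n$ (not $2\alpha n$) points of $S_{\rm good}$; this still satisfies $\alpha n \ge \alpha|S_{\rm good}|$, so the condition for Lemma~\ref{lem:tail_resilience} is met and your conclusion is unaffected.
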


\begin{lemma}[$d(\hat\mu, S)$ approximates $\|\hat\mu-\mu^*\|$]
\label{lem:d-approx-sg}
Let $S$ be the set of $\alpha$-corrupted data. Given that $n=\Omega(\frac{d+\log(1/\zeta)}{\alpha^2\log1/\alpha})$, with probability $1-\zeta$,
$$
\big|\, d(\hat\mu, S) -\|\hat\mu-\mu^*\|_2\,\big| \;\; \le \;\; 14\,\alpha\, \sqrt{\log1/\alpha}\;.
$$
\end{lemma}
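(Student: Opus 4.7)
The plan is to mirror the proof of Lemma~\ref{lem:dist_exp_ht} (the heavy-tailed analogue), replacing the heavy-tailed resilience parameter $(c_\zeta\sqrt{\alpha},\alpha)$ with the sub-Gaussian parameter $(\alpha\sqrt{\log(1/\alpha)},\alpha)$. The sample-complexity condition $n=\Omega((d+\log(1/\zeta))/(\alpha^2\log(1/\alpha)))$ is precisely what is needed to invoke the finite-sample sub-Gaussian resilience lemma, which guarantees with probability $1-\zeta$ that $S_{\rm good}$ is $(c\alpha'\sqrt{\log(1/\alpha')},\alpha')$-resilient around $\mu^{\ast}$ for every $\alpha'\in\{\alpha,3\alpha,8\alpha\}$. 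The rest of the argument is deterministic given this event.

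The core step is to bound $|v^{\top}(\mu({\cal M}^{v})-\mu^{\ast})|$ uniformly in $v\in\mathbb{S}^{d-1}$ by $O(\alpha\sqrt{\log(1/\alpha)})$. I would split ${\cal M}^{v}$ into its good and bad portions. For the good portion $S_{\rm good}\cap{\cal M}^{v}$, since ${\cal T}^{v}\cup{\cal B}^{v}$ removes at most $6\alpha n$ points, $|S_{\rm good}\cap{\cal M}^{v}|\ge(1-8\alpha)|S_{\rm good}|$, so $8\alpha$-resilience gives $|v^{\top}(\mu(S_{\rm good}\cap{\cal M}^{v})-\mu^{\ast})|\le 8\alpha\sqrt{\log(1/8\alpha)}$. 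For the bad portion $S_{\rm bad}\cap{\cal M}^{v}$, the key observation (as in the heavy-tailed case) is that $|S_{\rm bad}|\le\alpha n$ while $|{\cal T}^{v}|=|{\cal B}^{v}|=3\alpha n$, so each ${\cal T}^{v}$ and ${\cal B}^{v}$ contains at least $2\alpha n$ good points; applying Lemma~\ref{lem:tail_resilience} with $\alpha'=3\alpha$ bounds the mean of $S_{\rm good}\cap{\cal T}^{v}$ (and $S_{\rm good}\cap{\cal B}^{v}$) projected onto $v$ within $\tfrac{2-3\alpha}{3\alpha}\cdot 3\alpha\sqrt{\log(1/3\alpha)}\le 2\sqrt{\log(1/\alpha)}$ of $v^{\top}\mu^{\ast}$, which in turn sandwiches every $v^{\top}x_{i}$ for $i\in{\cal M}^{v}$ within that same window of $v^{\top}\mu^{\ast}$. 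Averaging this pointwise bound over $S_{\rm bad}\cap{\cal M}^{v}$ and weighting by $|S_{\rm bad}\cap{\cal M}^{v}|/|{\cal M}^{v}|\le \alpha/(1-6\alpha)$ contributes another $O(\alpha\sqrt{\log(1/\alpha)})$. Summing the two contributions yields $|v^{\top}(\mu({\cal M}^{v})-\mu^{\ast})|\le 14\alpha\sqrt{\log(1/\alpha)}$ for a suitable choice of constants.

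Finally, I would conclude exactly as in the heavy-tailed proof by using the identity $\|\hat\mu-\mu^{\ast}\|_{2}=\max_{v\in\mathbb{S}^{d-1}}|v^{\top}(\hat\mu-\mu^{\ast})|$ together with the elementary inequality $|\max_{v}f(v)-\max_{v}g(v)|\le\max_{v}|f(v)-g(v)|$ applied to $f(v)=|v^{\top}(\mu({\cal M}^{v})-\hat\mu)|$ and $g(v)=|v^{\top}(\hat\mu-\mu^{\ast})|$. Two applications of the triangle inequality give $|f(v)-g(v)|\le|v^{\top}(\mu({\cal M}^{v})-\mu^{\ast})|$, so the desired bound $|d(\hat\mu,S)-\|\hat\mu-\mu^{\ast}\|_{2}|\le 14\alpha\sqrt{\log(1/\alpha)}$ follows from the uniform-in-$v$ bound established above.

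The main obstacle, and the only genuinely new calculation relative to Lemma~\ref{lem:dist_exp_ht}, is keeping constants straight in the sub-Gaussian setting: the $\alpha'$-resilience radius scales as $\alpha'\sqrt{\log(1/\alpha')}$ rather than $c_\zeta\sqrt{\alpha'}$, so one has to apply Lemma~\ref{lem:tail_resilience} carefully at the three different scales $\alpha,3\alpha,8\alpha$ to verify that the final constant is at most $14$. The rest is bookkeeping parallel to Section~\ref{sec:dist_exp_ht}.
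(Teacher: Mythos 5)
Your proposal follows exactly the plan the paper itself indicates: Section~\ref{sec:proof_subgaussian} states ``The proof is analogous to the previous section,'' and you correctly transcribe the heavy-tailed argument of Lemma~\ref{lem:dist_exp_ht}, replacing $c_\zeta\sqrt{\alpha'}$ by the sub-Gaussian resilience radius $\alpha'\sqrt{\log(1/\alpha')}$ at the relevant scales, including the key sandwich step where the good points at the boundary of ${\cal T}^v$ and ${\cal B}^v$ uniformly bound every $v^\top x_i$ for $i\in{\cal M}^v$. One small bookkeeping point (which also appears in the paper's own heavy-tailed calculation): since only $\alpha n$ samples are corrupted here, $|S_{\rm good}\cap{\cal T}^v|\ge 2\alpha n$, so Lemma~\ref{lem:tail_resilience} should be invoked at level $\alpha'\le 2\alpha/(1-\alpha)$ rather than $3\alpha$, giving the same order bound $(2-2\alpha)\sqrt{\log(1/2\alpha)}\le 2\sqrt{\log(1/\alpha)}$.
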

This implies the following utility bound. 
\begin{lemma}[Utility of the algorithm]
For an $\alpha$-corrupted dataset $S$, Algorithm~\ref{alg:exp} achieves 
$\|\hat\mu - \mu^* \|_2\leq \alpha \sqrt{\log1/\alpha}$ with probability $1-\zeta$, if $n=\Omega((d+\log(1/\zeta))/(\alpha^2\log(1/\alpha)) + (d \log(d\sqrt{\log(dn/\zeta)}/\alpha)+\log(1/\zeta)/(\varepsilon\alpha))$.
\end{lemma}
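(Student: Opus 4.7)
The plan is to mirror the analysis already carried out in Appendix~\ref{sec:proof_heavytail_exp} for the heavy-tailed case (Lemma~\ref{lem:exp_utility}), adapting the constants to the sub-Gaussian regime where the resilience radius is $\alpha\sqrt{\log(1/\alpha)}$ rather than $c_\zeta\sqrt{\alpha}$, and where the exponential mechanism uses exponent $-(\varepsilon n / (24\sqrt{\log(1/\alpha)}))\,d(\hat\mu,S)$. First, I would condition on the two high-probability events: $(i)$ all uncorrupted samples lie in $\bar x+[-B/2,B/2]^d$ with $B=O(\sqrt{\log(dn/\zeta)})$ from Lemma~\ref{lem:DPrange}, and $(ii)$ the score function approximates the true distance to within $14\alpha\sqrt{\log(1/\alpha)}$, i.e.~Lemma~\ref{lem:d-approx-sg}. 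Both events hold simultaneously with probability at least $1-\zeta$ provided $n=\Omega((d+\log(1/\zeta))/(\alpha^2\log(1/\alpha)))$, which contributes the first term of the sample complexity.

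Next, I would bound the exponential mechanism's behavior by comparing the mass of the ``good'' set $G=\{\hat\mu:\|\hat\mu-\mu\|_2\le \alpha\sqrt{\log(1/\alpha)}\}$ to that of the ``bad'' set $B_{\mathrm{bad}}=\{\hat\mu:\|\hat\mu-\mu\|_2\ge C\alpha\sqrt{\log(1/\alpha)}\}$ for a sufficiently large absolute constant $C$ (e.g.~$C=30$). On $G$, Lemma~\ref{lem:d-approx-sg} yields $d(\hat\mu,S)\le 15\,\alpha\sqrt{\log(1/\alpha)}$, so the exponent is bounded by $-5\varepsilon n\alpha/8$; on $B_{\mathrm{bad}}$, we get $d(\hat\mu,S)\ge (C-14)\,\alpha\sqrt{\log(1/\alpha)}$, so the exponent is at most $-\varepsilon n\alpha(C-14)/24$. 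Letting $A$ denote the normalizing constant of the sampling density, this yields
\begin{align*}
\mathbb{P}(\hat\mu\in G) &\;\ge\; \tfrac{1}{A}\, e^{-5\varepsilon n\alpha/8}\cdot \mathrm{Vol}\bigl(\alpha\sqrt{\log(1/\alpha)},\,d\bigr),\\
\mathbb{P}(\hat\mu\in B_{\mathrm{bad}}) &\;\le\; \tfrac{1}{A}\, e^{-\varepsilon n\alpha(C-14)/24}\cdot \mathrm{Vol}\bigl(\sqrt{d}\,B/2,\,d\bigr).
\end{align*}

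Finally, taking the log-ratio and choosing $C$ large enough that $(C-14)/24 - 5/8\ge 1/24$ gives
\[
\log\!\Bigl(\tfrac{\mathbb{P}(\hat\mu\in G)}{\mathbb{P}(\hat\mu\in B_{\mathrm{bad}})}\Bigr) \;\ge\; \tfrac{\varepsilon n\alpha}{24} \;-\; O\!\Bigl(d\log\tfrac{d\sqrt{\log(dn/\zeta)}}{\alpha}\Bigr),
\]
since the volume ratio contributes $-d\log(B\sqrt{d}/(\alpha\sqrt{\log(1/\alpha)}))$ with $B=O(\sqrt{\log(dn/\zeta)})$. Requiring this log-ratio to exceed $\log(1/\zeta)$ then forces the second term of the sample complexity, $n=\Omega((d\log(d\sqrt{\log(dn/\zeta)}/\alpha)+\log(1/\zeta))/(\varepsilon\alpha))$. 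Combining the two sample-size requirements and ruling out $B_{\mathrm{bad}}$ (up to the target failure probability $\zeta$) concludes the proof.

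The main obstacle I expect is bookkeeping the constants so that the threshold $C$ in the bad set is large enough to produce a strictly negative coefficient on the $\varepsilon n\alpha$ term after subtracting $5/8$, while simultaneously making the resulting $\alpha\sqrt{\log(1/\alpha)}$ bound match the target. The non-trivial analytical steps (resilience-based sensitivity of $d(\cdot,S)$ and the score-vs-distance approximation) are already handled by the lemmas stated earlier in the excerpt; the remaining work is a careful volumetric argument for the exponential mechanism with an adaptively sized support $\sqrt{d}\,B/2$ coming from $q_{\rm range}$.
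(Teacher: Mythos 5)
Your proposal is correct and follows essentially the same route as the paper: the paper explicitly declares the sub-Gaussian proof to be ``analogous'' to the heavy-tailed case in Appendix~\ref{sec:proof_heavytail_exp}, meaning precisely the exponential-mechanism volume argument you carry out, with the resilience radius $c_\zeta\sqrt{\alpha}$ replaced by $\alpha\sqrt{\log(1/\alpha)}$, the approximation constant from Lemma~\ref{lem:d-approx-sg}, and the support ball radius $\sqrt{d}B/2$ with $B=O(\sqrt{\log(dn/\zeta)})$ from $q_{\rm range}$. Your constant bookkeeping ($C=30$ giving a net $\varepsilon n\alpha/24$ gap after subtracting $5/8$) is consistent with the heavy-tailed template's $7/24$ margin and with the stated sample-complexity terms.
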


\section{Background on exponential time approaches for Gaussian distributions} 
\label{sec:tukey}

In this section, we provide a background on exponential time algorithms 
that achieve optimal guarantees but only applies to and heavily relies on the assumption that samples are drawn from a {\em Gaussian} distribution. 
In \S\ref{sec:subgauss},  we introduce 
a novel exponential time approach  that seamlessly generalizes to both sub-Gaussian and covariance-bounded distributions.

We introduce 
Algorithm~\ref{alg:tukey}, achieving  
the optimal sample complexity of $\widetilde{O}(d/\min\{\alpha\varepsilon,\alpha^2\} )$ (Theorem~\ref{thm:tukey}). 
The main idea is to find an approximate Tukey median (which is known to be a robust estimate of the mean \cite{zhu2020does}), 
using the exponential mechanism of  \cite{mcsherry2007mechanism} to preserve privacy. 

\medskip
\noindent 
{\bf Tukey median set.} 
For any set of points $S =\{x_i\in{\mathbb R}^d\}_{i=1}^n $ and $\hat\mu\in{\mathbb R}^d$, the 
{\em Tukey depth} is defined as 
the minimal empirical probability density on one side of a hyperplane that includes $\hat\mu$: 
\begin{eqnarray*}
    D_{\rm Tukey}(S,\hat\mu)\;=\;  \inf_{v\in{\mathbb R}^d} 
    {\mathbb P}_{x\sim \hat{p}_n}(v^\top(x-\hat\mu) \geq 0)\;,
    \label{eq:TukeyDepth}
\end{eqnarray*}
where $\hat{p}_n$ is the empirical distribution of $S$. The {\em Tukey median set} is defined as the set of points achieving  the maximum Tukey depth, which might not be unique.    
Tukey median reduces to median for $d=1$, and is a natural generalization of the median for $d>1$. Inheriting robustness of one-dimensional median, Tukey median is known to be a robust estimator of the multi-dimensional mean under an adversarial perturbation. In particular, under our model, it achieves  the optimal sample complexity and accuracy. 
This optimality  follows from the well-known fact that 
   the sample complexity of  $O((1/\alpha^2)({d+\log(1/\zeta))})$ cannot be improved upon even if we have no corruption, 
   and the fact that the accuracy of $O(\alpha)$ cannot be improved upon even if we have infinite samples \cite{zhu2020does}.
   However, finding a Tukey median takes exponential time scaling as $\tilde O(n^d)$  \cite{liu2019fast}. 
\begin{coro}[Corollary of {\cite[Theorem 3]{zhu2020does}}]
    \label{coro:tukey}
    For a dataset of $n$ i.i.d.~samples from a $d$-dimensional Gaussian distribution ${\cal N}(\mu,{\mathbf I}_d)$, an adversary corrupts an  $\alpha\in(0,1/4)$ fraction of the samples as defined in  Assumption~\ref{asmp:adversary}.
    Then, any  $\hat\mu$ in the Tukey median set of a corrupted  dataset $S$ satisfies  
    $\|\hat\mu - \mu\|_2 = O(\alpha)  $ with probability at least $1-\zeta$ if $n=\Omega((1/\alpha^2)(d+\log(1/\zeta)))$.
\end{coro}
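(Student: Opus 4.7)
\textbf{Proof proposal for Corollary~\ref{coro:tukey}.} My plan is to reduce this finite-sample guarantee to the statement of Theorem~3 in \cite{zhu2020does}. That theorem shows, under an abstract resilience-style regularity condition on the underlying distribution, that any Tukey median of an $\alpha$-corrupted empirical measure is within $O(\alpha)$ of the true mean, provided the empirical halfspace depth function concentrates around its population counterpart uniformly over all halfspaces. The corollary then amounts to checking that the isotropic Gaussian $\mathcal{N}(\mu,\mathbf{I}_d)$ satisfies both ingredients: a one-dimensional quantile bound and a finite-sample uniform convergence bound at the rate demanded by the theorem.

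I would proceed in three short steps. First, verify the population resilience: for any unit vector $v$, the law of $v^\top(x-\mu)$ under $\mathcal{N}(\mu,\mathbf{I}_d)$ is $\mathcal{N}(0,1)$, whose density at the median satisfies the anti-concentration lower bound needed so that shifts in the Tukey depth of magnitude $\alpha$ correspond to shifts of the median by $O(\alpha)$ along $v$; taking the supremum over $v$ yields the $O(\alpha)$ Euclidean bound. Second, establish uniform empirical-to-population convergence of halfspace probabilities. Since the class of halfspaces in $\reals^d$ has VC dimension $d+1$, the classical VC uniform deviation inequality gives
\begin{equation*}
\sup_{v,\,t}\bigl|\hat{\mathbb{P}}_n(v^\top x \le t) - \mathbb{P}(v^\top x \le t)\bigr| \;\leq\; C\sqrt{\tfrac{d+\log(1/\zeta)}{n}}
\end{equation*}
with probability at least $1-\zeta$. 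Choosing $n = \Omega((1/\alpha^2)(d+\log(1/\zeta)))$ makes this deviation at most $\alpha/4$, which is small enough that an $\alpha$-corruption perturbs the empirical depth of every halfspace by at most $2\alpha$ relative to the population depth. Third, plug the two ingredients into Theorem~3 of \cite{zhu2020does}: the verified resilience and uniform deviation bounds imply that any maximizer of the empirical Tukey depth lies within $O(\alpha)$ of $\mu$.

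The main obstacle is the second step, the uniform concentration over halfspaces, because a naive union bound over $v$ gives the wrong dimension dependence. Using a VC / Rademacher argument (or equivalently an $\varepsilon$-net on the sphere combined with the covariance concentration of sub-Gaussian samples) is what produces the desired $O(\sqrt{d/n})$ rate rather than $O(\sqrt{d\log n/n})$ or worse. Once this uniform bound is established, the remainder of the argument is mechanical: Tukey depth is invariant under translation, the Gaussian halfspace density is bounded below on the relevant slab, and the corruption budget is absorbed into the already-quantified depth perturbation. A short note will then reconcile constants between the $O(\alpha)$ in our statement and the precise form in \cite[Theorem~3]{zhu2020does}, where any factor of $\sqrt{\log(1/\alpha)}$ that would appear from a sub-Gaussian tail bound is avoided because we only need bounds at quantiles of order $\alpha$, where the Gaussian density is $\Theta(1)$ for $\alpha$ below a constant.
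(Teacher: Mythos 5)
Your proposal is correct and uses exactly the ingredients the paper establishes in Appendix~\ref{sec:proof_tukey}: Lemma~\ref{lemma:tukey-depth-con} is the VC uniform-deviation bound over halfspaces you describe, and Lemma~\ref{lemma:tukey-depth-Gauss} is the Gaussian anti-concentration step combined with the $\alpha$-additive corruption budget, yielding the Tukey-depth separation between points near and far from $\mu$. While the corollary itself is cited directly to \cite{zhu2020does}, your reconstruction tracks the paper's own proof of Theorem~\ref{thm:tukey} step for step, including the observation that the density near the $\alpha$-quantile of a standard Gaussian is $\Theta(1)$, which is what avoids any stray $\sqrt{\log(1/\alpha)}$ factor.
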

     
  
  
\medskip
\noindent
{\bf Exponential mechanism.}
The exponential mechanism was introduced in 
\cite{mcsherry2007mechanism} 
to elicit approximate truthfulness and remains one of the most popular private mechanisms due to its broad applicability. 
It can seamlessly handle queries with non-numeric outputs, such as routing a flow or finding a graph.   
Consider a utility  function $u(S,\hat\mu)\in{\mathbb R}$ on a dataset $S$ and a variable $\hat\mu$, where higher utility is preferred. Instead of truthfully outputting $\arg\max_{\hat \mu} u(S,\hat\mu)$, the exponential mechanism outputs a randomized approximate maximizer sampled from the following  distribution:  
\begin{eqnarray}
    \label{def:exp}
    r_{S}(\hat\mu) \;=\; \frac{1}{Z_S} e^{\frac{\varepsilon}{2\,\Delta_u} u(S,\hat\mu)}\;, 
\end{eqnarray}
where $\Delta_u=\max_{\hat\mu, S\sim S'}|u(S,\hat\mu) - u(S',\hat\mu)|$ is the sensitivity of $u$ (from Definition~\ref{def:output}) and $Z_{S}$ ensures normalization to one.  This mechanism is $(\varepsilon,0)$-differentially private, since  
$e^{\frac{\varepsilon}{2\Delta_u}|u(S,\hat\mu)-u(S',\hat\mu)|} \leq e^{{\varepsilon}/{2}}$ and $e^{-\varepsilon/2} \leq Z_{S}/Z_{S'}\leq e^{\varepsilon/2}$.

\begin{propo}[{\cite[Theorem 6]{mcsherry2007mechanism}} ]
    \label{pro:exp} 
    The sampled $\hat\mu$ from the distribution  \eqref{def:exp} is $(\varepsilon,0)$-differentially private. 
\end{propo}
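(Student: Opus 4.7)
The plan is to verify the differential privacy guarantee directly from the definition by bounding the ratio of densities $r_S(\hat\mu)/r_{S'}(\hat\mu)$ for any output $\hat\mu$ and any neighboring pair $S \sim S'$. Since the mechanism returns a random variable with a density, it suffices to show this pointwise ratio is at most $e^\varepsilon$; then for any measurable set $A$, integrating yields $\mathbb{P}(\hat\mu \in A \mid S) \leq e^\varepsilon\,\mathbb{P}(\hat\mu \in A \mid S')$, which is $(\varepsilon, 0)$-DP per Definition~\ref{def:dp}.

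The ratio factors as
\begin{eqnarray*}
\frac{r_S(\hat\mu)}{r_{S'}(\hat\mu)} \;=\; \frac{Z_{S'}}{Z_S}\cdot \exp\!\Big(\tfrac{\varepsilon}{2\Delta_u}\big(u(S,\hat\mu)-u(S',\hat\mu)\big)\Big).
\end{eqnarray*}
For the numerator on the right, I would invoke the sensitivity bound $|u(S,\hat\mu)-u(S',\hat\mu)| \leq \Delta_u$ to conclude that this exponential factor is at most $e^{\varepsilon/2}$. For the ratio of normalizers, I would use the same sensitivity bound inside the integral: $Z_{S'} = \int \exp(\tfrac{\varepsilon}{2\Delta_u} u(S',\hat\mu))\,d\hat\mu \leq \int \exp(\tfrac{\varepsilon}{2\Delta_u}(u(S,\hat\mu)+\Delta_u))\,d\hat\mu = e^{\varepsilon/2} Z_S$. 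Multiplying the two bounds gives $r_S(\hat\mu)/r_{S'}(\hat\mu) \leq e^\varepsilon$, as desired.

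There is no real obstacle here; this is a short, classical calculation. The only subtlety worth flagging is ensuring the argument applies in the generality stated, i.e., that the exponential mechanism is well-defined (the normalizer $Z_S$ is finite) on whatever output space is implicitly used; in the present paper the mechanism is invoked over the bounded ball ${\cal B}_{\sqrt d B/2}(\bar x)$, so $Z_S$ is automatically finite and the density-based argument is valid. An analogous chain of inequalities works in the discrete case, simply replacing the integral with a sum, so the statement holds uniformly over finite, countable, and continuous output spaces equipped with a reference measure.
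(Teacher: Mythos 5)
Your proof is correct and follows essentially the same argument the paper uses: bounding the exponential factor by $e^{\varepsilon/2}$ via the sensitivity of $u$, bounding the normalizer ratio $Z_{S'}/Z_S$ by $e^{\varepsilon/2}$ the same way, and multiplying. The paper states these two bounds in a single sentence and leaves the integration over $A$ implicit; your write-up simply spells out the same steps.
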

This naturally leads to the  following algorithm. The privacy guarantee follows immediately since the Tukey depth has sensitivity $1/n$, i.e., $|\, D_{\rm Tukey}(S_n, \hat\mu ) - D_{\rm Tukey}(S'_n,\hat\mu )\, | \leq 1/n$ for all $\hat\mu \in{\mathbb R}^d$ and two neighboring databases $S_n \sim S_n'$ of size $n$. In this section, for the analysis of private Tukey median, we assume the mean is from a known bounded set of the form $[-R,R]^d$ for some known $R>0$.

\begin{algorithm}[ht]
   \caption{Private Tukey median}
   \label{alg:tukey}
   	\DontPrintSemicolon 
	\SetKwProg{Fn}{}{:}{}
	{
	Output a random data point $\hat\mu\in[-2R,2R]^d$ sampled from a density 
	$ r(\hat\mu) \;\propto \;e^{(1/2)\varepsilon n D_{\rm Tukey}(S,\hat\mu)}\;.$
	}
\end{algorithm}

The private Tukey median achieves the following near optimal guarantee, whose proof is provided in \S\ref{sec:proof_tukey}. 
  The accuracy of $O(\alpha)$
 and sample complexity of $n=\Omega((1/\alpha^2)(d+\log(1/\zeta)))$  cannot be improved even without privacy (cf.~Corollary~\ref{coro:tukey}), and 
 $n=\tilde\Omega(d/(\alpha\varepsilon))$ 
 is necessary even without any corruption \cite[Theorem 6.5]{KLSU19}. 
\begin{thm} 
    \label{thm:tukey}
    Under the hypotheses of 
    Corollary~\ref{coro:tukey}, 
    there exists a universal  constant $c>0$ such that 
    if $\mu\in[-R,R]^d$, $\alpha \leq \min\{c,R\}$ and $n=\Omega( (1/\alpha^{2}) (d+\log(1/\zeta)) + 
    (1/\alpha \eps )d\log(dR/\zeta\alpha))$,  
    then  Algorithm~\ref{alg:tukey} is $(\eps,0)$-differentially private and achieves 
    $ \| \hat\mu -\mu \|_2 = O(\alpha) $ with probability $1-\zeta$. 
\end{thm}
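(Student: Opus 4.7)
\textbf{Proof plan for Theorem~\ref{thm:tukey}.} The privacy claim is immediate from the exponential mechanism: for any two neighboring datasets $S \sim S'$ and any candidate $\hat\mu$, a single data swap changes the count in any halfspace through $\hat\mu$ by at most one, so $|D_{\rm Tukey}(S,\hat\mu) - D_{\rm Tukey}(S',\hat\mu)| \leq 1/n$, i.e., $\Delta_u = 1/n$. Applying Proposition~\ref{pro:exp} with utility $u = D_{\rm Tukey}$ and the exponent $(1/2)\varepsilon n D_{\rm Tukey} = (\varepsilon/(2\Delta_u)) u$ yields $(\varepsilon,0)$-DP.

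For accuracy, I will use the standard ``good set vs.~bad set'' exponential mechanism analysis. Define $G_\alpha \triangleq B_2(\mu,\alpha) \cap [-2R,2R]^d$ and $B_C \triangleq ([-2R,2R]^d)\setminus B_2(\mu,C\alpha)$ for a large constant $C$ chosen below. The two key ingredients are:
(i) a \emph{lower bound} $D_{\rm Tukey}(S,\hat\mu) \geq 1/2 - c_1\alpha$ for every $\hat\mu\in G_\alpha$, and
(ii) an \emph{upper bound} $D_{\rm Tukey}(S,\hat\mu) \leq 1/2 - c_2 C\alpha$ for every $\hat\mu\in B_C$,
for universal constants $c_1, c_2 > 0$.

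For (i), at $\hat\mu=\mu$ every halfspace through $\mu$ has population probability exactly $1/2$ under ${\cal N}(\mu,{\mathbf I})$; a VC-uniform concentration bound over halfspaces (VC dimension $O(d)$) gives $D_{\rm Tukey}(S_{\rm good},\mu) \geq 1/2 - O(\sqrt{(d+\log(1/\zeta))/n})$, and the corruption can drop this by at most $\alpha$. For $\hat\mu\in G_\alpha$, any halfspace through $\hat\mu$ differs from the parallel halfspace through $\mu$ by a slab of width $\leq \alpha$; under ${\cal N}(\mu,{\mathbf I})$ this slab has mass at most $\alpha/\sqrt{2\pi}$, so the empirical depth drops by at most $O(\alpha)$. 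For (ii), for any $\hat\mu$ with $\|\hat\mu-\mu\|_2 = r$ take the halfspace with normal $(\mu-\hat\mu)/r$; its population Gaussian probability is $\Phi(-r) \leq 1/2 - r/(2\sqrt{2\pi})$ for bounded $r$, and the same VC concentration plus the $\alpha$ corruption give $D_{\rm Tukey}(S,\hat\mu) \leq 1/2 - C\alpha/(2\sqrt{2\pi}) + \alpha + O(\sqrt{(d+\log(1/\zeta))/n})$. Choosing $n = \Omega((d+\log(1/\zeta))/\alpha^2)$ makes the concentration error $O(\alpha)$, and choosing $C$ a large enough constant separates the two depths by at least $c\alpha$.

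Finally I will stitch these into the exponential-mechanism utility bound
\begin{equation*}
\frac{\Pr[\hat\mu\in B_C]}{\Pr[\hat\mu\in G_\alpha]}
\;\leq\; \frac{{\rm vol}([-2R,2R]^d)}{{\rm vol}(B_2(\mu,\alpha))}\cdot \exp\!\Big(-\tfrac{\varepsilon n}{2}\big(c_2 C\alpha - c_1\alpha\big)\Big)
\;=\; O\!\left((R/\alpha)^d\right)\cdot e^{-\Omega(\varepsilon n\alpha)}\;.
\end{equation*}
Requiring this ratio to be at most $\zeta$ amounts to $\varepsilon n \alpha \gtrsim d\log(R/\alpha) + \log(1/\zeta)$, i.e.~$n = \Omega(d\log(dR/(\zeta\alpha))/(\varepsilon\alpha))$, which combined with the concentration budget $n = \Omega((d+\log(1/\zeta))/\alpha^2)$ is exactly the stated sample complexity. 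The main technical obstacle is ingredient (i)--(ii): verifying that empirical Tukey depth inherits the linear-in-distance behavior of the Gaussian population depth uniformly over $\hat\mu$, despite the $\alpha$-corruption and the sharpness required to make the $C$ constant large enough to dominate the $c_1\alpha$ slack without inflating the sample complexity.
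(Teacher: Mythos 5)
Your proposal matches the paper's proof of Theorem~\ref{thm:tukey} in all essentials: sensitivity-$1/n$ privacy via Proposition~\ref{pro:exp}, a VC-uniform concentration step transferring population Gaussian Tukey depth to the corrupted empirical depth (Lemmas~\ref{lemma:tukey-depth-con} and~\ref{lemma:tukey-depth-Gauss}), and a volume-ratio exponential-mechanism utility bound (Lemma~\ref{lemma:tukey-depth-utility}), with the same depth-separation constants and sample-complexity accounting.
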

The private Tukey median, however, is a conceptual algorithm since we cannot sample from $r(\hat\mu)$. 
The ${\cal A}_{FindTukey}$ algorithm from  
\cite{beimel2019private}  approximately finds the Tukey median  privately. This achieves $O(\alpha)$ accuracy with $n=\tilde\Omega(d^{3/2}\log(1/\delta)/(\alpha \varepsilon)+(1/\alpha^2)(d+\log(1/\zeta)))$, but it still requires a 
runtime of $O(n^{{\rm poly}(d)} )$. 
Alternatively, we can sample from an  $\alpha$-cover of $[-2R,2R]^d$, which has $O((dR/\alpha)^d)$ points. 
However, evaluating the Tukey depth of a point is an NP-hard problem \cite{amaldi1995complexity}, 
requiring
a runtime of $\tilde{O}(n^{d-1})$ \cite{liu2017fast}.  
The runtime of the discretized private Tukey median is $\tilde{O}(n^{-1}(dnR/\alpha)^d)$.
Similarly, \cite{bun2019private} introduced an exponential mechanism over the $\alpha$-cover with a novel  utility function  
achieving the  same guarantee  as  Theorem~\ref{thm:tukey}, but this requires a runtime of $O(n(d R/\alpha)^{2d})$. 
\section{Proof  of Theorem~\ref{thm:tukey} on the accuracy of the exponential mechanism for Tukey median}
\label{sec:proof_tukey}

First, the $(\varepsilon,0)$-differential privacy guarantee of private Tukey  median follows as a corollary of Proposition~\ref{pro:exp}, by noting that  
sensitivity of $n\,D_{\rm Tukey}({\cal D}_n, x)$ is one, where ${\cal D}_n$ is a dataset of size $n$. 
This follows from the fact that for any fixed $x$ and $v$, 
$|\{z \in {\cal D}_n: (v^\top(x-z))\geq 0\}|$ is the number of samples on one side of the hyperplane, which can change at most by one if we change one sample in ${\cal D}$.  

Next, given $n$ i.i.d samples $X_1, X_2, \ldots X_n$ from distribution $p$, denote $\hat{p}_n$ as the empirical distribution defined by the samples $X_1, X_2, \ldots X_n$. Denote $\tilde{p}_n$ as the distribution that is corrupted from $\hat{p}_n$. 
We slightly overload the definition of Tukey depth to denote $D_{\text{Tukey}}(p, x)$ as the Tukey depth of point $x\in \reals^d$ under distribution $p$, which is defined as
$$
D_{{\tukey}}(p,x) = \inf_{v\in \reals^d} \prob_{z\sim p}(v^\top(x-z)\ge 0).
$$
Note that this is the standard definition of Tukey depth. 
First we show that for $n$ large enough, the Tueky depth for the empirical distribution is close to that of the true distribution.  We provide proofs of the following lemmas later in this section. 

\begin{lemma}\label{lemma:tukey-depth-con}
With probability $1-\delta$, for any $p$ and $x\in \reals^d$,
$$
|D_{\tukey}(p, x) - D_{\tukey}(\hat{p}_n, x)|\le C\cdot \sqrt{\frac{d+1+\log(1/\delta)}{n}}.
$$
\end{lemma}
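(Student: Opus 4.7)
The plan is to reduce the bound to a standard VC-type uniform convergence statement for the class of halfspaces in $\reals^d$. First, I would rewrite the Tukey depth as an infimum of empirical/population masses over halfspaces. For every $v\in\reals^d$ and $x\in\reals^d$, let $H_{v,x}\triangleq\{z\in\reals^d: v^\top(x-z)\ge 0\}$. Then
$$
D_{\tukey}(p,x)=\inf_{v\in\reals^d}\prob_{z\sim p}(z\in H_{v,x}), \quad D_{\tukey}(\hat p_n,x)=\inf_{v\in\reals^d}\prob_{z\sim \hat p_n}(z\in H_{v,x}).
$$
Using the elementary inequality $|\inf_v f(v)-\inf_v g(v)|\le\sup_v|f(v)-g(v)|$, it suffices to bound $\sup_{v,x}|\prob_p(H_{v,x})-\prob_{\hat p_n}(H_{v,x})|$. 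Since the family $\mathcal H\triangleq\{H_{v,x}:v,x\in\reals^d\}$ is contained in the class of all closed halfspaces in $\reals^d$, and this class has VC dimension exactly $d+1$, a single uniform bound over $\mathcal H$ will handle every $x$ simultaneously.

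Next, I would invoke the standard VC uniform convergence inequality (see e.g.\ Vapnik--Chervonenkis, or Theorem~4.10 of Wainwright~\cite{wainwright2019high}): for a class of sets with VC dimension $\nu$,
$$
\prob\Big(\sup_{H\in\mathcal H}\big|\prob_p(H)-\prob_{\hat p_n}(H)\big|\ge t\Big)\le C_1\exp\!\Big(-C_2 n t^2 + \nu\log(n+1)\Big),
$$
which, after inverting to solve for $t$ at confidence $1-\delta$ and absorbing the $\log n$ factor into a slightly enlarged $\nu$ term via the Dudley / chaining refinement, yields the clean bound
$$
\sup_{H\in\mathcal H}\big|\prob_p(H)-\prob_{\hat p_n}(H)\big|\le C\sqrt{\frac{d+1+\log(1/\delta)}{n}}
$$
with probability at least $1-\delta$. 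Combined with the infimum inequality from the first step, this is the claim.

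The only nontrivial step is choosing the sharp form of the VC inequality that gives exactly the $(d+1+\log(1/\delta))/n$ rate without extra $\log n$ factors; the standard Talagrand/Dudley bound for VC classes delivers this, so the main obstacle is really just bookkeeping of constants and citing the right version of the uniform convergence theorem. Because the bound is uniform in $x$ and the halfspace class does not depend on $p$, the same event works for every $p$ and every $x$ simultaneously, matching the statement of the lemma.
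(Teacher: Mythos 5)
Your proposal is correct and takes essentially the same route as the paper: both reduce to VC uniform convergence over the class of halfspaces (VC dimension $d+1$), obtain a uniform deviation bound of order $\sqrt{(d+1+\log(1/\delta))/n}$, and finish via the elementary inequality $|\inf_v f(v)-\inf_v g(v)|\le\sup_v|f(v)-g(v)|$. The only cosmetic difference is that the paper parametrizes the halfspaces as $\{z:v^\top z\ge t\}$ and substitutes $t=v^\top x$ afterward, whereas you fold $x$ directly into the halfspace family $\mathcal H$; these are the same argument.
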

The proof of Lemma~\ref{lemma:tukey-depth-con}can be found in \S\ref{sec:tukey-depth-con-proof}. This allows us to use the known Tukey depths of a Gaussian distribution to bound the Tukey depths of the corrupted empirical one. We use this to show that 
there is a strict separation between the Tueky depth of a point in $S_1=\{x:\|x-\mu\|\leq \alpha\}$ and a point in $S_2=\{x:\|x-\mu\|\geq 10\alpha \}$. The proof of Lemma~\ref{lemma:tukey-depth-Gauss} can be found in \S\ref{sec:tukey-depth-Gauss-proof}.

\begin{lemma}
\label{lemma:tukey-depth-Gauss}
Define $p = \mathcal{N}(\mu, I)$, and assume $\alpha<0.01$.
Given that $n = \Omega(\alpha^{-2}(d+\log(1/\delta)))$, with probability $1-\delta$, 
\begin{enumerate}
\item For any point $x\in \reals^d$, $\|x-\mu\|\le \alpha$, it holds that  
$$D_\tukey(\tilde{p}_n, x)\ge \frac{1}{2}-2\alpha
$$
\item For any point $x\in \reals^d$, $\|x-\mu\|\ge 10\alpha$, it holds that 
$$
D_\tukey(\tilde{p}_n, x)\le \frac{1}{2}- 5 \alpha.
$$
\end{enumerate}
\end{lemma}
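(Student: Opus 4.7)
The core calculation is that for the uncorrupted Gaussian, the Tukey depth has a closed form, and that both empirical error and $\alpha$-corruption can move the Tukey depth of any fixed point by at most $O(\alpha)$. Concretely, the plan is to decompose
\[
D_\tukey(\tilde p_n,x) \;=\; \underbrace{D_\tukey(p,x)}_{\text{exact}} \;+\; \underbrace{\bigl(D_\tukey(\hat p_n,x)-D_\tukey(p,x)\bigr)}_{\text{finite-sample error}} \;+\; \underbrace{\bigl(D_\tukey(\tilde p_n,x)-D_\tukey(\hat p_n,x)\bigr)}_{\text{corruption}},
\]
and then bound each term.

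\textbf{Step 1 (population depth).} First I will compute $D_\tukey(p,x)$ for $p=\mathcal N(\mu,I)$. For any unit $v$, $v^\top(x-z)\sim\mathcal N(v^\top(x-\mu),1)$, so $\mathbb P_{z\sim p}(v^\top(x-z)\ge 0)=\Phi(v^\top(x-\mu))$, which is minimized at $v=-(x-\mu)/\|x-\mu\|$. Hence $D_\tukey(p,x)=\Phi(-\|x-\mu\|_2)$. By the Taylor expansion $\Phi(-t)=\tfrac12 -\tfrac{t}{\sqrt{2\pi}}+O(t^3)$, valid uniformly for $t\le 1$, I obtain $D_\tukey(p,x)\ge \tfrac12-\tfrac{\alpha}{\sqrt{2\pi}}$ for $\|x-\mu\|\le\alpha$, and $D_\tukey(p,x)\le \Phi(-10\alpha)=\tfrac12-\tfrac{10\alpha}{\sqrt{2\pi}}+O(\alpha^3)$ for $\|x-\mu\|\ge 10\alpha$, since $\Phi(-t)$ is strictly decreasing.

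\textbf{Step 2 (finite-sample concentration).} Lemma~\ref{lemma:tukey-depth-con} applied uniformly over all $x\in\reals^d$ shows $\sup_x |D_\tukey(p,x)-D_\tukey(\hat p_n,x)|\le C\sqrt{(d+1+\log(1/\delta))/n}$ with probability $1-\delta$. Choosing the implicit constant in $n=\Omega(\alpha^{-2}(d+\log(1/\delta)))$ sufficiently large makes this quantity at most $c_0\alpha$ for any prescribed small constant $c_0$.

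\textbf{Step 3 (corruption).} For any fixed $x$ and $v$, replacing $\alpha n$ samples can shift the empirical halfspace probability $\mathbb P_{z\sim\hat p_n}(v^\top(x-z)\ge 0)$ by at most $\alpha$. Taking the infimum over $v$ preserves this, so $|D_\tukey(\tilde p_n,x)-D_\tukey(\hat p_n,x)|\le\alpha$ deterministically.

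\textbf{Combining.} For $\|x-\mu\|\le\alpha$: $D_\tukey(\tilde p_n,x)\ge \tfrac12-\tfrac{\alpha}{\sqrt{2\pi}}-c_0\alpha-\alpha \ge \tfrac12-2\alpha$ for sufficiently small $c_0$ (since $1/\sqrt{2\pi}<1$). For $\|x-\mu\|\ge 10\alpha$: $D_\tukey(\tilde p_n,x)\le \tfrac12-\tfrac{10\alpha}{\sqrt{2\pi}}+O(\alpha^3)+c_0\alpha+\alpha\le \tfrac12-5\alpha$, again for sufficiently small $c_0$ and small $\alpha<0.01$, which makes the $O(\alpha^3)$ term negligible.

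\textbf{Main obstacle.} The constants are fairly tight: the population gap between the two regimes is only $9\alpha/\sqrt{2\pi}\approx 3.59\alpha$, while the error budget from corruption alone is already $2\alpha$ (one $\alpha$ on each side). So the delicate step is verifying that the concentration constant $c_0$ can be pushed small enough, i.e., that the hidden constant in $n=\Omega(\alpha^{-2}(d+\log 1/\delta))$ is taken large enough to squeeze the empirical error well below $\alpha$. Everything else is straightforward from standard Gaussian tail facts and the elementary fact that Tukey depth is $\alpha$-Lipschitz with respect to the corruption metric.
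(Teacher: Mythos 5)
Your decomposition exactly matches the paper's (population depth, then finite-sample error via Lemma~\ref{lemma:tukey-depth-con}, then an $\alpha$-Lipschitz bound for the corruption), and Step~1 correctly identifies $D_{\tukey}(p,x)=\Phi(-\|x-\mu\|_2)$. Your first claim goes through: since $1/\sqrt{2\pi}\approx 0.4$, you have slack $2\alpha-0.4\alpha=1.6\alpha$, and you spend $\alpha$ on corruption plus $c_0\alpha$ on concentration, so any $c_0<0.6$ works and is achievable by taking the implicit constant in $n=\Omega(\alpha^{-2}(d+\log(1/\delta)))$ large enough.

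The second claim, however, does not follow from your calculation, and not for the reason you flag. The obstruction is not the concentration constant $c_0$. At the population level you have $D_{\tukey}(p,x)\le\Phi(-10\alpha)\approx \tfrac12-\tfrac{10}{\sqrt{2\pi}}\alpha\approx\tfrac12-3.99\alpha$ (note $10/\sqrt{2\pi}\approx 3.99$, not $\ge 6$). After adding the corruption term $+\alpha$ and any $c_0\alpha\ge0$, the best you can conclude is $D_{\tukey}(\tilde p_n,x)\le\tfrac12-(2.99-c_0)\alpha$, which fails to be $\le\tfrac12-5\alpha$ \emph{even when} $c_0=0$. So your ``main obstacle'' paragraph misdiagnoses the issue: no amount of tightening the sample size can close this gap, because the population-level depth at $\|x-\mu\|=10\alpha$ is only about $\tfrac12-4\alpha$, and corruption alone can push the empirical depth back up by $\alpha$.

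The paper's own proof of this second claim actually invokes the stronger hypothesis $\|x-\mu\|\ge 20\alpha$: it bounds $D_{\tukey}(p,x)\le\tfrac12-\tfrac{1}{\sqrt{2\pi}}\exp(-(20\alpha)^2/2)\cdot 20\alpha\le\tfrac12-7\alpha$, whence $\tfrac12-7\alpha+\alpha+c_0\alpha\le\tfrac12-5\alpha$ for $c_0\le1$. So the $10\alpha$ in the lemma statement appears to be a typo for $20\alpha$. Your argument becomes correct (and matches the paper's) once you replace $10\alpha$ by $20\alpha$ in the hypothesis of the second claim; as written against the stated $10\alpha$, the arithmetic does not close.
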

This implies that most of the probability mass of the exponential mechanism is concentrated  inside a ball of radius $O(\alpha)$ around the true mean $\mu$. Hence, with high probability, the exponential mechanism outputs an approximate mean that is $O(\alpha)$ close to the true one. 
The following lemma finishes the proof the the desired claim, whose proof can be found in \S\ref{sec:tukey-depth-utility-proof}.

\begin{lemma}[Utility]
\label{lemma:tukey-depth-utility} 
Denote $\tilde{p}_n$ as the distribution that is corrupted from $\hat{p}_n$. Suppose $x$ is sampled from $[-2R, 2R ]^d$ with density $r(x)\propto \exp(-(1/2)\varepsilon n D_{\tukey}(\tilde{p}_n, x))$, then given $n = \Omega(\,(d/(\alpha\eps))\log(dR/\eta\alpha)+(1/\alpha^2)(d+\log(1/\eta ))\,)$ and $\mu\in [-R,R]^d$, and $R \geq \alpha$, 
$$
\prob({\|x-\mu\|\le 5\alpha})\ge 1-\eta\;. 
$$
\end{lemma}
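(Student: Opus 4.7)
The plan is to treat $r(x)\propto \exp((\varepsilon n/2) D_{\tukey}(\tilde p_n,x))$ as a standard exponential mechanism and deploy the classical volume-versus-score-gap argument: a small Euclidean ball around $\mu$ serves as the ``good'' region on which the Tukey score is high, and its complement inside $[-2R,2R]^d$ serves as the ``bad'' region on which the score is strictly lower by an additive $\Omega(\alpha)$. The required sample complexity will split cleanly as $(1/\alpha^2)(d+\log(1/\eta))$ from needing the empirical-to-population Tukey-depth uniform concentration of Lemma~\ref{lemma:tukey-depth-con}/Lemma~\ref{lemma:tukey-depth-Gauss} to hold, plus $(d/(\varepsilon\alpha))\log(dR/(\eta\alpha))$ from needing the exponential gap to overcome the $(R/\alpha)^d$ volume ratio between the two regions.

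First, condition on the event of Lemma~\ref{lemma:tukey-depth-Gauss} invoked at failure probability $\eta/2$, which requires $n=\Omega(\alpha^{-2}(d+\log(1/\eta)))$ and gives the second additive term. Under this event, every $x$ with $\|x-\mu\|_2\le\alpha$ satisfies $D_{\tukey}(\tilde p_n,x)\ge 1/2-2\alpha$, while every $x$ with $\|x-\mu\|_2\ge 5\alpha$ satisfies $D_{\tukey}(\tilde p_n,x)\le 1/2-c\alpha$ for some universal constant $c>2$ (this last bound is essentially a restatement of the second part of Lemma~\ref{lemma:tukey-depth-Gauss}; its proof goes through at radius $5\alpha$ because the Tukey depth of $\cN(\mu,{\mathbf I})$ decays linearly with the distance from the center for small radii).

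Now set $G=\{x:\|x-\mu\|_2\le\alpha\}$ and $B=\{x\in[-2R,2R]^d:\|x-\mu\|_2>5\alpha\}$. Since $\mu\in[-R,R]^d$ and $R\ge\alpha$, the ball $G$ lies inside the sampling domain, so
\begin{align*}
\int_G e^{(\varepsilon n/2)D_{\tukey}}\,dx \;&\ge\; V_d\,\alpha^d\,\exp\!\big((\varepsilon n/2)(1/2-2\alpha)\big),\\
\int_B e^{(\varepsilon n/2)D_{\tukey}}\,dx \;&\le\; (4R)^d\,\exp\!\big((\varepsilon n/2)(1/2-c\alpha)\big),
\end{align*}
where $V_d\ge (c_0/d)^{d/2}$ is the unit Euclidean ball volume. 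Taking the ratio gives $\prob(\|x-\mu\|_2>5\alpha)\le (CR\sqrt{d}/\alpha)^d\cdot\exp(-\varepsilon n(c-2)\alpha/2)$, and forcing this to be at most $\eta/2$ yields $n=\Omega((d/(\varepsilon\alpha))\log(dR/(\eta\alpha)))$, which is the first term. A union bound over the concentration failure and this tail event gives the claim.

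The main obstacle is the constants in Lemma~\ref{lemma:tukey-depth-Gauss}: as literally stated, the upper Tukey-depth bound is given at radius $10\alpha$, not $5\alpha$. One must either tighten that lemma (which costs only constants in the underlying Gaussian calculation) or accept a final accuracy $10\alpha$ rather than $5\alpha$, which is still $O(\alpha)$ and therefore consistent with Theorem~\ref{thm:tukey}. A secondary subtlety is that $1/V_d$ contributes a $d^{d/2}$ factor to the volume ratio, but this is absorbed into the logarithm as $\sqrt{d}$ inside $\log(dR/(\eta\alpha))$, explaining the precise form of the stated sample complexity.
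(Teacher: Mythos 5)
Your proof is essentially the paper's own argument: both use the standard exponential-mechanism volume-versus-score-gap bound, lower-bounding the mass on the ball $\{\|x-\mu\|\le\alpha\}$ using the Tukey-depth floor from Lemma~\ref{lemma:tukey-depth-Gauss} and the ball's volume, upper-bounding the mass on $\{\|x-\mu\|>5\alpha\}$ using the Tukey-depth ceiling and the $(4R)^d$ domain volume, and forcing the ratio $\ge 1/\eta$. The constant mismatch you flag is real -- Lemma~\ref{lemma:tukey-depth-Gauss} is stated for radius $10\alpha$ (and its proof actually works with $20\alpha$), while the utility proof applies the depth-ceiling at $5\alpha$ -- so the paper's constants are internally inconsistent, and your remark that one should either tighten the Gaussian depth calculation or settle for a final accuracy $10\alpha$ (still $O(\alpha)$, so Theorem~\ref{thm:tukey} is unaffected) is the correct resolution.
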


\subsection{Proof of Lemma~\ref{lemma:tukey-depth-con}}\label{sec:tukey-depth-con-proof}

From the VC inequality (\cite{devroye2012combinatorial}, Chap 2, Chapter 4.3) and the fact that the family of sets
$\{\{z|v^\top z \ge t\} | \|v\| = 1, t \in \reals, v \in \reals^d\}$ has VC dimension $d + 1$, there exists some universal constant $C$ such that with probability at least $1 - \delta$
$$
\sup_{t\in \reals,v\in \reals^d, \|v\|=1} |\prob_{z \sim p}(v^\top z\ge t)- \prob_{z\sim \hat{p}_n}(v^\top z\ge t)| \le C\cdot \sqrt{\frac{d+1+\log(1/\delta)}{n}},
$$
which implies, for any $x\in \reals^d$,
$$
\sup_{v\in \reals^d} |\prob_{z\sim p}(v^\top (x-z)\ge 0)- \prob_{z\sim \hat{p}_n}(v^\top (x-z)\ge 0)| \le C\cdot \sqrt{\frac{d+1+\log(1/\delta)}{n}},
$$
by letting $t = v^\top x$. We conclude the proof since
\begin{gather*}
|D_\tukey(p,x)-D_\tukey(\hat{p}_n,x)|\\
= |\inf_{v\in \reals^d} \prob_{z\sim p}(v^\top(x-z)\ge 0) - \inf_{v\in \reals^d} \prob_{z\sim \hat{p}_n}(v^\top(x-z)\ge 0)|\\
\le \sup_{v\in \reals^d} |\prob_{z\sim p}(v^\top (x-z)\ge 0)- \prob_{\hat{p}_n}(v^\top (x-z)\ge 0)|\\
\le C\cdot \sqrt{\frac{d+1+\log(1/\delta)}{n}}.
\end{gather*}

\subsection{Proof of Lemma~\ref{lemma:tukey-depth-Gauss}}\label{sec:tukey-depth-Gauss-proof}

For the first claim, we first prove a lower bound on $D_{\tukey}(p,x)$. Since $p = \mathcal{N}(\mu, I)$, for any $v\in \reals^d$ such that $\|v\|_2 = 1$,  
\begin{align*}
\prob_{z\sim p}(v^\top (z-x)\ge 0)\\
= \prob_{z\sim N(0,1)}(z \ge v^\top (x-\mu))\\
= \int_{v^\top(x-\mu)}^\infty \frac{1}{\sqrt{2\pi}}\exp(-z^2/2)dz\\
\ge \frac{1}{2} - \frac{1}{\sqrt{2\pi}}v^\top(x-\mu)\\
\ge \frac{1}{2} - \frac{1}{\sqrt{2\pi}}\|x-\mu\|_2\\
\ge \frac{1}{2} - \frac{1}{\sqrt{2\pi}}\alpha
\end{align*}
Thus,  
\begin{align*}
D_{\tukey}(p,x)\\
= \inf_{v\in \reals^d}\prob_{z\sim p}(v^\top (x-z)\ge 0)\\
\ge \frac{1}{2} - \frac{1}{\sqrt{2\pi}}\alpha
\end{align*}

Then Lemma~\ref{lemma:tukey-depth-con} implies that with probability $1-\delta$
$$
D_\tukey(\hat{p}_n, x) \ge \frac{1}{2} - \frac{1}{\sqrt{2\pi}}\alpha - C\cdot \sqrt{\frac{d+1+\log(1/\delta)}{n}}.
$$
Since the corruption can change at most $\alpha$ probability mass, it holds that $|D_{\tukey}(\tilde{p}_n,x)-D_{\tukey}(\hat{p}_n,x)|\le \alpha$. Setting $n = \Omega(\alpha^{-2}(d+\log(1/\delta)))$ yields 
\begin{align*}
D_\tukey(\tilde{p}_n, x) \ge \frac{1}{2} - \frac{1}{\sqrt{2\pi}}\|x-\mu\|_2 - C\cdot \sqrt{\frac{d+1+\log(1/\delta)}{n}} - \alpha\\
\ge \frac{1}{2} - 2\alpha.
\end{align*}

For the second claim, note that
\begin{eqnarray*}
&&D_{\tukey}(p, x)\\
\le &&\int_{v^\top(x-\mu)}^\infty \frac{1}{\sqrt{2\pi}}\exp(-z^2/2)dz\\
\overset{(a)}{\le}&& \frac{1}{2} - \frac{1}{\sqrt{2\pi}} \exp(-(20\alpha)^2/2)\cdot 20\alpha\\
\overset{(b)}{\le}&& \frac{1}{2} -  7 \alpha
\end{eqnarray*}
 where (a) holds since $\|x-\mu\|\ge 20 \alpha$, and it is easy to verify that (b) holds for $\alpha\le 0.01$.
The second claim holds since
\begin{eqnarray*}
&& D_{\tukey}(\tilde{p}_n, x)\\
\le &&D_{\tukey}(\hat{p}_n, x)+\alpha\\
\le &&D_{\tukey}(p, x)+\alpha + C\cdot \sqrt{\frac{d+1+\log(1/\delta)}{n}}\\
\overset{(a)}{\le} &&D_{\tukey}(p, x) + 2\alpha\\
\le &&\frac{1}{2} - 5\alpha,
\end{eqnarray*}
where $(a)$ holds by setting $n = \Omega(\alpha^{-2}(d+\log(1/\delta)))$.


\subsection{Proof of Lemma~\ref{lemma:tukey-depth-utility}}\label{sec:tukey-depth-utility-proof}

Let $r(x) = \frac{1}{A} \exp(-\eps n D_{\tukey}(\tilde{p}_n, x))$ where $A$ is the normalization factor. 
Then 
$$
\prob({\|x-\mu\|\le \alpha}) \ge \frac{1}{A}\exp(\eps n(\frac{1}{2}-2\alpha))\cdot \frac{\pi^{d/2}}{\Gamma(d/2+1)}\alpha^d,
$$
using the fact that $\mu\in[-R,R]^d$ and that $R\geq\alpha$, and
$$
\prob({\|x-\mu\|\ge 5\alpha}) \le \frac{1}{A}\exp(\eps n (\frac{1}{2}-10\alpha))\cdot {(4R)}^d.
$$
Hence 
$$
\log(\frac{\prob({\|x-\mu\|\le \alpha})}{ \prob({\|x-\mu\|\ge 5\alpha})}) \ge \eps n(3\alpha) - C\cdot d\log(d R/\alpha),
$$
where $C$ is an absolute constant.
If we set $n = \Omega(\frac{d\log(dB/\delta\alpha)}{\alpha\eps})$, we get that 
$$
\frac{\prob({\|x-\mu\|\le \alpha})}{ \prob({\|x-\mu\|\ge 5\alpha})} \ge \frac{10}{\delta},
$$
which implies that with probability at least $1-\delta$, $\|x-\mu\|\le 5\alpha$.

\newpage
\section{The algorithmic details and the analysis of {\sc PRIME-ht} for covariance bounded distributions}
\label{sec:proof_heavytail}

We provide the algorithm and the analysis for the range estimation query $q_{\rm range-ht}$, and then prove the result on analyzing {\sc PRIME-ht}.

\subsection{Range estimation with $q_{\rm range-ht}$}

\begin{algorithm2e}[ht]
   \caption{Differentially private range estimation for covariance bounded distributions ($q_{\rm range-ht}$) {\cite[Algorithm 2]{kamath2020private}} }
   \label{alg:DPrange-ht}
   \DontPrintSemicolon 
   	\KwIn{$S  =  \{ x_{i} \}_{i=1}^{n} $, $\varepsilon$, $\delta$, $\zeta$ }
   \SetKwProg{Fn}{}{:}{}
   {
   Randomly partition the dataset $S=\cup_{\ell\in[m]}S^{(\ell)}$ with $m=200\log(2/\zeta)$\;
   $\bar{x}^{(\ell)} \gets q_{\rm range}(S^{(\ell)},\varepsilon/m,\delta/m,\sigma=40$) for all  $\ell\in[m]$\;
	$\hat x_j \leftarrow {\rm median}( \{\bar{x}_j^{(\ell)}\}_{\ell\in[m]}) $ for all $j\in[d]$ \;
		\KwOut{$(\hat{x},B=50/\sqrt{\alpha})$}
	}
\end{algorithm2e} 

\begin{lemma}
    \label{lem:dprange-ht}
    $q_{\rm range-ht}$ is $(\varepsilon,\delta)$-differentially private.
    Under Assumption \ref{asmp:adversary2} and for $\alpha\in(0,0.01)$, 
    if $n=\Omega((1/\alpha)\log(1/\zeta)+(\sqrt{d\log(1/\delta)}\log(1/\zeta)\log(d/\delta)/\varepsilon) )$, $q_{\rm range-ht}$ returns a ball ${\cal B}_{\sqrt{d}B/2}(\bar{x})$ of radius $\sqrt{d}B/2$ centered at $\bar{x}$ that includes $(1-2\alpha)n$ uncorrupted samples 
    where $B=50/\sqrt{\alpha}$ with probability  $1-\zeta$.
\end{lemma}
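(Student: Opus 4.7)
\textbf{Proof proposal for Lemma~\ref{lem:dprange-ht}.}

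The plan is to handle privacy and utility separately, leveraging the fact that the random partition gives $m$ disjoint subsets and the subsequent coordinate-wise median acts as a robust amplifier for both random concentration failures and adversarial corruption.

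For privacy, I would argue that each call to $q_{\rm range}$ operates on a disjoint subset $S^{(\ell)}$. Since a neighboring dataset $S'$ differs from $S$ in a single entry, and the random partition is data-independent, that entry lands in exactly one $S^{(\ell)}$; only the corresponding $\bar{x}^{(\ell)}$ can change in distribution. Thus by parallel composition, the joint output $(\bar{x}^{(1)},\dots,\bar{x}^{(m)})$ is $(\varepsilon/m,\delta/m)$-DP, and after applying Lemma~\ref{lem:DPrange} this is bounded above by $(\varepsilon,\delta)$-DP. The coordinate-wise median is post-processing, so it does not degrade the guarantee.

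For utility, I would first show that each batch estimator $\bar{x}^{(\ell)}$ is a good coordinate-wise estimate with constant probability. For a single coordinate $j\in[d]$, Chebyshev's inequality combined with $\Sigma\preceq \mathbf{I}$ gives $\Pr_{x\sim P}(|x_j-\mu_j|>40)\leq 1/40^2$. Hence, the bin containing $\mu_j$ (of width $2\sigma=80$) has population probability at least $1-1/800$, while neighboring bins are well separated from the tail mass. Combining with the Dvoretzky–Kiefer–Wolfowitz inequality, the accuracy of the private histogram (Lemma~\ref{lem:hist-KV17}), and absorbing the corruption (at most $\alpha m\cdot(n/m)=\alpha n$ samples in total, distributed among subsets), one concludes that for each $\ell,j$, with probability at least $2/3$, $|\bar{x}^{(\ell)}_j-\mu_j|\le c$ for a universal constant $c$, provided $n/m \geq \Omega(\sqrt{d\log(1/\delta)}\log(d/\delta)/(\varepsilon/m))$, i.e.\ $n = \Omega(\sqrt{d\log(1/\delta)}\log(1/\zeta)\log(d/\delta)/\varepsilon)$.

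Next, I would apply a Chernoff bound to the $m=200\log(2/\zeta)$ independent trials: the coordinate-wise median $\hat{x}_j={\rm median}(\bar{x}^{(\ell)}_j)$ satisfies $|\hat{x}_j-\mu_j|\le c$ with probability $1-\zeta/(2d)$; union bound over $j\in[d]$ then yields $\|\hat{x}-\mu\|_\infty\le c$, i.e.\ $\|\hat{x}-\mu\|_2\le c\sqrt{d}$, with probability at least $1-\zeta/2$. Finally, Markov's inequality applied to the random variable $\|x-\mu\|_2^2$ (whose expectation is at most $d$ since $\Sigma\preceq \mathbf{I}$) gives $\Pr(\|x-\mu\|_2>\sqrt{d/\alpha})\le \alpha$, and a Chernoff bound with $n=\Omega((1/\alpha)\log(1/\zeta))$ shows that at least $(1-\alpha)n$ clean samples lie within $\sqrt{d/\alpha}$ of $\mu$ with probability $1-\zeta/2$. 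Combining these with the choice $B=50/\sqrt{\alpha}$ ensures every such clean sample lies in ${\cal B}_{\sqrt{d}B/2}(\hat{x})$; subtracting the at most $\alpha n$ corrupted samples leaves $(1-2\alpha)n$ clean points in the ball.

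The main obstacle is the heavy-tailed regime: unlike the sub-Gaussian case of Lemma~\ref{lem:DPrange}, we cannot bound individual samples a priori, so the per-batch histogram only succeeds with constant probability rather than high probability. The median-of-means-style amplification across the $m$ independent partitions is what elevates this to the desired $1-\zeta$ guarantee, and the bookkeeping that must be handled carefully is the apportionment of the $\alpha n$ adversarial points across batches (using a simple Chernoff argument for random partitioning, or alternatively absorbing the worst case into the $1/3$ failure tolerance of the median).
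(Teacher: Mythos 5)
Your proposal follows essentially the same route as the paper's: a per-coordinate Chebyshev argument for the heavy-tailed single-batch histogram (the paper's Lemma~\ref{lem:hist_single-ht}), median-of-means amplification across the $m$ partitions for a high-probability coordinate-wise guarantee, and then a Markov plus Chernoff argument showing most clean points lie inside the ball, minus the $\alpha n$ corrupted points. One small slip: the bin containing $\mu_j$ need not itself carry mass $\ge 1-1/800$, since the interval $[\mu_j-40,\mu_j+40]$ can straddle two bins; the correct argument (as in the paper) is that this two-bin interval has mass $\ge 1-1/1600$, so one of those bins has mass at least roughly $0.47$, which after the noise and corruption margins still dominates any bin outside $[\mu_j-\sigma,\mu_j+\sigma]$, putting the argmax within $3\sigma$ of $\mu_j$.
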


We first show that applying the private histogram to each coordinate provides a robust estimate of the range, but with a constant probability 0.9.

\begin{lemma}[Robustness of a single private histogram]
    Under the $\alpha$-corruption model of  Assumption~\ref{asmp:adversary2}, if $n=\Omega(\sqrt{d\log(1/\delta)}\log(d/\delta)/\varepsilon )$,  for $\alpha\in(0,0.01)$,  $q_{\rm range}$ in Algorithm~\ref{alg:DPrange} with a choice of $\sigma=40$ and $B=120$ returns  intervals $\{I_j\}_{j=1}^d$ of  size $|I_j|=240$ such that $\mu_j \in I_j$  with probability 0.9 for each $j\in[d]$. 
    \label{lem:hist_single-ht}
\end{lemma}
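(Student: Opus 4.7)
The plan is to mirror the argument used to analyze the sub-Gaussian range estimator in Lemma~\ref{lem:DPrange} (see Appendix~\ref{sec:proof_DPrange}), replacing the sub-Gaussian tail bound by Chebyshev's inequality applied to the marginal $x_j$, whose variance satisfies $\Sigma_{jj}\le 1$ under Assumption~\ref{asmp:adversary2}. Privacy follows verbatim from Lemma~\ref{lem:hist-KV17} with $d$-fold composition (the algorithm is unchanged). For utility, fix a coordinate $j\in[d]$ and let $I_k=(2\sigma k,2\sigma(k+1)]$ denote the bin that contains $\mu_j$. Chebyshev gives $\Pr_{x\sim P}[|x_j-\mu_j|\ge 2\sigma]\le 1/(2\sigma)^2=1/6400$ for the chosen $\sigma=40$. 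Hence the union $I_{k-1}\cup I_k\cup I_{k+1}\supseteq[\mu_j-2\sigma,\mu_j+2\sigma]$ carries population mass at least $1-1/6400$, so by pigeonhole at least one of the three central bins has population mass $h_{j,\ell^\star}\ge 1/3-1/19200>0.33$. Conversely, every bin $I_\ell$ with $|\ell-k|\ge 2$ lies entirely outside $[\mu_j-2\sigma,\mu_j+2\sigma]$, so $h_{j,\ell}\le 1/6400$.

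Next I would account for the three sources of error that perturb the empirical noisy histogram $\tilde h_{j,\ell}$ relative to $h_{j,\ell}$. (i) \emph{Finite sampling}: by the Dvoretzky--Kiefer--Wolfowitz inequality applied to the one-dimensional projection onto coordinate $j$, $\sup_\ell |h_{j,\ell}-\hat h_{j,\ell}|\le\sqrt{\log(1/\eta)/n}$ with probability $1-\eta$ for a single coordinate (no union bound over $d$ is needed since the lemma is per-coordinate with constant probability). (ii) \emph{Adversarial corruption}: at most $\alpha n$ samples are tampered with, which moves any bin count by at most $\alpha\le 0.01$ in $\ell_\infty$. (iii) \emph{Privacy noise}: by Lemma~\ref{lem:hist-KV17} applied with composed parameters $(\varepsilon/(2\sqrt{2d\log(2/\delta)}),\delta/(2d))$, whenever $n=\Omega\!\left(\sqrt{d\log(1/\delta)}\log(d/\delta)/\varepsilon\right)$ the privacy noise satisfies $\sup_\ell|\tilde h_{j,\ell}-\hat h_{j,\ell}^{(\mathrm{corr})}|\le\beta$ for some $\beta\le 0.01$, with probability at least $0.95$ on that single coordinate.

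Combining, the central-bin argmax has noisy mass at least $0.33-\alpha-\beta-\sqrt{\log(1/\eta)/n}>0.3$, while every far bin ($|\ell-k|\ge 2$) has noisy mass at most $1/6400+\alpha+\beta+\sqrt{\log(1/\eta)/n}<0.05$. Thus the algorithm's selected bin index $\bar\ell$ satisfies $|\bar\ell-k|\le 1$, which yields $\mu_j\in(2\sigma(\bar\ell-1),2\sigma(\bar\ell+2)]=(\bar x_j-2\sigma,\bar x_j+4\sigma]$, an interval of length $6\sigma=240$. Taking a union bound over events (i)--(iii) with each budgeted probability roughly $0.03$ boosts the overall success probability above $0.9$. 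The output interval $I_j$ that the proof will exhibit is precisely this size-$240$ window centered (asymmetrically) around $\bar x_j$; identifying it with the convention used downstream (i.e.\ $I_j=[\bar x_j-B,\bar x_j+B]$ with $B=120$ after a harmless recentering) gives the claimed $|I_j|=240$.

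The main obstacle I anticipate is purely bookkeeping: ensuring that the three error sources simultaneously stay below the gap $0.33-1/6400\approx 0.33$ with total failure probability at most $0.1$, given only a \emph{single coordinate's} private budget. The sub-Gaussian proof in Appendix~\ref{sec:proof_DPrange} handles this cleanly because sub-Gaussian tails are exponentially small; here Chebyshev only gives a polynomial gap, but since the bins have width $2\sigma=80$ while $\Sigma_{jj}\le 1$, the ratio $1/(2\sigma)^2$ is already minuscule so the constants go through. A minor subtlety is that Chebyshev only bounds $\Pr[|x_j-\mu_j|\ge 2\sigma]$, which controls the \emph{sum} of mass in outer bins but not individual outer bins; this is actually stronger than needed, so no issue arises. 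The constant-probability nature of the claim (as opposed to the high-probability union-bound of Lemma~\ref{lem:DPrange}) is exactly what the outer median-of-means wrapper in $q_{\rm range\text{-}ht}$ exploits in Lemma~\ref{lem:dprange-ht}.
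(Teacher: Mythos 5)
Your proposal is correct and follows essentially the same structure as the paper's proof: Chebyshev on the coordinate marginal (using $\Sigma_{jj}\le 1$), a finite-sample concentration bound, the corruption $\alpha$ and the privacy-noise $\beta$ as two additional error sources, and a pigeonhole/separation argument showing that the noisy argmax falls on a bin adjacent to the one containing $\mu_j$. The only cosmetic differences are that the paper applies Chebyshev to the half-width window $[\mu_j-\sigma,\mu_j+\sigma]$ so the pigeonhole yields a $1/2$ constant over two bins (and uses a multiplicative Chernoff bound for the empirical fraction outside, cf.~\cite[Lemma A.3]{kamath2020private}), whereas you use the full-width $[\mu_j-2\sigma,\mu_j+2\sigma]$ window, a $1/3$ constant over three bins, and DKW for the empirical deviation — these choices trade constants in a way that is entirely absorbed by the slack in $1/\sigma^2$ and lead to the same conclusion.
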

\begin{proof}[Proof of Lemma~\ref{lem:hist_single-ht}]
The proof is analogous to Appendix~\ref{sec:proof_dprange_lemma} and we only highlight the differences here. 
By Lemma~\ref{lem:hist-KV17} we know that 
$|\tilde p_k-\hat p_k|\leq 0.01$ with the assumption on $n$. 
The corruption can change the normalized count in each bin by $\alpha\leq 0.01$ by assumption.  
It follows from
Chebyshev inequality that 
${\mathbb P}(|x_{i,j}-\mu_j|^2 >  \sigma^2)\leq 1/\sigma^2$. 
It follows from (e.g.~\cite[Lemma A.3]{kamath2020private}) that 
${\mathbb P}(|\{i:x_{i,j}\notin[\mu-\sigma,\mu+\sigma]\}|> (100/\sigma^2)n) < 0.05$. 
Hence the maximum bin has $\tilde p_k \geq 0.5(1-100/\sigma^2)-0.02$ and the true mean is in the maximum bin or in an adjacent bin. 
The largest non-adjacent bucket is at most $100/\sigma^2 + 0.02$.
Hence, the choice of $\sigma=40$ ensures that we find the $\mu$ within $3\sigma=120$.

\end{proof} 

Following 
\cite[Algorithm 2]{kamath2020private}, we partition the dataset into $m=200\log(2/\zeta)$ subsets of an equal size $n/m$ and apply the median-of-means approach. Applying Lemma~\ref{lem:hist_single-ht},  it is ensured  (e.g., by \cite[Lemma A.4]{kamath2020private}) that more than half of the partitions satisfy that the center of the interval is within 240 away from $\mu$, 
with probability $1-\zeta$. 
Therefore the median of those $m$ centers is within $240$ from the true mean in each coordinate. This requires the total sample size larger only by a factor of $\log(d/\zeta)$. 

To choose a radius $\sqrt{d} B/2$ ball around this estimated mean that includes $1-\alpha$ fraction of the points, we choose $B=25/\sqrt\alpha$.
Since $\|\hat\mu-\mu\|_2 \leq 120\sqrt{d} \ll \sqrt{d}B/2$ for $\alpha\leq0.01$, this implies that we can choose 
$\sqrt{d} B/2$-ball around the estimated mean with $B=50/\sqrt\alpha$.  

Let $z_i = {\mathbb I}(  \|x_i-\mu\|_2 > \sqrt{d} B/2)$. 
We know that ${\mathbb E}[z_i] = {\mathbb P}[(  \|x_i-\mu\|_2 > \sqrt{d} B/2)] \leq {\mathbb E}[\|x_i-\mu\|_2^2 (2/dB^2)] = (1/1250)\alpha$. Applying multiplicative Chernoff bound (e.g., in \cite[Lemma A.3]{kamath2020private}), we get $|\{i:\|x_i-\mu\|_2\leq \sqrt{d} B/2\}| \geq 1-(3/2500)\alpha$ with probability $1-\zeta$, if $n=\Omega((1/\alpha)\log(1/\zeta))$.
This ensures that with high probability, $(1-\alpha)$ fraction of the original uncorrupted points are included in the ball.
Since the adversary can corrupt $\alpha n$ samples, at least $(1-2\alpha)n$ of the remaining good points will be inside the ball.

\subsection{Proof of Theorem~\ref{thm:heavytail_poly} }

\label{sec:proof_heavytail_poly} 

The proof of the privacy guarantee of 
Algorithm~\ref{alg:DPMMWfilter_ht}
follows analogously from the proof of the privacy of PRIME and is omitted here. 
The accuracy guarantee follows form the following theorem and Lemma~\ref{lem:dprange-ht}.

\begin{thm}[Analysis of accuracy of  {\sc DPMMWfilter-ht}] 
\label{thm:accuracy_mmw_ht}
	Let $S$ be an $\alpha$-corrupted covariance bounded dataset under Assumption~\ref{asmp:adversary2}, where $\alpha\leq c$ for some universal constant $c\in (0,1/2)$. Let $S_{\rm good}$ be $\alpha$-good with respect to $\mu\in \reals^d$. Suppose  ${\cal D} = \{ x_{i} \in {\cal B}_{\sqrt{d}B/2}(\bar{x})\}_{i=1}^{n}$ be the projected dataset. If $
	n \geq  \widetilde\Omega\left(\frac{d^{3/2}B^2\log(1/\delta)}{\varepsilon }\right)$, then {\sc DPMMWfilter-ht} terminates after at most $O(\log dB^2)$ epochs and outputs $S^{(s)}$ such that with probability $0.9$, we have $|S_t^{(s)}\cap S_{\mathrm{good}}|\geq (1-10\alpha )n$ and
	\begin{eqnarray*}
		\|\mu(S^{(s)})-\mu\|_2\lesssim \sqrt{\alpha}\;.
	\end{eqnarray*}
	Moreover, each epoch runs for at most $O(\log d)$ iterations.
\end{thm}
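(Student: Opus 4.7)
\textbf{Proof proposal for Theorem~\ref{thm:accuracy_mmw_ht}.} The plan is to mirror the proof architecture used for the sub-Gaussian case in Theorem~\ref{thm:main} (see Appendix~\ref{sec:proof_main2_acc}), but with the stopping threshold rescaled from $C\alpha\log(1/\alpha)$ to $C\alpha$, the target accuracy rescaled from $\alpha\sqrt{\log(1/\alpha)}$ to $\sqrt{\alpha}$, and the support radius replaced by the heavy-tailed value $B=50/\sqrt{\alpha}$ returned by $q_{\rm range-ht}$. First I would introduce a heavy-tailed analog of the $\alpha$-subgaussian good definition, calling a clean set $\alpha$-covariance-good if $\|\mu(S_{\rm good})-\mu\|_2\lesssim\sqrt{\alpha}$, $\|M(S_{\rm good})-{\mathbf I}\|_2\lesssim 1$, and every subset $T\subset S_{\rm good}$ of size $2\alpha|S_{\rm good}|$ satisfies $\|(1/|T|)\sum_{i\in T}(x_i-\mu)\|_2\lesssim 1/\sqrt{\alpha}$ with $\|(1/|S_{\rm good}|)\sum_{i\in T}(x_i-\mu)(x_i-\mu)^\top\|_2\lesssim 1$. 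Such a set is obtained from $n=\widetilde\Omega(d/\alpha)$ i.i.d.~samples by standard resilience/matrix Bernstein arguments (analogs of Lemmas~\ref{lemma:gaussian_subset}--\ref{thm:reg}), yielding a Lemma~\ref{thm:reg}-type bound $\|\mu(T)-\mu\|_2 \lesssim \sqrt{\alpha(\|M(T)-\mathbf{I}\|_2+O(1))}+O(\sqrt{\alpha})$.

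Given these building blocks, the epoch-level and iteration-level structure carries over almost verbatim. I would state and prove heavy-tailed analogs of Lemmas~\ref{lemma:progress} and~\ref{lemma:progress_epoch}: conditional on the current iterate satisfying $\lambda^{(s)}>C\alpha$ and $|S_t^{(s)}\cap S_{\rm good}|\geq (1-10\alpha)n$, the score regularity conditions of Lemma~\ref{lemma:1dfilter} (the two bounds on $(1/n)\sum_{S_{\rm good}\cap {\cal T}_{2\alpha}}\tau_i$ and $(1/n)\sum_{S_{\rm good}\cap S_t^{(s)}}(\tau_i-1)$) continue to hold up to an additive $O(\alpha)$ Laplace/Gaussian noise, provided $n = \widetilde\Omega(B^2 d^{3/2}\log(1/\delta)/\varepsilon)$ so that all privacy-induced errors in $\mu_t^{(s)}$, $\lambda_t^{(s)}$, $\psi_t^{(s)}$, $\Sigma_t^{(s)}$ are controlled at level $0.001\alpha$ via an analog of Lemma~\ref{lemma:samples_need}; here the extra $B^2 = 1/\alpha$ factor in the sample complexity is absorbed cleanly because the target threshold $C\alpha$ is a factor $\log(1/\alpha)$ smaller than in the sub-Gaussian case but the support $dB^2$ is $1/\alpha$ larger. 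With these regularity checks in hand, {\sc DPthreshold} (Lemma~\ref{lemma:1dfilter}) returns a threshold $\rho$ satisfying Eqs.~\eqref{eqn:1dprogress}--\eqref{eqn:good-bad-ratio}, so the filter removes more corrupted than clean points in expectation and reduces $\langle M(S_t^{(s)})-\mathbf{I},U_t^{(s)}\rangle$ by a constant factor.

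The MMW regret bound of Lemma~\ref{lemma:regret_bound}, combined with the lower spectral bound $M(S_t^{(s)})-\mathbf{I}\succeq -c_1\alpha\,\mathbf{I}$ coming from the heavy-tailed analog of Lemma~\ref{lem:variance_lower}, then yields $\|M(S^{(s+1)})-\mathbf{I}\|_2\leq 0.98\|M(S^{(s)})-\mathbf{I}\|_2$ after $T_2=O(\log d)$ inner iterations, exactly as in Lemma~\ref{lemma:progress_epoch}; this drives the outer loop to termination after $T_1=O(\log(dB^2))=O(\log(d/\alpha))$ epochs. Finally I would apply the optional stopping theorem to the supermartingale $d_t^{(s)}=|(S_{\rm good}\cap S^{(1)})\setminus S_t^{(s)}|+|S_t^{(s)}\setminus(S_{\rm good}\cap S^{(1)})|$ to conclude $|S_t^{(s)}\cap S_{\rm good}|\geq (1-10\alpha)n$ with probability $0.9$, at which point the heavy-tailed Lemma~\ref{thm:reg}-analog combined with $\|M(S^{(s)})-\mathbf{I}\|_2\leq C\alpha$ yields the desired $\|\mu(S^{(s)})-\mu\|_2=O(\sqrt{\alpha})$.

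The main obstacle, and the place that requires genuinely new work rather than a translation, is the regularity Lemma~\ref{lemma:good_tau}-analog: in the sub-Gaussian case the upper bounds on $(1/n)\sum_{S_{\rm good}\cap {\cal T}_{2\alpha}}\tau_i$ exploit the $\log(1/\alpha)$ tail concentration of sub-Gaussians under arbitrary spectral weighting by $U_t^{(s)}$, whereas for covariance-bounded data we only have second-moment control. I would need a heavy-tailed Chebyshev-type bound showing that for any PSD $U$ with $\Tr(U)=1$, the quantity $\sum_{i\in S_{\rm good}\cap{\cal T}_{2\alpha}}(x_i-\mu)^\top U (x_i-\mu)$ is $O(1)$ rather than $O(\log(1/\alpha))$, and that this suffices to meet the $\psi/1000$ requirement of Lemma~\ref{lemma:1dfilter} once $\psi\gtrsim\alpha$. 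This is the counterpart of the resilience-based bounds of Lemma~\ref{lemma:gaussian_subset} at covariance scale $\mathbf{I}$, and threading it through the MMW progress argument while keeping the $B^2d^{3/2}\log(1/\delta)/\varepsilon$ privacy noise budget is the delicate step.
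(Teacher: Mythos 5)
Your high-level architecture (MMW filter, {\sc DPthreshold}, optional stopping, heavy-tailed resilience) does match the paper, but there is a conceptual error at the center of your plan that cascades through the whole argument. You carry over the $M(S)-\mathbf{I}$ normalization from the sub-Gaussian case and propose rescaling the stopping threshold from $C\alpha\log(1/\alpha)$ to $C\alpha$, with all the privacy-noise controlled at level $0.001\alpha$. For covariance-bounded data this cannot work: since we only know $\Sigma\preceq\mathbf{I}$ (not $\Sigma=\mathbf{I}$), the clean data can have $\|M(S_{\rm good})-\mathbf{I}\|_2$ as large as a constant, so driving $\|M(S)-\mathbf{I}\|_2$ to $O(\alpha)$ is unreachable even with no corruption, and the algorithm would never terminate. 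The paper's {\sc DPMMWfilter-ht} instead works with the raw spectral norm $\|M(S)\|_2$ (no identity subtraction), uses scores summed as $\psi=(1/n)\sum_i\tau_i$ (no $-1$), drops the ${\cal T}_{2\alpha}$ restriction entirely, and terminates when $\lambda^{(s)}\leq C$ for an \emph{absolute constant} $C$. The accuracy $O(\sqrt{\alpha})$ then follows from Lemma~\ref{lemma:reg2}, $\|\mu(T)-\mu\|_2\lesssim\sqrt{\alpha\|M(T)\|_2}+\sqrt{\alpha}$, once $\|M(T)\|_2=O(1)$ at termination.

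This also resolves, in the opposite direction to what you hypothesize, the obstacle you flag at the end. You correctly observe that the covariance-bounded analog of the per-direction score bound gives $(1/n)\sum_{S_{\rm good}}\tau_i=O(1)$ rather than $O(\alpha\log(1/\alpha))$, but you then try to reconcile this with ``the $\psi/1000$ requirement once $\psi\gtrsim\alpha$,'' which is impossible since $O(1)\not\leq\alpha/1000$ for small $\alpha$. In the paper, $\psi$ is kept at constant scale ($\psi_t^{(s)}>(1/5.5)\lambda_t^{(s)}\gtrsim C$), so the $O(1)$ bound on $(1/n)\sum_{i\in S_{\rm good}\cap S_t^{(s)}}\tau_i$ clears the $\psi/1000$ check simply by taking $C$ large enough; this is Lemma~\ref{lemma:good_tau_ht}, and only \emph{one} regularity condition is needed (the ${\cal T}_{2\alpha}$-restricted bound has no counterpart because the filter does not restrict removal to the top quantile). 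Finally, your sample-complexity accounting is also off: because the privacy noise needs only to be $\leq 0.001$ (constant), not $\leq 0.001\alpha$, plugging $B^2=O(1/\alpha)$ into $n=\widetilde\Omega(B^2d^{3/2}\log(1/\delta)/\varepsilon)$ gives exactly the stated $\widetilde\Omega(d^{3/2}\log(1/\delta)/(\alpha\varepsilon))$; your rescaling to $0.001\alpha$ tolerance would demand $\widetilde\Omega(d^{3/2}\log(1/\delta)/(\alpha^2\varepsilon))$, a full $1/\alpha$ factor worse than the theorem claims.
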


\begin{algorithm2e}[ht]
  \caption{Differentially private filtering with matrix multiplicative weights ({\sc DPMMWfilter-ht}) for distributions with bounded covariance }
  \label{alg:DPMMWfilter_ht}
  \DontPrintSemicolon 
  \KwIn{$ S  =  \{ x_{i} \in {\cal B}_{\sqrt{d}  B/2}(\bar{x})\}_{i=1}^{n} $, $\alpha \in(0,1)$, $T_1 = O(\log B\sqrt{d}), T_2 = O(\log d)$,  $B\in{\mathbb R}_+$,  $(\varepsilon,\delta)$  
  } 
  \SetKwProg{Fn}{}{:}{}
  { 
    {\bf if} $n < (4/\varepsilon_1)\log(1/(2\delta_1))$  {\bf then Output:} $\emptyset$ \;
    Initialize $S^{(1)} \leftarrow [n]$,  $\varepsilon_1 \leftarrow \varepsilon /(4T_1)$, $\delta_1\leftarrow \delta/(4T_1) $, $\varepsilon_2 \leftarrow \min\{0.9,\varepsilon\}/(4\sqrt{10T_1 T_2\log(4/\delta)})$, $\delta_2\leftarrow \delta/(20T_1T_2)$, a large enough constant $C
    >0$\; 
	\For{{\rm epoch} $s=1,2,\ldots, T_1$}
	{
      $ \textcolor{black}{\lambda^{(s)}} \leftarrow \| M(S^{(s)})\|_2 + {\rm Lap}(2B^2d/(n\varepsilon_1))$ \;
      $ n^{(s)} \leftarrow |S^{(s)}| + {\rm Lap}(1/\varepsilon_1)$ \;
      {\bf if }$n^{(s)}\leq 3n/4$ {\bf then} {\rm terminate}\;
      \If
      {$ \lambda^{(s)}\leq C$ }{
        \KwOut
        { $\textcolor{black}{\mu^{(s)}} \leftarrow   (1/|S^{(s)}|) \big( \sum_{i\in S^{(s)} } x_i\big)  + \cN(0,(2B\sqrt{2d\log(1.25/\delta_1)}/({n\, \varepsilon_1 }))^2{\mathbf I}_{d\times d}) $
        }
      } 
      $\alpha^{(s)} \leftarrow 1/(100(0.1/C+1.05)\lambda^{(s)})$\;
      $S^{(s)}_1 \leftarrow  S^{(s)}$\;
	  \For{$t=1,2,\ldots,T_2$}
	  {
	      $\textcolor{black}{\lambda_t^{(s)}} \leftarrow \| M(S_t^{(s)})\|_2  + {\rm Lap}(2B^2 d/(n\varepsilon_2))$\;
	     \eIf{$\lambda_t^{(s)}\leq 2/3 \lambda_0^{(s)}$}
	     { 
	       {\rm terminate epoch}
	     }{
	        $\textcolor{black}{\Sigma_t^{(s)}} \leftarrow  M(S_{t}^{(s)}) + \cN(0,(4B^2d\sqrt{2\log(1.25/\delta_2)}/(n\varepsilon_2))^2{\mathbf I_{d^2\times d^2}}) $ \;
	        $U_t^{(s)} \leftarrow (1/\Tr(\exp( \alpha^{(s)}\sum_{r=1}^{t}(\Sigma_r^{(s)} ))))\exp( \alpha^{(s)}\sum_{r=1}^{t}(\Sigma_r^{(s)})) $ \;
	        $\textcolor{black}{\psi_t^{(s)}} \leftarrow \ip{M(S_t^{(s)})}{U_t{^{(s)}}} + {\rm Lap}(2B^2d  /( n \varepsilon_2))$\;
	        \eIf{$\psi_t^{(s)}\leq (1/5.5)\lambda_t^{(s)}$}
	        {
	          $S_{t+1}^{(s)}\leftarrow S_t^{(s)}$ 
	        }
	        {
              $Z_t^{(s)} \leftarrow {\rm Unif}([0,1])$\;
              $\textcolor{black}{\mu_t^{(s)}}  \leftarrow  (1/|S_{ t }^{(s)}|) \big( \sum_{i\in S_{t}} x_i\big)  + \cN(0,(2B\sqrt{2d\log(1.25/\delta_2)}/({n\, \varepsilon_2 }){\mathbf I}_{d\times d})^2) $\;
              $\textcolor{black}{\rho_t^{(s)}} \leftarrow \text{\sc DPthreshold-ht}(\mu_t^{(s)},U_t^{(s)}, \alpha,\varepsilon_2,\delta_2,S^{(s)}_t)$\hfill [Algorithm~\ref{alg:1Dfilter_ht}] \\
              $S_{t+1}^{(s)} \leftarrow S_{t}^{(s)} \setminus$  $\{ i  \,|\,$     $\{\tau_j = (x_j-\mu_t^{(s)})^\top U_t^{(s)} (x_j-\mu_t^{(s)})\}_{j\in S_{t}^{(s)}} $ and $\tau_i \geq \rho_t^{(s)} \,Z_t^{(s)} \}$.
            } 
	      } 
	    } 
        $S^{(s+1)} \leftarrow S^{(s)}_t$\;
	  } 
	  \KwOut{$\mu^{(T_1)}$} 
    }
\end{algorithm2e}

\begin{algorithm2e}[ht]
   \caption{Differentially private estimation of the threshold for bounded covariance  {\sc  DPthreshold-ht} } 
   \label{alg:1Dfilter_ht}
   	\DontPrintSemicolon 
	\KwIn{
	$\mu$, $U$, $\alpha \in(0,1)$, target privacy $(\varepsilon,\delta)$, $S=\{x_i\in{\cal B}_{B\sqrt{d}/2}({\bar x}) \}$  } 
	\SetKwProg{Fn}{}{:}{}
	{ 
	    Set $ \tau_i \leftarrow (x_i - \mu)^\top U (x_i-\mu)$ for all $i\in S$\; 
	    Set $\tilde\psi \leftarrow (1/n)\sum_{i\in S} \tau_i + {\rm Lap}(2B^2d/n\varepsilon))$\;
	    Compute a histogram over geometrically sized bins $I_1=[1/4,1/2),I_2=[1/2,1),\ldots,I_{2+\log(B^2d)}=[2^{\log(B^2d)-1},2^{\log(B^2d)}]$
	    $$h_j \leftarrow \frac{1}{n}\cdot |\{i\in S\,|\, \tau_i \in [2^{-3+j},2^{-2+j})\}| \;,\;\;\;\;\text{ for all } j= 1, \cdots, 2+\log(B^2d)$$\; 
	    Compute a privatized histogram $\tilde{h}_j \leftarrow h_j + \cN(0,(4\sqrt{2d\log(1.25/\delta)}/(n\varepsilon))^2)$, for all  $j\in[2+\log(B^2d)]$\;
	    Set $\tilde\tau_j\leftarrow 2^{-3+j}$, for all  $j\in[2+\log(B^2d)]$\;
	    Find the largest $\ell\in[2+\log(B^2d)]$  satisfying $ \sum_{j \geq \ell} (\tilde\tau_j-\tilde\tau_\ell)\, \tilde h_j \geq 0.31 \tilde\psi $\;
	    \KwOut{$\rho=\tilde\tau_\ell $} 
    }
\end{algorithm2e} 

\subsubsection{Analysis of {\sc DPMMWfilter-ht} and a proof of Theorem~\ref{thm:accuracy_mmw_ht}}

Algorithm~\ref{alg:DPMMWfilter_ht} is a similar matrix multiplicative weights based filter algorithm for distributions with bounded covariance. Similarly, we first state following Lemma~\ref{lemma:invariant_ht} and prove Theorem~\ref{thm:accuracy_mmw_ht} given Lemma~\ref{lemma:invariant_ht}

\begin{lemma}
\label{lemma:invariant_ht}
   Let $S$ be an $\alpha$-corrupted bounded covariance dataset under Assumption~\ref{asmp:adversary2}. For an epoch $s$ and  an iteration $t$ such that $\lambda^{(s)}>C$, $\lambda_t^{(s)}> 2/3 \lambda_0^{(s)}$,  and $n^{(s)}>3n/4$,
if $n \gtrsim \frac{B^2 (\log B) d^{3/2}\log(1/\delta)}{\varepsilon} $   and $|S_t^{(s)}\cap S_{\rm good}|\geq (1-10\alpha)n$, 
    then with probability $1-O(1/\log(d)^3)$, we have the condition in 
	 Eq.~\eqref{eq:1dfilter_reg1_ht} holds. When this condition holds, we have more corrupted samples are removed in expectation than the uncorrupted samples, i.e., $\E|(S_t^{(s)}\setminus S_{t+1}^{(s)})\cap S_{\rm good}|\leq \E|(S_t^{(s)}\setminus S_{t+1}^{(s)})\cap S_{\rm bad}|$. 
Further, for an epoch $s\in[T_1]$ there exists a constant $C>0$  such that   if  $\|M(S^{(s)})\|_2\geq C$, then with probability $1-O(1/\log^2 d)$, the $s$-th epoch terminates after $O(\log d)$ iterations and outputs $S^{(s+1)}$ such that $\|M(S^{(s+1)})\|_2\leq 0.98\|M(S^{(s)})\|_2$.

\end{lemma}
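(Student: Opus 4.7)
The plan is to adapt the sub-Gaussian arguments of Lemmas~\ref{lemma:progress} and~\ref{lemma:progress_epoch} to the covariance-bounded regime, replacing each use of sub-Gaussian resilience and of $M(S)-{\mathbf I}$ with heavy-tailed analogues acting on $M(S)$ directly. First I would establish a counterpart to Lemma~\ref{lemma:samples_need}: with $n \gtrsim B^2 d^{3/2}\log(1/\delta)/\varepsilon$ and $B = \Theta(1/\sqrt\alpha)$, the Laplace and Gaussian noises injected into $\lambda^{(s)}$, $\lambda_t^{(s)}$, $\mu_t^{(s)}$, $\psi_t^{(s)}$, and $\Sigma_t^{(s)}$ are each at most $1/1000$ with probability $1-O(1/\log^3 d)$. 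This follows from Lemmas~\ref{lemma:lap_noise}, \ref{lemma:chi-square}, and \ref{lemma:matrix_sp_norm} applied with the clipped-ball sensitivities $O(B\sqrt{d}/n)$ and $O(B^2 d/n)$.

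Next I would verify the regularity conditions needed by {\sc DPthreshold-ht}. Using the $(c_\zeta\sqrt\alpha,\alpha)$-resilience of $S_{\rm good}$ (Lemma~\ref{lem:tail_resilience}), the hypothesis $|S_t^{(s)}\cap S_{\rm good}|\ge(1-10\alpha)n$, and the stopping condition $\lambda^{(s)}>C$, the spectral norm $\|M(S_t^{(s)})\|_2$ is lower-bounded by a constant multiple of $\psi = (1/n)\sum_{i\in S_t^{(s)}}\tau_i$. Expanding $\tau_i=(x_i-\mu_t^{(s)})^\top U_t^{(s)}(x_i-\mu_t^{(s)})$ around $\mu$, bounding the tail contribution on $S_{\rm good}\cap{\cal T}_{2\alpha}$ by $O(\alpha)$ via resilience applied in the PSD direction $U_t^{(s)}$ (which has unit trace), and controlling the mean shift $\|\mu-\mu_t^{(s)}\|_2=O(\sqrt\alpha)$ via a heavy-tail counterpart of Lemma~\ref{thm:reg}, yields
\begin{eqnarray}
\tfrac{1}{n}\sum_{i\in S_{\rm good}\cap {\cal T}_{2\alpha}}\tau_i \;\le\; \psi/1000\,.\label{eq:1dfilter_reg1_ht}
\end{eqnarray}
Lemma~\ref{lemma:1dfilter} (whose proof transfers since it only uses the bound on good-set scores and the sensitivity of $\rho$, not the $-\mathbf{I}$ shift) then produces a threshold for which more corrupted than clean samples are removed in expectation and for which $\tfrac{1}{n}\sum_{\tau_i<\rho}\tau_i\le 0.75\,\psi$; together with the branch condition $\psi_t^{(s)}\ge(1/5.5)\lambda_t^{(s)}$ this gives per-iteration contraction $\langle M(S_{t+1}^{(s)}),U_t^{(s)}\rangle\le 0.76\,\langle M(S_t^{(s)}),U_t^{(s)}\rangle$.

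Finally I would aggregate the per-iteration contractions into the epoch-level bound by applying the MMW regret inequality (Lemma~\ref{lemma:regret_bound}) to the sequence $\{\Sigma_r^{(s)}\}$. Since Algorithm~\ref{alg:DPMMWfilter_ht} uses $\exp(\alpha^{(s)}\sum_r\Sigma_r^{(s)})$ without the $-{\mathbf I}$ shift and the iterates satisfy $M(S_r^{(s)})\succeq 0$, the absolute-value term in the regret bound collapses to $\langle U_t^{(s)},\Sigma_r^{(s)}\rangle$ plus the small Gaussian-mechanism error bounded in the first step. With the choice $\alpha^{(s)}=1/(100(0.1/C+1.05)\lambda^{(s)})$ (so that $\alpha^{(s)}\Sigma_r^{(s)}\preceq (1/100){\mathbf I}$) and $T_2=\Theta(\log d)$, the additive $(\log d)/\alpha^{(s)}$ regret term contributes $O((\log d/T_2)\,\lambda^{(s)})\le 0.01\lambda^{(s)}$, and the average of the per-iteration bounds gives at most $0.79\lambda^{(s)}$, so that $\|M(S^{(s+1)})\|_2\le 0.98\,\|M(S^{(s)})\|_2$ as required. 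The main obstacle I anticipate is the counterpart of Lemma~\ref{lem:variance_lower}: in the sub-Gaussian proof one uses $M(S_t^{(s)})-{\mathbf I}\succeq -c\alpha\log(1/\alpha){\mathbf I}$ to drop absolute values in the regret term, but here $M(S_t^{(s)})\succeq 0$ holds trivially and the real work lies in establishing \eqref{eq:1dfilter_reg1_ht} with only a $\sqrt\alpha$ (rather than $\alpha\sqrt{\log 1/\alpha}$) mean-shift budget, which forces us to use resilience in $U$-weighted directions rather than the cruder spectral bound.
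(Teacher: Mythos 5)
Your high-level architecture (noise-concentration lemma, per-iteration progress, epoch-level MMW regret without the $-\mathbf{I}$ shift) matches the paper's decomposition into Lemmas~\ref{lemma:good_tau_ht}, \ref{lemma:progress_iteration_ht}, and \ref{lemma:progress_epoch_ht}, and your observation that $M(S_t^{(s)})\succeq 0$ makes a counterpart of Lemma~\ref{lem:variance_lower} unnecessary is correct. But there are two genuine gaps in the way you set up the regularity condition, and they would break the argument.

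First, you state Eq.~\eqref{eq:1dfilter_reg1_ht} as a bound over $S_{\rm good}\cap{\cal T}_{2\alpha}$, but the condition in the paper (and the hypothesis that Lemma~\ref{lemma:1dfilter-ht} actually needs) is over the \emph{entire} intersection $S_{\rm good}\cap S_t^{(s)}$. This is not a cosmetic difference: unlike {\sc DPMMWfilter}, the heavy-tail filter in Algorithm~\ref{alg:DPMMWfilter_ht} removes \emph{every} sample with $\tau_i\ge\rho_t^{(s)}Z_t^{(s)}$, with no cap at the $2\alpha$-tail, and {\sc DPthreshold-ht}'s removal-ratio guarantee (Eq.~\eqref{eqn:good-bad-ratio-ht}) is likewise over all of $S_{\rm good}\cap S$ rather than the tail. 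You also cite Lemma~\ref{lemma:1dfilter} where you need Lemma~\ref{lemma:1dfilter-ht}; they differ in exactly this respect (the heavy-tail version checks one full-good-set condition and drops the ${\cal T}_{2\alpha}$ restriction).

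Second, your proposed bound for the good-set scores does not have the right scaling. For covariance-bounded data the resilience is $(c_\zeta\sqrt\alpha,\alpha)$, so by Lemma~\ref{lem:tail_resilience} the tail mean of a set of size $\Theta(\alpha n)$ can deviate by $\Theta(1/\sqrt\alpha)$, giving a contribution of order $\Theta(1)$ (not $O(\alpha)$) to the weighted quadratic form. The paper handles this differently: it uses $\alpha$-goodness ($\|M(S_{\rm good})\|_2\le 1$) together with the heavy-tail mean-shift bound of Lemma~\ref{lemma:reg2} to bound
$\frac{1}{n}\sum_{i\in S_{\rm good}\cap S_t^{(s)}}\tau_i \le 2\ip{M(S_{\rm good}\cap S_t^{(s)})}{U_t^{(s)}} + 2\|\mu_t^{(s)}-\mu(S_{\rm good}\cap S_t^{(s)})\|_2^2 \lesssim 3 + 8\alpha\|M(S_t^{(s)})\|_2 + 32\alpha$,
which is $\le\psi/1000$ only because the stopping threshold here is $\lambda^{(s)}>C$ for a \emph{large absolute constant} $C$, not $C\alpha\log(1/\alpha)$ as in the sub-Gaussian case. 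That change in the stopping threshold — from $O(\alpha\log(1/\alpha))$ to $\Omega(1)$ — is precisely what absorbs the $O(1)$ good-set contribution, and it is the structural reason the heavy-tail algorithm can drop the $2\alpha$-tail restriction. Your route of forcing the tail contribution to $O(\alpha)$ via resilience is not available here and would leave the chain of inequalities failing by a factor of $1/\alpha$.
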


Now we define  $d_{t}^{(s)} \triangleq|(S_{\mathrm{good}}\cap S^{(1)})\setminus S_{t}^{(s)}|+|S_{t}^{(s)}\setminus (S_{\mathrm{good}}\cap S^{(1)})|$. Note that $d_1^{(1)}=\alpha n$, and $d_t^{(s)}\geq0$. At each epoch and iteration, we have
	\begin{eqnarray*}
		\E [d_{t+1}^{(s)}-d_{t}^{(s)}|d_1^{(1)}, d_2^{(1)}, \cdots, d_{t}^{(s)}] &=& \E\left[|S_{\mathrm{good}}\cap(S_{t}^{(s)}\setminus S_{t+1}^{(s)})|-|S_{\mathrm{bad}}\cap(S_{t}^{(s)}\setminus S_{t+1}^{(s)})|\right] 
		\;\leq\; 0,
	\end{eqnarray*}
	from the part 1 of Lemma~\ref{lemma:invariant_ht}.
	Hence, $d_t^{(s)}$ is a non-negative  super-martingale. By optional stopping theorem, at stopping time, we have $\E[d_t^{(s)}]\leq d_1^{(1)}=\alpha n$. By Markov inequality, $d_t^{(s)}$ is less than $10\alpha n$ with probability $0.9$, i.e. $|S_t^{(s)}\cap S_{\mathrm{good}}|\geq (1-10\alpha )n$. 
The desired bound in Theorem~\ref{thm:accuracy_mmw_ht} follows from Lemma~\ref{lemma:reg2}.

\subsubsection{Proof of Lemma~\ref{lemma:invariant_ht}}
Lemma~\ref{lemma:invariant_ht} is a combination of Lemma~\ref{lemma:good_tau_ht}, Lemma~\ref{lemma:progress_iteration_ht} and Lemma~\ref{lemma:progress_epoch_ht}. We state the technical lemmas and subsequently provide the proofs. 
\begin{lemma}
\label{lemma:good_tau_ht}
For each epoch $s$ and iteration $t$, under the hypotheses of Lemma~\ref{lemma:invariant_ht} then with probability $1-O(1/\log^3 d)$, we have \begin{eqnarray}
\frac{1}{n} \sum_{i\in S_{\rm good}\cap S_t^{(s)}}\tau_i &\le& \psi/1000\;,\label{eq:1dfilter_reg1_ht}
\end{eqnarray}
where $\psi\triangleq \frac1n\sum_{i\in S_t^{(s)}}\tau_i$.
\end{lemma}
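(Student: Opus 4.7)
The plan is to follow the template of Lemma~\ref{lemma:good_tau} (the sub-Gaussian analog), but replacing the sub-Gaussian goodness properties with the resilience/bounded-covariance properties of the uncorrupted $S_{\rm good}$. First I would establish a heavy-tailed counterpart of Lemma~\ref{lemma:samples_need}: under $n \gtrsim B^2 d^{3/2} \log(1/\delta)/\varepsilon$ and $n^{(s)} > 3n/4$, with probability $1-O(1/\log^3 d)$ the Laplace/Gaussian noise used in producing $\mu_t^{(s)}$, $\psi_t^{(s)}$, $\lambda_t^{(s)}$, $\lambda^{(s)}$, and $\Sigma_t^{(s)}$ is each at most some small constant $c'$ (say $0.001$). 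This uses Lemmas~\ref{lemma:lap_noise}, \ref{lemma:chi-square}, \ref{lemma:matrix_sp_norm} applied with the ball radius $\sqrt{d}B/2$ and $B=O(1/\sqrt{\alpha})$ returned by $q_{\rm range\text{-}ht}$.

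Next I would expand $\tau_i = \langle (x_i-\mu_t^{(s)})(x_i-\mu_t^{(s)})^\top, U_t^{(s)}\rangle$ around the true mean $\mu$ using the PSD inequality $(a-b)(a-b)^\top \preceq 2(a-c)(a-c)^\top + 2(c-b)(c-b)^\top$. This gives
$$\frac{1}{n}\sum_{i \in S_{\rm good} \cap S_t^{(s)}} \tau_i \;\le\; \frac{2}{n}\sum_{i \in S_{\rm good} \cap S_t^{(s)}} \langle (x_i-\mu)(x_i-\mu)^\top, U_t^{(s)}\rangle + 2\,\|\mu - \mu_t^{(s)}\|_2^2.$$
For the first term, I would invoke a resilience-type fact for covariance-bounded data (the heavy-tailed analog of Lemma~\ref{lemma:gaussian_subset}): for any subset $T$ with $|T| \geq (1-10\alpha)n$, $\|\frac{1}{n}\sum_{i \in T}(x_i-\mu)(x_i-\mu)^\top\|_{\rm op} \leq 1 + O(\alpha)$, hence the first term is at most $2 + O(\alpha)$ since $U_t^{(s)} \succeq 0$ and $\Tr(U_t^{(s)}) = 1$. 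For the second term, I would use a heavy-tailed analog of Lemma~\ref{thm:reg} (cf.\ Lemma~\ref{lemma:reg2} referenced in the text): $\|\mu(S_t^{(s)}) - \mu\|_2 \lesssim \sqrt{\alpha\,\|M(S_t^{(s)})\|_2}$, together with $\|\mu_t^{(s)} - \mu(S_t^{(s)})\|_2^2 \leq 0.001$ from the noise bound.

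The combination yields $\frac{1}{n}\sum_{i \in S_{\rm good} \cap S_t^{(s)}} \tau_i \lesssim 1 + \alpha\,\lambda_t^{(s)}$. To conclude, I would compare this with $\psi$. By the same noise-bounded identity used in Appendix~\ref{sec:proof_invariant_main}, $\psi = \psi_t^{(s)} \pm O(0.001) \geq (1/5.5)\lambda_t^{(s)} - O(0.001)$ (from the branching condition), and $\lambda_t^{(s)} \geq (2/3)\lambda^{(s)} \geq (2/3)C$ with $C$ chosen sufficiently large. Thus $\psi \gtrsim C$, while the good sum is at most a universal constant plus $\alpha\lambda_t^{(s)}$, and choosing $C$ large and $\alpha$ below a universal threshold forces $\frac{1}{n}\sum_{i \in S_{\rm good} \cap S_t^{(s)}} \tau_i \leq \psi/1000$.

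The main obstacle I foresee is the first step: one must carefully pin down the heavy-tailed resilience statement ensuring $\|\frac{1}{n}\sum_{i \in T}(x_i-\mu)(x_i-\mu)^\top\|_{\rm op} = O(1)$ uniformly for all subsets $T$ of size $\geq (1-10\alpha)n$ of the clipped dataset, since clipping to a ball of radius $\sqrt{d}B/2$ with $B = O(1/\sqrt{\alpha})$ could in principle inflate the covariance. Verifying that clipping only perturbs the operator norm by a vanishing amount (by tail-bounding $\Pr[\|x-\mu\|_2 > \sqrt{d}B/2]$ via Chebyshev) is the technical heart and parallels the good-set analysis already invoked in Lemma~\ref{lem:dprange-ht}. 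Once this is in hand, the algebra above goes through cleanly.
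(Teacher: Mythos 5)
Your proposal is essentially the same argument as the paper's, differing only in the expansion point: you expand $\tau_i$ directly around the true mean $\mu$ via the PSD inequality, whereas the paper expands around $\mu(S_{\rm good}\cap S_t^{(s)})$ and then reaches $\mu$ through a triangle inequality $\|\mu_t^{(s)}-\mu(S_{\rm good}\cap S_t^{(s)})\|\le\|\mu_t^{(s)}-\mu\|+\|\mu(S_{\rm good}\cap S_t^{(s)})-\mu\|$; both routes produce the bound $O(1)+O(\alpha)\|M(S_t^{(s)})\|_2$ and then compare against $\psi\gtrsim \lambda_t^{(s)}/5.5\gtrsim C$ exactly as you outline. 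One caution: the "heavy-tailed analog of Lemma~\ref{lemma:gaussian_subset}" you invoke — a uniform bound on $\|\frac1n\sum_{i\in T}(x_i-\mu)(x_i-\mu)^\top\|$ over \emph{all} $T$ of size $\geq(1-10\alpha)n$ — is not available for bounded-covariance data (such a $T$ could include adversarial points), and no such lemma exists in the paper. What you actually need, and what the paper implicitly uses, is much weaker: since $S_{\rm good}\cap S_t^{(s)}\subset S_{\rm good}$ and each $(x_i-\mu)(x_i-\mu)^\top\succeq 0$, monotonicity of PSD sums gives $\frac1n\sum_{i\in S_{\rm good}\cap S_t^{(s)}}(x_i-\mu)(x_i-\mu)^\top\preceq \frac1n\sum_{i\in S_{\rm good}}(x_i-\mu)(x_i-\mu)^\top = M(S_{\rm good})+\frac{|S_{\rm good}|}{n}(\mu(S_{\rm good})-\mu)(\mu(S_{\rm good})-\mu)^\top$, whose operator norm is $\le 1+\alpha$ by $\alpha$-goodness alone. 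Your concern about clipping inflating the covariance is reasonable, but note that it equally applies to the paper's own proof, which simply assumes $\alpha$-goodness of (the clipped) $S_{\rm good}$ without re-deriving it; the informal justification (via Lemma~\ref{lem:dprange-ht}, clipping moves at most $\alpha n$ uncorrupted samples, which can be folded into the adversary's budget) is the intended reading in both cases.
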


\begin{lemma}
\label{lemma:progress_iteration_ht}
	For each epoch $s$ and iteration $t$, under the hypotheses of Lemma~\ref{lemma:invariant_ht},  if condition Eq.~\eqref{eq:1dfilter_reg1_ht} holds, then we have  $\E|S_t^{(s)}\setminus S_{t+1}^{(s)}\cap S_{\rm good}|\leq \E|S_t^{(s)}\setminus S_{t+1}^{(s)}\cap S_{\rm bad}|$ and with probability $1-O(1/\log^3 d)$,  and $\ip{M(S_{t+1}^{(s)})}{U_t^{(s)}}\leq 0.76\ip{M(S_{t}^{(s)})}{U_t^{(s)}}$.
\end{lemma}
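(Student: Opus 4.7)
The plan is to mirror the proof of Lemma~\ref{lemma:progress} in the sub-Gaussian setting, but using $M(S)$ (rather than $M(S)-{\mathbf I}$) as the quantity being shrunk, since in the heavy-tailed case the clean covariance is only bounded above by ${\mathbf I}$ rather than equal to ${\mathbf I}$. Concretely, I would first establish a heavy-tailed analogue of Lemma~\ref{lemma:1dfilter} for {\sc DPthreshold-ht}: given that the regularity condition \eqref{eq:1dfilter_reg1_ht} holds with $\psi=(1/n)\sum_{i\in S_t^{(s)}}\tau_i$, the returned threshold $\rho$ satisfies (i) $(1/n)\sum_{\tau_i<\rho}\tau_i \leq 0.75\,\psi$ and (ii) the good-vs-bad ratio inequality $2\sum_{S_{\rm good}\cap {\cal T}_{2\alpha}}(\mathbf 1\{\tau_i\leq\rho\}\tau_i/\rho+\mathbf 1\{\tau_i>\rho\})\leq \sum_{S_{\rm bad}\cap{\cal T}_{2\alpha}}(\mathbf 1\{\tau_i\leq\rho\}\tau_i/\rho+\mathbf 1\{\tau_i>\rho\})$. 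The argument should track \S\ref{sec:threshold_algorithm} verbatim, with the only change being that the threshold rule in {\sc DPthreshold-ht} compares against $(1/n)\sum \tau_i$ rather than $(1/n)\sum(\tau_i-1)$; this is the right formulation because, under the bounded-covariance assumption, the ``baseline'' score of a clean point need not center around $1$.

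Next, I would verify that the regularity condition \eqref{eq:1dfilter_reg1_ht} is indeed implied by Lemma~\ref{lemma:good_tau_ht} (which is stated as an assumption here). Combined with the concentration of all privatized quantities---using Lemma~\ref{lemma:samples_need} with the heavy-tailed support $B/\sqrt{\alpha}$ and the sample-complexity bound $n\gtrsim B^2(\log B)d^{3/2}\log(1/\delta)/\varepsilon$---we get that with probability $1-O(1/\log^3 d)$ the private histogram bins $\tilde h_j$, the private total-score estimate $\tilde\psi$, and $\lambda_t^{(s)}$, $\psi_t^{(s)}$ are all within $o(1)$ of their true values. Then the ratio inequality from step 1, together with the filter rule $S_{t+1}^{(s)}=S_t^{(s)}\setminus\{i\in{\cal T}_{2\alpha}:\tau_i\geq\rho Z_t^{(s)}\}$ and the fact that $\Pr[\tau_i\geq\rho Z_t^{(s)}\mid\tau_i\leq\rho]=\tau_i/\rho$ (and $=1$ otherwise), yields
\[
\E\bigl|(S_t^{(s)}\setminus S_{t+1}^{(s)})\cap S_{\rm good}\bigr| \;\leq\; \E\bigl|(S_t^{(s)}\setminus S_{t+1}^{(s)})\cap S_{\rm bad}\bigr|,
\]
which is the first claim.

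For the second claim, I would use part (i) of the threshold guarantee to show
\[
\ip{M(S_{t}^{(s)})}{U_t^{(s)}} - \ip{M(S_{t+1}^{(s)})}{U_t^{(s)}} \;=\; \frac{1}{n}\sum_{i\in S_t^{(s)}\setminus S_{t+1}^{(s)}}\tau_i \;\geq\; \tfrac{1}{4}\,\psi - 2\alpha,
\]
after accounting for the filter's random thresholding and the usual $\pm 2\alpha$ slack from ${\cal T}_{2\alpha}$. Since the branch is only entered when $\psi_t^{(s)}>\tfrac{1}{5.5}\lambda_t^{(s)}$ and the outer-loop stopping guarantees $\lambda_t^{(s)}>\tfrac23\lambda^{(s)}>\tfrac23 C$ for $C$ large, we have $\psi\geq \psi_t^{(s)}-o(1)\geq \tfrac{1}{5.5}\lambda_t^{(s)}-o(1)$, which dominates the $2\alpha$ slack, giving $\ip{M(S_{t+1}^{(s)})}{U_t^{(s)}}\leq 0.76\ip{M(S_t^{(s)})}{U_t^{(s)}}$ after a routine rearrangement and choice of constants.

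The main obstacle I anticipate is the reformulation of the threshold rule: without the ``$-1$'' normalization, the notion of ``progress'' is no longer the drop in excess variance but the drop in the raw second moment, so I will have to re-derive the analogues of Lemma~\ref{def:cond-good-data} and Lemma~\ref{lem:bad-good-ratio} to ensure that $S_{\rm bad}\cap{\cal T}_{2\alpha}$ still carries a large constant fraction of $\psi$. This requires verifying that $(1/n)\sum_{i\in S_{\rm good}\cap S_t^{(s)}}\tau_i\leq \psi/1000$ (the heavy-tailed regularity) forces $(1/n)\sum_{i\in S_{\rm bad}\cap{\cal T}_{2\alpha}}\tau_i\geq (998/1000)\psi$, and then repeating the two-regime case analysis ($\rho<C_2$ vs.\ $\rho\geq C_2$) with $C_2$ defined by $(1/n)\sum_{\tau_i>C_2}(\tau_i-C_2)=(2/3)\psi$. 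Once this bookkeeping is done, the rest goes through in parallel with the sub-Gaussian proof.
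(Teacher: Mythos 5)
Your high-level plan (mirror Lemma~\ref{lemma:progress}, re-derive the {\sc DPthreshold-ht} guarantees for scores normalized without the ``$-1$'', verify the single regularity condition via Lemma~\ref{lemma:good_tau_ht}, then combine with the filter rule) is exactly what the paper does, and you correctly anticipate that the threshold rule must compare against $(1/n)\sum\tau_i$ rather than $(1/n)\sum(\tau_i-1)$. However, you have carried over two structural features of the sub-Gaussian filter that are \emph{absent} in the heavy-tailed one. First, Algorithm~\ref{alg:DPMMWfilter_ht} filters over \emph{all} of $S_t^{(s)}$ by the single random threshold $\rho_t^{(s)} Z_t^{(s)}$, with no restriction to the top-$2\alpha$ tail $\mathcal T_{2\alpha}$. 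Accordingly, the regularity condition~\eqref{eq:1dfilter_reg1_ht} and the ratio guarantee in Lemma~\ref{lemma:1dfilter-ht} are both stated over $S_{\rm good}\cap S_t^{(s)}$ and $S_{\rm bad}\cap S_t^{(s)}$, not over $\mathcal T_{2\alpha}$, and the ``$\pm 2\alpha$ slack from $\mathcal T_{2\alpha}$'' you invoke has no counterpart here---it simply does not appear. In fact this makes the bookkeeping \emph{simpler}: since every $i$ with $\tau_i\geq\rho Z_t^{(s)}$ is removed, $S_{t+1}^{(s)}\subseteq\{\tau_i<\rho\}$ deterministically, so $(1/n)\sum_{i\in S_{t+1}^{(s)}}\tau_i\leq(1/n)\sum_{\tau_i<\rho}\tau_i\leq 0.75\psi$ directly.

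Second, the identity you write for the progress step,
\[
\ip{M(S_{t}^{(s)})}{U_t^{(s)}} - \ip{M(S_{t+1}^{(s)})}{U_t^{(s)}} = \frac{1}{n}\sum_{i\in S_t^{(s)}\setminus S_{t+1}^{(s)}}\tau_i,
\]
is not exact: $M(S_t^{(s)})$ and $M(S_{t+1}^{(s)})$ are centered at their own empirical means $\mu(S_t^{(s)})$ and $\mu(S_{t+1}^{(s)})$, while the scores $\tau_i$ are centered at the privatized mean $\mu_t^{(s)}$, so the cross-terms do not cancel. The paper instead upper-bounds $\ip{M(S_{t+1}^{(s)})}{U_t^{(s)}}$ directly by a short chain: replacing the centering $\mu(S_{t+1}^{(s)})$ by $\mu(S_t^{(s)})$ only increases the quadratic form in PSD order; the remaining gap between centering at $\mu(S_t^{(s)})$ and at $\mu_t^{(s)}$ is absorbed by $\|\mu_t^{(s)}-\mu(S_t^{(s)})\|_2^2\leq 0.001$ (from the Gaussian-noise concentration); then $(1/n)\sum_{i\in S_{t+1}^{(s)}}\tau_i\leq(3/4)(1/n)\sum_{i\in S_t^{(s)}}\tau_i$ from the threshold guarantee; and finally the $+0.01$ slack is absorbed because the branch is only taken when $\psi_t^{(s)}>\lambda_t^{(s)}/5.5>2C/16.5$ with $C$ large, giving the $0.76$ factor. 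Your constants ($\tfrac14\psi-2\alpha$, then bootstrap from $\psi_t^{(s)}$) would not match, because they implicitly assume the $\mathcal T_{2\alpha}$ restriction and the exact identity, neither of which holds here. Redo the progress step as an upper bound on $\ip{M(S_{t+1}^{(s)})}{U_t^{(s)}}$ along these lines and drop all references to $\mathcal T_{2\alpha}$, and the remainder of your plan goes through.
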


\begin{lemma}
\label{lemma:progress_epoch_ht}
	For epoch $s$, suppose for $t=0,1,\cdots,T_2$ where $T_2=O(\log d)$, if Lemma~\ref{lemma:progress_iteration_ht} holds, $n \gtrsim \frac{B^2 (\log B) d^{3/2}\log(1/\delta)}{\varepsilon\alpha} $, and $n^{(s)}>3n/4$, then we have $\|M(S^{(s+1)})\|_2\leq 0.98\|M(S^{(s)})\|_2$ with probability $1-O(1/\log^2 d)$.
\end{lemma}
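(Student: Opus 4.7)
The plan is to mirror the proof of the sub-Gaussian analog Lemma~\ref{lemma:progress_epoch} (given in Appendix~\ref{sec:proof_progress_epoch}), but tracking $M(S_t^{(s)})$ itself rather than $M(S_t^{(s)}) - \mathbf{I}$, since under Assumption~\ref{asmp:adversary2} the target covariance is only bounded by $\mathbf{I}$ and the stopping criterion in Algorithm~\ref{alg:DPMMWfilter_ht} compares $\lambda^{(s)}$ to a constant $C$ rather than to $\alpha \log(1/\alpha)$. The high-level driver is the matrix multiplicative weights regret bound (Lemma~\ref{lemma:regret_bound}) applied to the sequence $\{\Sigma_{t+1}^{(s)}\}_{t\in[T_2]}$, combined with the per-iteration progress supplied by Lemma~\ref{lemma:progress_iteration_ht}.

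First I will establish a bounded-covariance analog of Lemma~\ref{lemma:samples_need}: using the tail bounds in Appendix~\ref{sec:technicallemmas} (Lemmas~\ref{lemma:lap_noise}, \ref{lemma:chi-square}, \ref{lemma:matrix_sp_norm}) together with a union bound over the $O(\log^2 d)$ queries in an epoch, the hypothesis $n \gtrsim B^2(\log B)d^{3/2}\log(1/\delta)/(\varepsilon\alpha)$ and $n^{(s)} > 3n/4$ guarantee that with probability $1-O(1/\log^3 d)$ all of $|\lambda_t^{(s)} - \|M(S_t^{(s)})\|_2|$, $|\psi_t^{(s)} - \langle M(S_t^{(s)}), U_t^{(s)}\rangle|$, $\|\Sigma_{t+1}^{(s)} - M(S_{t+1}^{(s)})\|_2$, and $\|\mu_t^{(s)}-\mu(S_t^{(s)})\|_2^2$ are bounded by a small constant (say $0.001$). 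Taking a union bound over $t\in[T_2]$ costs another $\log d$ factor and gives probability $1-O(1/\log^2 d)$, which is what the lemma claims.

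Next, applying Lemma~\ref{lemma:regret_bound} to $\{\Sigma_{t+1}^{(s)}\}$:
\begin{equation*}
\Bigl\|\sum_{t=1}^{T_2}\Sigma_{t+1}^{(s)}\Bigr\|_2 \;\le\; \sum_{t=1}^{T_2}\langle\Sigma_{t+1}^{(s)}, U_t^{(s)}\rangle + \alpha^{(s)}\sum_{t=1}^{T_2}\langle|\Sigma_{t+1}^{(s)}|, U_t^{(s)}\rangle\,\|\Sigma_{t+1}^{(s)}\|_2 + \frac{\log d}{\alpha^{(s)}}.
\end{equation*}
The hypothesis of Lemma~\ref{lemma:progress_iteration_ht} gives $\langle M(S_{t+1}^{(s)}), U_t^{(s)}\rangle \le 0.76\,\langle M(S_t^{(s)}), U_t^{(s)}\rangle \le 0.76\,\|M(S^{(s)})\|_2$ by monotonicity (Lemma~\ref{lemma:mono}), so after the noise correction $\langle\Sigma_{t+1}^{(s)}, U_t^{(s)}\rangle \le 0.77\,\|M(S^{(s)})\|_2$. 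The choice $\alpha^{(s)} = 1/(100(0.1/C+1.05)\lambda^{(s)})$ ensures $\alpha^{(s)}\|\Sigma_{t+1}^{(s)}\|_2 \le 1/100$, controlling the quadratic regret term. The $(\log d)/\alpha^{(s)}$ term equals $\approx 100(0.1/C+1.05)\log(d)\,\|M(S^{(s)})\|_2$, and taking $T_2$ to be a sufficiently large constant multiple of $\log d$ makes this term contribute at most $0.20\,T_2\,\|M(S^{(s)})\|_2$ after dividing by $T_2$.

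Finally, using monotonicity $M(S_{T_2+1}^{(s)}) \preceq M(S_{t+1}^{(s)})$ (both are PSD), we obtain
\begin{equation*}
T_2\,\|M(S^{(s+1)})\|_2 \;\le\; \Bigl\|\sum_{t=1}^{T_2} M(S_{t+1}^{(s)})\Bigr\|_2 \;\le\; \Bigl\|\sum_{t=1}^{T_2}\Sigma_{t+1}^{(s)}\Bigr\|_2 + T_2\cdot 0.001,
\end{equation*}
and combining with the regret bound yields $\|M(S^{(s+1)})\|_2 \le (0.77 + 0.01 + 0.20)\|M(S^{(s)})\|_2 \le 0.98\,\|M(S^{(s)})\|_2$ once $\|M(S^{(s)})\|_2 \ge C$ absorbs the $O(0.001)$ additive error. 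The main obstacle will be to get tight constants: choosing $C$ large enough that the additive noise is negligible relative to $\|M(S^{(s)})\|_2$, choosing $T_2 = \Theta(\log d)$ with a large enough multiplicative constant to beat $(\log d)/\alpha^{(s)}$, and coordinating these choices with the $0.76$ per-step progress so that the final contraction factor stays at $0.98$; this is a straightforward tuning analogous to Appendix~\ref{sec:proof_progress_epoch} with $\alpha\log(1/\alpha)$ replaced by a constant throughout.
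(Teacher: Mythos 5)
Your proposal is correct and follows essentially the same route as the paper's proof: a concentration step for the privatized quantities, the MMW regret bound applied to $\{\Sigma_t^{(s)}\}$ combined with the per-iteration contraction $\langle M(S_{t+1}^{(s)}), U_t^{(s)}\rangle \le 0.76\,\langle M(S_t^{(s)}), U_t^{(s)}\rangle$ and monotonicity, then averaging over $T_2=\Theta(\log d)$ iterations to get the $0.98$ factor. The only detail you leave implicit is bounding $\langle |\Sigma_t^{(s)}|, U_t^{(s)}\rangle$ in the quadratic regret term, which the paper handles by noting $|\Sigma_t^{(s)}| \preceq M(S_t^{(s)}) + 0.15\,\mathbf{I}$ (using that $M$ is PSD and the added noise has small spectral norm) — a step that fits squarely within your ``straightforward tuning'' remark.
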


\subsubsection{Proof of Lemma~\ref{lemma:good_tau_ht}}
\begin{proof}
By Lemma~\ref{lemma:lap_noise}, Lemma~\ref{lemma:chi-square} and Lemma~\ref{lemma:matrix_sp_norm}, we can pick $n=\widetilde{\Omega}\left( \frac{B^2d^{3/2}\log}{\varepsilon}\right)$ such that with probability $1-O(1/\log^3 d)$, following conditions simultaneously hold:
\begin{enumerate}
    \item $\|\mu_t^{(s)}-\mu(S_{t}^{(s)})\|_2^2\leq 0.001 $
    \item $|\psi_t^{(s)}-\ip{M(S_t^{(s)})}{U_t^{(s)}}|\leq  0.001 $
    \item $\left|\lambda_t^{(s)}-\|M(S_t^{(s)})\|_2\right|\leq 0.001$
    \item $	\left|\lambda^{(s)}-\|M(S^{(s)})\|_2\right|\leq 0.001$
    \item $\left\|M(S_{t+1}^{(s)})-\Sigma_t^{(s)}\right\|_2\leq 0.001 $ 
    \item $\|\mu^{(s)}-\mu(S^{(s)})\|_2^2\leq 0.001$\;.
\end{enumerate}

	Then we have
	\begin{eqnarray*}
	\frac{1}{n} \sum_{i\in S_{\rm good}\cap S_t^{(s)}} \tau_i &=& \frac{1}{n} \sum_{i\in S_{\rm good}\cap S_t^{(s)}} \ip{(x_i-\mu_t^{(s)})(x_i-\mu_t^{(s)})^\top}{U_t^{(s)}}\\
		&\overset{(a)}{\leq} & \frac{2}{n} \sum_{i\in S_{\rm good}\cap S_t^{(s)}} \ip{(x_i-\mu(S_{\rm good}\cap S_t^{(s)}))(x_i-\mu(S_{\rm good}\cap S_t^{(s)}))^\top}{U_t^{(s)}}\\
		&&+\frac{2|S_{\rm good}\cap S_t^{(s)}|}{n} \ip{(\mu(S_{\rm good}\cap S_t^{(s)})-\mu_t^{(s)})(\mu(S_{\rm good}\cap S_t^{(s)})-\mu_t^{(s)})^\top}{U_t^{(s)}} \\
		&\leq  &2\ip{M((S_{\rm good}\cap S_t^{(s)})}{U_t^{(s)}}+ 2 \|\mu_t^{(s)}-\mu(S_{\rm good}\cap S_t^{(s)})\|_2^2\\
		&\overset{(b)}{\leq}&2+ 2 \left(\|\mu_t^{(s)}-\mu\|_2+\|\mu(S_{\rm good}\cap S_t^{(s)})-\mu\|_2\right)^2\\
		&\overset{(c)}{\leq}&2+ 2 \left(0.01+2\sqrt{\alpha\|M(S_{t}^{(s)})\|_2}+3\sqrt{\alpha}\right)^2\\
		&\leq& 3+8\alpha\|M(S_{t}^{(s)})\|_2+32\alpha\\
		&\overset{(d)}{\leq} & \frac{\psi_t^{(s)}-0.002}{1000}\\
		&\leq & \frac{\psi}{1000}\;,
\end{eqnarray*}
where $(a)$ follows from the fact that for any vector $x, y, z$, we have $(x-y)(x-y)^{\top} \preceq 2(x-z)(x-z)^{\top}+2(y-z)(y-z)^{\top}$, $(b)$ follows from $\alpha$-goodness of $S_{\rm good}$, $(c)$ follows from Lemma~\ref{lemma:reg2} and $(d)$ follows from our choice of large constant $C$ and sample complexity $n$.

\end{proof}

\subsubsection{Proof of Lemma~\ref{lemma:progress_iteration_ht}}

\begin{proof}
	Lemma~\ref{lemma:good_tau_ht} implies with probability $1-O(1/\log^3 d)$, our scores satisfies the condition in Eq.~\eqref{eq:1dfilter_reg1_ht}. Then by Lemma~\ref{lemma:1dfilter-ht} our {\sc DPthreshold-ht} gives us a threshold $\rho$ such that 
	\begin{eqnarray*}
	    \sum_{i\in S_{\rm good}\cap S_t^{(s)}} \textbf{1}\{\tau_i\le \rho \}\frac{\tau_i}{\rho}+ \textbf{1}\{\tau_i > \rho \}   \le \sum_{i\in S_{\rm bad}\cap S_t^{(s)}} \textbf{1}\{\tau_i\le \rho \}\frac{\tau_i}{\rho} + \textbf{1}\{\tau_i > \rho \}\;.
	\end{eqnarray*}
	According to our filter rule from Algorithm~\ref{alg:1Dfilter_ht}, we have 
	\begin{eqnarray*}
	    \E|(S_t^{(s)}\setminus S_{t+1}^{(s)})\cap S_{\rm good}| = \sum_{i\in S_{\rm good}\cap S_t^{(s)}} \textbf{1}\{\tau_i\le \rho \}\frac{\tau_i}{\rho}+ \textbf{1}\{\tau_i > \rho \} 
	\end{eqnarray*}
	
	and \begin{eqnarray*}
	    \E|(S_t^{(s)}\setminus S_{t+1}^{(s)})\cap S_{\rm bad}| = \sum_{i\in S_{\rm bad}\cap S_t^{(s)}} \textbf{1}\{\tau_i\le \rho \}\frac{\tau_i}{\rho} + \textbf{1}\{\tau_i > \rho \}\;.
	\end{eqnarray*}

This implies $\E|(S_t^{(s)}\setminus S_{t+1}^{(s)})\cap S_{\rm good}|\leq  \E|(S_t^{(s)}\setminus S_{t+1}^{(s)})\cap S_{\rm bad}|$. 

At the same time, Lemma~\ref{lemma:1dfilter-ht} gives us a $\rho$ such that
with probability $1-O(\log^3 d)$, we have
	\begin{eqnarray*}
	    \frac{1}{n}\sum_{i\in S_{t+1}^{(s)}}\tau_i\leq \frac{1}{n}\sum_{ \tau_i\leq \rho, i\in S_t^{(s)}}\tau_i \leq \frac34\cdot\frac{1}{n}\sum_{i\in S_{t}^{(s)}}\tau_i \;.
	\end{eqnarray*}

	Hence, we have
	\begin{eqnarray*}
		\ip{M(S_{t+1}^{(s)})}{U_t^{(s)}}& =  & \ip{\frac{1}{n}\sum_{i\in S_{t+1}^{(s)}}(x_i-\mu(S_{t+1}^{(s)}))(x_i-\mu(S_{t+1}^{(s)}))^\top}{U_t^{(s)}}\\
		&\leq  & \ip{\frac{1}{n}\sum_{i\in S_{t+1}^{(s)}}(x_i-\mu(S_{t}^{(s)}))(x_i-\mu(S_{t}^{(s)}))^\top}{U_t^{(s)}}\\
		&\leq  & 
		\frac1n\sum_{i\in S_{t+1}^{(s)}}\tau_i+\|\mu_t^{(s)}-\mu(S_t^{(s)})\|_2^2\\
		&\leq & \frac{3}{4n}\sum_{i\in S_t^{(s)}}\tau_i+0.01\\
		&\overset{(a)}{\leq} & 0.76\ip{M(S_{t}^{(s)})}{U_t^{(s)}}\;,
	\end{eqnarray*}
	where $(a)$ follows from our assumption that $\psi_t^{(s)}>\frac{1}{5.5}\lambda_t^{(s)} > \frac{2}{ 16.5}C$.
	
\end{proof}

\subsubsection{Proof of Lemma~\ref{lemma:progress_epoch_ht}}
\begin{proof}

If Lemma~\ref{lemma:progress_iteration_ht} holds, we have
\begin{eqnarray*}
		\ip{M(S_{t}^{(s)})}{U_t^{(s)}}&\leq & 0.76\ip{M(S_{t-1}^{(s)})}{U_t^{(s)}}\\
	&\leq & 0.76\ip{M(S_{1}^{(s)})}{U_t^{(s)}}\\
	&\leq & 0.76\|M(S_{1}^{(s)})\|_2
	\end{eqnarray*}

We pick $n$ large enough such that with probability $1-O(\log^3 d)$, 
\begin{eqnarray*}
	\|\Sigma_t^{(s)}\|_2\approx_{0.05}\|M(S_{t}^{(s)})\|_2\;.
\end{eqnarray*}

Thus, we have
\begin{eqnarray*}
	\ip{\Sigma_t^{(s)}}{U_t^{(s)}}   \leq 0.81\|M(S_{1}^{(s)})\|_2\;.
\end{eqnarray*}
	
	By Lemma~\ref{lemma:mono}, we have $M(S_{t}^{(s)})\preceq M(S_{1}^{(s)})$. by our choice of $\alpha^{(s)}$, we have $\alpha^{(s)}M(S_{t+1}^{(s)})\preceq \frac{1}{100}{\mathbf I}$ and $\alpha^{(s)}\Sigma_t^{(s)}\preceq \frac{1}{100}{\mathbf I}$. Therefore, by Lemma~\ref{lemma:regret_bound} we have
	\begin{eqnarray*}
		&&\left\|\sum_{i=1}^{T_2}\Sigma_t^{(s)}\right\|_{2} \\
		&\leq & \sum_{t=1}^{T_2} \ip{ \Sigma_t^{(s)}}{U_t^{(s)}}+\alpha^{(s)}\sum_{t=0}^{T_2}\ip{ U_{t}^{(s)}}{\left|\Sigma_t^{(s)}\right|} \|\Sigma_t^{(s)}\|_2+\frac{\log (d)}{\alpha^{(s)}}\\
		&\overset{(a)}{\leq} & \sum_{t=1}^{T_2} \ip{ \Sigma_t^{(s)}}{U_t^{(s)}}+\frac{1}{100}\sum_{t=1}^{T_2}\ip{ U_{t}^{(s)}}{\left|\Sigma_t^{(s)}\right|} +200\log(d)\|M(S_1^{(s)})\|_2\\
	\end{eqnarray*}
	where $(a)$ follows from our choice of $\alpha^{(s)}$,  $C$, and $n$.

	Meanwhile, we have 
	\begin{eqnarray*}
		|\Sigma_t^{(s)}|\preceq M(S_{t}^{(s)})+0.15\; {\mathbf I}\;.
	\end{eqnarray*}
	
	Thus we have
	\begin{eqnarray*}
		\ip{ U_{t}^{(s)}}{\left|\Sigma_t^{(s)}\right|} \;
		\leq  0.91\;\left\|M(S_1^{(s)})\right\|_2
	\end{eqnarray*}

Then we have
	\begin{align*}
		&\left\|M(S_{T_2}^{(s)})\right\|_2
		\leq \frac{1}{T_2} \left\|\sum_{i=1}^{T_2}M({S_t^{(s)}})\right\|_{2} \\
		&\;\;\;\;\;\leq  \frac{1}{T_2}\left\|\sum_{i=1}^{T_2}\Sigma_t^{(s)}\right\|_{2}+0.05\; \|M(S_{1}^{(s)})\|_2\\
	&\;\;\;\;\;\leq  \frac{1}{T_2}\left(\sum_{t=1}^{T_2} \ip{ \Sigma_t^{(s)}}{U_t^{(s)}}+\frac{1}{100}\sum_{t=1}^{T_2}\ip{ U_{t}^{(s)}}{\left|\Sigma_t^{(s)}\right|} +200\log(d)\|M(S_1^{(s)})\|_2\right)+0.05\; \|M(S_{1}^{(s)})\|_2\\
	&\;\;\;\;\;\leq  0.91\|M(S_1^{(s)})\|_2 +\frac{200\log(d)}{T_2}\|M(S_1^{(s)})\|_2+0.05\; \|M(S_{1}^{(s)})\|_2\\
	&\;\;\;\;\;\leq  0.98\; \|M(S_{1}^{(s)})\|_2
 	\end{align*}
	\end{proof}

\subsubsection{Proof of {\sc DPthreshold-ht} for distributions with bounded covariance}

\begin{lemma}[{\sc DPthreshold-ht}: picking threshold privately for  distributions with bounded covariance]\label{lemma:1dfilter-ht}
Algorithm {\sc DPthreshold-ht}($\mu,U,\alpha,\varepsilon,\delta,S$) running on a dataset $\{\tau_i =  (x_i-\mu)^\top U (x_i-\mu) \}_{i\in S}$ is 
$(\varepsilon,\delta)$-DP. 
Define $\psi \triangleq \frac{1}{n}\sum_{i\in S}\tau_i$. 
If $\tau_i$'s satisfy 
\begin{eqnarray*}
\frac{1}{n} \sum_{i\in S_{\rm good}\cap S}\tau_i &\leq & \psi/1000\;,
\end{eqnarray*}
and $n\geq  \tilde\Omega\left(\frac{B^2 d}{\varepsilon } \right)$
then {\sc DPthreshold-ht}
 outputs a threshold $\rho$ such that \begin{eqnarray}
2(\sum_{i\in S_{\rm good}\cap S} \textbf{1}\{\tau_i\le \rho \}\frac{\tau_i}{\rho}+ \textbf{1}\{\tau_i > \rho \}   ) \le \sum_{i\in S_{\rm bad}\cap S} \textbf{1}\{\tau_i\le \rho \}\frac{\tau_i}{\rho} + \textbf{1}\{\tau_i > \rho \} \;,\label{eqn:good-bad-ratio-ht}
\end{eqnarray}
and with probability $1-O(1/\log^3 d )$, 
\begin{eqnarray*}
\frac{1}{n}\sum_{\tau_i<\rho }\tau_i  \;\; \le \;\; 0.75 \psi  \;.
\end{eqnarray*}

\end{lemma}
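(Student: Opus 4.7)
The plan is to adapt the proof of Lemma~\ref{lemma:1dfilter} (the sub-Gaussian counterpart) to this heavy-tailed setting, accounting for the two algorithmic differences: the score reduction target is $\psi = (1/n)\sum_{i\in S}\tau_i$ rather than $(1/n)\sum(\tau_i-1)$, and no ${\cal T}_{2\alpha}$ restriction is used. Privacy follows essentially unchanged: the noise $\mathrm{Lap}(2B^2d/(n\varepsilon))$ added to $\tilde\psi$ calibrates to the $\ell_1$-sensitivity of $(1/n)\sum_{i\in S}\tau_i$, which is bounded by $2B^2d/n$ since $\tau_i\in[0,B^2d]$ on the clipped ball, and the Gaussian noise on the histogram handles its $\ell_2$-sensitivity $\sqrt{2}/n$. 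Composition of the two $(\varepsilon/2,\delta/2)$-DP releases gives the claimed $(\varepsilon,\delta)$-DP.

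For the utility part (i), I would follow the same two-step reduction as in Lemma~\ref{lemma:1dfilter}. Letting $\hat\tau_i$ denote the left endpoint of the geometric bin containing $\tau_i$, we have
\begin{eqnarray*}
\frac{1}{n}\sum_{\tau_i\ge\rho}(\tau_i-\rho)
\;\ge\; \sum_{\tilde\tau_j\ge\rho}(\tilde\tau_j-\rho)h_j
\;\ge\; \sum_{\tilde\tau_j\ge\rho}(\tilde\tau_j-\rho)\tilde h_j-\tilde O\!\left(\tfrac{B^2d}{\varepsilon n}\right)
\;\ge\; 0.31\tilde\psi-\tilde O\!\left(\tfrac{B^2d}{\varepsilon n}\right),
\end{eqnarray*}
where the first bound uses accuracy of the Gaussian histogram (Lemma~\ref{lem:hist}) together with a union bound over the $O(\log(B^2d))$ bins, and the last uses the definition of $\ell$ inside {\sc DPthreshold-ht}. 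Under the sample complexity $n\ge\tilde\Omega(B^2d/\varepsilon)$, the error term is $o(\psi)$, and accuracy of $\tilde\psi$ gives $\tilde\psi\ge 0.99\psi$. Hence $\frac{1}{n}\sum_{\tau_i<\rho}\tau_i=\psi-\frac{1}{n}\sum_{\tau_i\ge\rho}\tau_i\le\psi-\frac{1}{n}\sum_{\tau_i\ge\rho}(\tau_i-\rho)\le 0.75\psi$ w.h.p.

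For utility part (ii), I would imitate the $C_2$-threshold argument from Lemma~\ref{lemma:1dfilter}. Define $C_2\ge 0$ by $(1/n)\sum_{\tau_i>C_2}(\tau_i-C_2)=(2/3)\psi$, and using the geometric bin structure the same dyadic argument shows the selected $\rho$ satisfies $\rho\ge C_2/4$. I then need a heavy-tailed analogue of Lemma~\ref{lem:bad-good-ratio}: assuming $\frac{1}{n}\sum_{i\in S_{\rm good}\cap S}\tau_i\le\psi/1000$, we obtain $\frac{1}{n}\sum_{i\in S_{\rm bad}\cap S, \tau_i<C_2}\tau_i+\sum_{\tau_i\ge C_2}C_2\ge(1/3-1/1000)\psi$, while the good-side analogue is bounded by $\psi/1000$. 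The $\rho\ge C_2/4$ comparison then gives a ratio of at least $(2500/3)\cdot(1/4)\gg 2$ between the bad and good contributions, yielding Eq.~\eqref{eqn:good-bad-ratio-ht}.

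The main obstacle I anticipate is verifying the "no ${\cal T}_{2\alpha}$ restriction" version of the good-vs-bad accounting: in the sub-Gaussian case, resilience-on-tails gave a clean bound on $\sum_{S_{\rm good}\cap{\cal T}_{2\alpha}}\tau_i$, and the regularity $\alpha\tau_{2\alpha n}\le\psi/1000$ controlled the contribution of good samples outside the top-$2\alpha$. Here both terms must be absorbed directly into the single hypothesis $\frac{1}{n}\sum_{i\in S_{\rm good}\cap S}\tau_i\le\psi/1000$, which is cleaner but requires carefully re-deriving the heavy-tail analogue of Lemma~\ref{def:cond-good-data} without ${\cal T}_{2\alpha}$. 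Once this book-keeping is done, the dyadic $\rho\ge C_2/4$ step and the factor-of-$2$ conclusion are immediate.
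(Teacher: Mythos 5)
Your proposal is correct and follows essentially the same two-step structure as the paper's proof: (1) the private-histogram argument showing $\frac{1}{n}\sum_{\tau_i\ge\rho}(\tau_i-\rho)\ge 0.3\psi - \tilde O(B^2d/(\varepsilon n))$, hence $\frac{1}{n}\sum_{\tau_i<\rho}\tau_i\le 0.75\psi$, and (2) the $C_2$-threshold dyadic argument with $\rho\ge C_2/4$ combined with a bad-over-good ratio lemma at $C_2$. The ``obstacle'' you anticipate about re-deriving the good/bad accounting without the ${\cal T}_{2\alpha}$ restriction is in fact a non-issue, and the paper's Lemma~\ref{lemma:bad-good-ratio-ht} handles it more simply than its sub-Gaussian counterpart: the good-side cap follows directly from the single hypothesis $\frac{1}{n}\sum_{i\in S_{\rm good}\cap S}\tau_i\le\psi/1000$ (no $\alpha\tau_{2\alpha n}$ term is needed, since the sum over all of $S_{\rm good}\cap S$ already dominates any sub-selection), and the bad-side lower bound comes from subtracting the good contribution from the total, giving $(1/3-1/1000)\psi$ and hence the factor-10 ratio at $C_2$, which divided by $4$ still exceeds $2$. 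One remark on the sample complexity: your observation that $\tilde O(B^2d/(\varepsilon n))=o(\psi)$ under $n=\tilde\Omega(B^2d/\varepsilon)$ holds precisely because in the heavy-tailed setting the filtering step only runs when $\psi$ is bounded below by a constant ($\lambda_t^{(s)}>\tfrac{2}{3}\lambda^{(s)}>\tfrac{2}{3}C$), whereas in the sub-Gaussian case $\psi$ may be as small as $O(\alpha\log(1/\alpha))$, which is why the analogous sub-Gaussian bound carries an extra $1/\alpha$; it is worth making this explicit.
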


\begin{proof}

\textbf{1. $\rho$ cuts enough} 

Let $\rho$ be the threshold picked by the algorithm. Let $\hat{\tau}_i$ denote the minimum value of the interval of  the bin that $\tau_i$ belongs to. It holds that
\begin{align*}
&\frac{1}{n}\sum_{\tau_i\ge \rho, i\in[n]}({\tau}_i - \rho) 
 \ge \frac{1}{n}\sum_{\tilde\tau_i\ge \rho, i\in[n]}({\hat\tau}_i - \rho) \nonumber\\
& \;\;\;\;\;= \sum_{\tilde\tau_j\ge\rho, j\in[2+\log(B^2d)]}(\tilde\tau_j-\rho){h}_j\nonumber\\
&\;\;\;\;\;\overset{(a)}{\ge} \sum_{\tilde\tau_j\ge\rho, j\in[2+\log(B^2d)]}(\tilde\tau_j-\rho)\tilde{h}_j- O\left(\log(B^2d)\cdot B^2d\cdot \frac{\sqrt{\log(\log(B^2d)\log d)\log(1/\delta)}}{\eps n}\right)\nonumber\\
&\;\;\;\;\;\overset{(b)}{\ge} 0.31 \tilde\psi -  \tilde{O}(\frac{B^2d}{\eps n})\nonumber\\
&\;\;\;\;\;\overset{(c)}{\geq}  0.3 \psi -  \tilde{O}(\frac{B^2d}{\eps n})\nonumber\;,
\end{align*}
where $(a)$ holds due to the accuracy of the private histogram (Lemma~\ref{lem:hist}), $(b)$ holds by the definition of $\rho$ in our algorithm, and $(c)$ holds due to the accuracy of $\tilde{\psi}$. This implies 
\begin{eqnarray*}
\frac{1}{n}\sum_{\tau_i < \rho}{\tau}_i \leq \psi- \frac{1}{n}\sum_{\tau_i \geq \rho}({\tau}_i-\rho) \le 0.7\psi +  \tilde{O}({B^2d/\eps n}).
\end{eqnarray*}

\textbf{2. $\rho$ doesn't cut too much} 

Define $C_2$ to be the threshold such that $\frac{1}{n}\sum_{\tau_i>C_2}(\tau_i-C_2)= (2/3) \psi$. Suppose $2^b\le C_2 \le 2^{b+1}$, we have $\sum_{\hat\tau_i\ge 2^{b-1}} (\hat\tau_i-2^{b-1})\ge (1/3) \psi $ because $\forall \tau_i\ge C_2$, $(\hat\tau_i-2^{b-1}) \ge \frac{1}{2}(\tau_i-C_2) $. Then the threshold picked by the algorithm $\rho\ge 2^{b-1}$, which implies $\rho \ge \frac{1}{4}C_2$. 
Suppose $\rho<C_2$, since $\rho \ge \frac{1}{4}C_2$
\begin{eqnarray*}
\sum_{i\in S_{\rm bad}\cap S, \tau_i<\rho} \tau_i + \sum_{i\in S_{\rm bad}\cap S , \tau_i\geq\rho} \rho &\ge &\frac{1}{4}(\sum_{i\in S_{\rm bad}\cap S, \tau_i<C_2} \tau_i + \sum_{i\in S_{\rm bad}\cap S, \tau_i\geq C_2} C_2)\\
&\overset{(a)}{\ge} &\frac{10}{4}(\sum_{i\in S_{\rm good}\cap S, \tau_i<C_2} \tau_i + \sum_{i\in S_{\rm good}\cap S, \tau_i\geq C_2} C_2)\\
&\overset{(b)}{\ge} &\frac{10}{4}(\sum_{i\in S_{\rm good}\cap S, \tau_i<\rho} \tau_i + \sum_{i\in S_{\rm good}\cap S, \tau_i\geq \rho} \rho),
\end{eqnarray*}
where (a) holds by Lemma~\ref{lemma:bad-good-ratio-ht}, and (b) holds since $\rho\le C_2$.
If $\rho\ge C_2$, the statement of the Lemma~\ref{lemma:bad-good-ratio-ht} directly implies Equation~\eqref{eqn:good-bad-ratio-ht}.

\begin{lemma}
\label{lemma:bad-good-ratio-ht}
Assuming that the condition in Eq.\eqref{eq:1dfilter_reg1_ht} holds, then for any $C$ such that
\begin{eqnarray*}
\frac{1}{n}\sum_{i\in S, \tau_i < C }\tau_i + \frac{1}{n}\sum_{i\in S, \tau_i \ge C}C\ge (1/3) \psi\;,
\end{eqnarray*}
we have
\begin{eqnarray*}
\sum_{i\in S_{\rm bad}\cap S, \tau_i < C} \tau_i + \sum_{i\in S_{\rm bad}\cap S, \tau_i \ge C} C \ge {10}( \sum_{i\in S_{\rm good}\cap S, \tau_i < C} \tau_i + \sum_{i\in S_{\rm good}\cap S, \tau_i \ge C} C )
\end{eqnarray*}
\end{lemma}
\begin{proof}
First we show an upper bound on $S_{\rm good}$:

\begin{eqnarray*}
\frac{1}{n}\sum_{i\in S_{\rm good}\cap S, \tau_i < C} \tau_i + \frac{1}{n} \sum_{i\in S_{\rm good}\cap S, \tau_i \ge C} C  \le \frac{1}{n}\sum_{i\in S_{\rm good}\cap S} \tau_i \le \psi/1000.
\end{eqnarray*} 

Then we show an lower bound on $S_{\rm bad}$:
\begin{eqnarray*}
&&\frac{1}{n}\sum_{i\in S_{\rm bad}\cap S, \tau_i<C} \tau_i + \frac{1}{n}\sum_{i\in S_{\rm bad}\cap S, \tau_i>C}C\\
=&& \frac{1}{n}\sum_{i\in S, \tau_i<C} \tau_i + \frac{1}{n}\sum_{i\in S, \tau_i\ge C} C\\
&&- (\frac{1}{n}\sum_{i\in S_{\rm good}\cap S, \tau_i<C} \tau_i + \frac{1}{n}\sum_{i\in S_{\rm good}\cap S, \tau_i\ge C} C)\\ 
\ge &&(1/3-1/1000)\psi\;.
\end{eqnarray*}

Combing the lower bound and the upper bound yields the desired statement
\end{proof}

\end{proof}

\subsubsection{Regularity lemmas for distributions with bounded covariance}

\begin{definition}[{\cite[Definition 3.1]{dong2019quantum}} ]
Let $D$ be a distribution with mean $\mu\in \reals^d$ and covariance $\Sigma\preceq {\mathbf I}$. For $0<\alpha<1/2$, we say a set of points $S=\{X_1, X_2,\cdots, X_n\}$ is $\alpha$-good with respect to $\mu\in \reals^d$ if following inequalities are satisfied:
\begin{itemize}
	\item $\|\mu(S)-\mu\|_2\leq \sqrt{\alpha} $ 
	\item  $\left\|\frac{1}{|S|} \sum_{i \in S}\left(X_{i}-\mu(S)\right)\left(X_{i}-\mu(S)\right)^{\top}\right\|_{2} \leq 1$.
\end{itemize}	
\end{definition}

\begin{lemma}[{\cite[Lemma 3.1]{dong2019quantum}} ]
Let $D$ be a distribution with mean $\mu\in \reals^d$ and covariance $\Sigma\preceq {\mathbf I}$. Let $S=\{X_1,X_2,\cdots,X_n\}$ be a set of i.i.d. samples of $D$. If $n=\Omega(d\log(d)/\alpha)$, then with probability $1-O(1)$,   there exists a set $S_{\rm good}\subseteq S$ such that $S_{\rm good}$ is $\alpha$-good with respect to $\mu$ and $|S_{\rm good}|\geq (1-\alpha)n$.
\end{lemma}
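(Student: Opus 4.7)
The plan is to establish the existence of $S_{\rm good}$ via a norm-based truncation combined with matrix concentration. Let $Y_i = X_i - \mu$, so $E[Y_i Y_i^\top] = \Sigma \preceq \mathbf{I}$. Set a truncation radius $R = C\sqrt{d/\alpha}$ for a suitable absolute constant $C$, and define $S_{\rm good} = \{i \in [n] : \|Y_i\|_2 \leq R\}$. Three things must be verified: (i) $|S_{\rm good}|\ge (1-\alpha)n$ with high probability; (ii) the empirical covariance $\|M(S_{\rm good})\|_2$ is bounded; and (iii) the empirical mean $\|\mu(S_{\rm good})-\mu\|_2 \le \sqrt{\alpha}$.

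For (i), Markov's inequality applied to $\|Y_i\|_2^2$ gives $\Pr[\|Y_i\|_2>R]\le d/R^2 = O(\alpha)$ since $E\|Y_i\|_2^2 = \mathrm{tr}(\Sigma)\le d$; a multiplicative Chernoff bound on the number of outliers then shows $|S_{\rm good}|\ge(1-\alpha)n$ with probability $1-o(1)$ for $n = \Omega((\log d)/\alpha)$. For (ii), every surviving $Y_iY_i^\top$ satisfies $\|Y_i Y_i^\top\|_2\le R^2$, so applying matrix Bernstein to the truncated second-moment matrices yields
\begin{equation*}
\Big\|\tfrac{1}{n}\sum_{i\in S_{\rm good}} Y_i Y_i^\top - E\bigl[Y Y^\top\,\mathbf{1}\{\|Y\|\le R\}\bigr]\Big\|_2 \lesssim \sqrt{\tfrac{R^2 \|\Sigma\| \log d}{n}} + \tfrac{R^2 \log d}{n},
\end{equation*}
which is $O(1)$ for $n=\Omega(d\log(d)/\alpha)$. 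Since $\|E[YY^\top\mathbf{1}\{\|Y\|\le R\}]\|_2\le\|\Sigma\|_2\le 1$ and re-centering at $\mu(S_{\rm good})$ only reduces the spectral norm by a PSD correction (or adds $O(\|\mu(S_{\rm good})-\mu\|_2^2)$), we obtain $\|M(S_{\rm good})\|_2 \le 1$ after absorbing constants.

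For (iii), decompose
\begin{equation*}
|S_{\rm good}|\bigl(\mu(S_{\rm good})-\mu\bigr) = n\bigl(\mu(S)-\mu\bigr) - \sum_{i\notin S_{\rm good}}(X_i-\mu).
\end{equation*}
The first term has norm $O(\sqrt{dn}) = O(n\sqrt{\alpha})$ by a standard Chebyshev argument with $\epsilon$-net covering of the sphere. For the tail term, the key idea is \emph{not} to bound $\|E[Y\,\mathbf{1}\{\|Y\|>R\}]\|_2$ directly, since that yields only the loose estimate $O(\sqrt{d\alpha})$. Instead, apply Cauchy--Schwarz directionally: for any unit $v$,
\begin{equation*}
\Big|\sum_{i\notin S_{\rm good}} v^\top(X_i-\mu)\Big| \le \sqrt{|S\setminus S_{\rm good}|}\cdot\sqrt{\sum_{i\in S}(v^\top Y_i)^2}.
\end{equation*}
By the same matrix concentration used in (ii) (applied without truncation, or by reducing to truncated variables plus tail control), $\sup_{\|v\|=1}\sum_{i\in S}(v^\top Y_i)^2 \le 2n\|\Sigma\|_2 = O(n)$ with high probability. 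Together with $|S\setminus S_{\rm good}|\le\alpha n$, this gives $\|\sum_{i\notin S_{\rm good}}(X_i-\mu)\|_2 \le O(n\sqrt{\alpha})$. Dividing by $|S_{\rm good}|\approx n$ yields $\|\mu(S_{\rm good})-\mu\|_2 \le \sqrt{\alpha}$ up to constants.

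The main obstacle is step (iii): the naive strategy of truncating and bounding the tail mean $E[Y\mathbf{1}\{\|Y\|>R\}]$ gives $O(\sqrt{d\alpha})$, which is dimension-dependent and far too weak. Overcoming this requires exploiting the uniform second-moment bound (which is dimension-free because $\|\Sigma\|_2\le 1$) through a Cauchy--Schwarz reduction, thereby leveraging the same matrix concentration that governs the covariance bound. The other nontrivial point is the sample-complexity threshold $n=\Omega(d\log d/\alpha)$, which arises precisely from the $\log d$ factor in matrix Bernstein and dictates the stated bound.
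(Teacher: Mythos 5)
Your steps (i) and (ii) are fine: Markov plus Chernoff controls the number of truncated samples, and matrix Bernstein on the bounded PSD matrices $Y_iY_i^\top \mathbf{1}\{\|Y_i\|\le R\}$ gives $\|M(S_{\rm good})\|_2 = O(1)$ at sample size $n = \Omega(d\log(d)/\alpha)$, with the re-centering identity absorbing the rest. The gap is in step (iii), and it is a genuine one. Your argument hinges on the claim that $\sup_{\|v\|=1}\sum_{i\in S}(v^\top Y_i)^2 \le 2n\|\Sigma\|_2$ holds with high probability, i.e., that the \emph{untruncated} empirical second-moment matrix $\frac{1}{n}\sum_{i\in S} Y_iY_i^\top$ has $O(1)$ operator norm. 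Under only a bounded-covariance assumption this is false: the samples that are not in $S_{\rm good}$ are precisely those with $\|Y_i\| > R$, and there is no almost-sure bound on $\|Y_iY_i^\top\|$ to feed into matrix Bernstein. Concretely, already in $d=1$, a distribution putting mass $p$ at $\pm 1/\sqrt{p}$ has unit variance, yet the empirical second moment $\frac{1}{n}\sum X_i^2$ equals $K/(np)$ where $K$ is the number of large samples; taking $p$ of order $1/n$ makes this $\Theta(1)$-probability-large (Markov only gives $\frac{1}{n}\sum\|Y_i\|_2^2 = O(d)$, which is exactly the $O(d)$ factor you cannot afford). So the quantity you invoke scales as $O(nd)$, not $O(n)$, and the Cauchy--Schwarz step then only gives $\|\mu(S_{\rm good})-\mu\|_2 = O(\sqrt{\alpha d})$, the same loose bound you were trying to avoid.

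The irony is that the direct bound you dismissed is in fact the correct one if you apply it directionally at the population level rather than the empirical level. For any unit $v$,
\begin{equation*}
\bigl|v^\top \E\bigl[Y\,\mathbf{1}\{\|Y\|>R\}\bigr]\bigr| \;=\; \bigl|\E\bigl[(v^\top Y)\,\mathbf{1}\{\|Y\|>R\}\bigr]\bigr| \;\le\; \sqrt{\E[(v^\top Y)^2]}\cdot\sqrt{\Pr[\|Y\|>R]} \;\le\; \sqrt{\|\Sigma\|_2}\cdot\sqrt{\alpha}/C \;,
\end{equation*}
so $\|\E[Y\,\mathbf{1}\{\|Y\|>R\}]\|_2 \le \sqrt{\alpha}/C$, not $O(\sqrt{d\alpha})$. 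The $O(\sqrt{d\alpha})$ you report arises only if you pass through the scalar $\|Y\|$ via $\E[\|Y\|\mathbf{1}\{\|Y\|>R\}]\le\sqrt{\E\|Y\|^2}\sqrt{\Pr}$, which throws away direction. Once you have $\|\E[Y\,\mathbf{1}\{\|Y\|\le R\}]\|_2 = \|\E[Y\,\mathbf{1}\{\|Y\|>R\}]\|_2 \le \sqrt{\alpha}/C$, you control the deviation of $\frac{1}{n}\sum_{i\in S_{\rm good}} Y_i$ from this expectation by a scalar Chebyshev argument: the variance per coordinate sums to at most $\mathrm{tr}(\Sigma)/n \le d/n$, so the deviation is $O(\sqrt{d/n}) = O(\sqrt{\alpha})$ with constant probability once $n = \Omega(d/\alpha)$. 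This avoids any claim about uniform concentration of the untruncated empirical covariance and yields exactly the advertised sample complexity.
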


\begin{lemma}[{\cite[Lemma 3.2]{dong2019quantum}}  ]\label{lemma:reg2}
 Let $S$ be an $\alpha$-corrupted bounded covariance dataset under Assumption~\ref{asmp:adversary2}. If $S_{\rm good}$ is $\alpha$-good with respect to $\mu$, then for any $T\subset S$ such that $|T\cap S_{\rm good}|\geq (1-\alpha)|S|$, we have
\begin{eqnarray*}
	\|\mu(T)-\mu\|_{2} \leq \frac{1}{1-2\alpha} \cdot \left( 2\sqrt{\alpha\left\|M(T)\right\|_{2}}+3 \sqrt{\alpha}\right)\;.
\end{eqnarray*}
\end{lemma}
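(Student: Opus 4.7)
The plan is a resilience-style decomposition of $\mu(T)-\mu$ through two intermediate empirical means. First I would fix the auxiliary sets $T_1 = T\cap S_{\rm good}$, $T_2 = T\setminus S_{\rm good}$, and $G = S_{\rm good}\setminus T$. The hypothesis $|T_1|\ge (1-\alpha)n$, together with $|S_{\rm good}|=n$ and $|T|\le n$, yields $|T_2|\le \alpha n$, $|G|\le\alpha n$, and $|T|\ge (1-\alpha)n$.

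Next I would pick a unit vector $v$ in the direction of $\mu(T)-\mu$, so that $\|\mu(T)-\mu\|_2 = v^\top(\mu(T)-\mu)$, and split
\[v^\top(\mu(T)-\mu) = v^\top(\mu(T)-\mu(T_1)) + v^\top(\mu(T_1)-\mu(S_{\rm good})) + v^\top(\mu(S_{\rm good})-\mu).\]
The third term is at most $\sqrt\alpha$ by the first clause of $\alpha$-goodness.

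For the middle term, the averaging identity $n\,\mu(S_{\rm good}) = |T_1|\,\mu(T_1)+|G|\,\mu(G)$ gives $\mu(T_1)-\mu(S_{\rm good}) = (|G|/|T_1|)(\mu(S_{\rm good})-\mu(G))$. Applying Cauchy-Schwarz to $v^\top(\mu(G)-\mu(S_{\rm good}))=(1/|G|)\sum_{i\in G}v^\top(x_i-\mu(S_{\rm good}))$ and invoking the covariance bound from the second clause of $\alpha$-goodness (which guarantees $\sum_{i\in S_{\rm good}}(v^\top(x_i-\mu(S_{\rm good})))^2\le n$ for every unit $v$), I would get $|v^\top(\mu(G)-\mu(S_{\rm good}))|\le\sqrt{n/|G|}$, and hence the middle term is at most $\sqrt{|G|\,n}/|T_1|\le \sqrt{\alpha}/(1-\alpha)$. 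The first term is bounded analogously: $|T|\mu(T)=|T_1|\mu(T_1)+|T_2|\mu(T_2)$ produces $\mu(T)-\mu(T_1)=(|T_2|/|T_1|)(\mu(T_2)-\mu(T))$, and Cauchy-Schwarz with the PSD inequality $\sum_{i\in T_2}(x_i-\mu(T))(x_i-\mu(T))^\top\preceq n\,M(T)$ yields $|v^\top(\mu(T_2)-\mu(T))|\le \sqrt{n\|M(T)\|_2/|T_2|}$, so the first term is at most $\sqrt{|T_2|\,n\|M(T)\|_2}/|T_1|\le \sqrt{\alpha\|M(T)\|_2}/(1-\alpha)$.

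Summing the three bounds gives $\|\mu(T)-\mu\|_2 \le \sqrt\alpha + \sqrt\alpha/(1-\alpha) + \sqrt{\alpha\|M(T)\|_2}/(1-\alpha)$, which is of the form claimed; matching the stated constants $2$ and $3$ and the slightly weaker prefactor $1/(1-2\alpha)$ in the lemma is bookkeeping one recovers by normalizing the Cauchy-Schwarz bounds against $|T|$ rather than $|T_1|$ and by being less tight on $|T_1|/|T|$ in the worst split where $|T|=n$. The main delicate step is the observation that for any subset $U\subseteq T$ and any unit direction $v$, $\sum_{i\in U}(v^\top(x_i-\mu(T)))^2 \le n\|M(T)\|_2$ uniformly in $v$ and $U$, via PSD monotonicity of the partial sum of rank-one matrices; the remainder is one averaging identity and one application of Cauchy-Schwarz per term.
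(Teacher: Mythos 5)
Your proof is correct. Note that the paper itself does not prove this statement—it is imported verbatim from \cite{dong2019quantum} (their Lemma 3.2)—so there is no in-paper proof to compare against; your argument is the standard stability/resilience decomposition $\mu(T)-\mu = (\mu(T)-\mu(T_1)) + (\mu(T_1)-\mu(S_{\rm good})) + (\mu(S_{\rm good})-\mu)$ with Cauchy--Schwarz applied to the removed-good and added-bad pieces, which is exactly how the cited result is established. Your bookkeeping (in particular using the paper's convention $M(T)=\frac1n\sum_{i\in T}(x_i-\mu(T))(x_i-\mu(T))^\top$, normalized by $n$ rather than $|T|$) is consistent, and the bound you derive, $\sqrt{\alpha}+\frac{\sqrt{\alpha}}{1-\alpha}+\frac{\sqrt{\alpha\|M(T)\|_2}}{1-\alpha}$, is in fact slightly tighter than the stated $\frac{1}{1-2\alpha}\left(2\sqrt{\alpha\|M(T)\|_2}+3\sqrt{\alpha}\right)$, so it implies the lemma.
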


 \newpage
\section{Experiments} 
\label{sec:experiments} 

We evaluate  {\sc PRIME} and compare with a DP mean estimator of \cite{KLSU19} on synthetic dataset in Figure~\ref{fig:intro} and Figure~\ref{fig:n}, which consists of samples from $(1-\alpha)\cN(0, \mathbf{I})+\alpha\cN(\mu_{\rm bad}, \mathbf{I})$. The main focus of this evaluation is to compare the estimation error and demonstrate the robustness of {\sc PRIME} under differential privacy guarantees. Our choice of experimental  settings and hyper parameters are as follows: $ 1\leq d \leq 100$, $\mu_{\rm bad}=(1.5,1.5,\cdots, 1.5)_d$, $0.001\leq \varepsilon\leq 100$, $0.01\leq\alpha\leq 0.1$ , $C=1$. 

Figure~\ref{fig:n} shows additional experiments including the regime where we do not have enough number of samples. When $n\leq cd^{1.5}/\alpha\varepsilon$, the utility guarantee (Theorem~\ref{thm:main1}) does not hold. The noise we add on the final output becomes large as $n$ decreases and dominates the estimation error. The DP Mean \cite{KLSU19} has lower error compared to PRIME when $n$ is small because PRIME spends some privacy budget to perform  operations other than those in DP Mean in the Algorithm~\ref{alg:DPMMWfilter}. In practice, we can check whether there are enough number of samples based on known parameters $(\varepsilon, \delta, n, \alpha)$, and  choose to use DP Mean (or adjust how the privacy budget is distributed in PRIME). 

\begin{figure}[h]
    \centering
	\includegraphics[width=0.5\linewidth]{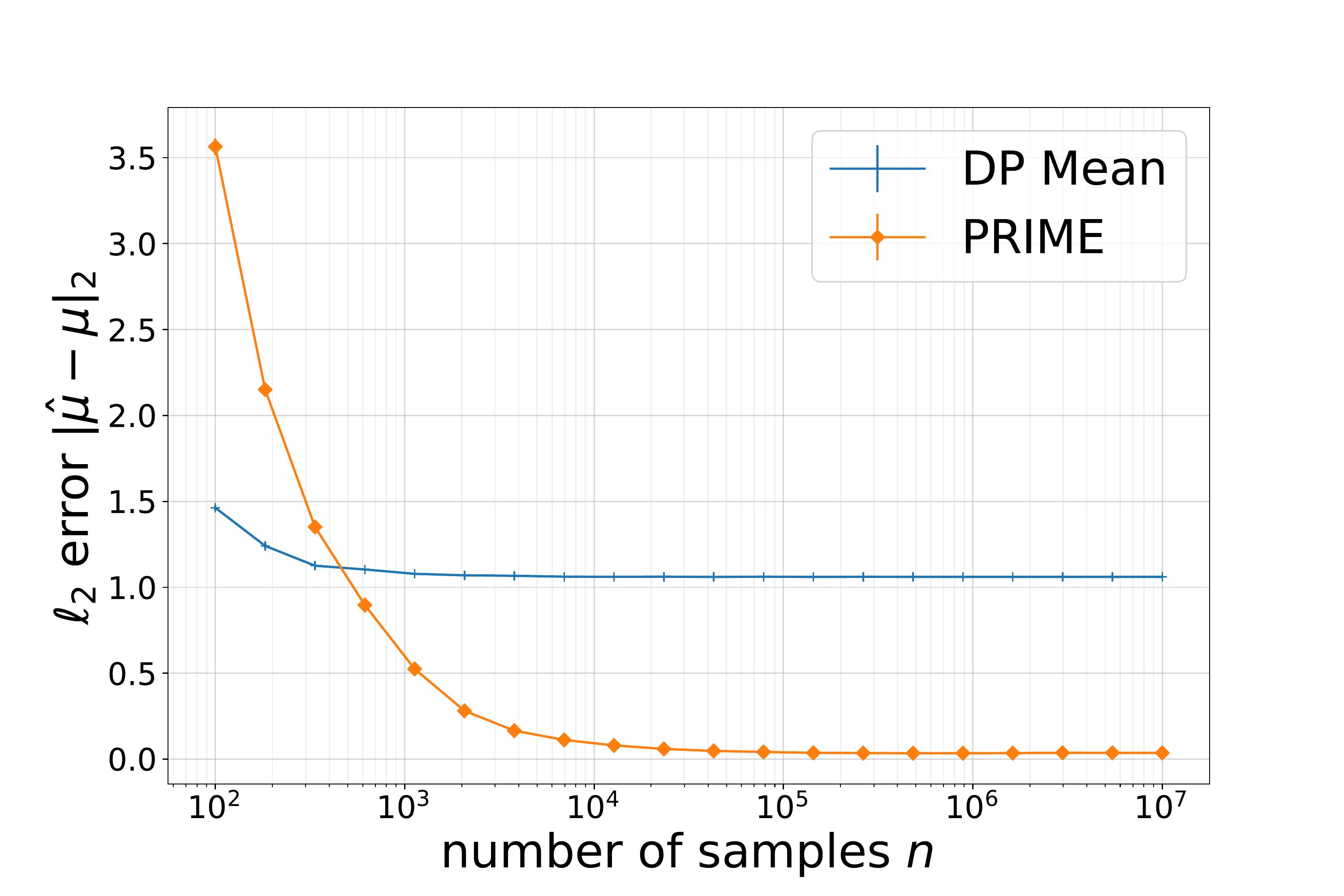}
	\caption{Estimation error achieved by PRIME significantly improves upon that of DP Mean in the large sample regime where our theoretical guarantees apply.
	In the small sample regime, the noise from the DP mechanisms dominate the error, which increases with decreasing $n$. 
	We choose $(\alpha, \varepsilon, \delta, d)= (0.1, 100, 0.01, 50)$.  Each data point is repeated 50 runs and standard error is shown in  the error bar.}
    \label{fig:n}
\end{figure}

Our implementation is based on Python with basic Numpy library. We run on a 2018 Macbook Pro machine. For each choice of $d$ in our settings, it takes less than $2$ minutes and {\sc PRIME} stops after at most $3$ epochs. We have attached our code as supplementary materials. 
\end{document}